\tikzset{stage/.style = {draw,minimum width=15mm,minimum height=7mm},
      edgenode/.style = {font=\small,near start}}
\newtheorem{proposition}{Proposition}
\newtheorem{definition}{Definition}
\newtheorem{corollary}{Corollary}
\newtheorem{theorem}{Theorem}
\newtheorem{hypothesis}{Hypothesis}
\newtheorem{algorithm}{Algorithm}
\DeclareMathOperator*{\argmin}{\arg\!\min}
\newcommand{\indep}{\raisebox{0.05em}{\rotatebox[origin=c]{90}{$\models$}}}
\title{Kernel Instrumental Variable Regression}
\author{
   Rahul ~Singh \\
  MIT Economics\\
  \texttt{rahul.singh@mit.edu} \\
  \And
    Maneesh ~Sahani \\
  Gatsby Unit, UCL \\
  \texttt{maneesh@gatsby.ucl.ac.uk} \\
  \And
  Arthur ~Gretton \\
  Gatsby Unit, UCL \\
  \texttt{arthur.gretton@gmail.com} \\
}
\begin{document}

\maketitle


\begin{abstract}
  Instrumental variable (IV) regression is a strategy for learning causal relationships in
observational data. If measurements of input $X$ and output $Y$ are confounded,
the causal relationship can nonetheless be identified if an instrumental variable $Z$
is available that influences $X$ directly, but is conditionally
independent of $Y$ given $X$ and the unmeasured confounder. The classic two-stage least squares algorithm (2SLS) simplifies the estimation problem by modeling all relationships as linear functions. We propose kernel instrumental variable regression (KIV), a nonparametric generalization of 2SLS, modeling relations among $X$, $Y$, and $Z$ as nonlinear functions in reproducing kernel Hilbert spaces (RKHSs). We prove the consistency of KIV under mild assumptions, and derive conditions under which   convergence  occurs at the minimax optimal rate for unconfounded, single-stage RKHS regression. In doing so, we obtain an efficient ratio between training sample sizes used in the algorithm's first and second stages. In experiments, KIV outperforms state of the art alternatives for nonparametric IV regression.
\end{abstract}

\section{Introduction}

Instrumental variable regression is a method in causal statistics
for estimating the counterfactual effect of input $X$ on output $Y$
using observational data \cite{stock2003retrospectives}. If measurements of $(X,Y)$ are confounded,
the causal relationship--also called the structural relationship--can nonetheless be identified if an instrumental variable $Z$ is
available, which is independent of $Y$ conditional on $X$ and the unmeasured confounder. Intuitively, $Z$ only influences $Y$ via $X$, identifying the counterfactual relationship of interest. 

Economists and epidemiologists use instrumental variables to overcome issues of strategic interaction, imperfect compliance, and selection bias.
The original application is demand estimation: supply cost shifters ($Z$) only influence sales ($Y$) via price ($X$), thereby identifying counterfactual demand even though prices reflect both supply and demand market forces \cite{wright1928tariff,blundell2012measuring}. Randomized assignment of a drug ($Z$) only influences patient health ($Y$) via actual consumption of the drug ($X$), identifying the counterfactual effect of the drug even in the scenario of imperfect compliance \cite{angrist1996identification}. Draft lottery number ($Z$) only influences lifetime
earnings ($Y$) via military service ($X$), identifying the
counterfactual effect of military service on earnings despite selection bias
in enlistment \cite{angrist1990lifetime}.

The two-stage least squares algorithm (2SLS), widely used in economics,
simplifies the IV estimation problem by assuming linear relationships: in \textit{stage 1}, perform linear regression to obtain the conditional
means $\bar{x}(z):=\mathbb{E}_{X|Z=z}(X)$; in \textit{stage 2},
linearly regress outputs $Y$ on these conditional means.
2SLS works well when the underlying assumptions hold. In practice, the relation between $Y$ and $X$ may not be linear, nor may be the relation between $X$
and $Z$.

In the present work, we introduce kernel instrumental variable regression
(KIV), an easily implemented nonlinear generalization of 2SLS (Sections~\ref{sec:framework} and~\ref{sec:problemAndAlgo}).\footnote{Code: \url{https://github.com/r4hu1-5in9h/KIV}} In
\textit{stage 1} we learn a conditional mean embedding,
which is the conditional expectation $\mu(z):=\mathbb{E}_{X|Z=z}\psi(X)$
of features $\psi$ which map $X$ to a reproducing kernel Hilbert
space (RKHS) \cite{song2009hilbert}. For a sufficiently rich RKHS, called a characteristic RKHS, the mean embedding of a random variable
is injective \cite{sriperumbudur2010relation}. It follows that the conditional mean embedding characterizes the full  distribution of $X$ conditioned on $Z$, and not just the conditional mean. We then implement \textit{stage 2} via kernel ridge regression of outputs $Y$ on these conditional mean embeddings, following the two-stage distribution regression approach described by \cite{szabo2015two,szabo2016learning}. As in our work, the inputs for \cite{szabo2015two,szabo2016learning} are distribution embeddings. Unlike our case, the earlier work uses unconditional embeddings computed from independent samples.

As a key contribution of our work, we provide consistency guarantees
for the KIV algorithm for an increasing number of training samples
in stages 1 and 2 (Section~\ref{sec:consistency}). To establish stage 1 convergence, we note that
the conditional mean embedding \cite{song2009hilbert} is the solution to a regression problem \cite{grunewalder2012conditional,grunewalder2012modelling,grunewalder2013smooth}, and thus equivalent to kernel dependency estimation \cite{ciliberto2016consistent,cortes2005general}. We prove that the kernel estimator of the conditional mean embedding (equivalently, the conditional expectation operator) converges in RKHS-norm, generalizing classic results by \cite{smale2005shannon,smale2007learning}.
We allow the conditional mean embedding RKHS to be infinite-dimensional, which presents specific challenges that we carefully address in our analysis. We also discuss previous approaches to establishing consistency in both finite-dimensional \cite{grunewalder2012modelling} and infinite-dimensional \cite{song2009hilbert,song2010nonparametric,fukumizu2013kernel,hefny2015supervised,ciliberto2016consistent} settings.

We embed the stage 1 rates into stage 2 to get end-to-end guarantees for the two-stage procedure, adapting 
\cite{caponnetto2007optimal,szabo2015two,szabo2016learning}. In particular, we provide
a ratio of stage 1 to stage 2 samples required for minimax optimal rates in
the second stage, where the ratio depends on the difficulty of
each stage. We anticipate that these proof strategies will apply generally
in two-stage regression settings. 




\section{Related work} 

Several approaches have been proposed to generalize 2SLS to the
nonlinear setting, which we will compare in our experiments (Section~\ref{sec:experiments}). A first generalization is via basis function approximation \cite{newey2003instrumental}, an approach called sieve IV, with uniform convergence rates in \cite{chen2018optimal}. The challenge in \cite{chen2018optimal} is how to define an appropriate finite dictionary of basis functions. In a second approach, \cite{carrasco2007linear,darolles2011nonparametric} implement stage 1 by computing the conditional distribution of the input $X$
given the instrument $Z$ using a ratio of Nadaraya-Watson density
estimates. Stage 2 is then ridge regression in the space
of square integrable functions. The overall algorithm has a finite sample consistency guarantee, assuming smoothness of the $(X,Z)$ joint density in stage 1 and the regression in stage 2 \cite{darolles2011nonparametric}. Unlike our bound, \cite{darolles2011nonparametric} make no claim about the optimality of the result. Importantly, stage 1 requires the
solution of a statistically challenging problem: conditional density
estimation. Moreover, analysis assumes the same number of training samples used in both stages. We will discuss this bound in more
detail in Appendix~\ref{sec:comparisonDarollesNonparametric}
(we suggest that the reader first cover Section~\ref{sec:consistency}).

Our work also relates to kernel and IV approaches to learning dynamical systems, known in machine learning as predictive state representation models (PSRs) \cite{boots2013hilbert,hefny2015supervised,downey2017predictive} and in econometrics as panel data models \cite{anderson1981estimation,arellano1991some}. In this setting, predictive states (expected future features given
history) are updated in light of new observations. The calculation of
the predictive states corresponds to stage 1 regression, and the
states are updated via stage 2 regression. In the kernel case, the
predictive states are expressed as conditional mean embeddings
\cite{boots2013hilbert}, as in our setting.  Performance of the kernel PSR method is guaranteed by a finite sample bound  \cite[Theorem 2]{hefny2015supervised}, however this bound is not minimax optimal. Whereas \cite{hefny2015supervised} assume an equal number of training samples in stages 1 and 2, we find that unequal numbers of training samples matter for minimax optimality. More importantly, the bound makes strong
smoothness assumptions on the inputs to the stage 1 and stage 2 regression
functions, rather than assuming smoothness of the regression
functions as we do. We show that the smoothness assumptions on
the inputs made in \cite{hefny2015supervised} do not hold in our
setting, and we obtain stronger end-to-end bounds under more realistic
conditions. We discuss the PSR bound in more detail
in Appendix~\ref{sec:comparisonHefnyNonparametric}.

Yet another recent approach is deep IV, which uses neural networks in both stages and permits learning even for complex high-dimensional data such as images \cite{hartford2017deep}. Like \cite{darolles2011nonparametric}, \cite{hartford2017deep} implement stage 1 by estimating a conditional density. Unlike \cite{darolles2011nonparametric}, \cite{hartford2017deep} use a mixture density network \cite[Section 5.6]{Bishop06}, i.e. a mixture model parametrized by
a neural network on the instrument $Z$. Stage 2 is neural network regression, trained using stochastic gradient descent (SGD). This presents a challenge: each step of SGD requires expectations using the stage 1 model, which are computed by drawing samples and averaging. An unbiased gradient estimate requires two independent sets of samples from the stage 1 model \cite[eq. 10]{hartford2017deep}, though a single set of samples may be used if an upper bound on the loss is optimized \cite[eq. 11]{hartford2017deep}. By contrast, our stage 1 outputs--conditional mean embeddings--have a closed form solution and exhibit
lower variance than sample averaging from a conditional density model. No theoretical guarantee on the consistency of the neural network
approach has been provided.



In the econometrics literature, a few key assumptions make learning a nonparametric IV model tractable. These include the completeness condition \cite{newey2003instrumental}: the structural relationship between $X$ and $Y$ can be identified only if the stage 1 conditional expectation is injective. Subsequent works impose additional stability and link assumptions \cite{blundell2007semi,chen2012estimation,chen2018optimal}: the conditional expectation of a function of $X$ given $Z$ is a smooth function of $Z$. We adapt these assumptions to our setting, replacing the completeness condition with the characteristic property \cite{sriperumbudur2010relation}, and replacing the stability and link assumptions with the concept of prior \cite{smale2007learning,caponnetto2007optimal}. We describe the characteristic and prior assumptions in more detail below. 

Extensive use of IV estimation in applied economic research has revealed a common pitfall: weak instrumental variables. A weak instrument satisfies Hypothesis 1 below, but the relationship between a weak instrument $Z$ and input $X$ is negligible; $Z$ is essentially irrelevant. In this case, IV estimation becomes highly erratic \cite{bound1995problems}. In \cite{staiger1997instrumental}, the authors formalize this phenomenon with local analysis. See \cite{murray2006avoiding,stock2002survey} for practical and theoretical overviews, respectively. We recommend that practitioners resist the temptation to use many weak instruments, and instead use few strong instruments such as those described in the introduction.



Finally, our analysis connects early work on the RKHS with recent developments in the RKHS literature. In \cite{nashed1974generalized}, the authors introduce the RKHS to solve known, ill-posed functional equations. In the present work, we introduce the RKHS to estimate the solution to an uncertain, ill-posed functional equation. In this sense, casting the IV problem in an RKHS framework is not only natural; it is in the original spirit of RKHS methods.
For a comprehensive review of existing work and recent advances in kernel mean embedding research, we recommend \cite{muandet2017kernel,gretton2018notes}.


\section{Problem setting and definitions}\label{sec:framework}



\textbf{Instrumental variable:} 
We begin by introducing our causal assumption about the instrument. This prior knowledge, described informally in the introduction, allows us to recover the counterfactual effect of $X$ on $Y$. Let $(\mathcal{X},\mathcal{B}_{\mathcal{X}})$, $(\mathcal{Y},\mathcal{B}_{\mathcal{Y}})$, and $(\mathcal{Z},\mathcal{B}_{\mathcal{Z}})$ be measurable spaces. Let $(X,Y,Z)$ be a random variable on $\mathcal{X}\times \mathcal{Y}\times \mathcal{Z}$ with distribution $\rho$. 
\begin{hypothesis}\label{iv}
Assume 
\begin{enumerate}
    \item $Y=h(X)+e$ and $\mathbb{E}[e|Z]=0$
    \item $\rho(x|z)$ is not constant in $z$
\end{enumerate}
\end{hypothesis}
We call $h$ the \textit{structural function} of interest. The error term $e$ is unmeasured, confounding noise. Hypothesis~\ref{iv}.1, known as the exclusion restriction, was introduced by \cite{newey2003instrumental} to the nonparametric IV literature for its tractability. 
 Other hypotheses are possible, although a very different approach is then needed \cite{imbens2009identification}. Hypothesis~\ref{iv}.2, known as the relevance condition, ensures that $Z$ is actually informative.  In Appendix~\ref{sec:simpleIVadditiveNoise}, we compare Hypothesis~\ref{iv} with alternative formulations of the IV assumption.

We make three observations. First, if $X=Z$ then Hypothesis~\ref{iv} reduces to the standard regression assumption of unconfounded inputs, and $h(X)=\mathbb{E}[Y|X]$; if $X=Z$ then prediction and counterfactual prediction coincide. The IV model is a framework that allows for causal inference in a more general variety of contexts, namely when $h(X)\neq \mathbb{E}[Y|X]$ so that prediction and counterfactual prediction are different learning problems. Second, Hypothesis~\ref{iv} will permit identification of $h$ even if inputs are confounded, i.e. $X\cancel\indep e$. Third, this model includes the scenario in which the analyst has a combination of confounded and unconfounded inputs. For example, in demand estimation there may be confounded price $P$, unconfounded characteristics $W$, and supply cost shifter $C$ that instruments for price. Then $X=(P,W)$, $Z=(C,W)$, and the analysis remains the same. 

\begin{wrapfigure}{R}{0.25\textwidth}
\vspace{-15pt}
\begin{center}
\begin{adjustbox}{width=.25\textwidth}
\begin{tikzpicture}[->,>=stealth',shorten >=1pt,auto,node distance=2.8cm,
                    semithick]
  \tikzstyle{every state}=[draw=black,text=black]

  \node[state]         (z) [fill=gray]                   {$\mathcal{Z}$};
    \node[state]         (hz) [right of=z]       {$\mathcal{H}_{\mathcal{Z}}$};
  \node[state]         (x) [below of=z, fill=gray]       {$\mathcal{X}$};
  \node[state]         (hx) [right of=x]                  {$\mathcal{H}_{\mathcal{X}}$};
  \node[state]         (y) [below of=x, fill=gray]       {$\mathcal{Y}$};

  \path (z) edge              node {$ $} (x)
            edge             node {$\mu\in \mathcal{H}_{\Xi}$} (hx)
            edge             node {$\phi$} (hz)
        (x) edge              node {$h\in\mathcal{H}_{\mathcal{X}}$} (y)
            edge             node {$\psi$} (hx)
        (hz) edge           node {$E^*\in\mathcal{H}_{\Gamma ^*}$} (hx)
        (hx) edge           node {$H\in\mathcal{H}_{\Omega}$} (y);;
\end{tikzpicture}
\end{adjustbox}
\vspace{-10pt}
\caption{The RKHSs}
\label{rkhs_s}
\end{center}
\vspace{-15pt}
\end{wrapfigure}
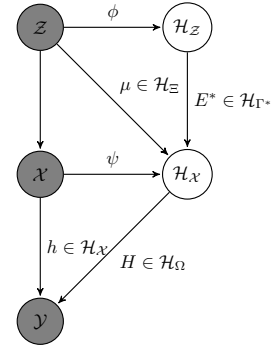

Hypothesis~\ref{iv} provides the operator equation $
\mathbb{E}[Y|Z]=\mathbb{E}_{X|Z}h(X)
$ \cite{newey2003instrumental}. In the language of 2SLS, the LHS is the \textit{reduced form}, while the RHS is a composition of \textit{stage 1} linear compact operator $\mathbb{E}_{X|Z}$ and \textit{stage 2} structural function $h$. In the language of functional analysis, the operator equation is a Fredholm integral equation of the first kind \cite{nashed1974generalized,kress1989linear,newey2003instrumental,florens2003inverse}. Solving this operator equation for $h$ involves inverting a linear compact operator with infinite-dimensional domain; it is an ill-posed problem \cite{kress1989linear}. To recover a well-posed problem, we impose smoothness and Tikhonov regularization.


\textbf{RKHS model:} 
We next introduce our RKHS model. Let $k_{\mathcal{X}}:\mathcal{X}\times \mathcal{X}\rightarrow\mathbb{R}$ and $k_{\mathcal{Z}}:\mathcal{Z}\times \mathcal{Z}\rightarrow\mathbb{R}$ be measurable positive definite kernels corresponding to scalar-valued RKHSs $\mathcal{H}_{\mathcal{X}}$ and $\mathcal{H}_{\mathcal{Z}}$. Denote the feature maps
\begin{align*}
\psi&:\mathcal{X}\rightarrow\mathcal{H}_{\mathcal{X}} ,\enskip x\mapsto k_{\mathcal{X}}(x,\cdot)\quad\quad \phi:\mathcal{Z}\rightarrow\mathcal{H}_{\mathcal{Z}}  ,\enskip z\mapsto k_{\mathcal{Z}}(z,\cdot)
\end{align*}

Define the \textit{conditional expectation operator} $E:\mathcal{H}_{\mathcal{X}}\rightarrow \mathcal{H}_{\mathcal{Z}}$ such that
$
[Eh](z)=\mathbb{E}_{X|Z=z}h(X)
$. $E$ is the natural object of interest for stage 1. We define and analyze an estimator for $E$ directly. The conditional expectation operator $E$ conveys exactly the same information as another object popular in the kernel methods literature, the \textit{conditional mean embedding} $\mu:\mathcal{Z}\rightarrow \mathcal{H}_{\mathcal{X}}$ defined by
$
\mu(z)=\mathbb{E}_{X|Z=z}\psi(X)
$ \cite{song2009hilbert}. Indeed, $\mu(z)=E^*\phi(z)$ where $E^*:\mathcal{H}_{\mathcal{Z}}\rightarrow\mathcal{H}_{\mathcal{X}}$ is the adjoint of $E$. Analogously, in 2SLS $\bar{x}(z)=\pi' z$ for stage 1 linear regression parameter $\pi$.

The structural function $h:\mathcal{X}\rightarrow \mathcal{Y}$ in Hypothesis \ref{iv} is the natural object of interest for stage 2. For theoretical purposes, it is convenient to estimate $h$ indirectly. The structural function $h$ conveys exactly the same information as an object we call the \textit{structural operator} $H:\mathcal{H}_{\mathcal{X}}\rightarrow\mathcal{Y}$. Indeed, $h(x)=H\psi(x)$. Analogously, in 2SLS $h(x)=\beta' x$ for structural parameter $\beta$. We define and analyze an estimator for $H$, which in turn implies an estimator for $h$. Figure~\ref{rkhs_s} summarizes the relationships among equivalent stage 1 objects $(E,\mu)$ and equivalent stage 2 objects $(H,h)$.



Our RKHS model for the IV problem is of the same form as the  model in \cite{nashed1974convergence,nashed1974generalized, nashed1974regularization} for general operator equations.
We begin by choosing RKHSs for the  structural function $h$ and the reduced form $\mathbb{E}[Y|Z]$, then
 construct a tensor-product RKHS for the conditional expectation operator $E$.
 Our model differs from the RKHS model proposed by \cite{carrasco2007linear,darolles2011nonparametric}, which directly learns the conditional expectation operator $E$ via Nadaraya-Watson density estimation. The RKHSs of \cite{engl1996regularization,carrasco2007linear,darolles2011nonparametric} for the structural function $h$ and the reduced form $\mathbb{E}[Y|Z]$
are defined from the right and left singular functions of $E$, respectively. They appear in the consistency argument, but not in the ridge penalty.

\section{Learning problem and algorithm}\label{sec:problemAndAlgo}

2SLS consists of two stages that can be estimated separately.  Sample splitting in this context means estimating stage 1 with $n$ randomly chosen observations and estimating stage 2 with the remaining $m$ observations. Sample splitting alleviates the finite sample bias of 2SLS when instrument $Z$ weakly influences input $X$ \cite{angrist1995split}. It is the natural approach when an analyst does not have access to a single data set with $n+m$ observations of $(X,Y,Z)$ but rather two data sets: $n$ observations of $(X,Z)$, and $m$ observations of $(Y,Z)$. We employ sample splitting in KIV, with an efficient ratio of $(n,m)$ given in Theorem~\ref{rate}. In our presentation of the general two-stage learning problem, we denote stage 1 observations by $(x_i,z_i)$ and stage 2 observations by $(\tilde{y}_i,\tilde{z}_i)$.

\subsection{Stage 1}


We transform the problem of learning $E$ into a vector-valued kernel ridge regression following \cite{grunewalder2012conditional,grunewalder2013smooth,ciliberto2016consistent}, where the hypothesis space is the vector-valued RKHS $\mathcal{H}_{\Gamma}$ of operators mapping $\mathcal{H}_{\mathcal{X}}$ to $\mathcal{H}_{\mathcal{Z}}$.
In Appendix \ref{sec:vectorValuedRKHS}, we review the theory of vector-valued RKHSs as it relates to scalar-valued RKHSs and tensor product spaces. The key result is that the tensor product space of $\mathcal{H}_{\mathcal{X}}$ and $\mathcal{H}_{\mathcal{Z}}$ is isomorphic to $\mathcal{L}_2(\mathcal{H}_{\mathcal{X}},\mathcal{H}_{\mathcal{Z}})$, the space of Hilbert-Schmidt operators from $\mathcal{H}_{\mathcal{X}}$ to $\mathcal{H}_{\mathcal{Z}}$.
If we choose the vector-valued kernel $\Gamma$ with feature map $(x,z)\mapsto [\phi(z)\otimes \psi(x)](\cdot)=\phi(z)\langle \psi(x),\cdot \rangle_{\mathcal{H}_{\mathcal{X}}}$, then $\mathcal{H}_{\Gamma}= \mathcal{L}_2(\mathcal{H}_{\mathcal{X}},\mathcal{H}_{\mathcal{Z}})$ and it shares the same norm. 


We now state the objective for optimizing $E\in \mathcal{H}_{\Gamma}$. 
The optimal $E$ minimizes the expected discrepancy 
\begin{align*}
    E_{\rho}&=\argmin \mathcal{E}_1(E),\quad \mathcal{E}_1(E)= \mathbb{E}_{(X,Z)} \|\psi(X)-E^*\phi(Z)\|^2_{\mathcal{H}_{\mathcal{X}}}
\end{align*}
Both \cite{grunewalder2013smooth} and \cite{ciliberto2016consistent} refer to $\mathcal{E}_1$ as the surrogate risk.
As shown in \cite[Section 3.1]{grunewalder2012conditional} and \cite{grunewalder2013smooth}, the surrogate risk upper bounds the natural risk for the conditional expectation,
where the bound becomes tight when $\mathbb{E}_{X|Z=(\cdot)}f(X)\in \mathcal{H}_{\mathcal{Z}},\:\forall f\in\mathcal{H}_{\mathcal{X}}$.
Formally, the target operator is the constrained solution $
    E_{\mathcal{H}_{\Gamma}}=\argmin_{E\in\mathcal{H}_{\Gamma}}\mathcal{E}_1(E)
$.
We will assume $E_{\rho}\in \mathcal{H}_{\Gamma}$ so that $E_{\rho}=E_{\mathcal{H}_{\Gamma}}$. 

Next we impose Tikhonov regularization. The regularized target operator and its empirical analogue are given by
\begin{align*}
E_{\lambda}&=\argmin_{E\in \mathcal{H}_{\Gamma}} \mathcal{E}_{\lambda}(E),\quad
   \mathcal{E}_{\lambda}(E)=\mathcal{E}_1(E)+\lambda\|E\|^2_{\mathcal{L}_2(\mathcal{H}_{\mathcal{X}},\mathcal{H}_{\mathcal{Z}})}  \\
   E^n_{\lambda}&=\argmin_{E\in \mathcal{H}_{\Gamma}} \mathcal{E}_{\lambda}^n(E),\quad 
    \mathcal{E}_{\lambda}^n(E)=\dfrac{1}{n}\sum_{i=1}^n \|\psi(x_i)-E^*\phi(z_i)\|^2_{\mathcal{H}_{\mathcal{X}}}+\lambda\|E\|^2_{\mathcal{L}_2(\mathcal{H}_{\mathcal{X}},\mathcal{H}_{\mathcal{Z}})}
\end{align*}
Our construction of a vector-valued RKHS $\mathcal{H}_{\Gamma}$ for the conditional expectation operator $E$ permits us to estimate stage 1 by kernel ridge regression. The stage 1 estimator of KIV is at once novel in the nonparametric IV literature and fundamentally similar to 2SLS. Basis function approximation \cite{newey2003instrumental,chen2018optimal} is perhaps the closest prior IV approach, but we use infinite dictionaries of basis functions $\psi$ and $\phi$. Compared to density estimation \cite{carrasco2007linear, darolles2011nonparametric,hartford2017deep}, kernel ridge regression is an easier problem.

Alternative stage 1 estimators in the literature estimate the singular system of $E$ to ensure that the adjoint of the estimator equals the estimator of the adjoint. These estimators differ in how they estimate the singular system: empirical distribution \cite{darolles2011nonparametric}, Nadaraya-Watson density \cite{darolles2004kernel}, or B-spline wavelets \cite{chen1997shape}. The KIV stage 1 estimator has the desired property by construction; $(E_{\lambda}^n)^*=(E^*)_{\lambda}^n$. See Appendix~\ref{sec:vectorValuedRKHS} for details.

\subsection{Stage 2}

Next, we transform the problem of learning $h$ into a scalar-valued kernel ridge regression that respects the  IV problem structure. In Proposition \ref{prop:existenceOfConditionalMeanElement} of Appendix~\ref{sec:vectorValuedRKHS}, we show that under Hypothesis \ref{hyp:measurableBoundedFeatures} below,
$$
 \mathbb{E}_{X|Z=z}h(X)=[Eh](z)=\langle h,\mu(z)\rangle_{\mathcal{H}_{\mathcal{X}}}=H\mu(z)
$$
 where $h\in\mathcal{H}_{\mathcal{X}}$, a scalar-valued RKHS; $E\in \mathcal{H}_{\Gamma}$, the vector-valued RKHS described above; $\mu\in\mathcal{H}_{\Xi}$, a vector-valued RKHS isometrically isomorphic to $\mathcal{H}_{\Gamma}$; and $H\in \mathcal{H}_{\Omega}$, a scalar-valued RKHS isometrically isomorphic to $\mathcal{H}_{\mathcal{X}}$.
 It is helpful to think of $\mu(z)$ as the embedding into $\mathcal{H}_{\mathcal{X}}$ of a distribution on $\mathcal{X}$ indexed by the conditioned value $z$. When  $k_\mathcal{X}$ is characteristic, $\mu(z)$ uniquely embeds the conditional distribution, and  $H$ is identified. The kernel $\Omega$ satisfies $k_{\mathcal{X}}(x,x')=\Omega(\psi(x),\psi(x'))$.  This expression establishes the formal connection between our model and \cite{szabo2015two,szabo2016learning}. The choice of $\Omega$ may be more general; for nonlinear examples see \cite[Table 1]{szabo2016learning}.

We now state the objective for optimizing $H\in \mathcal{H}_{\Omega}$. Hypothesis~\ref{iv} provides the operator equation, which may be rewritten as the regression equation
$$
Y=\mathbb{E}_{X|Z}h(X)+e_Z=H\mu(Z)+e_Z,\quad \mathbb{E}[e_Z|Z]=0
$$
The unconstrained solution is
\begin{align*}
    H_{\rho}&=\argmin \mathcal{E}(H),\quad 
    \mathcal{E}(H) = \mathbb{E}_{(Y,Z)}\|Y-H\mu(Z)\|_{\mathcal{Y}}^2 
\end{align*}
The target operator is the constrained solution $
    H_{\mathcal{H}_{\Omega}}=\argmin_{H\in\mathcal{H}_{\Omega}}\mathcal{E}(H)
$. We will assume $H_{\rho}\in \mathcal{H}_{\Omega}$ so that $H_{\rho}=H_{\mathcal{H}_{\Omega}}$. With regularization,
\begin{align*}
    H_{\xi}&=\argmin_{H\in \mathcal{H}_{\Omega}}\mathcal{E}_{\xi}(H),\quad 
    \mathcal{E}_{\xi}(H)=\mathcal{E}(H)+\xi\|H\|^2_{\mathcal{H}_{\Omega}} \\
    H^{m}_{\xi}&=\argmin_{H\in \mathcal{H}_{\Omega}}\mathcal{E}^{m}_{\xi}(H),\quad 
    \mathcal{E}^{m}_{\xi}(H)=\dfrac{1}{m}\sum_{i=1}^{m}\|\tilde{y}_i-H\mu(\tilde{z}_i)\|_{\mathcal{Y}}^2+\xi\|H\|^2_{\mathcal{H}_{\Omega}}
\end{align*}

The essence of the IV problem is this: we do not directly observe the conditional expectation operator $E$ (or equivalently the conditional mean embedding $\mu$) that appears in the stage 2 objective. Rather, we approximate it using the estimate from stage 1. Thus our KIV estimator is $\hat{h}^{m}_{\xi}=\hat{H}^{m}_{\xi}\psi$ where
\begin{align*}
    \hat{H}^{m}_{\xi}&=\argmin_{H\in \mathcal{H}_{\Omega}}\hat{\mathcal{E}}^{m}_{\xi}(H),\quad 
    \hat{\mathcal{E}}^{m}_{\xi}(H)=\dfrac{1}{m}\sum_{i=1}^{m}\|\tilde{y}_i-H\mu^n_{\lambda}(\tilde{z}_i) \|_{\mathcal{Y}}^2+\xi\|H\|^2_{\mathcal{H}_{\Omega}}
\end{align*}
and $\mu^n_{\lambda}=(E_{\lambda}^n)^*\phi$. The transition from $H_{\rho}$ to $H^{m}_{\xi}$ represents the fact that we only have $m$ samples. The transition from $H^{m}_{\xi}$ to $\hat{H}^{m}_{\xi}$ represents the fact that we must learn not only the structural operator $H$ but also the conditional expectation operator $E$. In this sense, the IV problem is more complex than the estimation problem considered by \cite{nashed1974convergence,nashed1974regularization} in which $E$ is known.


\subsection{Algorithm}
We obtain a closed form expression for the KIV estimator. The apparatus introduced above is required for analysis of consistency and convergence rate. More subtly, our RKHS construction allows us to write kernel ridge regression estimators for both stage 1 and stage 2, unlike previous work. Because KIV consists of repeated kernel ridge regressions, it benefits from repeated applications of the representer theorem \cite{wahba1990spline,scholkopf2001generalized}. Consequently, we have a shortcut for obtaining KIV's closed form; see Appendix~\ref{sec:alg_deriv} for the full derivation.
\begin{algorithm}\label{alg} Let $X$ and $Z$ be matrices of $n$ observations. Let $\tilde{y}$ and $\tilde{Z}$ be a vector and matrix of $m$ observations.
\begin{align*}
    W&=K_{XX}(K_{ZZ}+n\lambda I)^{-1}K_{Z\tilde{Z}},\quad 
    \hat{\alpha}= (WW'+m\xi K_{XX})^{-1}W\tilde{y},\quad 
    \hat{h}_{\xi}^m(x)=(\hat{\alpha})'K_{Xx}
\end{align*}
where $K_{XX}$ and $K_{ZZ}$ are the empirical kernel matrices.
\end{algorithm}
Theorems~\ref{stage1} and~\ref{rate} below theoretically determine efficient rates for the stage 1 regularization parameter $\lambda$ and stage 2 regularization parameter $\xi$, respectively. In Appendix~\ref{sec:validation}, we provide a validation procedure to empirically determine values for $(\lambda,\xi)$.

\section{Consistency}\label{sec:consistency}

\subsection{Stage 1}

\textbf{Integral operators:} We use integral operator notation from the kernel methods literature, adapted to the conditional expectation operator learning problem. We denote by $L^2(\mathcal{Z},\rho_{\mathcal{Z}})$ the space of square integrable functions from $\mathcal{Z}$ to $\mathcal{Y}$ 
with respect to measure $\rho_{\mathcal{Z}}$, where $\rho_{\mathcal{Z}}$ is the restriction of $\rho$ to $\mathcal{Z}$.
\begin{definition}
The stage 1 (population) operators are
\begin{align*}
S_1^*&:\mathcal{H}_{\mathcal{Z}}\hookrightarrow L^2(\mathcal{Z},\rho_{\mathcal{Z}}) ,\enskip \ell\mapsto \langle \ell,\phi(\cdot) \rangle_{\mathcal{H}_{\mathcal{Z}}}\quad 
S_1:L^2(\mathcal{Z},\rho_{\mathcal{Z}})\rightarrow\mathcal{H}_{\mathcal{Z}},\enskip \tilde{\ell}\mapsto \int\phi(z)\tilde{\ell}(z)d\rho_{\mathcal{Z}}(z)
\end{align*}
\end{definition}
$T_1=S_1\circ S_1^*$ is the uncentered covariance operator of \cite[Theorem 1]{fukumizu2004dimensionality}. In Appendix~\ref{sec:cov_technical}, we prove that $T_1$ exists and has finite trace even when $\mathcal{H}_{\mathcal{X}}$ and $\mathcal{H}_{\mathcal{Z}}$ are infinite-dimensional. In Appendix~\ref{sec:cov_review}, we compare $T_1$ with other covariance operators in the kernel methods literature.

\textbf{Assumptions:} We place  assumptions on the original spaces $\mathcal{X}$ and $\mathcal{Z}$, the scalar-valued RKHSs $\mathcal{H}_{\mathcal{X}}$ and $\mathcal{H}_{\mathcal{Z}}$, and the probability distribution $\rho(x,z)$. We maintain these assumptions throughout the paper. Importantly, we assume that the vector-valued RKHS regression is correctly specified: the true conditional expectation operator $E_{\rho}$ lives in the vector-valued RKHS $\mathcal{H}_{\Gamma}$. In further research, we will relax this assumption.

\begin{hypothesis}Suppose that $\mathcal{X}$ and $\mathcal{Z}$ are Polish spaces, i.e. separable and completely metrizable topological spaces
\end{hypothesis}
\begin{hypothesis}\label{hyp:measurableBoundedFeatures} Suppose that
\begin{enumerate}

    \item $k_{\mathcal{X}}$ and $k_{\mathcal{Z}}$ are continuous and bounded:
    $
        \sup_{x\in\mathcal{X}}\|\psi(x)\|_{\mathcal{H}_{\mathcal{X}}}\leq Q$, $
        \sup_{z\in\mathcal{Z}}\|\phi(z)\|_{\mathcal{H}_{\mathcal{Z}}}\leq \kappa
  $
    \item $\psi$ and $\phi$ are measurable
    \item $k_{\mathcal{X}}$ is characteristic \textup{\cite{sriperumbudur2010relation}}
\end{enumerate}
\end{hypothesis}
\begin{hypothesis}Suppose that $E_{\rho}\in \mathcal{H}_{\Gamma}$. Then $
        \mathcal{E}_1(E_{\rho})=\inf_{E\in\mathcal{H}_{\Gamma}}\mathcal{E}_1(E)
    $
\end{hypothesis}
Hypothesis 3.3 specializes the completeness condition of \cite{newey2003instrumental}. Hypotheses 2-4 are sufficient to bound the sampling error of the regularized estimator $E_{\lambda}^n$. Bounding the approximation error requires a further assumption on the smoothness of the distribution $\rho(x,z)$.
We assume $\rho(x,z)$ belongs to a class of distributions parametrized by $(\zeta_1,c_1)$, as generalized from \cite[Theorem 2]{smale2007learning} to the space $\mathcal{H}_\Gamma$.
\begin{hypothesis}\label{hyp:smaleSmoothness}
Fix $\zeta_1<\infty$. For given $c_1\in(1,2]$, define the prior $\mathcal{P}(\zeta_1,c_1)$ as the set of probability distributions $\rho$ on $\mathcal{X}\times \mathcal{Z}$ such that a range space assumption is satisfied: $\exists G_1\in\mathcal{H}_{\Gamma}$ s.t.
   $
    E_{\rho}=T_1^{\frac{c_1-1}{2}}\circ G_1$ and $ 
    \|G_1\|^2_{\mathcal{H}_{\Gamma}}\leq \zeta_1
   $
\end{hypothesis}
We use composition symbol $\circ$ to emphasize that $G_1:\mathcal{H}_{\mathcal{X}}\rightarrow\mathcal{H}_{\mathcal{Z}}$ and $T_1:\mathcal{H}_{\mathcal{Z}}\rightarrow \mathcal{H}_{\mathcal{Z}}$. We define the power of operator $T_1$ with respect to its eigendecomposition; see Appendix~\ref{sec:cov_technical} for formal justification. Larger $c_1$ corresponds to a smoother conditional expectation operator $E_{\rho}$. Proposition~\ref{cef} in Appendix~\ref{sec:stage1_reg} shows $E_{\rho}^*\phi(z)=\mu(z)$, so Hypothesis 5 is an indirect smoothness condition on the conditional mean embedding $\mu$.





\textbf{Estimation and convergence:} The estimator has a closed form solution, as noted in \cite[Section 3.1]{grunewalder2012conditional} and \cite[Appendix D]{grunewalder2012modelling}; \cite{ciliberto2016consistent} use it in the first stage of the structured prediction problem. We present the closed form solution in notation similar to \cite{caponnetto2007optimal} in order to elucidate how the estimator simply generalizes linear regression. This connection foreshadows our proof technique.
\begin{theorem}\label{sol_1}
$\forall \lambda>0$, the solution $E^n_{\lambda}$ of the regularized empirical objective $\mathcal{E}^n_{\lambda}$ exists, is unique, and
$$
E^n_{\lambda}= (\mathbf{T}_1+\lambda)^{-1}\circ\mathbf{g}_1,\quad \mathbf{T}_1=\dfrac{1}{n}\sum_{i=1}^n \phi(z_i)\otimes\phi(z_i),\quad
\mathbf{g}_1=\dfrac{1}{n}\sum_{i=1}^n \phi(z_i)\otimes  \psi(x_i)
$$
\end{theorem}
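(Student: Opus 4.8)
The plan is to treat this as a Tikhonov-regularized least-squares problem over the Hilbert space $\mathcal{H}_{\Gamma}\cong\mathcal{L}_2(\mathcal{H}_{\mathcal{X}},\mathcal{H}_{\mathcal{Z}})$ (the isometry is the content of Appendix~\ref{sec:vectorValuedRKHS}) and to derive the normal equations directly. First I would check that the objects in the claimed formula are well defined: by Hypothesis~\ref{hyp:measurableBoundedFeatures}.1 the operators $\mathbf{T}_1=\frac1n\sum_i\phi(z_i)\otimes\phi(z_i)$ and $\mathbf{g}_1=\frac1n\sum_i\phi(z_i)\otimes\psi(x_i)$ are finite rank, hence bounded and Hilbert--Schmidt; moreover $\mathbf{T}_1$ is self-adjoint and positive semidefinite on $\mathcal{H}_{\mathcal{Z}}$, so $\mathbf{T}_1+\lambda I_{\mathcal{H}_{\mathcal{Z}}}$ is boundedly invertible for every $\lambda>0$, and $(\mathbf{T}_1+\lambda)^{-1}\circ\mathbf{g}_1$ is a genuine element of $\mathcal{H}_{\Gamma}$.

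Second, existence and uniqueness of a minimizer. Expanding $\|\psi(x_i)-E^*\phi(z_i)\|^2_{\mathcal{H}_{\mathcal{X}}}$, the functional $\mathcal{E}^n_{\lambda}$ is a quadratic functional of $E\in\mathcal{H}_{\Gamma}$ whose quadratic part is $\frac1n\sum_i\|E^*\phi(z_i)\|^2_{\mathcal{H}_{\mathcal{X}}}+\lambda\|E\|^2_{\mathcal{L}_2}\ge\lambda\|E\|^2_{\mathcal{L}_2}$. Hence $\mathcal{E}^n_{\lambda}$ is continuous, coercive, and strongly (in particular strictly) convex on a Hilbert space, so it has a unique minimizer, which since $\mathcal{E}^n_{\lambda}$ is Fréchet differentiable is its unique critical point.

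Third, the optimality condition. I would compute the directional derivative of $\mathcal{E}^n_{\lambda}$ at $E$ in an arbitrary direction $D\in\mathcal{H}_{\Gamma}$, using the two tensor-product identities $\langle\psi(x_i),E^*\phi(z_i)\rangle_{\mathcal{H}_{\mathcal{X}}}=\langle E,\phi(z_i)\otimes\psi(x_i)\rangle_{\mathcal{L}_2}$ and (via an orthonormal basis of $\mathcal{H}_{\mathcal{X}}$ and Parseval) $\langle E^*\phi(z_i),D^*\phi(z_i)\rangle_{\mathcal{H}_{\mathcal{X}}}=\langle(\phi(z_i)\otimes\phi(z_i))\circ E,\,D\rangle_{\mathcal{L}_2}$. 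These collapse the derivative to
$$
2\,\big\langle\,\mathbf{T}_1\circ E+\lambda E-\mathbf{g}_1,\ D\,\big\rangle_{\mathcal{L}_2}.
$$
Setting this to zero for all $D$ gives the operator equation $(\mathbf{T}_1+\lambda)\circ E=\mathbf{g}_1$, and applying the bounded inverse from the first step yields $E^n_{\lambda}=(\mathbf{T}_1+\lambda)^{-1}\circ\mathbf{g}_1$.

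The only real obstacle is the operator-algebra bookkeeping in the infinite-dimensional setting: keeping the identification $\mathcal{H}_{\Gamma}\cong\mathcal{L}_2(\mathcal{H}_{\mathcal{X}},\mathcal{H}_{\mathcal{Z}})$ and its Hilbert--Schmidt norm explicit, using that optimizing over $E$ is equivalent to optimizing over $E^*$ (the map is an isometry), and verifying the tensor-product identities that put the gradient into the clean form $\mathbf{T}_1\circ E+\lambda E-\mathbf{g}_1$. An alternative route is to invoke the representer theorem for vector-valued RKHSs to reduce to a finite-dimensional normal equation, but the direct variational computation produces the stated closed form most transparently and sets up the integral-operator notation reused in the convergence analysis.
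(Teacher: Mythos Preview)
Your proposal is correct and is essentially the same approach as the paper's. The paper's proof of Theorem~\ref{sol_1} is merely a citation to \cite[Appendix D.1]{grunewalder2012modelling} and \cite[Lemma 17]{ciliberto2016consistent}; your direct variational argument (strong convexity for existence/uniqueness, then the first-order condition $(\mathbf{T}_1+\lambda)\circ E=\mathbf{g}_1$ via the tensor-product identities, which are exactly Propositions~\ref{T1_lemma2} and the Parseval computation underlying Proposition~\ref{opCS}) is precisely what those references contain, spelled out rather than pointed to.
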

We prove an original, finite sample bound on the RKHS-norm distance of the estimator $E^n_{\lambda}$ from its target $E_{\rho}$.
The proof is in Appendix~\ref{sec:stage1smaleconvergenceproofs}.
\begin{theorem}\label{stage1}
Assume Hypotheses 2-5. $\forall \delta\in(0,1)$, the following holds w.p. $1-\delta$:
\begin{align*}
    \|E^n_{\lambda}-E_{\rho}\|_{\mathcal{H}_{\Gamma}}&\leq r_E(\delta,n,c_1):= \dfrac{ \sqrt{\zeta_1}(c_1+1)}{4^{\frac{1}{c_1+1}}} \bigg(\dfrac{4\kappa(Q+\kappa \|E_{\rho}\|_{\mathcal{H}_{\Gamma}}) \ln(2/\delta)}{ \sqrt{n\zeta_1}(c_1-1)}\bigg)^{\frac{c_1-1}{c_1+1}}  \\
    \lambda&=\bigg(\dfrac{8\kappa(Q+\kappa \|E_{\rho}\|_{\mathcal{H}_{\Gamma}}) \ln(2/\delta)}{ \sqrt{n\zeta_1}(c_1-1)}\bigg)^{\frac{2}{c_1+1}}
\end{align*}
\end{theorem}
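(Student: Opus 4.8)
The plan is to run the classical bias--variance argument of \cite{smale2007learning}, lifted from scalar-valued to operator-valued ridge regression. By Theorem~\ref{sol_1} we have $E^n_\lambda=(\mathbf{T}_1+\lambda)^{-1}\mathbf{g}_1$; introduce the population objects $T_1=\mathbb{E}[\phi(Z)\otimes\phi(Z)]$ (the operator $S_1\circ S_1^*$ of the definition), $g_1=\mathbb{E}[\phi(Z)\otimes\psi(X)]$, and the regularized population operator $E_\lambda:=(T_1+\lambda)^{-1}g_1$, which is the unique minimizer of $\mathcal{E}_\lambda$ by strong convexity. The starting point is the normal equation $T_1 E_\rho=g_1$: under Hypotheses~2--4 the element $E_\rho h\in\mathcal{H}_{\mathcal{Z}}$ satisfies $(E_\rho h)(z)=\langle h,\mu(z)\rangle_{\mathcal{H}_{\mathcal{X}}}$, hence $T_1 E_\rho h=\mathbb{E}[\phi(Z)\langle h,\mu(Z)\rangle]=\mathbb{E}[\phi(Z)\langle h,\psi(X)\rangle]=g_1 h$ by the tower property. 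We then decompose $\|E^n_\lambda-E_\rho\|_{\mathcal{H}_\Gamma}\le\|E^n_\lambda-E_\lambda\|_{\mathcal{H}_\Gamma}+\|E_\lambda-E_\rho\|_{\mathcal{H}_\Gamma}$ into sampling and approximation error.

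For the approximation error, $E_\lambda-E_\rho=[(T_1+\lambda)^{-1}T_1-I]E_\rho=-\lambda(T_1+\lambda)^{-1}E_\rho$, and Hypothesis~\ref{hyp:smaleSmoothness} rewrites this as $-\lambda(T_1+\lambda)^{-1}T_1^{(c_1-1)/2}G_1$. Since $T_1$ is positive, self-adjoint and trace-class on $\mathcal{H}_{\mathcal{Z}}$ (Appendix~\ref{sec:cov_technical}), its fractional powers are defined spectrally, and the elementary bound $\sup_{t\ge 0}\frac{\lambda t^{b}}{t+\lambda}=b^{b}(1-b)^{1-b}\lambda^{b}$ with $b=(c_1-1)/2\in(0,\tfrac{1}{2}]$ yields $\|E_\lambda-E_\rho\|_{\mathcal{H}_\Gamma}\le b^{b}(1-b)^{1-b}\sqrt{\zeta_1}\,\lambda^{(c_1-1)/2}\le\sqrt{\zeta_1}\,\lambda^{(c_1-1)/2}$.

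For the sampling error, subtracting $(\mathbf{T}_1+\lambda)E^n_\lambda=\mathbf{g}_1$ from $(T_1+\lambda)E_\lambda=g_1$ gives $E^n_\lambda-E_\lambda=(\mathbf{T}_1+\lambda)^{-1}[(\mathbf{g}_1-g_1)-(\mathbf{T}_1-T_1)E_\lambda]$, so with $\|(\mathbf{T}_1+\lambda)^{-1}\|_{\mathrm{op}}\le\lambda^{-1}$ and $\|E_\lambda\|_{\mathcal{H}_\Gamma}\le\|E_\rho\|_{\mathcal{H}_\Gamma}$ (because $\|(T_1+\lambda)^{-1}T_1\|_{\mathrm{op}}\le 1$),
$$\|E^n_\lambda-E_\lambda\|_{\mathcal{H}_\Gamma}\le\tfrac{1}{\lambda}\big(\|\mathbf{g}_1-g_1\|_{\mathcal{H}_\Gamma}+\|E_\rho\|_{\mathcal{H}_\Gamma}\|\mathbf{T}_1-T_1\|_{\mathrm{op}}\big).$$
Both $\mathbf{g}_1-g_1$ and $\mathbf{T}_1-T_1$ are empirical means of i.i.d. centred Hilbert-space-valued variables, bounded by $\kappa Q$ and $\kappa^2$ respectively via Hypothesis~\ref{hyp:measurableBoundedFeatures}; a Bernstein/Hoeffding-type inequality for bounded Hilbert-space variables (as used in \cite{smale2007learning}) gives, each with probability $1-\delta/2$, $\|\mathbf{g}_1-g_1\|\le\tfrac{2\kappa Q\ln(2/\delta)}{\sqrt n}$ and $\|\mathbf{T}_1-T_1\|_{\mathrm{op}}\le\|\mathbf{T}_1-T_1\|_{\mathrm{HS}}\le\tfrac{2\kappa^2\ln(2/\delta)}{\sqrt n}$. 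A union bound then produces, with probability $1-\delta$,
$$\|E^n_\lambda-E_\rho\|_{\mathcal{H}_\Gamma}\le\frac{2\kappa(Q+\kappa\|E_\rho\|_{\mathcal{H}_\Gamma})\ln(2/\delta)}{\lambda\sqrt n}+\sqrt{\zeta_1}\,\lambda^{(c_1-1)/2}.$$

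It remains to optimize the bound, which has the form $A\lambda^{-1}+B\lambda^{b}$ with $b=(c_1-1)/2$ and is minimized at $\lambda^\star=(A/(bB))^{1/(b+1)}$; the $\lambda$ stated in the theorem is a clean overestimate of $\lambda^\star$ obtained using $b^{b}(1-b)^{1-b}\ge\tfrac{1}{2}$ on $(0,\tfrac{1}{2}]$, and substituting it back (simplifying powers of $2$ via $4^{1/(c_1+1)}\cdot 2^{(c_1-1)/(c_1+1)}=2$) yields exactly $r_E(\delta,n,c_1)$. I expect the genuine difficulty to lie not in this decomposition---which is routine---but in the infinite-dimensional underpinnings of the approximation step: establishing that $T_1$ is trace-class, that $T_1^{(c_1-1)/2}$ is a well-defined bounded operator, and that the accompanying spectral bounds are valid (all deferred to the covariance-operator appendix), together with making the constants in the Hilbert-space concentration inequality explicit.
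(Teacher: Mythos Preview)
Your overall scheme is the paper's: decompose into approximation error $\|E_\lambda-E_\rho\|_{\mathcal{H}_\Gamma}$ controlled by the source condition (Hypothesis~\ref{hyp:smaleSmoothness}) and sampling error $\|E^n_\lambda-E_\lambda\|_{\mathcal{H}_\Gamma}$ controlled by Hilbert-space concentration, then optimize $\lambda$.  The approximation-error step is identical.  The sampling-error step differs in two ways.  First, the paper applies concentration \emph{once} to the single random operator $\xi_i=\phi(z_i)\otimes[\psi(x_i)-E_\lambda^*\phi(z_i)]$, exploiting the variance bound $\sigma^2(\xi)\le\kappa^2\mathcal{E}_1(E_\lambda)$; you instead split into $\|\mathbf{g}_1-g_1\|$ and $\|\mathbf{T}_1-T_1\|$ and pay a union bound.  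Second, the paper uses only the crude estimate $\|E_\lambda\|_{\mathcal{H}_\Gamma}\le\kappa\|E_\rho\|_{\mathcal{H}_\Gamma}/\sqrt{\lambda}$ (from Proposition~\ref{lambda_min} with $E=0$), which makes the almost-sure bound on $\|\xi_i\|$ depend on $\lambda$ and forces a short case analysis on whether $\kappa/\sqrt{n\lambda}\le 1/(4\ln(2/\delta))$.  Your tighter bound $\|E_\lambda\|_{\mathcal{H}_\Gamma}\le\|E_\rho\|_{\mathcal{H}_\Gamma}$ (from $E_\lambda=(T_1+\lambda)^{-1}T_1 E_\rho$ and $\|(T_1+\lambda)^{-1}T_1\|_{\mathcal{L}}\le1$) is a genuine simplification---had the paper used it inside its single-$\xi$ argument, the case split would disappear.

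Two constants in your write-up need repair.  The union bound replaces $\ln(2/\delta)$ by $\ln(4/\delta)$, and the Bernstein term $2\tilde{M}\ln(\cdot)/n$ does not fold into $2\tilde{M}\ln(\cdot)/\sqrt{n}$ with leading coefficient~$2$ uniformly in~$n$; after fixing both you will land on the paper's coefficient~$4$ in the sampling bound, and then the stated $\lambda$ is exactly the first-order optimizer (no over- or under-estimate).  The appeal to $b^b(1-b)^{1-b}\ge\tfrac12$ in your last paragraph plays no role in matching the theorem's constants and can be dropped.
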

The efficient rate of $\lambda$ is $n^{\frac{-1}{c_1+1}}$. Note that the convergence rate of $E^n_{\lambda}$ is calibrated by $c_1$, which measures the smoothness of the conditional expectation operator $E_{\rho}$.

\subsection{Stage 2}



\textbf{Integral operators:} We use integral operator notation from the kernel methods literature, adapted to the structural operator learning problem. We denote by $L^2(\mathcal{H}_{\mathcal{X}},\rho_{\mathcal{H}_{\mathcal{X}}})$ the space of square integrable functions from $\mathcal{H}_{\mathcal{X}}$ to $\mathcal{Y}$ 
with respect to measure $\rho_{\mathcal{H}_{\mathcal{X}}}$, where $\rho_{\mathcal{H}_{\mathcal{X}}}$ is the extension of $\rho$ to $\mathcal{H}_{\mathcal{X}}$ \cite[Lemma A.3.16]{steinwart2008support}. Note that we present stage 2 analysis for general output space $\mathcal{Y}$ as in \cite{szabo2015two,szabo2016learning}, though in practice we only consider $\mathcal{Y}\subset \mathbb{R}$ to simplify our two-stage RKHS model.
\begin{definition}\label{def:stage2operators}
The stage 2 (population) operators are
\begin{align*}
 S^*&:\mathcal{H}_{\Omega}\hookrightarrow L^2(\mathcal{H}_{\mathcal{X}}, \rho_{\mathcal{H}_{\mathcal{X}}}),\enskip H\mapsto \Omega^*_{(\cdot)}H \\
    S&:L^2(\mathcal{H}_{\mathcal{X}}, \rho_{\mathcal{H}_{\mathcal{X}}})\rightarrow\mathcal{H}_{\Omega},\enskip \tilde{H}\mapsto \int \Omega_{\mu(z)} \circ\tilde{H}\mu(z) d\rho_{\mathcal{H}_{\mathcal{X}}}(\mu(z))
\end{align*}
\end{definition}
where $\Omega_{\mu(z)}: \mathcal{Y}\rightarrow\mathcal{H}_{\Omega}$ defined by $y\mapsto \Omega(\cdot,\mu(z))y$ is the point evaluator of \cite{micchelli2005learning,carmeli2006vector}. Finally define $T_{\mu(z)}=\Omega_{\mu(z)}\circ \Omega^*_{\mu(z)}$ and covariance operator $T=S\circ  S^*$.

\textbf{Assumptions:} We place assumptions on the original space $\mathcal{Y}$, the scalar-valued RKHS $\mathcal{H}_{\Omega}$, and the probability distribution $\rho$. Importantly, we assume that the scalar-valued
RKHS regression is correctly specified: the true structural operator $H_{\rho}$ lives in the scalar-valued RKHS $\mathcal{H}_{\Omega}$.


\begin{hypothesis}\label{hyp:polish}Suppose that $\mathcal{Y}$ is a Polish space
\end{hypothesis}

\begin{hypothesis}\label{hyp:hilbertSchmidt} Suppose that
\begin{enumerate}
   \item  The $\{\Omega_{\mu(z)}\}$ operator family is uniformly bounded in Hilbert-Schmidt norm: $\exists B$ s.t. $\forall \mu(z)$, 
    $
    \|\Omega_{\mu(z)}\|^2_{\mathcal{L}_2(\mathcal{Y},\mathcal{H}_{\Omega})}=Tr(\Omega_{\mu(z)}^*\circ \Omega_{\mu(z)})\leq B
    $
    \item The $\{\Omega_{\mu(z)}\}$ operator family is H\"older continuous in operator norm: $\exists L>0$, $\iota \in(0,1]$ s.t. $\forall \mu(z),\mu(z')$, 
    $
    \|\Omega_{\mu(z)}-\Omega_{\mu(z')}\|_{\mathcal{L}(\mathcal{Y},\mathcal{H}_{\Omega})} \leq L \|\mu(z)-\mu(z')\|^{\iota}_{\mathcal{H}_{\mathcal{X}}}
    $
\end{enumerate}
\end{hypothesis}
Larger $\iota$ is interpretable as smoother kernel $\Omega$.

\begin{hypothesis}\label{hyp:boundedY} Suppose that 
\begin{enumerate}
    \item $H_{\rho}\in \mathcal{H}_{\Omega}$. Then $
        \mathcal{E}(H_{\rho})=\inf_{H\in\mathcal{H}_{\Omega}}\mathcal{E}(H)
    $
    \item $Y$ is bounded, i.e. $\exists C<\infty$ s.t. $\|Y\|_{\mathcal{Y}}\leq C$ almost surely
\end{enumerate}
\end{hypothesis}

The convergence rate from stage 1 together with Hypotheses 6-8 are sufficient to bound the excess error of the regularized estimator $\hat{H}_{\xi}^{m}$ in terms of familiar objects in the kernel methods literature, namely the residual, reconstruction error, and effective dimension. We further assume $\rho$ belongs to a stage 2 prior to simplify these bounds. In particular, we assume $\rho$ belongs to a class of distributions parametrized by $(\zeta,b,c)$ as defined originally in \cite[Definition 1]{caponnetto2007optimal}, restated below.
\begin{hypothesis}\label{hyp:caponnettoSmoothness}
Fix $\zeta<\infty$. For given $b\in(1,\infty]$ and $c\in(1,2]$, define the prior $\mathcal{P}(\zeta,b,c)$ as the set of probability distributions $\rho$ on $\mathcal{H}_{\mathcal{X}}\times \mathcal{Y}$ such that 
\begin{enumerate}
    \item A range space assumption is satisfied: $\exists G\in\mathcal{H}_{\Omega}$ s.t.
    $
    H_{\rho} =T^{\frac{c-1}{2}}G$ and $
    \|G\|^2_{\mathcal{H}_{\Omega}}\leq \zeta
  $
    \item In the spectral decomposition $
    T=\sum_{k=1}^{\infty} \lambda_k  e_k \langle \cdot, e_k\rangle_{\mathcal{H}_{\Omega}}
    $, where $\{e_k\}_{k=1}^{\infty}$ is a basis of $Ker(T)^{\perp}$, the eigenvalues satisfy $\alpha\leq k^b \lambda_k \leq \beta$ for some $\alpha,\beta>0$
\end{enumerate}
\end{hypothesis}
We define the power of operator $T$ with respect to its eigendecomposition; see Appendix~\ref{sec:cov_technical} for formal justification. The latter condition is interpretable as polynomial decay of eigenvalues: $\lambda_k=\Theta(k^{-b})$. Larger $b$ means faster decay of eigenvalues of the covariance operator $T$ and hence smaller effective input dimension. Larger $c$ corresponds to a smoother structural operator $H_{\rho}$ \cite{szabo2016learning}.


\textbf{Estimation and convergence:} The estimator has a closed form solution, as shown by \cite{szabo2015two,szabo2016learning}  in the second stage of the distribution regression problem. We present the solution in notation similar to \cite{caponnetto2007optimal} to elucidate how the stage 1 and stage 2 estimators have the same structure.
\begin{theorem}\label{sol_2}
$\forall \xi>0$, the solution $H^{m}_{\xi}$ to $\mathcal{E}_{\xi}^m$ 
and the solution $\hat{H}^{m}_{\xi}$ to $\hat{\mathcal{E}}_{\xi}^m$ exist, are unique, and
\begin{align*}
    H_{\xi}^{m}&=(\mathbf{T}+\xi)^{-1}\mathbf{g},\quad \mathbf{T}=\dfrac{1}{m}\sum_{i=1}^m T_{\mu(\tilde{z}_i)},\quad \mathbf{g}=\dfrac{1}{m}\sum_{i=1}^m \Omega_{\mu(\tilde{z}_i)}\tilde{y}_i \\
    \hat{H}_{\xi}^{m}&=(\hat{\mathbf{T}}+\xi)^{-1}\hat{\mathbf{g}},\quad \hat{\mathbf{T}}=\dfrac{1}{m}\sum_{i=1}^m T_{\mu^n_{\lambda}(\tilde{z}_i)},\quad \hat{\mathbf{g}}=\dfrac{1}{m}\sum_{i=1}^m \Omega_{\mu^n_{\lambda}(\tilde{z}_i)}\tilde{y}_i
\end{align*}
\end{theorem}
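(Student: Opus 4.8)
The plan is to treat each of $\mathcal{E}^m_\xi$ and $\hat{\mathcal{E}}^m_\xi$ as a strictly convex quadratic functional on the Hilbert space $\mathcal{H}_\Omega$, rewrite it as a quadratic form built from a bounded, positive, boundedly invertible operator, and read off the unique minimizer from the vanishing first-order condition. First I would record the reproducing identity for $\mathcal{H}_\Omega$: by the defining property of the point evaluators $\Omega_{\mu(z)}\colon\mathcal{Y}\to\mathcal{H}_\Omega$ of Definition~\ref{def:stage2operators} (cf.\ \cite{micchelli2005learning,carmeli2006vector}), $H\mu(z)=\Omega^*_{\mu(z)}H$ for every $H\in\mathcal{H}_\Omega$, where $\Omega^*_{\mu(z)}\colon\mathcal{H}_\Omega\to\mathcal{Y}$ is the adjoint of $\Omega_{\mu(z)}$. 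Hypothesis~\ref{hyp:hilbertSchmidt}.1 makes each $\Omega_{\mu(z)}$ bounded (in fact Hilbert--Schmidt), so $T_{\mu(z)}=\Omega_{\mu(z)}\circ\Omega^*_{\mu(z)}$ is bounded, self-adjoint and positive semidefinite on $\mathcal{H}_\Omega$; hence the empirical averages $\mathbf{T}=\tfrac1m\sum_{i=1}^m T_{\mu(\tilde z_i)}$ share these properties and $\mathbf{g}=\tfrac1m\sum_{i=1}^m\Omega_{\mu(\tilde z_i)}\tilde y_i\in\mathcal{H}_\Omega$ is well defined.

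Next I would substitute $H\mu(\tilde z_i)=\Omega^*_{\mu(\tilde z_i)}H$ into $\mathcal{E}^m_\xi$ and expand the squared $\mathcal{Y}$-norms, using $\|\Omega^*_{\mu(\tilde z_i)}H\|_{\mathcal{Y}}^2=\langle T_{\mu(\tilde z_i)}H,H\rangle_{\mathcal{H}_\Omega}$ and $\langle\tilde y_i,\Omega^*_{\mu(\tilde z_i)}H\rangle_{\mathcal{Y}}=\langle\Omega_{\mu(\tilde z_i)}\tilde y_i,H\rangle_{\mathcal{H}_\Omega}$, to obtain $\mathcal{E}^m_\xi(H)=\langle(\mathbf{T}+\xi)H,H\rangle_{\mathcal{H}_\Omega}-2\langle\mathbf{g},H\rangle_{\mathcal{H}_\Omega}+\tfrac1m\sum_{i=1}^m\|\tilde y_i\|_{\mathcal{Y}}^2$. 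Since $\mathbf{T}\succeq0$ and $\xi>0$, the operator $\mathbf{T}+\xi$ is bounded below by $\xi I$, hence boundedly invertible, so this quadratic form is strictly convex and coercive and $\mathcal{E}^m_\xi$ has a unique minimizer. Computing the Fréchet derivative in a direction $V$ gives $\tfrac12 D\mathcal{E}^m_\xi(H)[V]=\langle(\mathbf{T}+\xi)H-\mathbf{g},V\rangle_{\mathcal{H}_\Omega}$, so the stationarity condition, which is necessary and (by convexity) sufficient for a minimum, is $(\mathbf{T}+\xi)H=\mathbf{g}$, i.e.\ $H^m_\xi=(\mathbf{T}+\xi)^{-1}\mathbf{g}$.

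For $\hat{\mathcal{E}}^m_\xi$ the identical computation applies after replacing $\mu(\tilde z_i)$ by $\mu^n_\lambda(\tilde z_i)=(E^n_\lambda)^*\phi(\tilde z_i)$ throughout: Theorem~\ref{sol_1} gives $E^n_\lambda$ as a well-defined bounded element of $\mathcal{H}_\Gamma$, so each $\mu^n_\lambda(\tilde z_i)$ is a bona fide bounded element of $\mathcal{H}_\mathcal{X}$, the operators $T_{\mu^n_\lambda(\tilde z_i)}$ and elements $\Omega_{\mu^n_\lambda(\tilde z_i)}\tilde y_i$ are well defined, $\hat{\mathbf{T}}\succeq0$ and $\hat{\mathbf{T}}+\xi\succeq\xi I$, and the same argument yields existence, uniqueness, and $\hat H^m_\xi=(\hat{\mathbf{T}}+\xi)^{-1}\hat{\mathbf{g}}$. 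The computation is routine once the setup is in place; the only steps I would be careful about are invoking the operator-valued-kernel reproducing identity $H\mu(z)=\Omega^*_{\mu(z)}H$ together with the boundedness of the point evaluators $\Omega_{\mu(z)}$ --- supplied exactly by the RKHS construction of Section~\ref{sec:problemAndAlgo} and Hypotheses~\ref{hyp:polish}--\ref{hyp:hilbertSchmidt} --- and checking that the plug-in points $\mu^n_\lambda(\tilde z_i)$ remain bounded so the $\hat{\phantom{\mathbf{T}}}$ operators inherit the needed properties, which is where Theorem~\ref{sol_1} enters.
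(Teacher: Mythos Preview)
Your proof is correct and amounts to spelling out in full what the paper simply delegates to citations: the paper's proof of Theorem~\ref{sol_2} just points to \cite[eq.~13, 14]{szabo2016learning} for the closed-form expressions and to \cite[Proposition~8]{cucker2002mathematical} for existence and uniqueness, and your strictly-convex-quadratic / first-order-condition argument is precisely the content of those references. One small remark: you do not actually need Hypothesis~\ref{hyp:hilbertSchmidt}.1 to guarantee boundedness of the point evaluators $\Omega_{\mu(z)}$ (and in particular of $\Omega_{\mu^n_\lambda(\tilde z_i)}$, which that hypothesis does not literally cover), since boundedness of point evaluation is part of the definition of an RKHS; the Hilbert--Schmidt hypothesis is only needed later for the trace-class arguments in the convergence analysis.
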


We now present this paper's main theorem. In Appendix~\ref{sec:finalTheorems}, we provide a finite sample bound on the excess error of the estimator $\hat{H}^{m}_{\xi}$ with respect to its target $H_{\rho}$.
Adapting arguments by \cite{szabo2016learning}, we demonstrate that KIV is able to achieve the minimax optimal single-stage rate derived by \cite{caponnetto2007optimal}. In other words, our two-stage estimator is able to learn the causal relationship with confounded data equally well as single-stage RKHS regression is able to learn the causal relationship with unconfounded data.
\begin{theorem}\label{rate}
Assume Hypotheses 1-9. Choose $\lambda=n^{-\frac{1}{c_1+1}}$ and $n=m^{\frac{a(c_1+1)}{\iota(c_1-1)}}$ where $a>0$.
\begin{enumerate}
    \item If $a\leq \frac{b(c+1)}{bc+1}$ then $\mathcal{E}(\hat{H}_{\xi}^{m})-\mathcal{E}(H_{\rho})=O_p(m^{-\frac{ac}{c+1}})$ with $\xi=m^{-\frac{a}{c+1}}$
    \item If $a\geq \frac{b(c+1)}{bc+1}$ then $\mathcal{E}(\hat{H}_{\xi}^{m})-\mathcal{E}(H_{\rho})=O_p(m^{-\frac{bc}{bc+1}})$ with $\xi=m^{-\frac{b}{bc+1}}$
\end{enumerate}
\end{theorem}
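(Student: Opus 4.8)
The plan is to decompose the excess error $\mathcal{E}(\hat H^m_\xi)-\mathcal{E}(H_\rho)$ into a stage-2 statistical error and a stage-1 propagation error, control each piece separately in terms of $n,m,\lambda,\xi$, and then optimize the rate of $\xi$ subject to the constraint linking $n$ to $m$. Concretely, since $\mathcal{E}(H)-\mathcal{E}(H_\rho)=\|S^*(H-H_\rho)\|^2_{L^2}$ by the usual bias–variance identity for the population risk (using $H_\rho=H_{\mathcal H_\Omega}$), I would bound $\|S^*(\hat H^m_\xi-H_\rho)\|_{L^2}$ by the triangle inequality through the intermediate quantities $H^m_\xi$ (the stage-2 estimator using the \emph{true} $\mu$) and $H_\xi$ (the regularized population target). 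The term $\|S^*(H^m_\xi-H_\rho)\|_{L^2}$ is exactly the single-stage distribution-regression error analyzed by \cite{caponnetto2007optimal,szabo2016learning}: under Hypotheses~\ref{hyp:polish}--\ref{hyp:caponnettoSmoothness} it is $O_p\big(\xi^{c/2}+\text{(effective-dimension variance terms in }m,\xi)\big)$, which balances to the minimax rate $m^{-bc/(2(bc+1))}$ when $\xi\asymp m^{-b/(bc+1)}$. The remaining term $\|S^*(\hat H^m_\xi-H^m_\xi)\|_{L^2}$ is the genuinely new contribution: it measures how the stage-1 error $\|\mu^n_\lambda-\mu\|$ — equivalently $\|E^n_\lambda-E_\rho\|_{\mathcal H_\Gamma}$, controlled by Theorem~\ref{stage1} — propagates through stage 2.

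For the propagation term I would expand the difference using the closed forms of Theorem~\ref{sol_2}: $\hat H^m_\xi-H^m_\xi=(\hat{\mathbf T}+\xi)^{-1}\hat{\mathbf g}-(\mathbf T+\xi)^{-1}\mathbf g$, and write this via the resolvent identity as a sum of a term $(\hat{\mathbf T}+\xi)^{-1}(\mathbf g-\hat{\mathbf g})$ and a term $(\hat{\mathbf T}+\xi)^{-1}(\mathbf T-\hat{\mathbf T})(\mathbf T+\xi)^{-1}\mathbf g$. Using Hypothesis~\ref{hyp:hilbertSchmidt}.1 (uniform Hilbert–Schmidt bound $B$) and Hypothesis~\ref{hyp:hilbertSchmidt}.2 (Hölder continuity of $\mu(z)\mapsto\Omega_{\mu(z)}$ with exponent $\iota$), the perturbations $\|\mathbf T-\hat{\mathbf T}\|$ and $\|\mathbf g-\hat{\mathbf g}\|$ are each $O\big(\sup_i\|\mu^n_\lambda(\tilde z_i)-\mu(\tilde z_i)\|^\iota\big)=O\big(\|E^n_\lambda-E_\rho\|^\iota_{\mathcal H_\Gamma}\big)$, using $\|\mu^n_\lambda(z)-\mu(z)\|_{\mathcal H_\mathcal X}\le\kappa\|E^n_\lambda-E_\rho\|_{\mathcal H_\Gamma}$ and boundedness of $Y$ (Hypothesis~\ref{hyp:boundedY}.2). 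Together with $\|(\hat{\mathbf T}+\xi)^{-1}\|\le\xi^{-1}$ and $\|(\mathbf T+\xi)^{-1}\mathbf g\|$ bounded (since $H_\rho\in\mathcal H_\Omega$), this gives a propagation contribution of order $\xi^{-1}\,r_E(\delta,n,c_1)^\iota$ up to constants, i.e. of order $\xi^{-1} n^{-\iota(c_1-1)/(2(c_1+1))}$ after inserting the rate from Theorem~\ref{stage1}. One must be slightly careful to route this through $\|S^*(\cdot)\|_{L^2}\le \|T^{1/2}\|\,\|\cdot\|_{\mathcal H_\Omega}$ and to keep the $\xi$-dependence honest; a sharper treatment splitting $(\hat{\mathbf T}+\xi)^{-1}(\mathbf T-\hat{\mathbf T})$ spectrally would only improve constants, so the crude resolvent bound suffices for the stated rate.

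Finally I would combine: the total error is $O_p\big(\xi^{c/2}+m^{-1/2}\xi^{-1/(2b)}+m^{-1/2}\xi^{-1/2}+\xi^{-1}n^{-\iota(c_1-1)/(2(c_1+1))}\big)$ (schematically, with the effective-dimension term $\Theta(\xi^{-1/b}/m)$ inside a square root). The choice $n=m^{a(c_1+1)/(\iota(c_1-1))}$ is engineered precisely so that $n^{-\iota(c_1-1)/(2(c_1+1))}=m^{-a/2}$, making the propagation term $\xi^{-1}m^{-a/2}$ — exactly the shape of a ``variance'' term with sample size $m^a$ and no effective-dimension discount. Optimizing $\xi^{c/2}$ against $\xi^{-1}m^{-a/2}$ gives $\xi\asymp m^{-a/(c+1)}$ and rate $m^{-ac/(2(c+1))}$; optimizing against the genuine stage-2 effective-dimension term $\xi^{-1/b}m^{-1}$ gives instead $\xi\asymp m^{-b/(bc+1)}$ and rate $m^{-bc/(2(bc+1))}$. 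Whichever of the two regularizations is larger determines which regime binds, and comparing the two candidate $\xi$ values yields exactly the threshold $a\lessgtr b(c+1)/(bc+1)$ of the theorem statement (the stated rates are for $\mathcal{E}-\mathcal{E}(H_\rho)$, i.e. the squares of these). The main obstacle is the propagation step: getting a bound on $\|\hat H^m_\xi-H^m_\xi\|$ that is simultaneously tight enough in $\xi$ and correctly powered in the stage-1 error $r_E^\iota$, since a naive bound loses a factor and would force a suboptimal $n/m$ ratio — this is where Hypothesis~\ref{hyp:hilbertSchmidt} and the closed forms of Theorem~\ref{sol_2} must be used carefully, and it is the technical heart distinguishing our two-stage analysis from the known single-stage results.
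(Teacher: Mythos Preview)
Your overall strategy matches the paper's: decompose the excess risk into stage-1 propagation terms (the paper's $S_{-1},S_0$) and single-stage terms ($\mathcal A(\xi),S_1,S_2$) via the resolvent identity, control $\|\hat{\mathbf g}-\mathbf g\|$ and $\|\hat{\mathbf T}-\mathbf T\|$ by H\"older continuity of $\Omega_{\mu(z)}$ together with the uniform bound $\|\mu^n_\lambda(z)-\mu(z)\|\le\kappa\|E^n_\lambda-E_\rho\|_{\mathcal H_\Gamma}$, and then balance. But there is a genuine gap exactly at the step you explicitly dismiss.

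You bound the propagation contribution (un-squared) as $\xi^{-1}r_E^\iota$, using $\|(\hat{\mathbf T}+\xi)^{-1}\|\le\xi^{-1}$ and $\|T^{1/2}\|=O(1)$ separately, and then assert ``the crude resolvent bound suffices for the stated rate''. It does not. With that bound, balancing $\xi^{c/2}$ against $\xi^{-1}m^{-a/2}$ gives $\xi^{(c+2)/2}=m^{-a/2}$, i.e.\ $\xi\asymp m^{-a/(c+2)}$, not $m^{-a/(c+1)}$; your own final arithmetic is inconsistent with your claimed bound. To recover the theorem you need the sharper estimate
\[
\bigl\|\sqrt{T}\,(\hat{\mathbf T}+\xi)^{-1}\bigr\|_{\mathcal L(\mathcal H_\Omega)}=O(\xi^{-1/2}),
\]
which puts the propagation contribution to the \emph{excess risk} at $O(r_\mu^{2\iota}/\xi)$ rather than $O(r_\mu^{2\iota}/\xi^2)$; then $\xi^c\asymp r_\mu^{2\iota}/\xi$ yields $\xi=m^{-a/(c+1)}$ as stated.

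This sharper bound is not a matter of constants, and it is not free, because $\hat{\mathbf T}$ is built from the \emph{estimated} embeddings $\mu^n_\lambda(\tilde z_i)$. The paper obtains it by a Neumann-series argument: $\|\sqrt T(T+\xi)^{-1}\|\le(2\sqrt\xi)^{-1}$ by the AM--GM inequality on eigenvalues, and then one must show $\|(T-\hat{\mathbf T})(T+\xi)^{-1}\|\le 3/4$ with high probability. The latter splits into the standard concentration piece $\|(T-\mathbf T)(T+\xi)^{-1}\|\le 1/2$ and a stage-1 piece $\|(\mathbf T-\hat{\mathbf T})(T+\xi)^{-1}\|\le 2\sqrt{B}L\,r_\mu^\iota/\xi\le 1/4$. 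The last inequality imposes an additional constraint $r_\mu^{2\iota}\lesssim\xi^2$ that must be carried through the final optimization (it appears alongside $m\xi^{1+1/b}\ge 1$ in the paper's reduction), and which your proposal does not track either.
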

At $a=\frac{b(c+1)}{bc+1}<2$, the convergence rate $m^{-\frac{bc}{bc+1}}$ is minimax optimal while requiring the fewest observations \cite{szabo2016learning}. This statistically efficient rate is calibrated by $b$, the effective input dimension, as well as $c$, the smoothness of structural operator $H_{\rho}$ \cite{caponnetto2007optimal}. The efficient ratio between stage 1 and stage 2 samples is $n=m^{\frac{b(c+1)}{bc+1}\cdot \frac{(c_1+1)}{\iota(c_1-1)}}$, implying $n>m$. As far as we know, asymmetric sample splitting is a novel prescription in the IV literature; previous analyses assume $n=m$ \cite{angrist1995split,hefny2015supervised}.

\section{Experiments}\label{sec:experiments}

We compare the empirical performance of KIV (\verb|KernelIV|) to four leading competitors: standard kernel ridge regression (\verb|KernelReg|) \cite{saunders1998ridge}, Nadaraya-Watson IV (\verb|SmoothIV|) \cite{carrasco2007linear,darolles2011nonparametric}, sieve IV (\verb|SieveIV|) \cite{newey2003instrumental,chen2018optimal}, and deep IV (\verb|DeepIV|) \cite{hartford2017deep}. To improve the performance of sieve IV, we impose Tikhonov regularization in both stages with KIV's tuning procedure. This adaptation exceeds the theoretical justification provided by \cite{chen2018optimal}. However, it is justified by our analysis insofar as sieve IV is a special case of KIV: set feature maps $\psi,\phi$ equal to the sieve bases.


    \begin{wrapfigure}{R}{\textwidth/3}
\vspace{5pt}
  \begin{center}
    \includegraphics[width=\textwidth/3]{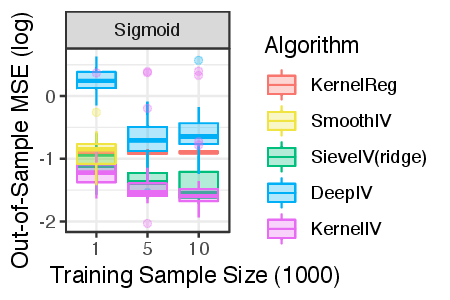}
     \vspace{-10pt}
    \caption{Sigmoid design}
    \label{sigmoid_main}
    \end{center}
        \vspace{-0pt}
\end{wrapfigure}

We implement each estimator on three designs. The \textit{linear} design \cite{chen2018optimal} involves learning counterfactual function $h(x)=4x-2$, given confounded observations of continuous variables $(X,Y)$ as well as continuous instrument $Z$. The \textit{sigmoid} design \cite{chen2018optimal} involves learning counterfactual function $h(x)=\ln (|16x-8|+1)\cdot sgn(x-0.5)$ under the same regime. The \textit{demand} design \cite{hartford2017deep} involves learning demand function $h(p,t,s)=100+(10+p)\cdot s\cdot \psi(t)-2p$ where $
\psi(t)$ is the complex nonlinear function in Figure~\ref{demand_dgp}. An observation consists of $(Y,P,T,S,C)$ where $Y$ is sales, $P$ is price, $T$ is time of year, $S$ is customer sentiment (a discrete variable), and $C$ is a supply cost shifter. The parameter $\rho\in\{0.9,0.75,0.5,0.25,0.1\}$ calibrates the extent to which price $P$ is confounded by supply-side market forces.  In KIV notation, inputs are $X=(P,T,S)$ and instruments are $Z=(C,T,S)$.

    
     For each algorithm, design, and sample size, we implement 40 simulations and calculate MSE with respect to the true structural function $h$. Figures~\ref{sigmoid_main},~\ref{demand}, and~\ref{uni} visualize results. 
    In the sigmoid design, \verb|KernelIV| performs best across sample sizes. In the demand design, \verb|SmoothIV| performs best for sample size $n+m=1000$. Like \cite{hartford2017deep}, we do not implement \verb|SmoothIV| for greater sample sizes due to running time. Among estimators that we are able to implement, \verb|KernelIV| performs best for sample sizes $n+m=5000$ and $n+m=10000$. \verb|KernelReg| ignores the instrument $Z$, and it is biased away from the structural function due to confounding noise $e$. This phenomenon can have counterintuitive consequences. Figure~\ref{demand} shows that in the highly nonlinear demand design, \verb|KernelReg| deviates further from the structural function as sample size increases because the algorithm is further misled by confounded data. Figure 2 of \cite{hartford2017deep} documents the same effect when a feedforward neural network is used. The remaining algorithms make use of the instrument $Z$ to overcome this issue.
    
    \verb|KernelIV| improves on \verb|SieveIV| in the same way that kernel ridge regression improves on ridge regression: by using an infinite dictionary of implicit basis functions rather than a finite dictionary of explicit basis functions. \verb|KernelIV| improves on \verb|SmoothIV| by using kernel ridge regression in not only stage 2 but also stage 1, avoiding costly density estimation. Finally, it improves on \verb|DeepIV| by directly learning stage 1 mean embeddings, rather than performing costly density estimation and sampling from the estimated density. The experiments show that \verb|KernelIV| works particularly well when the true structural function $h$ is smooth, confirming the theoretical guarantee of Theorem~\ref{rate}. See Appendix~\ref{sec:sim} for representative plots, implementation details, and a robustness study.

\begin{figure}[H]
\centering
\vspace{-5pt}
    \includegraphics[width=\textwidth]{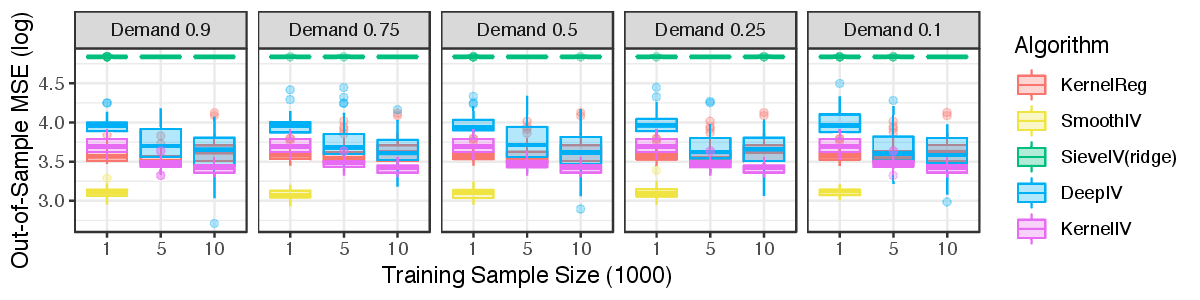}
    \vspace{-15pt}
    \caption{Demand design}
    \label{demand}
    \vspace{-5pt}
\end{figure}

\section{Conclusion}

We introduce KIV, an algorithm for learning a nonlinear, causal relationship from confounded observational data. KIV is easily implemented and minimax optimal. As a contribution to the IV literature, we show how to estimate the stage 1 conditional expectation operator--an infinite by infinite dimensional object--by kernel ridge regression. As a contribution to the kernel methods literature, we show how the RKHS is well-suited to causal inference and ill-posed inverse problems.
In simulations, KIV outperforms state of the art algorithms for nonparametric IV regression. The success of KIV suggests RKHS methods may be an effective bridge between econometrics and machine learning.

\subsubsection*{Acknowledgments}

We are grateful to Alberto Abadie, Anish Agarwal, Michael Arbel, Victor Chernozhukov, Geoffrey Gordon, Jason Hartford, Motonobu Kanagawa, Anna Mikusheva, Whitney Newey, Nakul Singh, Bharath Sriperumbudur, and Suhas Vijaykumar. This project was made possible by the Marshall Aid Commemoration Commission.

\newpage


\appendix

\section{Appendix}

\localtableofcontents

\newpage

\subsection{Instrumental variable}

\subsubsection{Comparison of IV assumptions}\label{sec:simpleIVadditiveNoise}

Here, we compare Hypothesis~\ref{iv} with two alternative formulations of the IV assumption. 

\begin{wrapfigure}{R}{0.33\textwidth}
\vspace{-5pt}
\begin{center}
\begin{adjustbox}{width=.33\textwidth}
\begin{tikzpicture}[->,>=stealth',shorten >=1pt,auto,node distance=2.8cm,
                    semithick]
  \tikzstyle{every state}=[draw=black,text=black]

  \node[state]         (z) [fill=gray]                   {$Z$};
  \node[state]         (x) [right of=z, fill=gray]       {$X$};
  \node[state]         (y) [right of=x, fill=gray]       {$Y$};
   \node[state]         (e) [above left of=y]                  {$e$};

  \path (z) edge              node {$ $} (x)
        (x) edge              node {$ $} (y)
        (e) edge           node {$ $} (x)
       edge           node {$ $} (y);;
\end{tikzpicture}
\end{adjustbox}
\vspace{-10pt}
\caption{IV DAG}
\label{dag}
\end{center}
\vspace{-10pt}
\end{wrapfigure}

We refer to the first formulation in the introduction: conditional independence. This formulation consists of the following assumptions: exclusion $Z\indep Y|(X,e)$; unconfounded instrument $Z\indep e$; and relevance, i.e. $\rho(x|z)$ is not constant in $z$. The directed acyclic graph (DAG) in Figure~\ref{dag} encodes these assumptions. Definition 7.4.1 of \cite{pearl2009causality} provides a formal graphical criterion. 

The second formulation is via potential outcomes \cite{angrist1996identification}. Though it is beyond the scope of this work, see \cite[Chapter 7]{hernan2019causal} for the relation between DAGs and potential outcomes.

We use a third formulation, which belongs in the moment restriction framework for causal inference. In the moment restriction approach, we encode causal assumptions via functional form restrictions and conditional expectations set to zero. Hypothesis~\ref{iv}, introduced by \cite{newey2003instrumental}, involves such statements. In particular, it imposes additive separability of confounding noise $e$, and $\mathbb{E}[e|Z]=0$. Be imposing the former, we can relax the independences $Z\indep Y|(X,e)$ and $Z\indep e$ to mean independence $\mathbb{E}[e|Z]=0$.

We recommend \cite{singh2019causal} for a comparison of the DAG, potential outcome, and moment restriction frameworks for causal inference.

\subsubsection{Linear vignette}

To build intuition for the IV model, we walk through a classic vignette about the linear case. We show how least squares (LS) has a different estimand than two-stage least squares (2SLS) when observations are confounded, i.e. with confounding noise. We will see that the estimand of 2SLS is the structural parameter of interest.


Consider the model
$$
Y=\beta'X+e,\quad \mathbb{E}[Xe]\neq0, \quad \mathbb{E}[e|Z]=0
$$
where $Y,e\in \mathbb{R}$, $X\in \mathbb{R}^{d_x}$, $Z\in \mathbb{R}^{d_z}$,  and  $d_z\geq d_x$. Data $(X,Y)$ are confounded but we have access to instrument $Z$. We aim to recover structural parameter $\beta$. Denote the estimands of LS and 2SLS by $\beta^{LS}$ and $\beta^{2SLS}$, respectively. For clarity, we write the variables to which expectations refer.

\begin{proposition}\label{vignette}
$\beta^{LS}\neq\beta=\beta^{2SLS}$
\end{proposition}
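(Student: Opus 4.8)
The plan is to write each population estimand in closed form under the natural full-rank conditions and then compare it to $\beta$ by a direct substitution of $Y=\beta'X+e$. First consider $\beta^{LS}$, the population least-squares coefficient of $Y$ on $X$, namely $\beta^{LS}=\mathbb{E}[XX']^{-1}\mathbb{E}[XY]$ (assuming $\mathbb{E}[XX']$ invertible, as is implicit in writing an LS estimand). Plugging in $Y=\beta'X+e$ gives $\beta^{LS}=\beta+\mathbb{E}[XX']^{-1}\mathbb{E}[Xe]$. Since $\mathbb{E}[Xe]\neq 0$ by assumption and $\mathbb{E}[XX']^{-1}$ has trivial kernel, the bias term is nonzero, so $\beta^{LS}\neq\beta$. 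This is precisely the confounding phenomenon: LS recovers the best linear predictor, which is contaminated by the correlation between the regressor and the structural error.

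Next consider $\beta^{2SLS}$. Stage 1 is the population regression of $X$ on $Z$, giving the linear conditional mean $\bar x(z)=\Pi'z$ with $\Pi=\mathbb{E}[ZZ']^{-1}\mathbb{E}[ZX']$; abbreviate $\bar X:=\Pi'Z$. Stage 2 is the population regression of $Y$ on $\bar X$, so $\beta^{2SLS}=\mathbb{E}[\bar X\bar X']^{-1}\mathbb{E}[\bar X Y]$. I would then establish two identities: (i) $\mathbb{E}[\bar X X']=\mathbb{E}[\bar X\bar X']$, which holds because $\mathbb{E}[\bar X(X-\bar X)']=\Pi'\big(\mathbb{E}[ZX']-\mathbb{E}[ZZ']\Pi\big)=0$ by the definition of $\Pi$; and (ii) $\mathbb{E}[\bar X e]=\Pi'\,\mathbb{E}\!\big[Z\,\mathbb{E}[e\mid Z]\big]=0$ by Hypothesis~\ref{iv}.1. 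Substituting $Y=\beta'X+e$ into $\mathbb{E}[\bar X Y]$ and applying (i)--(ii) yields $\mathbb{E}[\bar X Y]=\mathbb{E}[\bar X\bar X']\beta$, whence $\beta^{2SLS}=\beta$. Combining the two displays gives $\beta^{LS}=\beta+\mathbb{E}[XX']^{-1}\mathbb{E}[Xe]\neq\beta=\beta^{2SLS}$.

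The main thing to be careful about — really the only nontrivial point — is the invertibility of $\mathbb{E}[\bar X\bar X']=\Pi'\mathbb{E}[ZZ']\Pi$, a $d_x\times d_x$ matrix. This is the linear incarnation of the relevance condition (Hypothesis~\ref{iv}.2): $\rho(x\mid z)$ failing to be constant in $z$ forces $\mathbb{E}[ZX']$, hence $\Pi$, to have full column rank $d_x$ (recall $d_z\ge d_x$), which in turn makes $\Pi'\mathbb{E}[ZZ']\Pi$ nonsingular and the stage-2 inverse well defined. With that rank condition in place, all the inverses above exist and the comparison above completes the argument; no heavier machinery is needed, since the linear model makes both stages exact projections.
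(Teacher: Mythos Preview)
Your proposal is correct and follows the same overall strategy as the paper: substitute $Y=\beta'X+e$ into the two population estimands and use $\mathbb{E}[e\mid Z]=0$. The one notable difference is in how stage~1 is formalized. The paper takes $\bar X(Z):=\mathbb{E}[X\mid Z]$, the true conditional mean, and argues via conditioning: $\bar Y(Z)=\bar X(Z)'\beta$, then the law of iterated expectations gives $\mathbb{E}[\bar X(Z)\bar Y(Z)]=\mathbb{E}[\bar X(Z)Y]$. You instead take the linear projection $\bar X=\Pi'Z$ with $\Pi=\mathbb{E}[ZZ']^{-1}\mathbb{E}[ZX']$ and argue via the projection identity $\mathbb{E}[\bar X X']=\mathbb{E}[\bar X\bar X']$ together with $\mathbb{E}[\bar Xe]=0$. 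Your formulation is the textbook 2SLS estimand and does not require $\mathbb{E}[X\mid Z]$ to be linear; the paper's version implicitly treats the linear first stage as recovering the conditional mean. Both routes land on $\beta^{2SLS}=\beta$, and your added discussion of the rank condition on $\Pi'\mathbb{E}[ZZ']\Pi$ makes the invertibility step explicit where the paper leaves it tacit.
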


\begin{proof}
$\beta^{LS}$ is the projection of $Y$ onto $X$.
$$
\beta^{LS}=\mathbb{E}_X[XX']^{-1}\mathbb{E}_{X,Y}[XY]=\beta+\mathbb{E}_{X}[XX']^{-1}\mathbb{E}_{X,e}[Xe]\neq\beta
$$
where the second equality substitutes $Y=X'\beta+e$.

Define $\bar{X}(Z):=\mathbb{E}[X|Z]$ and $\bar{Y}(Z):=\mathbb{E}[Y|Z]$. $\beta^{2SLS}$ is the projection of $Y$ onto $\bar{X}(Z)$.
$$
\beta^{2SLS}=\mathbb{E}_Z[\bar{X}(Z)\bar{X}(Z)']^{-1}\mathbb{E}_{Z,Y}[\bar{X}(Z)Y]
$$
Finally we confirm that $\beta^{2SLS}=\beta$. Taking $\mathbb{E}[\cdot|Z]$ of the model LHS and RHS
$$
\bar{Y}(Z)=\bar{X}(Z)'\beta\implies \bar{X}(Z)\bar{Y}(Z)=\bar{X}(Z)\bar{X}(Z)'\beta \implies \mathbb{E}_{Z}[\bar{X}(Z)\bar{Y}(Z)]=\mathbb{E}_Z[\bar{X}(Z)\bar{X}(Z)']\beta
$$
Appealing to the definition of conditional expectation,
$$
\beta=\mathbb{E}_Z[\bar{X}(Z)\bar{X}(Z)']^{-1}\mathbb{E}_{Z}[\bar{X}(Z)\bar{Y}(Z)]= \mathbb{E}_Z[\bar{X}(Z)\bar{X}(Z)']^{-1}\mathbb{E}_{Z,Y}[\bar{X}(Z)Y]
$$
\end{proof}

The final equality in the proof makes an important point: in 2SLS, one may use projected outputs $\bar{Y}(Z)$ or original outputs $Y$ in stage 2. Choice of the latter simplifies estimation and analysis. 

In the present work, we extend this basic model and approach. We consider inputs $\psi(X)$ instead of $X$ and instruments $\phi(Z)$ instead of $Z$. Matching symbols, the model becomes
$$Y=h(X)+e=H\psi(X)+e$$
where the structural operator $H$ generalizes the structural parameter $\beta$. Whereas 2SLS regresses $Y$ on $\bar{X}(Z)=\mathbb{E}[X|Z]$, KIV regresses $Y$ on $\mu(Z)=\mathbb{E}[\psi(X)|Z]$.




\subsection{Comparison of nonparametric IV bounds}

In this section, we compare KIV with alternative nonparametric IV methods that have statistical
guarantees. Readers may find it helpful to familiarize themselves with our results in Section~\ref{sec:consistency} before reading this section.

\subsubsection{Nadaraya-Watson IV}\label{sec:comparisonDarollesNonparametric}

We first give a detailed account of the bound for nonparametric two-stage IV regression in \cite{darolles2011nonparametric}, which provides an
explicit end-to-end rate for the combined stages 1 and 2. In this work, stage 1 requires estimates of the conditional density
of the input $X$ and output $Y$ given the instrument $Z$. Stage
2 is a ridge regression performed in the relevant space of square
integrable functions; the ridge penalty is not directly on RKHS norm, unlike the present work. Still, \cite[Assumption A.2]{darolles2011nonparametric} requires that the structural function $h$ is an element of an RKHS defined from the right singular values of the conditional expectation operator $E$ in order to prove consistency. To facilitate comparison between \cite{darolles2011nonparametric} and the present work, we present the operator equation in both notations
$$
\mathbb{E}[Y|Z]=Eh,\quad r=T\varphi
$$

The stage 1 rate of \cite[Assumption 3]{darolles2011nonparametric}
directly follows from the convergence rate for the Nadaraya-Watson conditional density estimate, expressed as a ratio of unconditional estimates. Definition 4.1 of \cite{darolles2011nonparametric} describes the density estimation kernels, which
should not be confused with RKHS kernels. The rate depends on the smoothness of the density (specifically, the number of derivatives that exist),
the dimension of the random variables, and the smoothness of the density
estimation kernel used. The combined stage 1 and 2 result in \cite[Theorem 4.1, Corollary 4.2]{darolles2011nonparametric}
requires a further smoothness assumption on the stage 2 regression function $h$, as outlined in \cite[Proposition 3.2]{darolles2011nonparametric}. Our smoothness assumption in Hypothesis \ref{hyp:caponnettoSmoothness} plays an analogous role, though it takes a different form.

There are a number of significant differences between \cite{darolles2011nonparametric} and KIV. Consider stage 1 of the learning problem. Density estimation is a more general task than computing conditional mean embeddings $\mu(z)=\mathbb{E}_{X|Z=z}\psi(X)$, which are all that stage 2 regression requires. In particular, density estimation
rapidly becomes more difficult with increasing dimension \cite[Section 6.5]{Wasserman06AllOF},
whereas the difficulty of learning $\mu(z)$ depends solely on the smoothness
of the regression function to $\mathcal{H}_{\mathcal{X}}$; recall Hypothesis~\ref{hyp:smaleSmoothness}. Thus,
when the input $X$ and instrumental variable $Z$ are in moderate to high
dimensions, we expect conditional density estimation in stage 1 of \cite{darolles2011nonparametric} to suffer a drop
in performance unlike kernel ridge regression in stage 1 of KIV. (As an aside, the approach to conditional density estimation that involves
a ratio of Nadaraya-Watson estimates is suboptimal; better direct
estimates of conditional densities exist \cite{Sugiyama2010leastsquares,arbel2018kernel,Dutordoir18Gaussian}.)

Finally, there is no discussion of whether the overall rate obtained
in \cite{darolles2011nonparametric} is optimal under the smoothess
assumptions made. Relatedly, there is no discussion of what an efficient ratio of stage 1 to stage 2 training samples might be. By contrast, our stage 2 result has a minimax optimal guarantee accompanied by a recommended ratio of training sample sizes.

\subsubsection{Kernel PSR}\label{sec:comparisonHefnyNonparametric}

Next we describe a bound for two-stage IV regression
derived in the context of predictive state representations (PSRs) \cite{hefny2015supervised}.
PSRs are a means of performing filtering and smoothing for a time
series of observations $o_{1},\ldots,o_{t}$. In this setting, future
observations are summarized as a feature vector $\varphi_{t}:=\varphi(o_{t:t+k-1})$,
and past observations as a feature vector $h_{t}:=h(o_{1:t-1})$.
The predictive state is the expectation of future features given the history: $q_{t}:=\mathbb{E}[\varphi_{t}|h_{t}]$. Features can be RKHS feature maps \cite{boots2013hilbert}. In this case, the predictive state is a conditional mean embedding.

Given history $q_{t}$, the goal of filtering is to predict the extended
future state $p_{t}:=\mathbb{E}[\xi_{t}|h_{t}]$, where
$\xi_{t}:=\xi(o_{t:t+k})$ \cite[eq. 2]{hefny2015supervised}. The
relation with IV regression is apparent: both $q_{t}$
and $p_{t}$ are the result of stage 1 regression, and the mapping
between them is the result of stage 2 regression. Theorem 2 of \cite{hefny2015supervised} gives
a finite sample bound for the final stage 2 result, which
incorporates convergence results for stage 1 from \cite[Theorem 6]{song2009hilbert}.

There are several key differences between the \cite{hefny2015supervised} bound and the KIV bound. First, the \cite{hefny2015supervised} bound does not make full use of the structure of the conditional mean embedding regression problem \cite{grunewalder2012conditional}. Rather, \cite{hefny2015supervised} apply matrix concentration results from \cite{Hsu2012tail}
to the operators used in constructing the regression function. As a consequence, the stage 2 rate is slower than the minimax optimal rate proposed in \cite{caponnetto2007optimal}.

Another consequence is that the analysis in \cite{hefny2015supervised} requires strong assumptions about the smoothness of the input
to stage 2 regression. By contrast, our regression-specific 
analysis requires assumptions on the smoothness of the regression
function; see \cite[Theorem 2]{smale2007learning}
and \cite[Definition 1]{caponnetto2007optimal}. The proof of \cite{hefny2015supervised}
  additionally assumes that the stage 2 regression is a Hilbert-Schmidt operator, which amounts to a smoothness assumption, however this is insufficient for their bound.
  
  We now show that the input smoothness assumptions
from \cite{hefny2015supervised} make the bound inapplicable in our
case. Suppose we wish to make a counterfactual prediction $y_{\mathrm{test}}=H\gamma_{\mathrm{test}}$
for some $\gamma_{\mathrm{test}}\in\mathcal{H}_{\mathcal{X}}$. From
\cite[Theorem 2]{hefny2015supervised}, the required assumption is that $\exists f_{\mathrm{test}}:\mathcal{X}\rightarrow\mathcal{H}_{\mathcal{X}}$ such that
$$
\gamma_{\mathrm{test}}=\int\left(\int \psi(x')d\rho(x'|z)\right)\left(\int f_{\mathrm{test}}(x)d\rho(x|z)\right)d\rho(z)
$$
Our final goal of counterfactual prediction at a single point requires $\gamma_{\mathrm{test}}=\psi(x_{\mathrm{test}})$,
which will only hold in the trivial case when $\rho(x'|z)\rho(z)$ represents
a single point mass. In the PSR setting, the
assumption is not vacuous since $\gamma_{\mathrm{test}}$ will not
be the kernel at a single test point; see \cite[Lemma 3]{hefny2015supervised}.
An identical issue arises in the stage 1 bound of \cite[Proposition C.2]{hefny2015supervised},
since it uses a result from \cite[Theorem 6]{song2009hilbert} which
makes an analogous input smoothness assumption. In summary, neither bound
applies in our setting.

Finally, \cite[Theorem 2]{hefny2015supervised} does not explicitly
determine an efficient ratio of stage 1 and stage 2 training samples. Instead, analysis assumes an equal number of training samples in each stage. By contrast, we give an efficient ratio between training sample sizes required to obtain the minimax optimal rate in stage 2.

Despite the difference in setting, we believe our approach may
be used to improve the results in \cite{hefny2015supervised}.


\subsection{Vector-valued RKHS}\label{sec:vectorValuedRKHS}


We briefly review the theory of vector-valued RKHS as it relates to the IV regression problem. The primary reference is the
appendix of \cite{grunewalder2013smooth}.

\begin{proposition}[Lemma 4.33 of \cite{steinwart2008support}]\label{separable}
Under Hypotheses 2-3, $\mathcal{H}_{\mathcal{X}}$ and  $\mathcal{H}_{\mathcal{Z}}$ are separable.
\end{proposition}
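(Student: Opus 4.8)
The plan is to exhibit, for each of the two RKHSs, a \emph{countable} subset whose closed linear span is the whole space; a Hilbert space with this property is separable, since the finite $\mathbb{Q}$-linear combinations of such a countable set form a countable dense subset (their closure contains the $\mathbb{R}$-span, whose closure is everything). So the real content is producing such a countable set, and for this I would rely on two ingredients: the canonical feature map is continuous, and the base space is separable.

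First I would record the standard RKHS fact that $\mathcal{H}_{\mathcal{X}}=\overline{\mathrm{span}}\{k_{\mathcal{X}}(x,\cdot):x\in\mathcal{X}\}=\overline{\mathrm{span}}\,\psi(\mathcal{X})$: any element of $\mathcal{H}_{\mathcal{X}}$ orthogonal to every $\psi(x)$ has value $\langle f,\psi(x)\rangle_{\mathcal{H}_{\mathcal{X}}}=f(x)=0$ for all $x$ by the reproducing property, hence is $0$. Next I would verify that $\psi:\mathcal{X}\to\mathcal{H}_{\mathcal{X}}$ is continuous: for $x,x'\in\mathcal{X}$,
\[
\|\psi(x)-\psi(x')\|_{\mathcal{H}_{\mathcal{X}}}^2=k_{\mathcal{X}}(x,x)-2k_{\mathcal{X}}(x,x')+k_{\mathcal{X}}(x',x'),
\]
and continuity of $k_{\mathcal{X}}$ on $\mathcal{X}\times\mathcal{X}$ (Hypothesis~\ref{hyp:measurableBoundedFeatures}.1) forces the right-hand side to $0$ as $x'\to x$, so $\psi$ is continuous (indeed the boundedness of $k_{\mathcal{X}}$ is not even needed here).

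Then I would invoke Hypothesis~2: since $\mathcal{X}$ is Polish it is separable, so fix a countable dense set $D_{\mathcal{X}}\subseteq\mathcal{X}$. Continuity of $\psi$ gives that $\psi(D_{\mathcal{X}})$ is dense in $\psi(\mathcal{X})$, hence $\overline{\mathrm{span}}\,\psi(D_{\mathcal{X}})=\overline{\mathrm{span}}\,\psi(\mathcal{X})=\mathcal{H}_{\mathcal{X}}$. Thus $\mathcal{H}_{\mathcal{X}}$ is the closed linear span of the countable set $\psi(D_{\mathcal{X}})$ and is therefore separable. Running the identical argument with $k_{\mathcal{Z}}$, the feature map $\phi$, and a countable dense $D_{\mathcal{Z}}\subseteq\mathcal{Z}$ yields separability of $\mathcal{H}_{\mathcal{Z}}$.

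I do not expect any genuine obstacle; the only steps needing a word of care are the implication ``continuous kernel $\Rightarrow$ continuous feature map'' (via the displayed identity above) and the elementary remark that a Hilbert space spanned by a countable set is separable (pass to rational coefficients). It is worth noting that the \emph{characteristic} and \emph{measurability} parts of Hypothesis~\ref{hyp:measurableBoundedFeatures} play no role in this proposition; only continuity of the kernels and separability of $\mathcal{X},\mathcal{Z}$ are used.
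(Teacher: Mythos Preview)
Your argument is correct and is the standard proof of this fact. The paper does not actually supply a proof: it states the proposition as a direct citation of Lemma~4.33 in \cite{steinwart2008support} and moves on. What you have written is essentially the content of that lemma's proof (continuity of the feature map from continuity of the kernel, plus separability of the domain to obtain a countable spanning set), so there is no substantive difference in approach---you have simply unpacked the cited result. Your remark that only the continuity of $k_{\mathcal{X}},k_{\mathcal{Z}}$ and the Polish assumption on $\mathcal{X},\mathcal{Z}$ are used (not boundedness, measurability, or the characteristic property) is accurate and worth keeping.
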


\begin{proposition}[Theorem A.2 of \cite{grunewalder2013smooth}]\label{prop:vectorValuedKernelWithIdentity}
Let $I_{\mathcal{H}_{\mathcal{Z}}}:\mathcal{H}_{\mathcal{Z}}\rightarrow\mathcal{H}_{\mathcal{Z}}$ be the identity operator. $\Gamma(h,h')=\langle h,h'\rangle_{\mathcal{H}_{\mathcal{X}}}I_{\mathcal{H}_{\mathcal{Z}}}$ is a kernel of positive type.
\end{proposition}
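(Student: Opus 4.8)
The plan is to verify directly the two defining properties of an operator-valued kernel of positive type: (i) Hermitian symmetry $\Gamma(h,h')^{*}=\Gamma(h',h)$ for all $h,h'\in\mathcal{H}_{\mathcal{X}}$, and (ii) positive semidefiniteness, i.e. for every finite collection $h_{1},\dots,h_{N}\in\mathcal{H}_{\mathcal{X}}$ and $v_{1},\dots,v_{N}\in\mathcal{H}_{\mathcal{Z}}$,
\[
\sum_{i,j=1}^{N}\big\langle \Gamma(h_{i},h_{j})v_{j},\,v_{i}\big\rangle_{\mathcal{H}_{\mathcal{Z}}}\ \geq\ 0 .
\]
First I would observe that $\Gamma(h,h')=\langle h,h'\rangle_{\mathcal{H}_{\mathcal{X}}}I_{\mathcal{H}_{\mathcal{Z}}}$ is a bounded operator on $\mathcal{H}_{\mathcal{Z}}$, with operator norm $|\langle h,h'\rangle_{\mathcal{H}_{\mathcal{X}}}|\le\|h\|_{\mathcal{H}_{\mathcal{X}}}\|h'\|_{\mathcal{H}_{\mathcal{X}}}$, so $\Gamma$ is well defined as a map into $\mathcal{L}(\mathcal{H}_{\mathcal{Z}})$. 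Property (i) is then immediate, since $\mathcal{H}_{\mathcal{X}}$ is a real Hilbert space and scalar multiples of the identity are self-adjoint: $(\langle h,h'\rangle_{\mathcal{H}_{\mathcal{X}}}I_{\mathcal{H}_{\mathcal{Z}}})^{*}=\langle h',h\rangle_{\mathcal{H}_{\mathcal{X}}}I_{\mathcal{H}_{\mathcal{Z}}}=\Gamma(h',h)$.

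For property (ii) I would substitute the definition and factor the scalar to obtain
\[
\sum_{i,j=1}^{N}\big\langle \Gamma(h_{i},h_{j})v_{j},v_{i}\big\rangle_{\mathcal{H}_{\mathcal{Z}}}=\sum_{i,j=1}^{N}\langle h_{i},h_{j}\rangle_{\mathcal{H}_{\mathcal{X}}}\,\langle v_{j},v_{i}\rangle_{\mathcal{H}_{\mathcal{Z}}},
\]
which is the Frobenius pairing of the Gram matrices $A=(\langle h_{i},h_{j}\rangle_{\mathcal{H}_{\mathcal{X}}})_{i,j}$ and $C=(\langle v_{i},v_{j}\rangle_{\mathcal{H}_{\mathcal{Z}}})_{i,j}$, both symmetric positive semidefinite; by the Schur product theorem their Hadamard product $A\circ C$ is positive semidefinite, so $\mathbf{1}^{\top}(A\circ C)\mathbf{1}=\sum_{i,j}A_{ij}C_{ij}\ge 0$, which is exactly the claim. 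Equivalently, and probably cleaner to write up, I would exhibit a feature map: define $\Phi(h)\colon\mathcal{H}_{\mathcal{Z}}\to\mathcal{H}_{\mathcal{X}}\otimes\mathcal{H}_{\mathcal{Z}}$ by $\Phi(h)v=h\otimes v$; a short computation using $\langle h\otimes v,\,h'\otimes v'\rangle=\langle h,h'\rangle_{\mathcal{H}_{\mathcal{X}}}\langle v,v'\rangle_{\mathcal{H}_{\mathcal{Z}}}$ gives $\Phi(h)^{*}\Phi(h')=\langle h,h'\rangle_{\mathcal{H}_{\mathcal{X}}}I_{\mathcal{H}_{\mathcal{Z}}}=\Gamma(h,h')$, whence $\sum_{i,j}\langle\Gamma(h_{i},h_{j})v_{j},v_{i}\rangle_{\mathcal{H}_{\mathcal{Z}}}=\big\|\sum_{i}\Phi(h_{i})v_{i}\big\|^{2}\ge 0$. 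This feature-map form also foreshadows the paper's later identification of $\mathcal{H}_{\Gamma}$ with the Hilbert-Schmidt class $\mathcal{L}_{2}(\mathcal{H}_{\mathcal{X}},\mathcal{H}_{\mathcal{Z}})$.

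There is no serious obstacle here; the only care needed is bookkeeping. I would state the definition of \emph{kernel of positive type} for operator-valued kernels exactly as in \cite{micchelli2005learning,carmeli2006vector,grunewalder2013smooth}, so that (i)--(ii) are the correct conditions, and I would invoke separability of $\mathcal{H}_{\mathcal{X}}$ and $\mathcal{H}_{\mathcal{Z}}$ (Proposition~\ref{separable}) only where it is genuinely used downstream—when passing from this positive-type kernel to a well-behaved vector-valued RKHS—not in the positivity check itself. The one mild subtlety is carrying out the adjoint computation for $\Phi(h)$ with the right tensor-product inner-product convention so that no stray factors or transposes appear.
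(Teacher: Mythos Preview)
Your proof is correct. The paper does not actually supply its own argument here; it merely cites the result as Theorem~A.2 of \cite{grunewalder2013smooth}, so there is no in-paper proof to compare against. Your direct verification of Hermitian symmetry and positive semidefiniteness (either via the Schur product of the two Gram matrices or, more cleanly, via the feature map $\Phi(h)v=h\otimes v$) is exactly the standard way to establish this, and the feature-map computation you sketch is indeed the one underlying the identification $\mathcal{H}_{\Gamma}\cong\mathcal{L}_2(\mathcal{H}_{\mathcal{X}},\mathcal{H}_{\mathcal{Z}})$ used later in the paper.
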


\begin{proposition}[Proposition 2.3 of \cite{carmeli2006vector}]
Consider a kernel of positive type $\Gamma:\mathcal{H}_{\mathcal{X}}\times \mathcal{H}_{\mathcal{X}} \rightarrow \mathcal{L}(\mathcal{H}_{\mathcal{Z}})$, where $\mathcal{L}(\mathcal{H}_{\mathcal{Z}})$ is the space of bounded linear operators from $\mathcal{H}_{\mathcal{Z}}$ to $\mathcal{H}_{\mathcal{Z}}$. It corresponds to a unique RKHS $\mathcal{H}_{\Gamma}$ with reproducing kernel $\Gamma$.
\end{proposition}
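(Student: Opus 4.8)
The plan is to run the classical Moore--Aronszajn construction in the operator-valued setting, treating elements $h\in\mathcal{H}_{\mathcal{X}}$ as the ``points'' of the domain and $\mathcal{H}_{\mathcal{Z}}$ as the space of values; recall that $\Gamma$ being of positive type means $\Gamma(h,h')^{*}=\Gamma(h',h)$ and $\sum_{i,j}\langle v_i,\Gamma(h_i,h_j)v_j\rangle_{\mathcal{H}_{\mathcal{Z}}}\geq 0$ for every finite family $\{(h_i,v_i)\}\subset\mathcal{H}_{\mathcal{X}}\times\mathcal{H}_{\mathcal{Z}}$. First I would introduce the atoms $\Gamma_h v:=\Gamma(\cdot,h)v$, let $\mathcal{H}_0$ be their linear span (a space of functions $\mathcal{H}_{\mathcal{X}}\to\mathcal{H}_{\mathcal{Z}}$), and define a bilinear form on $\mathcal{H}_0$ by $\langle\Gamma_h v,\Gamma_{h'}v'\rangle_0:=\langle v,\Gamma(h,h')v'\rangle_{\mathcal{H}_{\mathcal{Z}}}$, extended bilinearly. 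Positive type gives $\langle f,f\rangle_0\geq 0$; the reproducing identity $\langle f,\Gamma_h v\rangle_0=\langle f(h),v\rangle_{\mathcal{H}_{\mathcal{Z}}}$ holds on atoms by definition and extends by linearity, which simultaneously shows the form is well defined (so $f(h)$ does not depend on the chosen representation of $f$) and, via Cauchy--Schwarz, that $\langle f,f\rangle_0=0$ forces $f\equiv 0$. Hence $\langle\cdot,\cdot\rangle_0$ is an inner product, and $\|f(h)\|_{\mathcal{H}_{\mathcal{Z}}}^{2}\leq\|\Gamma(h,h)\|_{\mathcal{L}(\mathcal{H}_{\mathcal{Z}})}\|f\|_0^{2}$, i.e. evaluation at each $h$ is a bounded $\mathcal{H}_{\mathcal{Z}}$-valued functional on $\mathcal{H}_0$.

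Next I would complete $(\mathcal{H}_0,\langle\cdot,\cdot\rangle_0)$ and realize the completion as a genuine function space $\mathcal{H}_{\Gamma}$. Given a Cauchy sequence $(f_n)\subset\mathcal{H}_0$, the evaluation bound makes $(f_n(h))$ Cauchy in $\mathcal{H}_{\mathcal{Z}}$ for each $h$, hence convergent to some $f(h)$; the same bound shows that $f$ depends only on the equivalence class of $(f_n)$ and that $f\equiv 0$ corresponds to the zero class, so $\mathcal{H}_{\Gamma}$ is identified with a Hilbert space of functions $\mathcal{H}_{\mathcal{X}}\to\mathcal{H}_{\mathcal{Z}}$ in which $\mathcal{H}_0$ sits densely. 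Continuity of the inner product extends the reproducing identity to all of $\mathcal{H}_{\Gamma}$: $\langle f,\Gamma_h v\rangle_{\mathcal{H}_{\Gamma}}=\langle f(h),v\rangle_{\mathcal{H}_{\mathcal{Z}}}$ for every $f\in\mathcal{H}_{\Gamma}$; since each atom $\Gamma_h v$ lies in $\mathcal{H}_{\Gamma}$, this exhibits $\Gamma$ as its reproducing kernel and, in particular, shows that point evaluations $\mathcal{H}_{\Gamma}\to\mathcal{H}_{\mathcal{Z}}$ are bounded.

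For uniqueness, suppose $\mathcal{H}'$ is any RKHS of $\mathcal{H}_{\mathcal{Z}}$-valued functions on $\mathcal{H}_{\mathcal{X}}$ with reproducing kernel $\Gamma$. The reproducing property forces $\Gamma_h v\in\mathcal{H}'$ with $\langle\Gamma_h v,\Gamma_{h'}v'\rangle_{\mathcal{H}'}=\langle v,\Gamma(h,h')v'\rangle_{\mathcal{H}_{\mathcal{Z}}}=\langle\Gamma_h v,\Gamma_{h'}v'\rangle_{\mathcal{H}_{\Gamma}}$, so $\mathcal{H}'$ and $\mathcal{H}_{\Gamma}$ agree, with the same inner product, on the common dense subspace spanned by the atoms (density in $\mathcal{H}'$ follows because anything orthogonal to every $\Gamma_h v$ evaluates to $0$ everywhere). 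Therefore $\mathcal{H}'=\mathcal{H}_{\Gamma}$ isometrically. I expect the only genuine obstacle to be the step of realizing the abstract completion as an honest space of functions via the pointwise-convergence argument above; everything else is the scalar Moore--Aronszajn argument with $\mathbb{R}$ replaced by $\mathcal{H}_{\mathcal{Z}}$ and scalar multiplication replaced by the action of $\Gamma(h,h')\in\mathcal{L}(\mathcal{H}_{\mathcal{Z}})$. Alternatively one may simply invoke Proposition~2.3 of \cite{carmeli2006vector}; we spell out the construction because it makes transparent the identification of $\mathcal{H}_{\Gamma}$ with $\mathcal{L}_2(\mathcal{H}_{\mathcal{X}},\mathcal{H}_{\mathcal{Z}})$ used in this appendix.
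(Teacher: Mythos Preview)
Your construction is correct and is precisely the operator-valued Moore--Aronszajn argument that underlies Proposition~2.3 of \cite{carmeli2006vector}. Note, however, that the paper itself does not supply a proof of this proposition: it simply states the result and cites \cite{carmeli2006vector} for it. In that sense your write-up is strictly more detailed than what the paper provides, and you already anticipate this in your final sentence. There is no discrepancy in approach to flag; you have reproduced the standard proof that the cited reference contains.
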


\begin{proposition}[Theorem B.1 of \cite{grunewalder2013smooth}]
Each $E\in\mathcal{H}_{\Gamma}$ is a bounded linear operator $E:\mathcal{H}_{\mathcal{X}}\rightarrow\mathcal{H}_{\mathcal{Z}}$.
\end{proposition}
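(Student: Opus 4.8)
The plan is to extract from the reproducing property of $\mathcal{H}_{\Gamma}$ a single pointwise bound, namely $\|E(h)\|_{\mathcal{H}_{\mathcal{Z}}}\le \|E\|_{\mathcal{H}_{\Gamma}}\|h\|_{\mathcal{H}_{\mathcal{X}}}$ for every $E\in\mathcal{H}_{\Gamma}$ and $h\in\mathcal{H}_{\mathcal{X}}$. Boundedness of $E$ as an operator is then immediate, and linearity follows by transferring the (obvious) linearity of the dense family of ``atoms'' $\Gamma(\cdot,h_0)\ell_0$ to a general $E$ using that same bound to pass to the limit.

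First I would recall, from Proposition 2.3 of \cite{carmeli2006vector} together with Proposition~\ref{prop:vectorValuedKernelWithIdentity}, the relevant structure of the vector-valued RKHS $\mathcal{H}_{\Gamma}$ associated with $\Gamma(h,h')=\langle h,h'\rangle_{\mathcal{H}_{\mathcal{X}}}I_{\mathcal{H}_{\mathcal{Z}}}$: its elements are functions $E:\mathcal{H}_{\mathcal{X}}\to\mathcal{H}_{\mathcal{Z}}$, the finite sums $\sum_i \Gamma(\cdot,h_i)\ell_i$ (with $h_i\in\mathcal{H}_{\mathcal{X}}$, $\ell_i\in\mathcal{H}_{\mathcal{Z}}$) are dense in $\mathcal{H}_{\Gamma}$, and the reproducing property $\langle E,\Gamma(\cdot,h)\ell\rangle_{\mathcal{H}_{\Gamma}}=\langle E(h),\ell\rangle_{\mathcal{H}_{\mathcal{Z}}}$ holds for all $E\in\mathcal{H}_{\Gamma}$, $h\in\mathcal{H}_{\mathcal{X}}$, $\ell\in\mathcal{H}_{\mathcal{Z}}$. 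Applying the reproducing property to $E=\Gamma(\cdot,h)\ell$ and using $\Gamma(h,h)=\langle h,h\rangle_{\mathcal{H}_{\mathcal{X}}}I_{\mathcal{H}_{\mathcal{Z}}}$ gives
$$\|\Gamma(\cdot,h)\ell\|_{\mathcal{H}_{\Gamma}}^2=\langle \Gamma(h,h)\ell,\ell\rangle_{\mathcal{H}_{\mathcal{Z}}}=\|h\|_{\mathcal{H}_{\mathcal{X}}}^2\,\|\ell\|_{\mathcal{H}_{\mathcal{Z}}}^2.$$
Combining this with the reproducing property and Cauchy--Schwarz in $\mathcal{H}_{\Gamma}$, for any $E\in\mathcal{H}_{\Gamma}$ and $h\in\mathcal{H}_{\mathcal{X}}$,
$$\|E(h)\|_{\mathcal{H}_{\mathcal{Z}}}=\sup_{\|\ell\|_{\mathcal{H}_{\mathcal{Z}}}\le 1}\langle E(h),\ell\rangle_{\mathcal{H}_{\mathcal{Z}}}=\sup_{\|\ell\|_{\mathcal{H}_{\mathcal{Z}}}\le 1}\langle E,\Gamma(\cdot,h)\ell\rangle_{\mathcal{H}_{\Gamma}}\le \|E\|_{\mathcal{H}_{\Gamma}}\,\|h\|_{\mathcal{H}_{\mathcal{X}}},$$
so $E$ is bounded with operator norm at most $\|E\|_{\mathcal{H}_{\Gamma}}$.

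It remains to show $E$ is linear. Each atom $\Gamma(\cdot,h_0)\ell_0=\langle\cdot,h_0\rangle_{\mathcal{H}_{\mathcal{X}}}\ell_0$ is linear in its argument, hence so is every finite sum of atoms. Given $E\in\mathcal{H}_{\Gamma}$, choose finite sums $E_n\to E$ in $\mathcal{H}_{\Gamma}$; applying the displayed bound to $E_n-E$ yields $\|E_n(h)-E(h)\|_{\mathcal{H}_{\mathcal{Z}}}\le \|E_n-E\|_{\mathcal{H}_{\Gamma}}\|h\|_{\mathcal{H}_{\mathcal{X}}}\to 0$ for each $h$, i.e.\ $E_n\to E$ pointwise. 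Passing to the limit in $E_n(ah_1+bh_2)=aE_n(h_1)+bE_n(h_2)$ gives linearity of $E$, completing the proof. There is no substantive obstacle here; the only points requiring care are citing the reproducing property and the density of the atom span in exactly the operator-valued form used (standard, from \cite{carmeli2006vector,grunewalder2013smooth}), and observing that the norm bound is precisely what legitimizes transferring linearity from the dense subset to all of $\mathcal{H}_{\Gamma}$.
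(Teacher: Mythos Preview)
Your argument is correct and self-contained. Note, however, that the paper does not actually supply a proof of this proposition: it simply records the statement and attributes it to Theorem~B.1 of \cite{grunewalder2013smooth}. So there is nothing in the paper to compare against beyond the citation itself.

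That said, what you wrote is essentially the standard proof one finds in the vector-valued RKHS literature (e.g.\ \cite{carmeli2006vector,micchelli2005learning}) and is almost certainly what \cite{grunewalder2013smooth} do as well: use the reproducing property $\langle E,\Gamma(\cdot,h)\ell\rangle_{\mathcal{H}_\Gamma}=\langle E(h),\ell\rangle_{\mathcal{H}_\mathcal{Z}}$ together with Cauchy--Schwarz and the explicit computation $\|\Gamma(\cdot,h)\ell\|_{\mathcal{H}_\Gamma}=\|h\|_{\mathcal{H}_\mathcal{X}}\|\ell\|_{\mathcal{H}_\mathcal{Z}}$ to obtain the pointwise bound, then transfer linearity from the dense span of atoms via that bound. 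Your handling of the linearity step by pointwise convergence is clean; an equivalent route is to read linearity directly off the reproducing property, since $\langle E(ah_1+bh_2)-aE(h_1)-bE(h_2),\ell\rangle_{\mathcal{H}_\mathcal{Z}}=\langle E,\Gamma(\cdot,ah_1+bh_2)\ell-a\Gamma(\cdot,h_1)\ell-b\Gamma(\cdot,h_2)\ell\rangle_{\mathcal{H}_\Gamma}=0$ for all $\ell$, using that $h\mapsto\Gamma(\cdot,h)\ell=\langle\cdot,h\rangle_{\mathcal{H}_\mathcal{X}}\ell$ is linear. This avoids the density and limit argument, but both approaches are equally valid.
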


\begin{proposition}\label{inner}
$\mathcal{H}_{\Gamma}= \mathcal{L}_2(\mathcal{H}_{\mathcal{X}},\mathcal{H}_{\mathcal{Z}})$ and the inner products are equal.
\end{proposition}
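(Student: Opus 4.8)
The plan is to identify the canonical feature map of the operator-valued kernel $\Gamma$ from Proposition~\ref{prop:vectorValuedKernelWithIdentity} with the rank-one operators inside $\mathcal{L}_2(\mathcal{H}_{\mathcal{X}},\mathcal{H}_{\mathcal{Z}})$, check that the $\mathcal{H}_{\Gamma}$-inner product and the Hilbert--Schmidt inner product agree on their span, and then argue that the two completions pick out the same set of operators. (Alternatively one could route through $\mathcal{H}_{\Gamma}\cong\mathcal{H}_{\mathcal{Z}}\otimes\mathcal{H}_{\mathcal{X}}\cong\mathcal{L}_2(\mathcal{H}_{\mathcal{X}},\mathcal{H}_{\mathcal{Z}})$, but the direct feature-map computation is self-contained.)

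First I would recall the structure of the vector-valued RKHS: for each $h\in\mathcal{H}_{\mathcal{X}}$ the point evaluator $\Gamma_h:\mathcal{H}_{\mathcal{Z}}\to\mathcal{H}_{\Gamma}$, $\ell\mapsto\Gamma(\cdot,h)\ell$, satisfies the reproducing property $\langle E,\Gamma_h\ell\rangle_{\mathcal{H}_{\Gamma}}=\langle Eh,\ell\rangle_{\mathcal{H}_{\mathcal{Z}}}$ for every $E\in\mathcal{H}_{\Gamma}$, and $\mathcal{H}_{\Gamma}=\overline{\operatorname{span}}\{\Gamma_h\ell:h\in\mathcal{H}_{\mathcal{X}},\,\ell\in\mathcal{H}_{\mathcal{Z}}\}$. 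Evaluating the function $\Gamma_h\ell$ at $h'$ gives $(\Gamma_h\ell)(h')=\Gamma(h',h)\ell=\langle h',h\rangle_{\mathcal{H}_{\mathcal{X}}}\ell$, so $\Gamma_h\ell$, viewed as the bounded operator $\mathcal{H}_{\mathcal{X}}\to\mathcal{H}_{\mathcal{Z}}$ it is by Theorem~B.1 of \cite{grunewalder2013smooth}, is exactly the rank-one operator $\ell\otimes h:h'\mapsto\langle h',h\rangle_{\mathcal{H}_{\mathcal{X}}}\ell$ in the tensor notation of Section~\ref{sec:problemAndAlgo}. Hence $\operatorname{span}\{\Gamma_h\ell\}$ is, as a set of operators, exactly the finite-rank operators from $\mathcal{H}_{\mathcal{X}}$ to $\mathcal{H}_{\mathcal{Z}}$.

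Next I would compute inner products on the feature vectors. The reproducing property together with $\Gamma(h,h')\ell'=\langle h,h'\rangle_{\mathcal{H}_{\mathcal{X}}}\ell'$ gives $\langle\Gamma_h\ell,\Gamma_{h'}\ell'\rangle_{\mathcal{H}_{\Gamma}}=\langle(\Gamma_{h'}\ell')(h),\ell\rangle_{\mathcal{H}_{\mathcal{Z}}}=\langle h,h'\rangle_{\mathcal{H}_{\mathcal{X}}}\langle\ell,\ell'\rangle_{\mathcal{H}_{\mathcal{Z}}}$; expanding the Hilbert--Schmidt inner product over an orthonormal basis of $\mathcal{H}_{\mathcal{X}}$ and using Parseval gives $\langle\ell\otimes h,\ell'\otimes h'\rangle_{\mathcal{L}_2(\mathcal{H}_{\mathcal{X}},\mathcal{H}_{\mathcal{Z}})}=\langle h,h'\rangle_{\mathcal{H}_{\mathcal{X}}}\langle\ell,\ell'\rangle_{\mathcal{H}_{\mathcal{Z}}}$, the same value. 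Extending bilinearly, the $\mathcal{H}_{\Gamma}$-inner product and the $\mathcal{L}_2(\mathcal{H}_{\mathcal{X}},\mathcal{H}_{\mathcal{Z}})$-inner product coincide on the common subspace of finite-rank operators.

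Finally I would pass to completions. Both $\mathcal{H}_{\Gamma}$ and $\mathcal{L}_2(\mathcal{H}_{\mathcal{X}},\mathcal{H}_{\mathcal{Z}})$ are the completion of the finite-rank operators under one and the same inner product; what remains is to check that these two completions realize the same subspace of operators $\mathcal{H}_{\mathcal{X}}\to\mathcal{H}_{\mathcal{Z}}$, which is the main obstacle. For this I would note that a Cauchy sequence of finite-rank operators converges pointwise in both topologies: in $\mathcal{L}_2(\mathcal{H}_{\mathcal{X}},\mathcal{H}_{\mathcal{Z}})$ because the Hilbert--Schmidt norm dominates the operator norm, and in $\mathcal{H}_{\Gamma}$ because the reproducing property and Cauchy--Schwarz give $\|Eh\|_{\mathcal{H}_{\mathcal{Z}}}\le\|h\|_{\mathcal{H}_{\mathcal{X}}}\|E\|_{\mathcal{H}_{\Gamma}}$. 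Consequently the abstract limit in either completion is the same concrete operator, so $\mathcal{H}_{\Gamma}=\mathcal{L}_2(\mathcal{H}_{\mathcal{X}},\mathcal{H}_{\mathcal{Z}})$ as sets, with identical inner products. Here one also uses the classical density of finite-rank operators in $\mathcal{L}_2(\mathcal{H}_{\mathcal{X}},\mathcal{H}_{\mathcal{Z}})$, which is unproblematic since $\mathcal{H}_{\mathcal{X}}$ and $\mathcal{H}_{\mathcal{Z}}$ are separable by Proposition~\ref{separable}.
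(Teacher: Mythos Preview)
Your argument is correct. The paper's own proof is merely a pair of citations (\cite[Theorem 13]{berlinet2011reproducing} and \cite[eq.~12]{grunewalder2012conditional}), so there is little to compare against at the level of strategy; what you have done is unpack exactly the mechanism those references encapsulate. The key identification $\Gamma_h\ell=\ell\otimes h$, the matching of inner products on rank-one operators, and the density/completion step are precisely the content behind the tensor-product RKHS result in \cite{berlinet2011reproducing}. Your version has the advantage of being self-contained and of making explicit the one nontrivial point---that the abstract completion of the finite-rank operators in the $\mathcal{H}_{\Gamma}$-norm coincides \emph{as a set of operators} with the Hilbert--Schmidt completion---which the paper simply absorbs into the citation.
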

\begin{proof}
\cite[Theorem 13]{berlinet2011reproducing} and \cite[eq. 12]{grunewalder2012conditional}.
\end{proof}


\begin{proposition}[Theorem B.2 of \cite{grunewalder2013smooth}.]
If $\exists E,G\in\mathcal{H}_{\Gamma}$ s.t. $\forall x\in\mathcal{X},\;E\psi(x)=G\psi(x)$ then $E=G$. Furthermore, if $\psi(x)$ is continuous in $x$ then it is sufficient that $E\psi(x)=G\psi(x)$ on a dense subset of $\mathcal{X}$.
\end{proposition}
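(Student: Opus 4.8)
The plan is to reduce the statement to the elementary fact that a bounded linear operator is determined by its values on a dense subspace, once we recall that the span of the feature maps $\{\psi(x):x\in\mathcal{X}\}$ is dense in $\mathcal{H}_{\mathcal{X}}$.

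First I would set $D:=E-G$. By the preceding proposition (Theorem B.1 of \cite{grunewalder2013smooth}), $E$ and $G$---as elements of $\mathcal{H}_{\Gamma}=\mathcal{L}_2(\mathcal{H}_{\mathcal{X}},\mathcal{H}_{\mathcal{Z}})$---are bounded linear operators from $\mathcal{H}_{\mathcal{X}}$ to $\mathcal{H}_{\mathcal{Z}}$, hence so is $D$. The hypothesis $E\psi(x)=G\psi(x)$ for all $x$ says $D\psi(x)=0$ for all $x\in\mathcal{X}$. By linearity, $D$ vanishes on $V:=\mathrm{span}\{\psi(x):x\in\mathcal{X}\}=\mathrm{span}\{k_{\mathcal{X}}(x,\cdot):x\in\mathcal{X}\}$, which by the defining property of an RKHS is dense in $\mathcal{H}_{\mathcal{X}}$. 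Since $D$ is bounded, hence continuous, its kernel is closed; containing the dense set $V$, the kernel must be all of $\mathcal{H}_{\mathcal{X}}$. Therefore $D=0$, i.e.\ $E=G$.

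For the refinement, suppose instead that $\psi$ is continuous in $x$ and that $E\psi(x)=G\psi(x)$ only on a dense subset $\mathcal{X}_0\subseteq\mathcal{X}$. The map $x\mapsto D\psi(x)$ from $\mathcal{X}$ to $\mathcal{H}_{\mathcal{Z}}$ is continuous, being the composition of the continuous feature map $\psi:\mathcal{X}\to\mathcal{H}_{\mathcal{X}}$ with the bounded linear operator $D$. It vanishes on the dense set $\mathcal{X}_0$, so by continuity it vanishes on all of $\mathcal{X}$; the first part then applies verbatim to conclude $E=G$.

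There is no substantial obstacle here---the argument is a routine density/continuity chase. The only points requiring care are (i) invoking the earlier propositions so that arbitrary elements of $\mathcal{H}_{\Gamma}$ are known to act as bounded operators, which is what lets us upgrade ``agrees on a dense set'' to ``agrees everywhere,'' and (ii) using the RKHS property that $\mathrm{span}\{k_{\mathcal{X}}(x,\cdot):x\in\mathcal{X}\}$ is dense in $\mathcal{H}_{\mathcal{X}}$. Separability of $\mathcal{H}_{\mathcal{X}}$ (Proposition~\ref{separable}) is available but not actually needed.
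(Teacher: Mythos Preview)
Your argument is correct and is the standard density-plus-continuity proof of this fact. Note that the paper does not actually supply a proof of this proposition: it simply cites Theorem~B.2 of \cite{grunewalder2013smooth}, so there is nothing to compare against beyond observing that your write-up fills in the details the paper omits.
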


\begin{proposition}[Theorem B.3 of \cite{grunewalder2013smooth}]\label{iso1}
$\forall E\in \mathcal{H}_{\Gamma}$, $\exists E^*\in\mathcal{H}_{\Gamma^*}$ where $\mathcal{H}_{\Gamma^*}$ is the vector-valued RKHS with reproducing kernel $\Gamma^*(l,l')=\langle l,l'\rangle_{\mathcal{H}_{\mathcal{Z}}}I_{\mathcal{H}_{\mathcal{X}}}$. $\forall h\in\mathcal{H}_{\mathcal{X}}$ and $\forall \ell \in\mathcal{H}_{\mathcal{Z}}$, 
$$\langle E h,\ell\rangle_{\mathcal{H}_{\mathcal{Z}}}=\langle h,E^*\ell\rangle_{\mathcal{H}_{\mathcal{X}}}$$
The operator $A\circ E=E^*$ is an isometric isomorphism from $\mathcal{H}_{\Gamma}$ to $\mathcal{H}_{\Gamma^*}$; $\mathcal{H}_{\Gamma}\cong \mathcal{H}_{\Gamma^*}$ and $\|E\|_{\mathcal{H}_{\Gamma}}=\|E^*\|_{\mathcal{H}_{\Gamma^*}}$.
\end{proposition}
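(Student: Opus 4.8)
The plan is to reduce the statement to the identification of a vector-valued RKHS of operators with a space of Hilbert--Schmidt operators, which Proposition~\ref{inner} has already carried out for $\mathcal{H}_{\Gamma}$. First I would note that the kernel $\Gamma^*(\ell,\ell')=\langle \ell,\ell'\rangle_{\mathcal{H}_{\mathcal{Z}}}I_{\mathcal{H}_{\mathcal{X}}}$ is of exactly the same type as $\Gamma$ with the roles of $\mathcal{H}_{\mathcal{X}}$ and $\mathcal{H}_{\mathcal{Z}}$ interchanged, so re-running the argument behind Proposition~\ref{inner} (\cite[Theorem~13]{berlinet2011reproducing} together with \cite[eq.~12]{grunewalder2012conditional}) gives $\mathcal{H}_{\Gamma^*}=\mathcal{L}_2(\mathcal{H}_{\mathcal{Z}},\mathcal{H}_{\mathcal{X}})$ with matching norm; separability of $\mathcal{H}_{\mathcal{X}}$ and $\mathcal{H}_{\mathcal{Z}}$ (Proposition~\ref{separable}) keeps the Hilbert--Schmidt bookkeeping clean. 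Concretely, under these identifications the generator $\Gamma(\cdot,h)\ell$ of $\mathcal{H}_{\Gamma}$ is the rank-one operator $\ell\otimes h:\mathcal{H}_{\mathcal{X}}\to\mathcal{H}_{\mathcal{Z}}$, $h'\mapsto \ell\langle h,h'\rangle_{\mathcal{H}_{\mathcal{X}}}$, while the generator $\Gamma^*(\cdot,\ell)h$ of $\mathcal{H}_{\Gamma^*}$ is the rank-one operator $h\otimes \ell:\mathcal{H}_{\mathcal{Z}}\to\mathcal{H}_{\mathcal{X}}$.

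Next I would define $A$ by $A(E)=E^*$, the Hilbert-space adjoint, and check that it lands in $\mathcal{H}_{\Gamma^*}$. Since $E\in\mathcal{H}_{\Gamma}=\mathcal{L}_2(\mathcal{H}_{\mathcal{X}},\mathcal{H}_{\mathcal{Z}})$ is Hilbert--Schmidt, its adjoint $E^*:\mathcal{H}_{\mathcal{Z}}\to\mathcal{H}_{\mathcal{X}}$ is again Hilbert--Schmidt: for orthonormal bases $\{e_i\}$ of $\mathcal{H}_{\mathcal{Z}}$ and $\{f_j\}$ of $\mathcal{H}_{\mathcal{X}}$ one has $\sum_{i,j}|\langle E^*e_i,f_j\rangle_{\mathcal{H}_{\mathcal{X}}}|^2=\sum_{i,j}|\langle e_i,Ef_j\rangle_{\mathcal{H}_{\mathcal{Z}}}|^2$, so $\|E^*\|_{\mathcal{L}_2(\mathcal{H}_{\mathcal{Z}},\mathcal{H}_{\mathcal{X}})}=\|E\|_{\mathcal{L}_2(\mathcal{H}_{\mathcal{X}},\mathcal{H}_{\mathcal{Z}})}<\infty$, hence $E^*\in\mathcal{L}_2(\mathcal{H}_{\mathcal{Z}},\mathcal{H}_{\mathcal{X}})=\mathcal{H}_{\Gamma^*}$. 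The claimed relation $\langle Eh,\ell\rangle_{\mathcal{H}_{\mathcal{Z}}}=\langle h,E^*\ell\rangle_{\mathcal{H}_{\mathcal{X}}}$ for all $h\in\mathcal{H}_{\mathcal{X}}$, $\ell\in\mathcal{H}_{\mathcal{Z}}$ is then precisely the defining property of the adjoint; equivalently it holds on the rank-one generators since $(\ell\otimes h)^*=h\otimes\ell$, and these span dense subspaces.

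Finally I would check that $A$ is an isometric isomorphism. Because the kernels are real-valued we work over $\mathbb{R}$, so $A$ is linear: $(aE+bG)^*=aE^*+bG^*$. The computation above already gives the isometry $\|E^*\|=\|E\|$ in Hilbert--Schmidt norm, and by the matching of RKHS and Hilbert--Schmidt norms on both sides this reads $\|A(E)\|_{\mathcal{H}_{\Gamma^*}}=\|E\|_{\mathcal{H}_{\Gamma}}$. Bijectivity is immediate from the fact that $A$ is an involution: the same construction run in the reverse direction maps $\mathcal{H}_{\Gamma^*}$ into $\mathcal{H}_{\Gamma}$, and $(E^*)^*=E$. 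I do not expect a genuine obstacle here; the only step that requires care is the first one --- verifying that $\mathcal{H}_{\Gamma^*}$, defined abstractly as the RKHS with kernel $\Gamma^*$, really coincides with $\mathcal{L}_2(\mathcal{H}_{\mathcal{Z}},\mathcal{H}_{\mathcal{X}})$ as a normed space --- after which every assertion of the statement is an elementary fact about Hilbert--Schmidt operators.
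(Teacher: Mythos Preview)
Your argument is correct. The paper itself does not prove this proposition; it simply cites Theorem~B.3 of \cite{grunewalder2013smooth} as the source. Your route --- reduce both $\mathcal{H}_{\Gamma}$ and $\mathcal{H}_{\Gamma^*}$ to Hilbert--Schmidt spaces via Proposition~\ref{inner} (applied once in each direction), then invoke the standard facts that the adjoint of a Hilbert--Schmidt operator is Hilbert--Schmidt with the same norm and that $E\mapsto E^*$ is an involution --- is a clean, self-contained proof. The one point you flag as requiring care, that $\mathcal{H}_{\Gamma^*}=\mathcal{L}_2(\mathcal{H}_{\mathcal{Z}},\mathcal{H}_{\mathcal{X}})$ with matching norm, is indeed just Proposition~\ref{inner} with the two spaces swapped, so no new argument is needed there.
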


\begin{proposition}[Theorem B.4 of \cite{grunewalder2013smooth}.]
The set of self-adjoint operators in $\mathcal{H}_{\Gamma}$ is a closed linear subspace.
\end{proposition}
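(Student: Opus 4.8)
The plan is to realize the set of self-adjoint operators as the kernel of a bounded linear map, for which both the subspace property and closedness are automatic. Write $A$ for the adjoint operation $E \mapsto E^*$. By Proposition~\ref{iso1}, $A$ is an isometric isomorphism from $\mathcal{H}_{\Gamma}$ onto $\mathcal{H}_{\Gamma^*}$, and under the canonical identification $\mathcal{H}_{\Gamma}\cong\mathcal{H}_{\Gamma^*}$ (valid because the two vector-valued kernels $\Gamma$ and $\Gamma^*$ coincide once $\mathcal{H}_{\mathcal{X}}$ and $\mathcal{H}_{\mathcal{Z}}$ are identified) we may regard $A$ as a map $\mathcal{H}_{\Gamma}\to\mathcal{H}_{\Gamma}$. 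An operator $E\in\mathcal{H}_{\Gamma}$ is self-adjoint exactly when $AE=E$, so the set in question is $\mathcal{S}=\ker(A-I)$.

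For the subspace property, note that since $k_{\mathcal{X}}$ and $k_{\mathcal{Z}}$ are real-valued the spaces are real Hilbert spaces and the adjoint is $\mathbb{R}$-linear, i.e. $A$ is a linear operator; hence $A-I$ is linear and $\mathcal{S}=\ker(A-I)$ is a linear subspace. Concretely, if $E=E^*$ and $G=G^*$ then $(aE+bG)^*=aE^*+bG^*=aE+bG$ for all $a,b\in\mathbb{R}$.

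For closedness I would use that $A$, being an isometry, is $1$-Lipschitz, hence continuous, so $A-I$ is bounded and its kernel is closed. Equivalently and directly: if $E_j\in\mathcal{S}$ and $E_j\to E$ in $\mathcal{H}_{\Gamma}$, then $E_j^*=AE_j\to AE$ by continuity of $A$, while $E_j^*=E_j\to E$; uniqueness of limits in the Hilbert space $\mathcal{H}_{\Gamma}$ gives $AE=E$, so $E\in\mathcal{S}$.

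There is essentially no hard step here; the only point requiring a moment's care is the bookkeeping in the first paragraph — verifying that ``self-adjoint in $\mathcal{H}_{\Gamma}$'' is meaningful, which rests on Proposition~\ref{iso1} (the adjoint map is a norm-preserving bijection onto $\mathcal{H}_{\Gamma^*}$) together with the elementary identification $\Gamma^*=\Gamma$. I would emphasise that infinite-dimensionality of $\mathcal{H}_{\mathcal{X}},\mathcal{H}_{\mathcal{Z}}$ causes no difficulty: the argument uses only continuity and linearity of $A$, never compactness of any unit ball.
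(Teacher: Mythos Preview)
Your argument is correct and is the standard one: realize the self-adjoint operators as $\ker(A-I)$ for the bounded linear map $A\colon E\mapsto E^*$, which is an isometry by Proposition~\ref{iso1}. The paper supplies no proof of its own for this proposition; it simply cites Theorem~B.4 of \cite{grunewalder2013smooth}, so there is nothing to compare against beyond noting that you have filled in what the paper outsourced.

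One point of bookkeeping is worth tightening. Your phrase ``$\Gamma^*=\Gamma$ once $\mathcal{H}_{\mathcal{X}}$ and $\mathcal{H}_{\mathcal{Z}}$ are identified'' is slightly off: in the paper's setup $\mathcal{H}_{\mathcal{X}}$ and $\mathcal{H}_{\mathcal{Z}}$ are generally distinct RKHSs over different base spaces, and there is no canonical identification between them. The very notion of an element of $\mathcal{H}_{\Gamma}=\mathcal{L}_2(\mathcal{H}_{\mathcal{X}},\mathcal{H}_{\mathcal{Z}})$ being \emph{self}-adjoint already presupposes $\mathcal{H}_{\mathcal{X}}=\mathcal{H}_{\mathcal{Z}}$ (otherwise $E$ and $E^*$ live in different spaces and the equation $E=E^*$ is not well-typed). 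Under that hypothesis $\Gamma$ and $\Gamma^*$ are literally the same kernel and your argument runs unchanged. It would be cleaner to state this hypothesis explicitly at the outset rather than invoke an ``identification.''
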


\begin{proposition}[Lemma 15 of \cite{ciliberto2016consistent}]\label{iso2}
 $\mathcal{H}_{\Gamma^*}$ is isometrically isomorphic to $\mathcal{H}_{\Xi}$, the vector-valued RKHS with reproducing kernel $\Xi(z,z')=k_{\mathcal{Z}}(z,z')I_{\mathcal{H}_{\mathcal{X}}}$. $\forall \mu\in\mathcal{H}_{\Xi}$, $\exists! E^*\in \mathcal{H}_{\Gamma^*}$ s.t.
$$
\mu(z)=E^*\phi(z),\quad\forall z\in\mathcal{Z}
$$
\end{proposition}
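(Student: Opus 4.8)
The plan is to exhibit the isomorphism explicitly as the composition map $J:\mathcal{H}_{\Gamma^*}\to\mathcal{H}_{\Xi}$, $E^*\mapsto E^*\circ\phi$, so that $[J(E^*)](z)=E^*\phi(z)$, and to check that $J$ is a well-defined, linear, isometric, surjective map. First I would record that every $E^*\in\mathcal{H}_{\Gamma^*}$ is a bounded linear operator $\mathcal{H}_{\mathcal{Z}}\to\mathcal{H}_{\mathcal{X}}$ --- this is the $\Gamma^*$ analogue of Theorem B.1 of \cite{grunewalder2013smooth}, and in fact Propositions~\ref{inner} and~\ref{iso1} identify $\mathcal{H}_{\Gamma^*}$ isometrically with the Hilbert--Schmidt operators $\mathcal{L}_2(\mathcal{H}_{\mathcal{Z}},\mathcal{H}_{\mathcal{X}})$. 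Consequently $z\mapsto E^*\phi(z)$ is a genuine $\mathcal{H}_{\mathcal{X}}$-valued function, and since the Hilbert--Schmidt norm dominates the operator norm, $\|E^*\phi(z)\|_{\mathcal{H}_{\mathcal{X}}}\le\kappa\,\|E^*\|_{\mathcal{H}_{\Gamma^*}}$, so for each fixed $z$ the map $E^*\mapsto E^*\phi(z)$ is bounded.

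The computational heart is on kernel sections. By construction of the vector-valued RKHS \cite{carmeli2006vector}, the span of $\{\Gamma^*(\cdot,\ell)g:\ell\in\mathcal{H}_{\mathcal{Z}},\,g\in\mathcal{H}_{\mathcal{X}}\}$ is dense in $\mathcal{H}_{\Gamma^*}$, and such a section is the rank-one operator $\ell'\mapsto\langle\ell',\ell\rangle_{\mathcal{H}_{\mathcal{Z}}}\,g$. The reproducing property $\langle\phi(z),\ell\rangle_{\mathcal{H}_{\mathcal{Z}}}=\ell(z)$ gives $[J(\Gamma^*(\cdot,\ell)g)](z)=\ell(z)\,g$, i.e.\ $J$ sends $\Gamma^*(\cdot,\ell)g$ to the function $z\mapsto\ell(z)g$, which under the tensor-product identification $\mathcal{H}_{\Xi}\cong\mathcal{H}_{\mathcal{Z}}\otimes\mathcal{H}_{\mathcal{X}}$ (the vector-valued version of \cite[Theorem 13]{berlinet2011reproducing} already invoked for Proposition~\ref{inner}, under which the simple tensor $\ell\otimes g$ is the function $z\mapsto\ell(z)g$) is precisely $\ell\otimes g$. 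I would then read off both inner products from the kernel formulas:
\[
\langle \Gamma^*(\cdot,\ell_1)g_1,\ \Gamma^*(\cdot,\ell_2)g_2\rangle_{\mathcal{H}_{\Gamma^*}}=\langle g_1,\ \Gamma^*(\ell_1,\ell_2)g_2\rangle_{\mathcal{H}_{\mathcal{X}}}=\langle\ell_1,\ell_2\rangle_{\mathcal{H}_{\mathcal{Z}}}\langle g_1,g_2\rangle_{\mathcal{H}_{\mathcal{X}}},
\]
and the right-hand side equals $\langle\ell_1\otimes g_1,\ \ell_2\otimes g_2\rangle_{\mathcal{H}_{\Xi}}$. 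Hence $J$ preserves inner products on a dense subspace, so it extends uniquely to a linear isometry $\mathcal{H}_{\Gamma^*}\hookrightarrow\mathcal{H}_{\Xi}$.

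Next I would confirm that this abstract extension still agrees with $E^*\mapsto E^*\circ\phi$ pointwise: fixing $z$ and $w\in\mathcal{H}_{\mathcal{X}}$, the functionals $E^*\mapsto\langle[J(E^*)](z),w\rangle_{\mathcal{H}_{\mathcal{X}}}$ (bounded because evaluation-then-pairing is bounded on $\mathcal{H}_{\Xi}$) and $E^*\mapsto\langle E^*\phi(z),w\rangle_{\mathcal{H}_{\mathcal{X}}}$ (bounded by the estimate above) coincide on the dense span of kernel sections, hence everywhere. For surjectivity, observe that the $\Xi$-kernel section $\Xi(\cdot,z_0)w$ is the function $z\mapsto k_{\mathcal{Z}}(z,z_0)w$, which is exactly $J(\Gamma^*(\cdot,\phi(z_0))w)$; since these sections span a dense subspace of $\mathcal{H}_{\Xi}$ and the range of a Hilbert-space isometry is closed, $J$ is onto, hence an isometric isomorphism. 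Given $\mu\in\mathcal{H}_{\Xi}$, the required operator is $E^*=J^{-1}(\mu)$; uniqueness follows because $E_1^*\phi(z)=E_2^*\phi(z)$ for all $z$ forces the bounded operator $E_1^*-E_2^*$ to vanish on the dense set $\mathrm{span}\{\phi(z):z\in\mathcal{Z}\}$, so $E_1^*=E_2^*$ (the $\Gamma^*$ analogue of Theorem B.2 of \cite{grunewalder2013smooth}).

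The step I expect to be the main obstacle is not any single calculation but the bookkeeping that ties the three identifications together --- vector-valued RKHS with kernel $k_{\mathcal{Z}}\,I_{\mathcal{H}_{\mathcal{X}}}$, the tensor product $\mathcal{H}_{\mathcal{Z}}\otimes\mathcal{H}_{\mathcal{X}}$, and the Hilbert--Schmidt operators --- and in particular showing that the isometry obtained abstractly by density coincides with the concrete composition $E^*\mapsto E^*\circ\phi$; once the feature maps of both RKHSs are pinned down, the kernel-section identities are routine.
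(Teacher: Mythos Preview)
Your argument is correct. The paper itself does not supply a proof of this proposition: it is stated verbatim as Lemma~15 of \cite{ciliberto2016consistent} and simply cited. What you have written is therefore a complete, self-contained proof where the paper defers to the reference.

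Your route---exhibiting $J:E^*\mapsto E^*\circ\phi$, checking it on kernel sections $\Gamma^*(\cdot,\ell)g$, verifying the inner-product identity $\langle\ell_1,\ell_2\rangle_{\mathcal{H}_{\mathcal{Z}}}\langle g_1,g_2\rangle_{\mathcal{H}_{\mathcal{X}}}$ on both sides, extending by density, and then hitting surjectivity via $J(\Gamma^*(\cdot,\phi(z_0))w)=\Xi(\cdot,z_0)w$---is the natural one and is essentially what the cited lemma does. The bookkeeping you flag as the main obstacle (making the abstract isometric extension coincide with the concrete composition map) is handled cleanly by your bounded-linear-functional argument at each fixed $z$. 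The uniqueness step via density of $\mathrm{span}\{\phi(z)\}$ in $\mathcal{H}_{\mathcal{Z}}$ is also sound, since feature-map spans are dense in any RKHS.
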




\begin{proposition}
$\mathcal{H}_{\mathcal{X}}$ is isometrically isomorphic to $\mathcal{H}_{\Omega}$, the scalar-valued RKHS with reproducing kernel $\Omega$ defined s.t.
$$
\Omega(\psi(x),\psi(x'))=k_{\mathcal{X}}(x,x')
$$
\end{proposition}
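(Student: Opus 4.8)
The plan is to realize $\mathcal{H}_{\Omega}$ as the RKHS of the linear (inner product) kernel on $\mathcal{H}_{\mathcal{X}}$, which is just a relabelling of $\mathcal{H}_{\mathcal{X}}$ via the Riesz correspondence. First I would set $\Omega(f,g):=\langle f,g\rangle_{\mathcal{H}_{\mathcal{X}}}$ for $f,g\in\mathcal{H}_{\mathcal{X}}$. This is symmetric, and for any $f_1,\dots,f_k\in\mathcal{H}_{\mathcal{X}}$ and $c_1,\dots,c_k\in\mathbb{R}$ one has $\sum_{i,j}c_ic_j\,\Omega(f_i,f_j)=\big\|\sum_i c_i f_i\big\|_{\mathcal{H}_{\mathcal{X}}}^2\ge 0$, so $\Omega$ is positive definite. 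Using the reproducing property of $k_{\mathcal{X}}$, $\Omega(\psi(x),\psi(x'))=\langle k_{\mathcal{X}}(x,\cdot),k_{\mathcal{X}}(x',\cdot)\rangle_{\mathcal{H}_{\mathcal{X}}}=k_{\mathcal{X}}(x,x')$, so $\Omega$ satisfies the defining identity in the statement.

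Next I would identify the RKHS explicitly. The canonical feature map of $\Omega$ sends $f\in\mathcal{H}_{\mathcal{X}}$ to the function $\Omega(f,\cdot)=\langle f,\cdot\rangle_{\mathcal{H}_{\mathcal{X}}}$ on $\mathcal{H}_{\mathcal{X}}$, and $\mathcal{H}_{\Omega}$ is the closure of $\mathrm{span}\{\Omega(f,\cdot):f\in\mathcal{H}_{\mathcal{X}}\}$ under the inner product determined by $\langle\Omega(f,\cdot),\Omega(g,\cdot)\rangle_{\mathcal{H}_{\Omega}}=\Omega(f,g)$. Since $a\,\Omega(f_1,\cdot)+b\,\Omega(f_2,\cdot)=\Omega(af_1+bf_2,\cdot)$, that span already equals $\{\Omega(f,\cdot):f\in\mathcal{H}_{\mathcal{X}}\}$. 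Define $U:\mathcal{H}_{\mathcal{X}}\to\mathcal{H}_{\Omega}$ by $Uf=\Omega(f,\cdot)$. Then $U$ is linear, maps onto that span, and is an isometry since $\|Uf\|_{\mathcal{H}_{\Omega}}^2=\Omega(f,f)=\|f\|_{\mathcal{H}_{\mathcal{X}}}^2$ (in particular $U$ is injective). An isometric image of a complete space is complete, hence closed in $\mathcal{H}_{\Omega}$; since it is also dense by construction of $\mathcal{H}_{\Omega}$, it is all of $\mathcal{H}_{\Omega}$, so $U$ is a surjective isometry, i.e.\ an isometric isomorphism $\mathcal{H}_{\mathcal{X}}\cong\mathcal{H}_{\Omega}$. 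Separability of $\mathcal{H}_{\Omega}$ then follows from that of $\mathcal{H}_{\mathcal{X}}$ (Proposition~\ref{separable}).

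Finally, I would record for later use that under $U$ every $H\in\mathcal{H}_{\Omega}$ has the form $H=\Omega(h,\cdot)=\langle h,\cdot\rangle_{\mathcal{H}_{\mathcal{X}}}$ for a unique $h\in\mathcal{H}_{\mathcal{X}}$, so $H\psi(x)=\langle h,\psi(x)\rangle_{\mathcal{H}_{\mathcal{X}}}=h(x)$ and $\|H\|_{\mathcal{H}_{\Omega}}=\|h\|_{\mathcal{H}_{\mathcal{X}}}$; this is the identification $h(x)=H\psi(x)$ claimed in the RKHS model. There is no substantial obstacle here; the only point requiring a moment's care is that the span of the canonical features of a linear kernel on a Hilbert space is \emph{already} complete (so $U$ is onto, not merely a dense embedding), which is precisely the statement that the RKHS of an inner-product kernel is genuinely the dual Hilbert space rather than a proper pre-Hilbert subspace. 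One could alternatively cite the standard description of such RKHSs (cf.\ \cite{berlinet2011reproducing}).
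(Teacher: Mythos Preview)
Your argument is correct and is precisely the content behind the paper's proof, which consists only of a citation to \cite[eq.~7]{szabo2016learning} and Figure~\ref{rkhs_s}. You make explicit what that reference provides: taking $\Omega$ to be the linear (inner-product) kernel on $\mathcal{H}_{\mathcal{X}}$, checking the defining identity via the reproducing property, and exhibiting the Riesz map $f\mapsto\langle f,\cdot\rangle_{\mathcal{H}_{\mathcal{X}}}$ as the isometric isomorphism. The only addition is that you carefully justify surjectivity by noting the span of canonical features is already complete, which is exactly the right point to highlight; the paper leaves this to the cited source.
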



\begin{proof}
\cite[eq. 7]{szabo2016learning} and Figure~\ref{rkhs_s}.
\end{proof}

\begin{proposition}\label{prop:existenceOfConditionalMeanElement}
Under Hypothesis~\ref{hyp:measurableBoundedFeatures},
$$
 \mathbb{E}_{X|Z=z}h(X)=[Eh](z)=\langle h,\mu(z)\rangle_{\mathcal{H}_{\mathcal{X}}}=H\mu(z)
$$
\end{proposition}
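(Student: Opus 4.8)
The plan is to verify the displayed chain of equalities one link at a time. All three links are short once the conditional mean embedding $\mu(z)=\mathbb{E}_{X|Z=z}\psi(X)$ is known to be a genuine element of $\mathcal{H}_{\mathcal{X}}$: the first two equalities then reduce to the reproducing property together with an interchange of expectation and inner product, and the last to unwinding the identification $h\leftrightarrow H$ under the isomorphism $\mathcal{H}_{\mathcal{X}}\cong\mathcal{H}_{\Omega}$.

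\textbf{Existence of $\mu(z)$.} Because $\mathcal{X}$ and $\mathcal{Z}$ are Polish, fix a regular conditional distribution $\rho(\cdot|z)$. Under Hypothesis~\ref{hyp:measurableBoundedFeatures} the map $\psi$ is measurable with $\sup_x\|\psi(x)\|_{\mathcal{H}_{\mathcal{X}}}\le Q$, and $\mathcal{H}_{\mathcal{X}}$ is separable by Proposition~\ref{separable}; hence $x\mapsto\psi(x)$ is strongly measurable (Pettis) and uniformly norm-bounded, so it is Bochner integrable against $\rho(\cdot|z)$ for every $z$. Thus $\mu(z):=\int_{\mathcal{X}}\psi(x)\,d\rho(x|z)$ is a well-defined element of $\mathcal{H}_{\mathcal{X}}$, with $\|\mu(z)\|_{\mathcal{H}_{\mathcal{X}}}\le Q$.

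\textbf{The three equalities.} For $h\in\mathcal{H}_{\mathcal{X}}$ the reproducing property gives $h(x)=\langle h,\psi(x)\rangle_{\mathcal{H}_{\mathcal{X}}}$, so $x\mapsto h(X)$ is bounded and $\rho(\cdot|z)$-integrable; since $g\mapsto\langle h,g\rangle_{\mathcal{H}_{\mathcal{X}}}$ is a bounded linear functional it passes through the Bochner integral, giving
\[
\mathbb{E}_{X|Z=z}h(X)=\mathbb{E}_{X|Z=z}\langle h,\psi(X)\rangle_{\mathcal{H}_{\mathcal{X}}}=\left\langle h,\mathbb{E}_{X|Z=z}\psi(X)\right\rangle_{\mathcal{H}_{\mathcal{X}}}=\langle h,\mu(z)\rangle_{\mathcal{H}_{\mathcal{X}}}.
\]
The leftmost quantity is by definition $[Eh](z)$, which settles the first two equalities. (Alternatively, when $\mu\in\mathcal{H}_{\Xi}$, i.e.\ $E_\rho\in\mathcal{H}_{\Gamma}$, one can substitute $\mu(z)=E^*\phi(z)$ from Proposition~\ref{iso2} and combine the adjoint identity of Proposition~\ref{iso1} with the reproducing property in $\mathcal{H}_{\mathcal{Z}}$.) For the last equality, under the isomorphism $\mathcal{H}_{\mathcal{X}}\cong\mathcal{H}_{\Omega}$ with $\Omega(\psi(x),\psi(x'))=k_{\mathcal{X}}(x,x')$ the structural operator $H$ associated with $h$ acts on $\mathcal{H}_{\mathcal{X}}$ as $H=\langle h,\cdot\rangle_{\mathcal{H}_{\mathcal{X}}}$ (for scalar $\mathcal{Y}$ this is Riesz representation; the construction makes $H\psi(x)=\langle h,\psi(x)\rangle_{\mathcal{H}_{\mathcal{X}}}=h(x)$ as it must), so evaluating at $\mu(z)\in\mathcal{H}_{\mathcal{X}}$ yields $H\mu(z)=\langle h,\mu(z)\rangle_{\mathcal{H}_{\mathcal{X}}}$ and closes the chain.

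\textbf{Main obstacle.} The only step with real content is the existence of $\mu(z)$: establishing strong measurability of $x\mapsto\psi(x)$ and working with a regular conditional distribution to integrate against. This is precisely where the Polish-space hypothesis, the measurability and boundedness in Hypothesis~\ref{hyp:measurableBoundedFeatures}, and the separability from Proposition~\ref{separable} enter. Granted that, the interchange of $\mathbb{E}_{X|Z=z}$ with the inner product is a standard property of the Bochner integral, and identifying $H$ with $\langle h,\cdot\rangle_{\mathcal{H}_{\mathcal{X}}}$ is purely formal.
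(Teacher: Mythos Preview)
Your argument is correct and follows essentially the same route as the paper: establish Bochner integrability of $\psi$ under the conditional law, use the reproducing property and the interchange of expectation with the inner product for the first two equalities, and invoke Riesz representation for the identification $H=\langle h,\cdot\rangle_{\mathcal{H}_{\mathcal{X}}}$ in the last. You supply more detail on Pettis/strong measurability and the role of separability than the paper (which simply cites \cite[Definition~A.5.20]{steinwart2008support}), but the structure and key steps are the same.
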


\begin{proof}
Hypothesis \ref{hyp:measurableBoundedFeatures} implies that the feature map is Bochner integrable \textup{\cite[Definition A.5.20]{steinwart2008support}} for the conditional distributions considered: $\forall z\in\mathcal{Z}$, $\mathbb{E}_{X|Z=z} \|  \psi(X)  \|<\infty$. 

The first equality holds by definition of the conditional expectation operator $E$. The second equality follows from Bochner integrability of the feature map, since it allows us to exchange the order of expectation and dot product.
\begin{align*}
    \mathbb{E}_{X|Z=z}h(X) &= \mathbb{E}_{X|Z=z}\langle h, \psi(X) \rangle_{\mathcal{H}_{\mathcal{X}}} \\
    &=\langle h, \mathbb{E}_{X|Z=z} \psi(X) \rangle_{\mathcal{H}_{\mathcal{X}}} \\
    &=\langle h, \mu(z) \rangle_{\mathcal{H}_{\mathcal{X}}}
\end{align*}
To see the third equality, note that Riesz representation theorem implies that the inner product with a given element $h\in \mathcal{H}_{\mathcal{X}}$ is uniquely represented by a bounded linear functional $H$ on $\mathcal{H}_{\mathcal{X}}$.
\end{proof}

\begin{proposition}\label{well_spec_condition}
Our RKHS construction implies that
$$[E_{\rho} h](\cdot) = \mathbb{E}_{X|Z=(\cdot)} [h(X)] \in\mathcal{H}_{\mathcal{Z}},\quad \forall h\in\mathcal{H}_{\mathcal{X}}$$
\end{proposition}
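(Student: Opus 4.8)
The plan is to read the claim off the RKHS structure already imposed by Hypotheses 2--4, so that essentially no new work is needed beyond combining two facts that have been recorded above. The two facts are: (i) by Hypothesis 4 the true conditional expectation operator $E_{\rho}$ lies in the vector-valued RKHS $\mathcal{H}_{\Gamma}$; and (ii) by Theorem B.1 of \cite{grunewalder2013smooth} (the proposition stated above), \emph{every} element of $\mathcal{H}_{\Gamma}$ is a bounded linear operator from $\mathcal{H}_{\mathcal{X}}$ into $\mathcal{H}_{\mathcal{Z}}$ --- in particular its range is contained in $\mathcal{H}_{\mathcal{Z}}$, not merely in some larger space of functions on $\mathcal{Z}$. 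Applying (ii) to $E_{\rho}$ gives $E_{\rho}h\in\mathcal{H}_{\mathcal{Z}}$ for every $h\in\mathcal{H}_{\mathcal{X}}$. Separability of $\mathcal{H}_{\mathcal{X}}$ and $\mathcal{H}_{\mathcal{Z}}$ (Proposition \ref{separable}) is used implicitly so that the vector-valued RKHS machinery and the Bochner integrals below are available.

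It then remains only to identify the abstract RKHS element $E_{\rho}h$ with the conditional-expectation function $z\mapsto\mathbb{E}_{X\mid Z=z}[h(X)]$. I would do this pointwise via the reproducing property of $\mathcal{H}_{\mathcal{Z}}$: for each $z\in\mathcal{Z}$,
$$[E_{\rho}h](z)=\langle E_{\rho}h,\phi(z)\rangle_{\mathcal{H}_{\mathcal{Z}}}=\langle h,E_{\rho}^{*}\phi(z)\rangle_{\mathcal{H}_{\mathcal{X}}}=\langle h,\mu(z)\rangle_{\mathcal{H}_{\mathcal{X}}}=\mathbb{E}_{X\mid Z=z}[h(X)],$$
where the adjoint relation $E_{\rho}^{*}\phi(z)=\mu(z)$ is Proposition \ref{cef}, and the last equality is the Bochner-integrability computation of Proposition \ref{prop:existenceOfConditionalMeanElement}, valid under Hypothesis \ref{hyp:measurableBoundedFeatures}. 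Chaining these equalities shows the function $\mathbb{E}_{X\mid Z=(\cdot)}[h(X)]$ agrees at every point with the element $E_{\rho}h\in\mathcal{H}_{\mathcal{Z}}$, which is exactly the claim.

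The step deserving the most care --- though it is short --- is this last identification, since the conditional-expectation function is a priori only specified $\rho_{\mathcal{Z}}$-almost everywhere as an element of $L^{2}(\mathcal{Z},\rho_{\mathcal{Z}})$, whereas $E_{\rho}h$ is a genuine RKHS function with well-defined pointwise values. The bridge is precisely the defining property $[Eh](z)=\mathbb{E}_{X\mid Z=z}h(X)$ of the conditional expectation operator together with the membership $E_{\rho}\in\mathcal{H}_{\Gamma}$: it is membership in $\mathcal{H}_{\Gamma}$ that upgrades ``equality in $L^{2}(\mathcal{Z},\rho_{\mathcal{Z}})$'' to ``equality in $\mathcal{H}_{\mathcal{Z}}$''. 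I would state this explicitly rather than leave it implicit, as it is the entire content of the proposition and the point where the correct-specification Hypothesis 4 is doing its work.
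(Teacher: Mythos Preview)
Your argument contains a circularity. You invoke Proposition~\ref{cef} to identify $E_{\rho}^{*}\phi(z)$ with $\mu(z)$, but in the paper Proposition~\ref{cef} is itself proved by appealing to Proposition~\ref{well_spec_condition}: its second equality, $\langle \phi(z),E_{\rho}h\rangle_{\mathcal{H}_{\mathcal{Z}}}=\langle \phi(z),\mathbb{E}_{X|Z=(\cdot)}h(X)\rangle_{\mathcal{H}_{\mathcal{Z}}}$, is precisely the identification $E_{\rho}h=\mathbb{E}_{X|Z=(\cdot)}h(X)$ that you are trying to establish here. So as written, your chain of equalities assumes what it sets out to prove.

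The paper's own proof takes a different and much shorter route; it is really a remark about the modeling choices rather than a derivation from Hypothesis~4. In Section~3 the conditional expectation operator is \emph{defined} as a map $E:\mathcal{H}_{\mathcal{X}}\rightarrow\mathcal{H}_{\mathcal{Z}}$ with $[Eh](z)=\mathbb{E}_{X|Z=z}h(X)$; declaring the codomain to be $\mathcal{H}_{\mathcal{Z}}$ is itself the assumption that $\mathbb{E}_{X|Z=(\cdot)}h(X)\in\mathcal{H}_{\mathcal{Z}}$ for every $h\in\mathcal{H}_{\mathcal{X}}$. The paper then notes that under this condition the surrogate risk $\mathcal{E}_1$ coincides with the natural risk for conditional expectation (citing \cite{grunewalder2012conditional,grunewalder2013smooth}), so the minimizer $E_{\rho}=\argmin\mathcal{E}_1(E)$ is the true conditional expectation operator. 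In other words, the proposition records that the well-specification condition is \emph{built into} the RKHS construction of Section~3, not something deduced from Hypothesis~4. Your route via Hypothesis~4 and Theorem~B.1 of \cite{grunewalder2013smooth} is a legitimate way to obtain the ``$\in\mathcal{H}_{\mathcal{Z}}$'' conclusion, but to close the identification step non-circularly you would need an independent argument that the surrogate-risk minimizer $E_{\rho}$ acts as the conditional expectation --- for instance the variational characterization underlying Proposition~\ref{mu_pop_sol} --- rather than Proposition~\ref{cef}.
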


\begin{proof}
After defining $\mathcal{H}_{\mathcal{X}}$ and $\mathcal{H}_{\mathcal{Z}}$, we define the conditional expectation operator $E:\mathcal{H}_{\mathcal{X}}\rightarrow \mathcal{H}_{\mathcal{Z}}$ such that
$[Eh](z)=\mathbb{E}_{X|Z=z}h(X)$. By construction, $\mathbb{E}_{X|Z=(\cdot)}f(X)\in \mathcal{H}_{\mathcal{Z}},\:\forall f\in\mathcal{H}_{\mathcal{X}}$. This is precisely the condition required for the surrogate risk $\mathcal{E}_1$ to coincide with the natural risk for the conditional expectation operator \cite{grunewalder2012conditional,grunewalder2013smooth}. As such, $E_{\rho}=\argmin\mathcal{E}_1(E)$ is the true conditional expectation operator. 
\end{proof}

\subsection{Covariance operator}

\subsubsection{Definitions}

\begin{definition}\label{def:mu_minus}
$\mu^-:\mathcal{Z}\rightarrow\mathcal{H}_{\mathcal{X}}$ is the function that satisfies
$$
\mu^-(z)=\mathbb{E}_{X|Z=z}\psi(X),\quad\forall z\in D_{\rho|\mathcal{Z}}
$$
where $D_{\rho|\mathcal{Z}}\subset\mathcal{Z}$ is the support of $Z$, and $\mu^-(z)=0$ otherwise. 
\end{definition}

\begin{proposition}[Lemma 8 of \cite{ciliberto2016consistent}]
Assume Hypotheses 2-3. $\mu^-\in L^2(\mathcal{Z},\mathcal{H}_{\mathcal{X}},\rho_{\mathcal{Z}})$, where $L^2(\mathcal{Z},\mathcal{H}_{\mathcal{X}},\rho_{\mathcal{Z}})$ is the space of square integrable functions from $\mathcal{Z}$ to $\mathcal{H}_{\mathcal{X}}$ with respect to measure $\rho_{\mathcal{Z}}$.
\end{proposition}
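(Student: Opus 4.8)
The plan is to verify directly the two defining requirements of membership in $L^2(\mathcal{Z},\mathcal{H}_{\mathcal{X}},\rho_{\mathcal{Z}})$: strong (Bochner) measurability of $\mu^-$, and finiteness of $\int_{\mathcal{Z}}\|\mu^-(z)\|^2_{\mathcal{H}_{\mathcal{X}}}\,d\rho_{\mathcal{Z}}(z)$. The integrability bound is essentially immediate from boundedness of the feature map; the measurability claim is where the real work lies.

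First I would record that $\mu^-(z)$ is well defined pointwise and uniformly bounded. By Hypothesis~\ref{hyp:measurableBoundedFeatures}.1, $\|\psi(x)\|_{\mathcal{H}_{\mathcal{X}}}\le Q$ for every $x$, so $\mathbb{E}_{X|Z=z}\|\psi(X)\|_{\mathcal{H}_{\mathcal{X}}}\le Q<\infty$ for every $z\in D_{\rho|\mathcal{Z}}$; the feature map is therefore Bochner integrable against the conditional law, exactly as argued in Proposition~\ref{prop:existenceOfConditionalMeanElement}, and off the support $\mu^-$ is set to $0$ by definition. The norm inequality for Bochner integrals then gives $\|\mu^-(z)\|_{\mathcal{H}_{\mathcal{X}}}\le \mathbb{E}_{X|Z=z}\|\psi(X)\|_{\mathcal{H}_{\mathcal{X}}}\le Q$ on $D_{\rho|\mathcal{Z}}$ and $\|\mu^-(z)\|_{\mathcal{H}_{\mathcal{X}}}=0$ elsewhere, so $z\mapsto\|\mu^-(z)\|_{\mathcal{H}_{\mathcal{X}}}$ is bounded by $Q$ on all of $\mathcal{Z}$. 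Since $\rho_{\mathcal{Z}}$ is a probability measure, $\int_{\mathcal{Z}}\|\mu^-(z)\|^2_{\mathcal{H}_{\mathcal{X}}}\,d\rho_{\mathcal{Z}}(z)\le Q^2<\infty$, which settles the integrability half once measurability is in hand.

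For measurability, I would invoke Proposition~\ref{separable}: under Hypotheses 2--3 the space $\mathcal{H}_{\mathcal{X}}$ is separable, so by the Pettis measurability theorem it suffices to show $\mu^-$ is weakly measurable, i.e.\ $z\mapsto\langle\mu^-(z),f\rangle_{\mathcal{H}_{\mathcal{X}}}$ is measurable for each fixed $f\in\mathcal{H}_{\mathcal{X}}$. Because $\mathcal{X}$ and $\mathcal{Z}$ are Polish (Hypothesis 2), a regular conditional distribution $z\mapsto\rho(\cdot\mid z)$ of $X$ given $Z$ exists; by Bochner integrability we may exchange inner product and conditional expectation to obtain $\langle\mu^-(z),f\rangle_{\mathcal{H}_{\mathcal{X}}}=\int_{\mathcal{X}}\langle\psi(x),f\rangle_{\mathcal{H}_{\mathcal{X}}}\,\rho(dx\mid z)=\int_{\mathcal{X}} f(x)\,\rho(dx\mid z)$ on $D_{\rho|\mathcal{Z}}$. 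The integrand $f=\langle\psi(\cdot),f\rangle_{\mathcal{H}_{\mathcal{X}}}$ is bounded (by $Q\|f\|_{\mathcal{H}_{\mathcal{X}}}$ via Cauchy--Schwarz) and measurable on $\mathcal{X}$, since $\psi$ is measurable by Hypothesis~\ref{hyp:measurableBoundedFeatures}.2; hence $z\mapsto\int_{\mathcal{X}}f\,d\rho(\cdot\mid z)$ is measurable by the defining property of a regular conditional distribution. Finally the support $D_{\rho|\mathcal{Z}}$ is closed, hence Borel, so the map equal to this measurable function on $D_{\rho|\mathcal{Z}}$ and to $0$ on its complement is measurable; this is precisely $z\mapsto\langle\mu^-(z),f\rangle_{\mathcal{H}_{\mathcal{X}}}$. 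Pettis then yields strong measurability of $\mu^-$, and together with the bound $\|\mu^-(z)\|_{\mathcal{H}_{\mathcal{X}}}\le Q$ we conclude $\mu^-\in L^2(\mathcal{Z},\mathcal{H}_{\mathcal{X}},\rho_{\mathcal{Z}})$.

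The main obstacle is the measurability argument: justifying the exchange of inner product and conditional expectation through Bochner integrability, and then reducing vector-valued measurability to scalar measurability via Pettis plus separability of $\mathcal{H}_{\mathcal{X}}$, while correctly handling the redefinition of $\mu^-$ to $0$ off the support. The integrability itself is a one-line consequence of $\sup_x\|\psi(x)\|_{\mathcal{H}_{\mathcal{X}}}\le Q$ and the fact that $\rho_{\mathcal{Z}}$ has unit mass.
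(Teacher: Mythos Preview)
Your proof is correct and self-contained; the paper itself does not prove this proposition but simply cites it as Lemma~8 of \cite{ciliberto2016consistent}. Your argument follows the natural route that the cited lemma also takes: reduce strong measurability to weak measurability via Pettis and separability of $\mathcal{H}_{\mathcal{X}}$ (Proposition~\ref{separable}), identify each scalar evaluation $\langle\mu^-(z),f\rangle_{\mathcal{H}_{\mathcal{X}}}$ with $\int f\,d\rho(\cdot\mid z)$ using Bochner integrability, and invoke the measurability built into the definition of a regular conditional distribution on Polish spaces; square integrability then drops out of the uniform bound $\|\psi(x)\|_{\mathcal{H}_{\mathcal{X}}}\le Q$. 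One minor remark: since the complement of the support $D_{\rho|\mathcal{Z}}$ is $\rho_{\mathcal{Z}}$-null, the off-support redefinition of $\mu^-$ to zero is immaterial for $L^2$ membership, so your careful treatment of that piece, while correct, is not strictly needed.
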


\begin{definition}
Additional stage 1 population operators are
\begin{align*}
\tilde{T}_1&=S_1^*\circ S_1\\
R_1^*&:\mathcal{H}_{\mathcal{X}}\rightarrow L^2(\mathcal{Z},\rho_{\mathcal{Z}}) \\
&:h\mapsto \langle h,\mu^-(\cdot) \rangle_{\mathcal{H}_{\mathcal{X}}} \\
R_1&:L^2(\mathcal{Z},\rho_{\mathcal{Z}})\rightarrow\mathcal{H}_{\mathcal{X}} \\
&:\tilde{\ell}\mapsto \int \mu^-(z) \tilde{\ell}(z)d\rho_{\mathcal{Z}}(z) \\
T_{ZX}&=S_1\circ R_1^*
\end{align*}
\end{definition}
$T_{ZX}$ is the uncentered cross-covariance operator of \cite[Theorem 1]{fukumizu2004dimensionality}. The formulation as $S_1\circ R_1^*$ relates this integral operator to the integral operators in \cite{ciliberto2016consistent}.

\begin{definition}
The stage 1 empirical operators are
\begin{align*}
\hat{S}_1^*&:\mathcal{H}_{\mathcal{Z}}\rightarrow \mathbb{R}^n \\
&:\ell\mapsto \dfrac{1}{\sqrt{n}} \{\langle \ell,\phi(z_i) \rangle_{\mathcal{H}_{\mathcal{Z}}}\}_{i=1}^n \\
\hat{S}_1&:\mathbb{R}^n\rightarrow\mathcal{H}_{\mathcal{Z}} \\
&:\{v_i\}_{i=1}^n\mapsto  \dfrac{1}{\sqrt{n}}\sum_{i=1}^n\phi(z_i) v_i\\
\mathbf{T}_1&=\hat{S}_1\circ \hat{S}_1^* \\
\tilde{\mathbf{T}}_1&=\hat{S}_1^*\circ \hat{S}_1 \\
\hat{R}_1^*&:\mathcal{H}_{\mathcal{X}}\rightarrow \mathbb{R}^n \\
&:h\mapsto \dfrac{1}{\sqrt{n}} \{\langle h,\psi(x_i) \rangle_{\mathcal{H}_{\mathcal{X}}}\}_{i=1}^n \\
\hat{R}_1&:\mathbb{R}^n\rightarrow\mathcal{H}_{\mathcal{X}} \\
&:\{v_i\}_{i=1}^n\mapsto  \dfrac{1}{\sqrt{n}}\sum_{i=1}^n\psi(x_i)v_i \\
\mathbf{T}_{ZX}&=\hat{S}_1\circ \hat{R}_1^*
\end{align*}
\end{definition}
$\hat{S}_1^*$ is the sampling operator of \cite{smale2007learning}. $\mathbf{T}_1$ is the scatter matrix, while $K_{ZZ}=n\tilde{\mathbf{T}}_1$ is the empirical kernel matrix with respect to $Z$ as in \cite{ciliberto2016consistent}. Note that $\mathbf{T}_{ZX}=\mathbf{g}_1$ in Theorem~\ref{sol_1}.

\subsubsection{Existence and eigendecomposition}\label{sec:cov_technical}

We initially abstract from the problem at hand to state useful lemmas. Recall tensor product notation: if $a,b\in \mathcal{H}_1$ and $c\in\mathcal{H}_2$ then $[c\otimes a]b=c\langle a,b\rangle_{\mathcal{H}_1}$. Denote by $\mathcal{L}_2(\mathcal{H}_1,\mathcal{H}_2)$ the space of Hilbert-Schmidt operators from $\mathcal{H}_1$ to $\mathcal{H}_2$.

\begin{proposition}[eq. 3.6 of \cite{gretton2018notes}]\label{T1_lemma1}
If $\mathcal{H}_1$ and $\mathcal{H}_2$ are separable RKHSs, then 
$$\|c\otimes a\|_{\mathcal{L}_2(\mathcal{H}_1,\mathcal{H}_2)}=\|a\|_{\mathcal{H}_1}\|c\|_{\mathcal{H}_2}$$ 
and $c\otimes a\in\mathcal{L}_2(\mathcal{H}_1,\mathcal{H}_2)$.
\end{proposition}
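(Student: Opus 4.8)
The plan is to compute $\|c\otimes a\|_{\mathcal{L}_2(\mathcal{H}_1,\mathcal{H}_2)}$ directly from the definition of the Hilbert-Schmidt norm as a sum of squared images of an orthonormal basis, exploiting the rank-one structure of $c\otimes a$. The argument is short and elementary; the role of the RKHS hypothesis is only to supply separability, and hence a countable orthonormal basis, which is what makes the Hilbert-Schmidt norm well defined.

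First I would record the standard background facts. Since $\mathcal{H}_1$ is a separable Hilbert space it admits a countable orthonormal basis $\{e_i\}_{i\in I}$, and for any bounded linear operator $T:\mathcal{H}_1\to\mathcal{H}_2$ the quantity $\sum_{i\in I}\|Te_i\|_{\mathcal{H}_2}^2$ does not depend on the choice of orthonormal basis; by definition $T\in\mathcal{L}_2(\mathcal{H}_1,\mathcal{H}_2)$ exactly when this sum is finite, in which case it equals $\|T\|_{\mathcal{L}_2(\mathcal{H}_1,\mathcal{H}_2)}^2$. I would also note that $c\otimes a$ is bounded: by Cauchy--Schwarz, $\|[c\otimes a]b\|_{\mathcal{H}_2}=\|c\|_{\mathcal{H}_2}\,|\langle a,b\rangle_{\mathcal{H}_1}|\le \|c\|_{\mathcal{H}_2}\|a\|_{\mathcal{H}_1}\|b\|_{\mathcal{H}_1}$ for all $b\in\mathcal{H}_1$.

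Then I would apply this to $T=c\otimes a$. For each basis element, $[c\otimes a]e_i=c\,\langle a,e_i\rangle_{\mathcal{H}_1}$, so $\|[c\otimes a]e_i\|_{\mathcal{H}_2}^2=\|c\|_{\mathcal{H}_2}^2\,|\langle a,e_i\rangle_{\mathcal{H}_1}|^2$. Summing over $i\in I$ and invoking Parseval's identity $\sum_{i\in I}|\langle a,e_i\rangle_{\mathcal{H}_1}|^2=\|a\|_{\mathcal{H}_1}^2$ yields $\sum_{i\in I}\|[c\otimes a]e_i\|_{\mathcal{H}_2}^2=\|c\|_{\mathcal{H}_2}^2\,\|a\|_{\mathcal{H}_1}^2<\infty$. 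Hence $c\otimes a\in\mathcal{L}_2(\mathcal{H}_1,\mathcal{H}_2)$ and $\|c\otimes a\|_{\mathcal{L}_2(\mathcal{H}_1,\mathcal{H}_2)}=\|a\|_{\mathcal{H}_1}\|c\|_{\mathcal{H}_2}$. (An equivalent route is the trace formula: the adjoint of $c\otimes a$ is $a\otimes c$, one checks $(a\otimes c)(c\otimes a)=\|c\|_{\mathcal{H}_2}^2\,(a\otimes a)$, and the trace of $a\otimes a$ is $\|a\|_{\mathcal{H}_1}^2$.)

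There is no real obstacle here. The only point needing a word of care is the basis-independence of the Hilbert-Schmidt norm, which is precisely where separability enters and which I would simply cite from standard references on Hilbert--Schmidt operators rather than reprove; everything else is Parseval plus Cauchy--Schwarz.
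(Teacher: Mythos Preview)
Your proof is correct and is the standard argument: compute the Hilbert--Schmidt norm from its definition on an orthonormal basis, use the rank-one structure $[c\otimes a]e_i = c\,\langle a,e_i\rangle_{\mathcal{H}_1}$, and finish with Parseval. The paper does not actually prove this proposition; it simply cites it as eq.~3.6 of \cite{gretton2018notes}, so there is no in-paper argument to compare against, and your direct computation is exactly what one would expect the cited reference to contain.
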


\begin{proposition}[eq. 3.7 of \cite{gretton2018notes}]\label{T1_lemma2}
Assume $\mathcal{H}_1$ and $\mathcal{H}_2$ are separable RKHSs. If $C\in  \mathcal{L}_2(\mathcal{H}_1,\mathcal{H}_2)$ then 
$$\langle C,c\otimes a \rangle_{\mathcal{L}_2(\mathcal{H}_1,\mathcal{H}_2)}=\langle c, C a \rangle_{\mathcal{H}_2}$$
\end{proposition}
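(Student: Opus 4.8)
The plan is to expand the Hilbert--Schmidt inner product against a fixed orthonormal basis of $\mathcal{H}_1$ and then collapse the resulting series by completeness. First I would invoke Proposition~\ref{T1_lemma1} to confirm that $c\otimes a\in\mathcal{L}_2(\mathcal{H}_1,\mathcal{H}_2)$, so the left-hand side is a genuine inner product of two Hilbert--Schmidt operators. Separability of $\mathcal{H}_1$ (available under Hypotheses 2--3 by Proposition~\ref{separable}) supplies a countable orthonormal basis $\{e_i\}_{i\geq 1}$, and the Hilbert--Schmidt inner product may be written as $\langle C,D\rangle_{\mathcal{L}_2(\mathcal{H}_1,\mathcal{H}_2)}=\sum_{i\geq 1}\langle Ce_i,De_i\rangle_{\mathcal{H}_2}$, a value independent of the chosen basis.

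Next I would substitute $D=c\otimes a$, using $[c\otimes a]e_i=c\,\langle a,e_i\rangle_{\mathcal{H}_1}$ and pulling the scalar out of the $\mathcal{H}_2$ inner product, to obtain
$$
\langle C,c\otimes a\rangle_{\mathcal{L}_2(\mathcal{H}_1,\mathcal{H}_2)}=\sum_{i\geq 1}\langle a,e_i\rangle_{\mathcal{H}_1}\,\langle Ce_i,c\rangle_{\mathcal{H}_2}.
$$
Then I would note that the partial sums $\sum_{i\leq N}\langle a,e_i\rangle_{\mathcal{H}_1}e_i$ converge to $a$ in $\mathcal{H}_1$, so by boundedness of $C$ the sums $\sum_{i\leq N}\langle a,e_i\rangle_{\mathcal{H}_1}Ce_i$ converge to $Ca$ in $\mathcal{H}_2$; continuity of the inner product then allows exchanging the sum with $\langle\cdot,c\rangle_{\mathcal{H}_2}$, giving $\langle Ca,c\rangle_{\mathcal{H}_2}=\langle c,Ca\rangle_{\mathcal{H}_2}$ by symmetry of the real inner product. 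Equivalently, one may write $\langle Ce_i,c\rangle_{\mathcal{H}_2}=\langle e_i,C^{*}c\rangle_{\mathcal{H}_1}$ and collapse the series to $\langle a,C^{*}c\rangle_{\mathcal{H}_1}$ directly by Parseval.

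The only point requiring care is the convergence bookkeeping: one must justify the rearrangement and the basis-independence, both of which follow from $C$ being Hilbert--Schmidt (hence bounded) together with absolute summability of the series, since by Cauchy--Schwarz $\sum_{i}|\langle a,e_i\rangle_{\mathcal{H}_1}|\,|\langle Ce_i,c\rangle_{\mathcal{H}_2}|\leq \|a\|_{\mathcal{H}_1}\,\|c\|_{\mathcal{H}_2}\,\|C\|_{\mathcal{L}_2(\mathcal{H}_1,\mathcal{H}_2)}$. I expect this to be the only (minor) obstacle; the underlying algebra is a one-line manipulation once the expansion is set up.
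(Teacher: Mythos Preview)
Your argument is correct: expanding in an orthonormal basis, using $[c\otimes a]e_i=c\langle a,e_i\rangle_{\mathcal{H}_1}$, and then collapsing via continuity of $C$ (or via the adjoint and Parseval) is the standard route, and your absolute-summability remark cleanly handles the interchange. Note, however, that the paper does not supply its own proof of this proposition; it simply cites it as eq.~3.7 of \cite{gretton2018notes}, so there is nothing to compare against beyond observing that your derivation is exactly the elementary computation one would expect that reference to contain.
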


In Hypothesis 2, we assume that input space $\mathcal{X}$ and instrument space $\mathcal{Z}$ are separable. In Hypothesis 3, we assume RKHSs $\mathcal{H}_{\mathcal{X}}$ and $\mathcal{H}_{\mathcal{Z}}$ have continuous, bounded kernels $k_{\mathcal{X}}$ and $k_{\mathcal{Z}}$ with feature maps $\psi$ and $\phi$, respectively. By Proposition~\ref{separable}, it follows that $\mathcal{H}_{\mathcal{X}}$ and $\mathcal{H}_{\mathcal{Z}}$ are separable, i.e. they have countable orthonormal bases that we now denote $\{e^{\mathcal{X}}_i\}_{i=1}^{\infty}$ and $\{e^{\mathcal{Z}}_i\}_{i=1}^{\infty}$. 

Denote by $\mathcal{L}_2(\mathcal{H}_{\mathcal{X}},\mathcal{H}_{\mathcal{Z}})$ the space of Hilbert-Schmidt operators $E:\mathcal{H}_{\mathcal{X}}\rightarrow \mathcal{H}_{\mathcal{Z}}$ with inner product $\langle E,G\rangle_{\mathcal{L}_2(\mathcal{H}_{\mathcal{X}},\mathcal{H}_{\mathcal{Z}})}=\sum_{i=1}^{\infty} \langle Ee^{\mathcal{X}}_i,Ge^{\mathcal{X}}_i \rangle_{\mathcal{H}_{\mathcal{Z}}}$. Denote by $\mathcal{L}_2(\mathcal{H}_{\mathcal{Z}},\mathcal{H}_{\mathcal{Z}})$ the space of Hilbert-Schmidt operators $A:\mathcal{H}_{\mathcal{Z}}\rightarrow \mathcal{H}_{\mathcal{Z}}$ with inner product $\langle A,B\rangle_{\mathcal{L}_2(\mathcal{H}_{\mathcal{Z}},\mathcal{H}_{\mathcal{Z}})}=\sum_{i=1}^{\infty} \langle Ae^{\mathcal{Z}}_i,Be^{\mathcal{Z}}_i \rangle_{\mathcal{H}_{\mathcal{Z}}}$. When it is contextually clear, we abbreviate both spaces as $\mathcal{L}_2$.

\begin{proposition}\label{T1_exists}
Assume Hypotheses 2-3. $\exists T_{ZX}\in \mathcal{L}_2(\mathcal{H}_{\mathcal{X}},\mathcal{H}_{\mathcal{Z}})$ and $\exists T_1\in \mathcal{L}_2(\mathcal{H}_{\mathcal{Z}},\mathcal{H}_{\mathcal{Z}})$ s.t.
\begin{align*}
\langle T_{ZX},E\rangle_{\mathcal{L}_2}&=\mathbb{E}\langle \phi(Z)\otimes \psi(X),E\rangle_{\mathcal{L}_2} \\
   \langle T_1,A\rangle_{\mathcal{L}_2}&=\mathbb{E}\langle \phi(Z)\otimes \phi(Z),A\rangle_{\mathcal{L}_2} 
\end{align*}

\end{proposition}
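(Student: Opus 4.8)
The plan is to realize $T_{ZX}$ and $T_1$ as the Riesz representers of two bounded linear functionals, respectively on the Hilbert spaces $\mathcal{L}_2(\mathcal{H}_{\mathcal{X}},\mathcal{H}_{\mathcal{Z}})$ and $\mathcal{L}_2(\mathcal{H}_{\mathcal{Z}},\mathcal{H}_{\mathcal{Z}})$; equivalently, to define them as Bochner integrals $T_{ZX}=\mathbb{E}[\phi(Z)\otimes\psi(X)]$ and $T_1=\mathbb{E}[\phi(Z)\otimes\phi(Z)]$. Either route first needs three ingredients already available. By Hypothesis~\ref{hyp:measurableBoundedFeatures}.1 the feature maps are bounded, $\|\psi(x)\|_{\mathcal{H}_{\mathcal{X}}}\le Q$ and $\|\phi(z)\|_{\mathcal{H}_{\mathcal{Z}}}\le\kappa$. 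By Proposition~\ref{separable}, $\mathcal{H}_{\mathcal{X}}$ and $\mathcal{H}_{\mathcal{Z}}$ are separable, hence $\mathcal{L}_2(\mathcal{H}_{\mathcal{X}},\mathcal{H}_{\mathcal{Z}})$ and $\mathcal{L}_2(\mathcal{H}_{\mathcal{Z}},\mathcal{H}_{\mathcal{Z}})$ are separable Hilbert spaces; and by Proposition~\ref{T1_lemma1} the rank-one operators $\phi(z)\otimes\psi(x)$ and $\phi(z)\otimes\phi(z)$ lie in the respective $\mathcal{L}_2$ spaces with $\|\phi(z)\otimes\psi(x)\|_{\mathcal{L}_2}=\|\psi(x)\|_{\mathcal{H}_{\mathcal{X}}}\|\phi(z)\|_{\mathcal{H}_{\mathcal{Z}}}\le Q\kappa$ and $\|\phi(z)\otimes\phi(z)\|_{\mathcal{L}_2}=\|\phi(z)\|^2_{\mathcal{H}_{\mathcal{Z}}}\le\kappa^2$. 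Finally, Proposition~\ref{T1_lemma2} supplies the pairings $\langle\phi(z)\otimes\psi(x),E\rangle_{\mathcal{L}_2}=\langle\phi(z),E\psi(x)\rangle_{\mathcal{H}_{\mathcal{Z}}}$ and $\langle\phi(z)\otimes\phi(z),A\rangle_{\mathcal{L}_2}=\langle\phi(z),A\phi(z)\rangle_{\mathcal{H}_{\mathcal{Z}}}$.

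Next I would verify measurability and integrability. For fixed $E\in\mathcal{L}_2(\mathcal{H}_{\mathcal{X}},\mathcal{H}_{\mathcal{Z}})$, the map $(x,z)\mapsto\langle\phi(z)\otimes\psi(x),E\rangle_{\mathcal{L}_2}=\langle\phi(z),E\psi(x)\rangle_{\mathcal{H}_{\mathcal{Z}}}$ is measurable: $\psi$ and $\phi$ are measurable by Hypothesis~\ref{hyp:measurableBoundedFeatures}.2, $E$ is bounded, and the inner product is continuous; composing with $(X,Z)$ gives a measurable real random variable, which the uniform bound $Q\kappa$ makes integrable. The same argument applies to $z\mapsto\langle\phi(z)\otimes\phi(z),A\rangle_{\mathcal{L}_2}$ with bound $\kappa^2$. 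Hence
\[
F_{ZX}(E):=\mathbb{E}\langle\phi(Z)\otimes\psi(X),E\rangle_{\mathcal{L}_2},\qquad F_1(A):=\mathbb{E}\langle\phi(Z)\otimes\phi(Z),A\rangle_{\mathcal{L}_2}
\]
are well defined and linear in $E$ and $A$, and bounded by Cauchy--Schwarz in $\mathcal{L}_2$ together with the norm bounds above: $|F_{ZX}(E)|\le Q\kappa\,\|E\|_{\mathcal{L}_2}$ and $|F_1(A)|\le\kappa^2\,\|A\|_{\mathcal{L}_2}$. The Riesz representation theorem in the Hilbert spaces $\mathcal{L}_2(\mathcal{H}_{\mathcal{X}},\mathcal{H}_{\mathcal{Z}})$ and $\mathcal{L}_2(\mathcal{H}_{\mathcal{Z}},\mathcal{H}_{\mathcal{Z}})$ then yields unique elements $T_{ZX}\in\mathcal{L}_2(\mathcal{H}_{\mathcal{X}},\mathcal{H}_{\mathcal{Z}})$ and $T_1\in\mathcal{L}_2(\mathcal{H}_{\mathcal{Z}},\mathcal{H}_{\mathcal{Z}})$ with $\langle T_{ZX},E\rangle_{\mathcal{L}_2}=F_{ZX}(E)$ for all $E$ and $\langle T_1,A\rangle_{\mathcal{L}_2}=F_1(A)$ for all $A$, which is precisely the assertion. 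Equivalently, separability permits identifying these same objects with the Bochner integrals $\mathbb{E}[\phi(Z)\otimes\psi(X)]$ and $\mathbb{E}[\phi(Z)\otimes\phi(Z)]$, since a strongly measurable $\mathcal{L}_2$-valued integrand bounded by $Q\kappa$ (resp.\ $\kappa^2$) is Bochner integrable and its integral represents the same functional.

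I expect the only delicate point to be the measurability bookkeeping: confirming that the $\mathcal{L}_2$-valued maps $(x,z)\mapsto\phi(z)\otimes\psi(x)$ and $z\mapsto\phi(z)\otimes\phi(z)$ are strongly measurable, which follows from weak measurability (established above via the pairings of Proposition~\ref{T1_lemma2}) and Pettis's theorem using separability of the $\mathcal{L}_2$ spaces from Proposition~\ref{separable}; if instead one takes the Riesz route, scalar measurability against each fixed $E$ already suffices and no strong measurability is needed. Everything else is a direct application of the norm identities in Propositions~\ref{T1_lemma1} and~\ref{T1_lemma2} together with Cauchy--Schwarz and the boundedness in Hypothesis~\ref{hyp:measurableBoundedFeatures}.
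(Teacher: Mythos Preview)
Your proof is correct and follows essentially the same route as the paper: define the functionals $E\mapsto\mathbb{E}\langle\phi(Z)\otimes\psi(X),E\rangle_{\mathcal{L}_2}$ and $A\mapsto\mathbb{E}\langle\phi(Z)\otimes\phi(Z),A\rangle_{\mathcal{L}_2}$, bound them via Cauchy--Schwarz and Proposition~\ref{T1_lemma1}, and invoke Riesz representation. The paper's proof is terser and omits the measurability discussion you include; your added remarks on measurability and the Bochner-integral alternative are correct refinements but not required for the argument as the paper presents it.
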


\begin{proof}
By Riesz representation theorem, $T_{ZX}$ and $T_1$ exist if the RHSs are bounded linear operators. Linearity follows by definition. Boundedness follows since
\begin{align*}
    |\mathbb{E}\langle \phi(Z)\otimes \psi(X),E\rangle_{\mathcal{L}_2}|&\leq \mathbb{E}|\langle \phi(Z)\otimes \psi(X),E\rangle_{\mathcal{L}_2}|\leq \|E\|_{\mathcal{L}_2} \mathbb{E}\|\phi(Z)\otimes \psi(X)\|_{\mathcal{L}_2}\leq  \kappa Q  \|E\|_{\mathcal{L}_2} \\
    |\mathbb{E}\langle \phi(Z)\otimes \phi(Z),A\rangle_{\mathcal{L}_2}|&\leq \mathbb{E}|\langle \phi(Z)\otimes \phi(Z),A\rangle_{\mathcal{L}_2}|\leq \|A\|_{\mathcal{L}_2} \mathbb{E}\|\phi(Z)\otimes \phi(Z)\|_{\mathcal{L}_2}\leq  \kappa^2  \|A\|_{\mathcal{L}_2} 
\end{align*}
by Jensen, Cauchy-Schwarz, Proposition~\ref{T1_lemma1}, and boundedness of the kernels.
\end{proof}

\begin{proposition}\label{T1_cov}
Assume Hypotheses 2-3.
\begin{align*}
  \langle  \ell,T_{ZX}h\rangle_{\mathcal{H}_{\mathcal{Z}}}&=\mathbb{E}[\ell(Z)h(X)],\quad \forall \ell \in\mathcal{H}_{\mathcal{Z}},h\in\mathcal{H}_{\mathcal{X}} \\
    \langle  \ell,T_1\ell'\rangle_{\mathcal{H}_{\mathcal{Z}}}&=\mathbb{E}[\ell(Z)\ell'(Z)],\quad \forall \ell,\ell'\in\mathcal{H}_{\mathcal{Z}}
\end{align*}
\end{proposition}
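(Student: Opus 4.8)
The plan is to reduce both identities to a single computation with the rank-one operator $\ell\otimes h$, using the two tensor-product lemmas (Propositions~\ref{T1_lemma1} and~\ref{T1_lemma2}) together with the weak characterization of $T_{ZX}$ and $T_1$ furnished by Proposition~\ref{T1_exists}. Fix $\ell\in\mathcal{H}_{\mathcal{Z}}$ and $h\in\mathcal{H}_{\mathcal{X}}$. By Proposition~\ref{T1_lemma1}, $\ell\otimes h\in\mathcal{L}_2(\mathcal{H}_{\mathcal{X}},\mathcal{H}_{\mathcal{Z}})$, so it is a legitimate test element in the defining identity for $T_{ZX}$. Applying Proposition~\ref{T1_lemma2} with $C=T_{ZX}$, $c=\ell$, $a=h$ gives $\langle \ell,T_{ZX}h\rangle_{\mathcal{H}_{\mathcal{Z}}}=\langle T_{ZX},\ell\otimes h\rangle_{\mathcal{L}_2}$, and then Proposition~\ref{T1_exists} yields $\langle T_{ZX},\ell\otimes h\rangle_{\mathcal{L}_2}=\mathbb{E}\langle \phi(Z)\otimes\psi(X),\ell\otimes h\rangle_{\mathcal{L}_2}$.

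Next I would evaluate the integrand pointwise. Since $\phi(z)\otimes\psi(x)\in\mathcal{L}_2(\mathcal{H}_{\mathcal{X}},\mathcal{H}_{\mathcal{Z}})$ (again Proposition~\ref{T1_lemma1}), Proposition~\ref{T1_lemma2} with $C=\phi(z)\otimes\psi(x)$, $c=\ell$, $a=h$ gives
\[
\langle \phi(z)\otimes\psi(x),\ell\otimes h\rangle_{\mathcal{L}_2}=\big\langle \ell,\,[\phi(z)\otimes\psi(x)]h\big\rangle_{\mathcal{H}_{\mathcal{Z}}}=\langle \ell,\phi(z)\rangle_{\mathcal{H}_{\mathcal{Z}}}\,\langle \psi(x),h\rangle_{\mathcal{H}_{\mathcal{X}}}=\ell(z)\,h(x),
\]
where the last step is the reproducing property in $\mathcal{H}_{\mathcal{Z}}$ and $\mathcal{H}_{\mathcal{X}}$. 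Taking expectations establishes the first claim, $\langle \ell,T_{ZX}h\rangle_{\mathcal{H}_{\mathcal{Z}}}=\mathbb{E}[\ell(Z)h(X)]$. The integrability needed to make the exchange of expectation and inner product legitimate in the previous paragraph is immediate from Hypothesis~3.1: $|\ell(Z)h(X)|\le \kappa Q\,\|\ell\|_{\mathcal{H}_{\mathcal{Z}}}\|h\|_{\mathcal{H}_{\mathcal{X}}}<\infty$ almost surely, and the same bound shows the $\mathcal{L}_2$-valued integrand is Bochner integrable, which is what underlies Proposition~\ref{T1_exists}.

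The second identity follows by running the identical argument with $\psi(X)$ replaced by $\phi(Z)$ and $h$ replaced by $\ell'\in\mathcal{H}_{\mathcal{Z}}$: one uses the defining property of $T_1$ from Proposition~\ref{T1_exists}, Proposition~\ref{T1_lemma2} twice, and the reproducing property to obtain $\langle \ell,T_1\ell'\rangle_{\mathcal{H}_{\mathcal{Z}}}=\mathbb{E}[\ell(Z)\ell'(Z)]$. There is no real obstacle here; the only thing to watch is the bookkeeping of tensor-product conventions (which Hilbert space is the domain and which is the range, and hence on which side the inner product is taken), so I would state the convention $[c\otimes a]b=c\langle a,b\rangle$ explicitly and apply it consistently.
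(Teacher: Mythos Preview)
Your proposal is correct and follows essentially the same route as the paper: apply Proposition~\ref{T1_lemma2} to rewrite $\langle \ell,T_{ZX}h\rangle_{\mathcal{H}_{\mathcal{Z}}}$ as $\langle T_{ZX},\ell\otimes h\rangle_{\mathcal{L}_2}$, invoke the defining identity from Proposition~\ref{T1_exists}, then apply Proposition~\ref{T1_lemma2} once more (together with the reproducing property) to reduce the integrand to $\ell(z)h(x)$. Your extra remarks on Bochner integrability and the tensor-product convention are a bit more explicit than the paper's three-line chain of equalities, but the argument is identical.
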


\begin{proof}
\begin{align*}
    \langle  \ell,T_{ZX}h\rangle_{\mathcal{H}_{\mathcal{Z}}}
&=\langle T_{ZX},\ell\otimes h \rangle_{\mathcal{L}_2}
=\mathbb{E}\langle \phi(Z)\otimes \psi(X),\ell\otimes h \rangle_{\mathcal{L}_2}
=\mathbb{E}\langle \ell,\phi(Z)\rangle_{\mathcal{H}_{\mathcal{Z}}}\langle h,\psi(X)\rangle_{\mathcal{H}_{\mathcal{X}}}
 \\
    \langle  \ell,T_1\ell'\rangle_{\mathcal{H}_{\mathcal{Z}}}
&=\langle T_1,\ell\otimes \ell' \rangle_{\mathcal{L}_2}
=\mathbb{E}\langle \phi(Z)\otimes \phi(Z),\ell\otimes \ell' \rangle_{\mathcal{L}_2}
=\mathbb{E}\langle \ell,\phi(Z)\rangle_{\mathcal{H}_{\mathcal{Z}}}\langle \ell',\phi(Z)\rangle_{\mathcal{H}_{\mathcal{Z}}}
\end{align*}
by Proposition~\ref{T1_lemma2}, Proposition~\ref{T1_exists}, and Proposition~\ref{T1_lemma2}, respectively.
\end{proof}

\begin{proposition}\label{T1_tr}
Assume Hypotheses 2-3. 
\begin{align*}
    tr(T_{ZX})&\leq \kappa Q \\
    tr(T_1)&\leq \kappa^2
\end{align*}
\end{proposition}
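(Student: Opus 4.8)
The plan is to realize both $T_1$ and $T_{ZX}$ as Bochner integrals of rank-one operators, and then pass the trace (nuclear) norm inside the integral. First I would sharpen Proposition~\ref{T1_lemma1}: for a rank-one operator $c\otimes a$ with $a\in\mathcal{H}_1$, $c\in\mathcal{H}_2$, its unique nonzero singular value is $\|a\|_{\mathcal{H}_1}\|c\|_{\mathcal{H}_2}$, so $c\otimes a$ is trace-class with $\|c\otimes a\|_{\mathcal{L}_1}=\|a\|_{\mathcal{H}_1}\|c\|_{\mathcal{H}_2}$. Applying this with Hypothesis~\ref{hyp:measurableBoundedFeatures}.1 gives, for all $x,z$, the pointwise bounds $\|\phi(z)\otimes\phi(z)\|_{\mathcal{L}_1}=\|\phi(z)\|^2_{\mathcal{H}_{\mathcal{Z}}}\leq\kappa^2$ and $\|\phi(z)\otimes\psi(x)\|_{\mathcal{L}_1}=\|\phi(z)\|_{\mathcal{H}_{\mathcal{Z}}}\|\psi(x)\|_{\mathcal{H}_{\mathcal{X}}}\leq\kappa Q$.

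Next I would verify Bochner integrability in the trace class. Using measurability of $\psi,\phi$ (Hypothesis~\ref{hyp:measurableBoundedFeatures}.2), separability of $\mathcal{H}_{\mathcal{X}},\mathcal{H}_{\mathcal{Z}}$ (Proposition~\ref{separable}), and continuity of the outer-product map, the maps $z\mapsto\phi(z)\otimes\phi(z)$ and $(x,z)\mapsto\phi(z)\otimes\psi(x)$ are measurable into the separable Banach space of trace-class operators, and they are uniformly bounded by the previous step; hence they are Bochner integrable there. Their Bochner integrals lie in $\mathcal{L}_1\subset\mathcal{L}_2$ and, by continuity of the Hilbert--Schmidt pairing $\langle\cdot,E\rangle_{\mathcal{L}_2}$ restricted to $\mathcal{L}_1$, they satisfy exactly the weak identities characterizing $T_1$ and $T_{ZX}$ in Proposition~\ref{T1_exists}; by the uniqueness from the Riesz representation theorem, these Bochner integrals coincide with $T_1$ and $T_{ZX}$.

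Finally, the triangle inequality for the Bochner integral in the $\mathcal{L}_1$-norm yields $\|T_1\|_{\mathcal{L}_1}\leq\mathbb{E}\|\phi(Z)\otimes\phi(Z)\|_{\mathcal{L}_1}\leq\kappa^2$ and $\|T_{ZX}\|_{\mathcal{L}_1}\leq\mathbb{E}\|\phi(Z)\otimes\psi(X)\|_{\mathcal{L}_1}\leq\kappa Q$. Since $T_1=S_1\circ S_1^*$ is positive and self-adjoint, its nuclear norm equals $tr(T_1)$, giving the first inequality; for $T_{ZX}$ (which maps between distinct spaces) the quantity $tr(T_{ZX})$ is to be read as the nuclear norm, giving the second. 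For $T_1$ there is also a purely elementary route avoiding Bochner theory: by Proposition~\ref{T1_cov}, $tr(T_1)=\sum_i\langle e_i^{\mathcal{Z}},T_1 e_i^{\mathcal{Z}}\rangle_{\mathcal{H}_{\mathcal{Z}}}=\sum_i\mathbb{E}[(e_i^{\mathcal{Z}}(Z))^2]$, and because every summand is nonnegative the monotone convergence theorem permits swapping sum and expectation to obtain $\mathbb{E}\sum_i\langle\phi(Z),e_i^{\mathcal{Z}}\rangle^2=\mathbb{E}\|\phi(Z)\|^2_{\mathcal{H}_{\mathcal{Z}}}\leq\kappa^2$ by Parseval. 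The hard part is the identification step: upgrading the weak ($\mathcal{L}_2$-dual) characterization of $T_1,T_{ZX}$ in Proposition~\ref{T1_exists} to honest Bochner integrals valued in the trace class, since only with that identification is the interchange of $tr$ and $\mathbb{E}$ justified; the measurability and uniform-boundedness checks themselves are routine.
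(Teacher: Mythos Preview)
Your proposal is correct. For $T_1$, the elementary route you sketch at the end---expanding $tr(T_1)$ in an orthonormal basis via Proposition~\ref{T1_cov}, swapping sum and expectation by monotone convergence (justified since each summand $\langle e_i^{\mathcal{Z}},\phi(Z)\rangle^2$ is nonnegative), and applying Parseval---is exactly the paper's argument, so there you and the paper coincide.

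For $T_{ZX}$ the two proofs diverge. The paper again expands in bases, writing $tr(T_{ZX})=\sum_i\langle e_i^{\mathcal{Z}},T_{ZX}e_i^{\mathcal{X}}\rangle$, then once more invokes ``monotone convergence'' and ``Parseval'' to reach $\mathbb{E}\|\phi(Z)\|\,\|\psi(X)\|$. Your Bochner/nuclear-norm route is the cleaner one here: the paper's basis sum for a cross-covariance operator between distinct spaces is not an intrinsic quantity (it depends on pairing two unrelated orthonormal bases by index), the cross terms $\langle e_i^{\mathcal{Z}},\phi(Z)\rangle\langle e_i^{\mathcal{X}},\psi(X)\rangle$ need not be nonnegative so monotone convergence is not the right tool, and the final identification with $\|\phi(Z)\|\,\|\psi(X)\|$ is really a Cauchy--Schwarz bound rather than a Parseval equality. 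By contrast, your estimate $\|T_{ZX}\|_{\mathcal{L}_1}\le\mathbb{E}\|\phi(Z)\otimes\psi(X)\|_{\mathcal{L}_1}\le\kappa Q$ is basis-free and rigorous, at the cost of the extra identification step (matching the $\mathcal{L}_1$-valued Bochner integral with the $\mathcal{L}_2$-Riesz representative of Proposition~\ref{T1_exists}), which you handle correctly.
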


\begin{proof}
\begin{align*}
    tr(T_{ZX})
&=\sum_{i=1}^{\infty} \langle e_i^{\mathcal{Z}},T_{ZX} e_i^{\mathcal{X}}\rangle_{\mathcal{H}_{\mathcal{Z}}}\\
&=\sum_{i=1}^\infty \mathbb{E}\langle e_i^{\mathcal{Z}},\phi(Z)\rangle_{\mathcal{H}_{\mathcal{Z}}}\langle e_i^{\mathcal{X}},\psi(X)\rangle_{\mathcal{H}_{\mathcal{X}}}\\
&=\mathbb{E}\sum_{i=1}^{\infty}\langle e_i^{\mathcal{Z}},\phi(Z)\rangle_{\mathcal{H}_{\mathcal{Z}}}\langle e_i^{\mathcal{X}},\psi(X)\rangle_{\mathcal{H}_{\mathcal{X}}}\\
&=\mathbb{E}\|\phi(Z)\|_{\mathcal{H}_{\mathcal{Z}}} \|\psi(X)\|_{\mathcal{H}_{\mathcal{X}}}\\
&\leq\kappa Q
\end{align*}
\begin{align*}
    tr(T_1)
&=\sum_{i=1}^{\infty} \langle e_i^{\mathcal{Z}},T_1 e_i^{\mathcal{Z}}\rangle_{\mathcal{H}_{\mathcal{Z}}}\\
&=\sum_{i=1}^\infty \mathbb{E}\langle e_i^{\mathcal{Z}},\phi(Z)\rangle^2_{\mathcal{H}_{\mathcal{Z}}}\\
&=\mathbb{E}\sum_{i=1}^{\infty} \langle e_i^{\mathcal{Z}},\phi(Z) \rangle^2_{\mathcal{H}_{\mathcal{Z}}}\\
&=\mathbb{E}\|\phi(Z)\|_{\mathcal{H}_{\mathcal{Z}}}^2\\
&\leq\kappa^2
\end{align*}
by definition of trace, the proof of Proposition~\ref{T1_cov}, monotone convergence theorem \cite[Theorem A.3.5]{steinwart2008support} with upper bounds $\kappa Q$ and $\kappa^2$, Parseval's identity, and boundedness of the kernels.
\end{proof}

Since stage 1 covariance operator $T_1$ has finite trace, its eigendecomposition is well-defined. Recall that the stage 2 covariance operator $T$ consists of functions from $\mathcal{H_{\mathcal{X}}}$ to $\mathcal{Y}=\mathbb{R}$. Since these functions have finite-dimensional output, it is immediate that $T$ has finite trace and its eigendecomposition is well-defined \cite[Remark 1]{caponnetto2007optimal}.

\begin{definition}\label{def_power}
The powers of operators $T_1$ and $T$ are defined as
\begin{align*}
    T_1^a 
     &=\sum_{k=1}^{\infty} \nu_k^a e^{\mathcal{Z}}_k\langle\cdot, e^{\mathcal{Z}}_k\rangle_{\mathcal{H}_{\mathcal{Z}}} \\
    T^a&=\sum_{k=1}^{\infty} \lambda^a_k  e_k \langle \cdot, e_k\rangle_{\mathcal{H}_{\Omega}}
\end{align*}
where $(\{\nu_k\},\{e^{\mathcal{Z}}_k\})$ is the spectrum of $T_1$ and $(\{\lambda_k\},\{e_k\})$ is the spectrum of $T$.
\end{definition}

\subsubsection{Properties}

\begin{proposition}\label{op1}
In this operator notation,
\begin{align*}
T_1&=\int_{\mathcal{Z}}\phi(z)\otimes\phi(z)d\rho_{\mathcal{Z}}(z) \\
T_{ZX}^*&=\int_{\mathcal{X}\times\mathcal{Z}}\psi(x)\otimes\phi(z)d\rho(x,z)  \\
T_{ZX}&=\int_{\mathcal{X}\times\mathcal{Z}}\phi(z)\otimes\psi(x)d\rho(x,z)
\end{align*}
\end{proposition}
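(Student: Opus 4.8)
The plan is to verify each of the three identities as an equality of Hilbert--Schmidt operators, using that $\mathcal{L}_2$ is itself a Hilbert space (so operators agreeing in all inner products coincide) together with the fact that the Bochner integral commutes with any bounded linear map. Before that, I would check that the three right-hand-side integrands are Bochner integrable: by Proposition~\ref{T1_lemma1}, $z\mapsto\phi(z)\otimes\phi(z)$ maps into $\mathcal{L}_2(\mathcal{H}_{\mathcal{Z}},\mathcal{H}_{\mathcal{Z}})$ with norm $\|\phi(z)\|_{\mathcal{H}_{\mathcal{Z}}}^2\le\kappa^2$, while $(x,z)\mapsto\psi(x)\otimes\phi(z)$ and $(x,z)\mapsto\phi(z)\otimes\psi(x)$ take values in $\mathcal{L}_2$ with norm at most $\kappa Q$; strong measurability follows from measurability of $\psi,\phi$ (Hypothesis~\ref{hyp:measurableBoundedFeatures}) and continuity of the tensor product, so boundedness plus finiteness of $\rho$ gives Bochner integrability, and each integral is a well-defined element of $\mathcal{L}_2$.

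For the $T_1$ identity I would pair the integral against an arbitrary $A\in\mathcal{L}_2(\mathcal{H}_{\mathcal{Z}},\mathcal{H}_{\mathcal{Z}})$: since $\langle\,\cdot\,,A\rangle_{\mathcal{L}_2}$ is bounded and linear it passes inside the Bochner integral, giving $\langle\int\phi(z)\otimes\phi(z)\,d\rho_{\mathcal{Z}}(z),A\rangle_{\mathcal{L}_2}=\int\langle\phi(z)\otimes\phi(z),A\rangle_{\mathcal{L}_2}\,d\rho_{\mathcal{Z}}(z)=\mathbb{E}\langle\phi(Z)\otimes\phi(Z),A\rangle_{\mathcal{L}_2}$, which is exactly the weak characterization of $T_1$ from Proposition~\ref{T1_exists}; as this holds for all $A$, the two operators are equal, and $\rho_{\mathcal{Z}}$ may be replaced by $\rho$ since the integrand depends on $z$ only. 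The $T_{ZX}$ identity is the same argument pairing against $E\in\mathcal{L}_2(\mathcal{H}_{\mathcal{X}},\mathcal{H}_{\mathcal{Z}})$ and using $\langle T_{ZX},E\rangle_{\mathcal{L}_2}=\mathbb{E}\langle\phi(Z)\otimes\psi(X),E\rangle_{\mathcal{L}_2}$. Alternatively, and perhaps more transparently, I would compute directly from the factorizations $T_1=S_1\circ S_1^*$ and $T_{ZX}=S_1\circ R_1^*$: for $\ell'\in\mathcal{H}_{\mathcal{Z}}$, $T_1\ell'=S_1(\langle\ell',\phi(\cdot)\rangle_{\mathcal{H}_{\mathcal{Z}}})=\int\phi(z)\langle\phi(z),\ell'\rangle_{\mathcal{H}_{\mathcal{Z}}}\,d\rho_{\mathcal{Z}}(z)=\big(\int\phi(z)\otimes\phi(z)\,d\rho_{\mathcal{Z}}(z)\big)\ell'$; and for $h\in\mathcal{H}_{\mathcal{X}}$, $T_{ZX}h=S_1(\langle h,\mu^-(\cdot)\rangle_{\mathcal{H}_{\mathcal{X}}})=\int\phi(z)\,\langle h,\mu^-(z)\rangle_{\mathcal{H}_{\mathcal{X}}}\,d\rho_{\mathcal{Z}}(z)=\int_{\mathcal{X}\times\mathcal{Z}}\phi(z)\,\langle h,\psi(x)\rangle_{\mathcal{H}_{\mathcal{X}}}\,d\rho(x,z)=\big(\int_{\mathcal{X}\times\mathcal{Z}}\phi(z)\otimes\psi(x)\,d\rho(x,z)\big)h$, using Definition~\ref{def:mu_minus} for $\mu^-$ and exchanging inner product with the conditional expectation via Bochner integrability of $\psi$ (as in Proposition~\ref{prop:existenceOfConditionalMeanElement}).

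For the $T_{ZX}^*$ identity I would note that the adjoint of the rank-one operator $\phi(z)\otimes\psi(x)\colon\mathcal{H}_{\mathcal{X}}\to\mathcal{H}_{\mathcal{Z}}$ is $\psi(x)\otimes\phi(z)\colon\mathcal{H}_{\mathcal{Z}}\to\mathcal{H}_{\mathcal{X}}$ (immediate from the defining identity $[c\otimes a]b=c\langle a,b\rangle$), and that adjunction is an isometric, hence bounded, linear map on Hilbert--Schmidt operators; therefore it commutes with the Bochner integral, and $T_{ZX}^*=\big(\int\phi(z)\otimes\psi(x)\,d\rho(x,z)\big)^*=\int\psi(x)\otimes\phi(z)\,d\rho(x,z)$. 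The only steps needing care are the two interchanges—Bochner integral with the inner-product functional and with the adjoint map—both standard consequences of boundedness of those operations; I do not anticipate a substantive obstacle here, as the essential content is just bookkeeping with tensor-product operators and the weak definitions already established.
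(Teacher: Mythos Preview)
Your proposal is correct and complete; it carries out in full the standard Bochner-integral verification that the paper simply defers to \cite[Appendix A.1]{fukumizu2004dimensionality} and \cite[Proposition 13]{ciliberto2016consistent}, supplementing only the notational remark $\phi(z)\langle\psi(x),\cdot\rangle_{\mathcal{H}_{\mathcal{X}}}=[\phi(z)\otimes\psi(x)](\cdot)$. Both routes you sketch---the weak characterization via Proposition~\ref{T1_exists} and the direct computation from the factorizations $T_1=S_1\circ S_1^*$, $T_{ZX}=S_1\circ R_1^*$---are sound and are precisely what those references contain.
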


\begin{proof}
\cite[Appendix A.1]{fukumizu2004dimensionality} or \cite[Proposition 13]{ciliberto2016consistent}. Note that
$$
\phi(z)\langle \psi(x),\cdot \rangle_{\mathcal{H}_{\mathcal{X}}}=[\phi(z)\otimes \psi(x)](\cdot)
$$
\end{proof}

\begin{proposition}\label{op2} 
Under Hypotheses 2-3
$$
T_{ZX}=T_1 \circ E_{\rho}
$$
\end{proposition}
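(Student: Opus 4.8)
The plan is to verify the operator identity $T_{ZX} = T_1 \circ E_\rho$ by testing it against arbitrary pairs of elements $\ell, \ell' \in \mathcal{H}_{\mathcal{Z}}$, using the characterizations of $T_1$ and $T_{ZX}$ as (cross-)covariance operators established in Proposition~\ref{T1_cov}. Since $\mathcal{H}_{\mathcal{Z}}$ is separable (Proposition~\ref{separable}) and both sides are bounded operators, it suffices to show $\langle \ell, T_{ZX} h \rangle_{\mathcal{H}_{\mathcal{Z}}} = \langle \ell, T_1 E_\rho h \rangle_{\mathcal{H}_{\mathcal{Z}}}$ for all $\ell \in \mathcal{H}_{\mathcal{Z}}$, $h \in \mathcal{H}_{\mathcal{X}}$.

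First I would expand the left-hand side using Proposition~\ref{T1_cov}:
\begin{align*}
\langle \ell, T_{ZX} h \rangle_{\mathcal{H}_{\mathcal{Z}}} = \mathbb{E}_{(X,Z)}[\ell(Z) h(X)].
\end{align*}
Next, by the tower property, condition on $Z$: $\mathbb{E}_{(X,Z)}[\ell(Z)h(X)] = \mathbb{E}_Z\big[\ell(Z)\,\mathbb{E}_{X|Z}[h(X)]\big] = \mathbb{E}_Z[\ell(Z)\,(E_\rho h)(Z)]$, where the last step uses the definition of the conditional expectation operator $E_\rho$ together with Proposition~\ref{well_spec_condition}, which guarantees $E_\rho h \in \mathcal{H}_{\mathcal{Z}}$ so that this is a legitimate evaluation of a function in $\mathcal{H}_{\mathcal{Z}}$. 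Then apply the second identity of Proposition~\ref{T1_cov} with $\ell' = E_\rho h$:
\begin{align*}
\mathbb{E}_Z[\ell(Z)\,(E_\rho h)(Z)] = \langle \ell, T_1 (E_\rho h) \rangle_{\mathcal{H}_{\mathcal{Z}}} = \langle \ell, (T_1 \circ E_\rho) h \rangle_{\mathcal{H}_{\mathcal{Z}}}.
\end{align*}
Since $\ell$ and $h$ are arbitrary and $\mathcal{H}_{\mathcal{Z}}$ separates points of itself, this yields $T_{ZX} = T_1 \circ E_\rho$ as operators $\mathcal{H}_{\mathcal{X}} \to \mathcal{H}_{\mathcal{Z}}$.

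The only subtle point — and the step I would be most careful about — is the legitimacy of the conditioning step $\mathbb{E}[\ell(Z)h(X)] = \mathbb{E}[\ell(Z)\,\mathbb{E}_{X|Z}[h(X)]]$ and the identification of $\mathbb{E}_{X|Z=\cdot}[h(X)]$ with $(E_\rho h)(\cdot)$ as an element of $\mathcal{H}_{\mathcal{Z}}$. This requires integrability of $h(X)$, which follows from boundedness of $k_{\mathcal{X}}$ in Hypothesis~\ref{hyp:measurableBoundedFeatures} (so $|h(X)| \le Q\|h\|_{\mathcal{H}_{\mathcal{X}}}$), together with the well-specification guaranteed by Proposition~\ref{well_spec_condition} — exactly the content that makes $E_\rho$ well-defined on all of $\mathcal{H}_{\mathcal{X}}$ with range in $\mathcal{H}_{\mathcal{Z}}$. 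Alternatively, one could run the entire argument at the level of tensor products and Bochner integrals using Proposition~\ref{op1}: from $T_{ZX} = \int \phi(z)\otimes\psi(x)\,d\rho(x,z)$, factor $d\rho(x,z) = d\rho(x|z)\,d\rho_{\mathcal{Z}}(z)$, pull the inner integral inside the tensor slot via Bochner integrability of $\psi$ (Proposition~\ref{prop:existenceOfConditionalMeanElement}) to get $\int \phi(z)\otimes \mu^-(z)\,d\rho_{\mathcal{Z}}(z)$, and recognize $\mu^-(z) = E_\rho^* \phi(z)$; reshuffling the tensor product then gives $\big(\int \phi(z)\otimes\phi(z)\,d\rho_{\mathcal{Z}}(z)\big)\circ E_\rho = T_1 \circ E_\rho$. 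I would likely present the first (bilinear-form) argument as the main proof since it is cleaner, and the tensor-integral route confirms it.
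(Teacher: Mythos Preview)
Your proof is correct and is precisely the argument underlying the paper's citation: the paper's own proof simply invokes \cite[Theorem 2]{fukumizu2004dimensionality} together with Proposition~\ref{well_spec_condition}, and your bilinear-form computation via Proposition~\ref{T1_cov} and the tower property is exactly how that result is established. Your care about the well-specification step (that $E_\rho h \in \mathcal{H}_{\mathcal{Z}}$) matches the paper's appeal to Proposition~\ref{well_spec_condition}.
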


\begin{proof}
\cite[Theorem 2]{fukumizu2004dimensionality}, appealing to Proposition~\ref{well_spec_condition}.
\end{proof}

Finally we state a property that will be useful for compositions involving covariance operators, generalizing \cite[Theorem 15]{bell2016handout}.

\begin{proposition}\label{opCS}
If $G\in\mathcal{L}_2(\mathcal{H}_{\mathcal{X}},\mathcal{H}_{\mathcal{Z}})$ and $B\in\mathcal{L}(\mathcal{H}_{\mathcal{Z}},\mathcal{H}_{\mathcal{Z}})$ then
$$
\|B\circ G\|_{\mathcal{L}_2}\leq \|B\|_{\mathcal{L}}\|G\|_{\mathcal{L}_2}
$$

\begin{proof}
$$
\|B\circ G\|^2_{\mathcal{L}_2}
=\sum_{i=1}^{\infty}\|B\circ G e_{i}^{\mathcal{X}}\|^2_{\mathcal{H}_{\mathcal{Z}}}
\leq \sum_{i=1}^{\infty}\left(\|B\|_{\mathcal{L} }\|Ge_i^{\mathcal{X}}\|_{\mathcal{H}_{\mathcal{Z}}}\right)^2
= \|B\|_{\mathcal{L}}^2\|G\|_{\mathcal{L}_2}^2
$$
where $\mathcal{L}$ is the operator norm and $\mathcal{L}_2$ is the Hilbert-Schmidt norm, and
the proof makes use of the operator norm definition.
\end{proof}

\end{proposition}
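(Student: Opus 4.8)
The plan is to unfold the definition of the Hilbert--Schmidt norm of $B\circ G$ over a fixed orthonormal basis of $\mathcal{H}_{\mathcal{X}}$ and to control each summand by the operator-norm bound for $B$.

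First I would invoke separability: by Proposition~\ref{separable}, $\mathcal{H}_{\mathcal{X}}$ admits a countable orthonormal basis $\{e_i^{\mathcal{X}}\}_{i=1}^{\infty}$. Since the Hilbert--Schmidt norm is independent of the choice of orthonormal basis, I may write $\|B\circ G\|_{\mathcal{L}_2}^2 = \sum_{i=1}^{\infty}\|(B\circ G)e_i^{\mathcal{X}}\|_{\mathcal{H}_{\mathcal{Z}}}^2$ with respect to this particular basis; the same basis will be used to evaluate $\|G\|_{\mathcal{L}_2}$.

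Next, for each index $i$ I would apply the definition of the operator norm of $B\in\mathcal{L}(\mathcal{H}_{\mathcal{Z}},\mathcal{H}_{\mathcal{Z}})$ to the vector $Ge_i^{\mathcal{X}}\in\mathcal{H}_{\mathcal{Z}}$, obtaining $\|B(Ge_i^{\mathcal{X}})\|_{\mathcal{H}_{\mathcal{Z}}}\le \|B\|_{\mathcal{L}}\,\|Ge_i^{\mathcal{X}}\|_{\mathcal{H}_{\mathcal{Z}}}$. Squaring, summing over $i$ (the summands are nonnegative, so rearranging the series is harmless), and pulling the constant $\|B\|_{\mathcal{L}}^2$ outside the sum gives $\|B\circ G\|_{\mathcal{L}_2}^2 \le \|B\|_{\mathcal{L}}^2\sum_{i=1}^{\infty}\|Ge_i^{\mathcal{X}}\|_{\mathcal{H}_{\mathcal{Z}}}^2 = \|B\|_{\mathcal{L}}^2\,\|G\|_{\mathcal{L}_2}^2$. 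Since $G$ is Hilbert--Schmidt the right-hand side is finite, so $B\circ G$ is again Hilbert--Schmidt, and taking square roots yields the claimed inequality.

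There is no genuine obstacle here; the only points that deserve a word of care are the use of the \emph{same} orthonormal basis on both sides together with basis-independence of the Hilbert--Schmidt norm, and the harmless rearrangement of the nonnegative series. This argument generalizes \cite[Theorem 15]{bell2016handout}, which treats the case where $B$ is itself Hilbert--Schmidt, to the weaker hypothesis that $B$ is merely bounded.
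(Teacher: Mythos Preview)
Your proposal is correct and essentially identical to the paper's proof: both expand $\|B\circ G\|_{\mathcal{L}_2}^2$ over an orthonormal basis $\{e_i^{\mathcal{X}}\}$ of $\mathcal{H}_{\mathcal{X}}$, bound each term by $\|B\|_{\mathcal{L}}\|Ge_i^{\mathcal{X}}\|_{\mathcal{H}_{\mathcal{Z}}}$ via the operator norm, and re-sum. Your added remarks on separability, basis-independence, and nonnegativity of the summands simply make explicit what the paper leaves tacit.
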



\subsubsection{Related work}\label{sec:cov_review}
Our approach allows both $\mathcal{H}_{\mathcal{X}}$ and $\mathcal{H}_{\mathcal{Z}}$ to be infinite-dimensional spaces.
Prior work on conditional mean embeddings and RKHS regression
has considered both  finite \cite{grunewalder2012conditional,grunewalder2013smooth}
and infinite \cite{song2009hilbert,song2010nonparametric,fukumizu2013kernel,hefny2015supervised,ciliberto2016consistent}
 dimensional RKHS $\mathcal{H}_{\mathcal{X}}$.
In this section, we briefly review this literature (besides the PSR case,
which we covered in Section \ref{sec:comparisonHefnyNonparametric}).


First, we recall results from Appendix~\ref{sec:vectorValuedRKHS}. $\mathcal{H}_{\Gamma}$ is a vector-valued RKHS consisting of operators $E:\mathcal{H}_{\mathcal{X}}\rightarrow\mathcal{H}_{\mathcal{Z}}$ with kernel $\Gamma(h,h')=\langle h, h'\rangle_{\mathcal{H}_{\mathcal{X}}}I_{\mathcal{H}_{\mathcal{Z}}}$. $\mathcal{H}_{\Xi}$ is a vector-valued RKHS consisting of mappings $\mu:\mathcal{Z}\rightarrow\mathcal{H}_{\mathcal{X}}$ with kernel $\Xi(z,z')=k_{\mathcal{Z}}(z,z')I_{\mathcal{H}_{\mathcal{X}}}$. By Propositions~\ref{iso1} and~\ref{iso2}, $\mathcal{H}_{\Gamma}$ and $\mathcal{H}_{\Xi}$ are isometrically isomorphic. There is a fundamental equivalence between $E$ and $\mu$, illustrated in Figure~\ref{rkhs_s}: $\mu(z)=E^*\phi(z)$.

Next, we present additional notation for vector-valued RKHS $\mathcal{H}_{\Xi}$.
\begin{align*}
    \Xi_z&:\mathcal{H}_{\mathcal{X}}\rightarrow \mathcal{H}_{\Xi} \\
    &:h\mapsto \Xi(\cdot,z)h=k_{\mathcal{Z}}(\cdot,z)h
\end{align*}
$\Xi_z$ is the point evaluator of \cite{micchelli2005learning,carmeli2006vector}. From this definition,
\begin{align*}
    \Xi(z,z')&=\Xi_z^*\circ \Xi_{z'} \\
    T_z^{\Xi}&=\Xi_z\circ \Xi_z^* \\
    T_1^{\Xi}&=\mathbb{E}T_z^{\Xi}
\end{align*}
and so $T_1^{\Xi}:\mathcal{H}_{\Xi}\rightarrow \mathcal{H}_{\Xi}$.

With this notation, we can communicate the constructions and assumptions of \cite{caponnetto2007optimal,grunewalder2012conditional}. In \cite[Hypothesis 1]{caponnetto2007optimal}, the authors assume $\Xi_z$ is a Hilbert-Schmidt operator. Definition 1 of \cite{caponnetto2007optimal} goes on to define the prior with respect to operator $T_1^{\Xi}$. The analysis of \cite{grunewalder2012conditional} inherits this framework. Section 6 of \cite{grunewalder2012conditional} further points out that $\Xi_z$ is not Hilbert-Schmidt if $\mathcal{H}_{\mathcal{X}}$ is infinite-dimensional since
$$
\|\Xi_z\|_{\mathcal{L}_2}=k_{\mathcal{Z}}(z,z)\sum_{i=1}^{\infty} \langle e_i^{\mathcal{X}},I_{\mathcal{H}_{\mathcal{X}}} e_i^{\mathcal{X}}\rangle_{\mathcal{H}_{\mathcal{X}}}=\infty
$$
Therefore the `main assumption' \cite[Table 1]{grunewalder2012conditional} is that $\mathcal{H}_{\mathcal{X}}$ is finite dimensional. The authors write, `It is likely that this assumption can be weakened, but this requires a deeper analysis'.

In the present work, we differ in our constructions and assumptions at this juncture. We instead focus on the covariance operator $T_1:\mathcal{H}_{\mathcal{Z}}\rightarrow\mathcal{H}_{\mathcal{Z}}$ as defined in \cite[Theorem 1]{fukumizu2004dimensionality}, previously applied to regression with an infinite-dimensional output space in
 \cite{song2009hilbert,song2010nonparametric,fukumizu2013kernel,hefny2015supervised,ciliberto2016consistent}.
Proposition~\ref{T1_tr} shows $tr(T_1)\leq \kappa^2$ under the mild assumptions in Hypotheses 2-3, so its eigendecomposition is well-defined. We place a prior with respect to $T_1$, and provide analysis inspired by \cite{smale2005shannon,smale2007learning} rather than \cite{caponnetto2007optimal}.

Specifically, in Hypothesis 4 we require that the stage 1 problem is well-specified: $E_{\rho}\in\mathcal{H}_{\Gamma}$. This requirement is stronger than the property articulated in Proposition~\ref{well_spec_condition}. Moreover, in Hypothesis 5 we assume 
$$
 E_{\rho}=T_1^{\frac{c_1-1}{2}}\circ G_1
$$
where $G_1:\mathcal{H}_{\mathcal{X}}\rightarrow \mathcal{H}_{\mathcal{Z}}$, $T_1^{\frac{c_1-1}{2}}:\mathcal{H}_{\mathcal{Z}}\rightarrow \mathcal{H}_{\mathcal{Z}}$, and $E_{\rho}:\mathcal{H}_{\mathcal{X}}\rightarrow \mathcal{H}_{\mathcal{Z}}$. By recognizing the equivalence of $E$ and $\mu$, we provide a general theory of conditional mean embedding regression in which $\mathcal{H}_{\mathcal{X}}$ is infinite. A question for further research is how to relax Hypothesis 4.

A number of previous works have studied consistency of the conditional expectation operator $E$ in the infinite-dimensional setting. Theorem 1 of \cite{song2010nonparametric} establishes consistency in Hilbert-Schmidt norm. However, the proof requires a strong smoothness assumption: that $T_1^{-3/2}\circ T_{ZX}$ is Hilbert-Schmidt. Theorem 8 of \cite{fukumizu2013kernel} establishes consistency of $E^*$ applied to embeddings of particular prior distributions, as needed to calculate a posterior by kernel Bayes' rule. The consistency results of \cite[Theorem 4, Theorem 5]{ciliberto2016consistent} for structured prediction are more relevant to our setting, and we discuss them in Appendix~\ref{sec:related_mu_bound} after establishing additional notation.

Finally, we remark that previous work has considered infinite-dimensional feature space in a broad variety of settings, beyond conditional mean embedding. In the setting of conditional density estimation, \cite{arbel2018kernel} propose an infinite-dimensional natural parameter for a conditional exponential family model, with a loss function derived from the Fisher score. See \cite[Lemma 1]{arbel2018kernel} for analysis specific to this particular loss.

\subsection{Algorithm}

\subsubsection{Derivation}\label{sec:alg_deriv}

\begin{proof}[Proof of Algorithm~\ref{alg}]
Rewrite the stage 1 regularized empirical objective as
\begin{align*}
E^n_{\lambda}&=\argmin_{E\in \mathcal{H}_{\Gamma}} \mathcal{E}_{\lambda}^n(E) \\
    \mathcal{E}_{\lambda}^n(E)
    &=\dfrac{1}{n}\sum_{i=1}^n \|\psi(x_i)-E^*\phi(z_i)\|^2_{\mathcal{H}_{\mathcal{X}}}+\lambda\|E\|^2_{\mathcal{L}_2(\mathcal{H}_{\mathcal{X}},\mathcal{H}_{\mathcal{Z}})} \\
    &=\dfrac{1}{n} \|\Psi_X-E^*\Phi_Z\|_{2}^2+\lambda\|E\|^2_{\mathcal{L}_2(\mathcal{H}_{\mathcal{X}},\mathcal{H}_{\mathcal{Z}})}
\end{align*}
where the $i^{th}$ column of $\Psi_X$ is $\psi(x_i)$ and the $i^{th}$ column of $\Phi_Z$ is $\phi(z_i)$. Hence by the standard regression formula
\begin{align*}
    (E^n_{\lambda})^*&=\Psi_X(K_{ZZ}+n\lambda I)^{-1}\Phi_Z' \\
    \mu^n_{\lambda}(z)&=(E^n_{\lambda})^*\phi(z)  \\
    &=\Psi_X(K_{ZZ}+n\lambda I)^{-1}\Phi_Z' \phi(z) \\
    &=\Psi_X\gamma(z) \\
    &=\sum_{i=1}^n\gamma_i(z)\psi(x_i)
\end{align*}
where
$$
\gamma(z):=(K_{ZZ}+n\lambda I)^{-1}\Phi_Z' \phi(z)=(K_{ZZ}+n\lambda I)^{-1}K_{Zz}
$$
Note that this expression coincides with the expression in Theorem~\ref{sol_1} after appealing to the proof of \cite[Proposition 2.1]{cortes2005general}.

By the representer theorem, we know that the first stage estimator $\mu_{\lambda}^n\in span(\{\psi(x_i)\})$ because we are effectively regressing $\{\phi(z_i)\}$ on $\{\psi(x_i)\}$ to learn the conditional expectation operator \cite{wahba1990spline,scholkopf2001generalized}. Indeed we have already shown
$$
\mu_{\lambda}^n (\cdot)=\sum_{j=1}^n\gamma_j(\cdot)\psi(x_j)
$$
In the second stage, we are effectively regressing on $\{\tilde{y}_i\}$ on $\mu_{\lambda}^n(\tilde{z}_i)$ to learn the structural function. By the representer theorem, then, $\hat{h}_{\xi}^m \in span(\{\mu_{\lambda}^n(\tilde{z}_i)\})$. But $\mu_{\lambda}^n(\tilde{z}_i)\in span(\{\psi(x_i)\})$, so $\hat{h}_{\xi}^m \in span(\{\psi(x_i)\})$. Thus the solution will take the form
$$
\hat{h}_{\xi}^m(\cdot)=\sum_{i=1}^n\alpha_i \psi(x_i)
$$
Substituting in this functional form as well as the solution for $\mu^n_{\lambda}$ permits us to rewrite
\begin{align*}
    [E^n_{\lambda}\hat{h}_{\xi}^m](z)&=\langle \hat{h}_{\xi}^m,\mu^n_{\lambda}(z) \rangle_{\mathcal{H}_{\mathcal{X}}} \\
    &=\bigg\langle \sum_{i=1}^n\alpha_i \psi(x_i),\sum_{j=1}^n\gamma_j(z)\psi(x_j) \bigg\rangle_{\mathcal{H}_{\mathcal{X}}} \\
    &= \sum_{i=1}^n\sum_{j=1}^n\alpha_i\gamma_j(z)k_{\mathcal{X}}(x_i,x_j) \\
    &=\alpha'K_{XX}\gamma(z) \\
    &=\alpha'w(z)
\end{align*}
where
$$
w(z):=K_{XX}\gamma(z)=K_{XX}(K_{ZZ}+n\lambda I)^{-1} K_{Zz}
$$
Note that $w$ depends on stage 1 sample matrices $X$ and $Z$ while $z$ is a test value supplied by the stage 2 sample. The regularized empirical error written in terms of dual parameter $\alpha$ is
\begin{align*}
    \hat{\mathcal{E}}^m_{\xi}(\alpha)&=\dfrac{1}{m}\sum_{i=1}^m(\tilde{y}_i-\alpha'w(\tilde{z}_i))^2+\xi\alpha'K_{XX}\alpha \\
    &=\dfrac{1}{m}\|\tilde{y}-W'\alpha\|_2^2+\xi\alpha'K_{XX}\alpha
\end{align*}
where the $i^{th}$ column of $W$ is $w(\tilde{z}_i)$. Note that $W=K_{XX}(K_{ZZ}+n\lambda I)^{-1} K_{Z\tilde{Z}}$. In this notation, $\tilde{y}$ and $\tilde{Z}$ are stage 2 sample vector and matrix. Hence
\begin{align*}
    \hat{\alpha}&= (WW'+m\xi K_{XX})^{-1}W\tilde{y}\\
    W&=K_{XX}(K_{ZZ}+n\lambda I)^{-1}K_{Z\tilde{Z}}
\end{align*}
\end{proof}

\subsubsection{Validation}\label{sec:validation}

Algorithm~\ref{alg} takes as given the values of stage 1 and stage 2 regularization parameters $(\lambda,\xi)$. Theorems~\ref{stage1} and~\ref{rate} theoretically determine optimal rates $\lambda=n^{\frac{-1}{c_1+1}}$ and $\xi=m^{-\frac{b}{bc+1}}$, respectively. For practical use, we provide a validation procedure to empirically determine values of $(\lambda,\xi)$. In some sense, the procedure implicitly estimates stage 1 prior parameter $c_1$ and stage 2 prior parameters $(b,c)$.

The procedure is as follows. Train stage 1 estimator $\mu_{\lambda}^n$ on stage 1 observations $(x_i,z_i)$ then select stage 1 regularization parameter value $\lambda^*$ to minimize out-of-sample loss, calculated from stage 2 observations $(\tilde{x}_i,\tilde{z_i})$. Train stage 2 estimator $\hat{h}_{\xi}^m$ on stage 2 observations $(\tilde{y}_i,\tilde{z}_i)$ then select stage 2 regularization parameter value $\xi^*$ to minimize out-of-sample loss, calculated from stage 1 observations $(y_i,x_i)$. Our approach assimilates the causal validation procedure of \cite{hartford2017deep} with the sample splitting inherent in KIV.

\begin{algorithm}\label{val}
Let $(x_i,y_i,z_i)$ be $n$ observations. Let $(\tilde{x}_i,\tilde{y}_i,\tilde{z}_i)$ be $m$ observations.
\begin{align*}
\gamma_{\tilde{Z}}(\lambda)&=(K_{ZZ}+n\lambda I)^{-1}K_{Z\tilde{Z}} \\
    L_1(\lambda)&=\dfrac{1}{m}tr[K_{\tilde{X}\tilde{X}}-2 K_{\tilde{X}X}\gamma_{\tilde{Z}}(\lambda)+(\gamma_{\tilde{Z}}(\lambda))'K_{XX}\gamma_{\tilde{Z}}(\lambda)] \\
    \lambda^*&=\argmin L_1(\lambda) \\
    L(\lambda,\xi)&=\dfrac{1}{n}\sum_{i=1}^n \|y_i-\hat{h}_{\xi}^m(x_i)\|^2_{\mathcal{Y}} \\
    \xi^*&= \argmin L(\lambda^*,\xi)
\end{align*}
where $\hat{h}_{\xi}^m$ is calculated by Algorithm~\ref{alg} with $\lambda=\lambda^*$.
\end{algorithm}

\begin{proof}[Proof of Algorithm~\ref{val}]
From first principles, the stage 1 out-of-sample loss is
$$
  L_1(\lambda)=\dfrac{1}{m}\sum_{i=1}^m\|\psi(\tilde{x}_i)-\mu_{\lambda}^n(\tilde{z}_i)\|^2_{\mathcal{H}_{\mathcal{X}}}
$$

Recall from the proof of Algorithm~\ref{alg}
\begin{align*}
    \mu_{\lambda}^n(z)&= \Psi_X \gamma(z) \\
    \gamma(z)&=(K_{ZZ}+n\lambda I)^{-1}K_{Zz}
\end{align*}
Therefore
\begin{align*}
    \|\psi(\tilde{x}_i)-\mu_{\lambda}^n(\tilde{z}_i)\|^2_{\mathcal{H}_{\mathcal{X}}}
    &= \|\psi(\tilde{x}_i)- \Psi_X \gamma(\tilde{z}_i)\|^2_{\mathcal{H}_{\mathcal{X}}} \\
    &=\langle\psi(\tilde{x}_i)-\Psi_X \gamma(\tilde{z}_i),\psi(\tilde{x}_i)-\Psi_X \gamma(\tilde{z}_i)  \rangle_{\mathcal{H}_{\mathcal{X}}}\\
    &=k_{\mathcal{X}}(\tilde{x}_i,\tilde{x}_i)-2K_{\tilde{x}_iX}\gamma(\tilde{z}_i)+(\gamma(\tilde{z}_i))'K_{XX}\gamma(\tilde{z}_i)
\end{align*}
\end{proof}

\subsection{Stage 1: Lemmas}

\subsubsection{Probability}

\begin{proposition}[Lemma 2 of \cite{smale2007learning}]\label{prob}
Let $\xi$ be a random variable taking values in a real separable Hilbert space $\mathcal{K}$. Suppose $\exists \tilde{M}$ s.t.
\begin{align*}
    \|\xi\|_{\mathcal{K}} &\leq \tilde{M}<\infty \quad \text{ a.s.} \\
    \sigma^2(\xi)&:=\mathbb{E}\|\xi\|_{\mathcal{K}}^2
\end{align*}
Then $\forall n\in\mathbb{N}, \forall \eta\in(0,1)$,
$$
\mathbb{P}\bigg[\bigg\|\dfrac{1}{n}\sum_{i=1}^n\xi_i-\mathbb{E}\xi\bigg\|_{\mathcal{K}}\leq\dfrac{2\tilde{M}\ln(2/\eta)}{n}+\sqrt{\dfrac{2\sigma^2(\xi)\ln(2/\eta)}{n}}\bigg]\geq 1-\eta
$$
\end{proposition}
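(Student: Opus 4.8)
The plan is to reduce the statement to a Bernstein-type tail bound for a sum of independent, mean-zero, bounded, Hilbert-space-valued random variables, and then to invert that bound. First I would center the summands by setting $\zeta_i = \xi_i - \mathbb{E}\xi$, so the $\zeta_i$ are i.i.d.\ with $\mathbb{E}\zeta_i = 0$. By Jensen, $\|\mathbb{E}\xi\|_{\mathcal{K}} \le \mathbb{E}\|\xi\|_{\mathcal{K}} \le \tilde{M}$, so the triangle inequality yields $\|\zeta_i\|_{\mathcal{K}} \le 2\tilde{M}$ almost surely, and orthogonality of mean and fluctuation gives $\mathbb{E}\|\zeta_i\|^2_{\mathcal{K}} = \sigma^2(\xi) - \|\mathbb{E}\xi\|^2_{\mathcal{K}} \le \sigma^2(\xi)$. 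The quantity to control is then $\|\bar\zeta\|_{\mathcal{K}}$ with $\bar\zeta := \frac{1}{n}\sum_{i=1}^n \zeta_i$.

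The engine of the argument is a vector-valued Bernstein inequality: for independent mean-zero $\zeta_i$ with $\|\zeta_i\|_{\mathcal{K}} \le M$ a.s.\ and $\sum_{i=1}^n \mathbb{E}\|\zeta_i\|^2_{\mathcal{K}} \le \Sigma^2$, one has for every $t > 0$
$$
\mathbb{P}\left[\left\|\sum_{i=1}^n \zeta_i\right\|_{\mathcal{K}} \ge t\right] \le 2\exp\left(-\frac{t^2}{2\left(\Sigma^2 + Mt/3\right)}\right).
$$
I would obtain this from Pinelis's exponential inequality for martingales in a $2$-smooth Banach space, specialized to the Hilbert case; the key structural fact it uses is that the squared norm of the partial sums satisfies $\mathbb{E}\|S_{k} + \zeta_{k+1}\|^2 = \|S_k\|^2 + \mathbb{E}\|\zeta_{k+1}\|^2$, which upgrades to an exponential supermartingale estimate. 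Applying it with $M = 2\tilde{M}$, $\Sigma^2 = n\sigma^2(\xi)$, and $t = n\epsilon$ gives $\mathbb{P}[\|\bar\zeta\|_{\mathcal{K}} \ge \epsilon] \le 2\exp\left(-\frac{n\epsilon^2}{2\sigma^2(\xi) + \frac{4}{3}\tilde{M}\epsilon}\right)$.

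It remains to invert this tail at level $\eta$. Writing $L = \ln(2/\eta)$, the right-hand side is at most $\eta$ exactly when $\epsilon^2 \ge a\epsilon + b$ with $a = \frac{4\tilde{M}L}{3n}$ and $b = \frac{2L\sigma^2(\xi)}{n}$. The smallest admissible $\epsilon$ is the positive root of $\epsilon^2 - a\epsilon - b = 0$, which is at most $a + \sqrt{b}$ because $\sqrt{a^2 + 4b} \le a + 2\sqrt{b}$. Since $\frac{4}{3} \le 2$, this threshold is dominated by the value $\frac{2\tilde{M}L}{n} + \sqrt{\frac{2L\sigma^2(\xi)}{n}}$ appearing in the statement, and monotonicity of the exponent in $\epsilon$ then forces the tail at that value to be at most $\eta$. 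Taking complements yields the claim.

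The main obstacle is the vector-valued Bernstein inequality itself. Because $\mathcal{K}$ may be infinite-dimensional, the scalar Chernoff route fails: one cannot exponentiate $\langle u, \sum_i \zeta_i\rangle_{\mathcal{K}}$ and then optimize over directions $u$ in the unit ball without a supremum that destroys the exponential rate. The clean resolution is Pinelis's martingale smoothness argument (equivalently Yurinskii's exponential bound for sums of independent Banach-space variables); granting that result, the centering and the quadratic inversion are both routine.
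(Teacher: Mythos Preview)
Your argument is correct: centering, applying the Pinelis--Yurinskii Bernstein inequality in a separable Hilbert space, and then inverting the quadratic is exactly the standard route, and your bookkeeping (the bound $\|\zeta_i\|\le 2\tilde M$, the variance bound $\mathbb{E}\|\zeta_i\|^2\le\sigma^2(\xi)$, and the root inequality $\sqrt{a^2+4b}\le a+2\sqrt b$) is clean. The paper itself does not prove this proposition at all; it simply quotes it as Lemma~2 of Smale and Zhou (2007), so there is nothing in the paper to compare your approach against beyond noting that you have supplied the argument the paper chose to import wholesale.
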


\subsubsection{Regression}\label{sec:stage1_reg}

\begin{proposition}\label{cef}
Under Hypothesis 3
$$
E_{\rho}^*\phi(z)=\mu(z)
$$
\end{proposition}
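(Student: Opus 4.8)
The plan is to prove the identity weakly: fix $z$, pair both sides against an arbitrary $h\in\mathcal{H}_{\mathcal{X}}$, show the two resulting scalars agree, and conclude that the two elements of $\mathcal{H}_{\mathcal{X}}$ coincide.

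First I would collect the ingredients. Since $E_\rho\in\mathcal{H}_{\Gamma}$ is maintained throughout (Hypothesis 4), $E_\rho$ is a bounded linear operator $\mathcal{H}_{\mathcal{X}}\to\mathcal{H}_{\mathcal{Z}}$, so its adjoint $E_\rho^*:\mathcal{H}_{\mathcal{Z}}\to\mathcal{H}_{\mathcal{X}}$ exists; by Proposition~\ref{iso1} it lies in $\mathcal{H}_{\Gamma^*}$ and satisfies $\langle E_\rho h,\ell\rangle_{\mathcal{H}_{\mathcal{Z}}}=\langle h,E_\rho^*\ell\rangle_{\mathcal{H}_{\mathcal{X}}}$ for all $h\in\mathcal{H}_{\mathcal{X}}$ and $\ell\in\mathcal{H}_{\mathcal{Z}}$. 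By construction (Proposition~\ref{well_spec_condition}), $E_\rho h\in\mathcal{H}_{\mathcal{Z}}$ for every $h$, so the reproducing property of $k_{\mathcal{Z}}$ applies and gives $[E_\rho h](z)=\langle E_\rho h,\phi(z)\rangle_{\mathcal{H}_{\mathcal{Z}}}$. Finally, Proposition~\ref{prop:existenceOfConditionalMeanElement}, whose proof uses Hypothesis~\ref{hyp:measurableBoundedFeatures} to guarantee that $\psi$ is Bochner integrable under each conditional law (licensing the exchange of $\mathbb{E}_{X|Z=z}$ with $\langle h,\cdot\rangle_{\mathcal{H}_{\mathcal{X}}}$), yields $\mathbb{E}_{X|Z=z}h(X)=\langle h,\mu(z)\rangle_{\mathcal{H}_{\mathcal{X}}}$.

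Then the computation is a short chain: for arbitrary $h\in\mathcal{H}_{\mathcal{X}}$,
\begin{align*}
\langle h,E_\rho^*\phi(z)\rangle_{\mathcal{H}_{\mathcal{X}}}
&=\langle E_\rho h,\phi(z)\rangle_{\mathcal{H}_{\mathcal{Z}}}
=[E_\rho h](z)
=\mathbb{E}_{X|Z=z}h(X)
=\langle h,\mu(z)\rangle_{\mathcal{H}_{\mathcal{X}}},
\end{align*}
using, in order, the adjoint relation of Proposition~\ref{iso1}, the reproducing property in $\mathcal{H}_{\mathcal{Z}}$, the definition of $E_\rho$, and Proposition~\ref{prop:existenceOfConditionalMeanElement}. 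Since $h$ ranges over all of $\mathcal{H}_{\mathcal{X}}$, the bounded linear functionals $\langle\,\cdot\,,E_\rho^*\phi(z)\rangle_{\mathcal{H}_{\mathcal{X}}}$ and $\langle\,\cdot\,,\mu(z)\rangle_{\mathcal{H}_{\mathcal{X}}}$ agree, hence $E_\rho^*\phi(z)=\mu(z)$.

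I do not expect a serious obstacle: the statement is essentially a bookkeeping identity that ties the definition of the adjoint in the vector-valued RKHS $\mathcal{H}_{\Gamma}$ to the scalar reproducing property and Bochner integrability. The only points needing care are that $E_\rho^*$ genuinely exists and obeys the adjoint relation (handled by the maintained Hypothesis 4 together with Proposition~\ref{iso1}), and that $\mu(z)$ is only defined where the conditional distribution $\rho(\cdot\mid z)$ is; off the support of $Z$ one would argue instead with $\mu^-$ as in Definition~\ref{def:mu_minus}, which does not affect the $\rho_{\mathcal{Z}}$-a.e.\ statement used downstream.
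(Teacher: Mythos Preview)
Your proof is correct and follows essentially the same approach as the paper: both argue by pairing against an arbitrary $h\in\mathcal{H}_{\mathcal{X}}$, invoke the adjoint relation, the reproducing property in $\mathcal{H}_{\mathcal{Z}}$ (the paper phrases this step via Proposition~\ref{well_spec_condition}), and Proposition~\ref{prop:existenceOfConditionalMeanElement}, then conclude by nondegeneracy of the inner product. Your additional remarks on the existence of $E_\rho^*$ and the off-support caveat with $\mu^-$ are sound but not needed for the paper's terse version.
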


\begin{proof}
For $h\in\mathcal{H}_{\mathcal{X}}$,
$$
\langle E_{\rho}^*\phi(z),h\rangle_{\mathcal{H}_{\mathcal{X}}}=\langle \phi(z),E_{\rho}h\rangle_{\mathcal{H}_{\mathcal{Z}}}=\langle \phi(z),\mathbb{E}_{X|Z=(\cdot)}h(X)\rangle_{\mathcal{H}_{\mathcal{Z}}}=\mathbb{E}_{X|Z=z}h(X)=\langle \mu(z),h\rangle_{\mathcal{H}_{\mathcal{X}}}
$$
The first equality is the definition of adjoint. The second holds by Proposition~\ref{well_spec_condition}. The final equality is by Proposition~\ref{prop:existenceOfConditionalMeanElement}.
\end{proof}

\begin{proposition}\label{split}
Under Hypothesis 3
$$
\mathbb{E}\|(E^*-E^*_{\rho})\phi(Z)\|^2_{\mathcal{H}_{\mathcal{X}}}=\mathcal{E}_1(E)-\mathcal{E}_1(E_{\rho})
$$
\end{proposition}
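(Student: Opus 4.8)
The plan is to perform the standard bias--variance (Pythagorean) decomposition of the surrogate risk $\mathcal{E}_1$ around its minimizer $E_\rho$. First I would write, for any $E\in\mathcal{H}_\Gamma$ and any realization $(X,Z)$,
$$\psi(X)-E^*\phi(Z) = \big(\psi(X)-E_\rho^*\phi(Z)\big) + \big(E_\rho^*\phi(Z)-E^*\phi(Z)\big),$$
expand the squared $\mathcal{H}_{\mathcal{X}}$-norm, and take $\mathbb{E}_{(X,Z)}$. The boundedness in Hypothesis~\ref{hyp:measurableBoundedFeatures} ($\|\psi(x)\|_{\mathcal{H}_{\mathcal{X}}}\le Q$, $\|\phi(z)\|_{\mathcal{H}_{\mathcal{Z}}}\le\kappa$) together with the fact that every $E\in\mathcal{H}_\Gamma$ is a bounded operator guarantees that each term is integrable, so splitting the expectation of the sum is legitimate.

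The crux is that the cross term vanishes. Here I would condition on $Z$: since $(E_\rho^*-E^*)\phi(Z)$ depends on $Z$ alone, the tower property gives
$$\mathbb{E}\big\langle \psi(X)-E_\rho^*\phi(Z),\,(E_\rho^*-E^*)\phi(Z)\big\rangle_{\mathcal{H}_{\mathcal{X}}} = \mathbb{E}_Z\big\langle \mathbb{E}_{X|Z}[\psi(X)]-E_\rho^*\phi(Z),\,(E_\rho^*-E^*)\phi(Z)\big\rangle_{\mathcal{H}_{\mathcal{X}}},$$
where exchanging the conditional expectation with the inner product uses Bochner integrability of $\psi$, exactly as in the proof of Proposition~\ref{prop:existenceOfConditionalMeanElement}. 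By definition $\mathbb{E}_{X|Z=z}\psi(X)=\mu(z)$, and Proposition~\ref{cef} identifies $\mu(z)=E_\rho^*\phi(z)$; hence the first argument of the inner product is identically zero and the cross term drops.

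What remains is $\mathcal{E}_1(E)=\mathcal{E}_1(E_\rho)+\mathbb{E}\|(E_\rho^*-E^*)\phi(Z)\|^2_{\mathcal{H}_{\mathcal{X}}}$, which is the claimed identity after rearranging and using $\|(E^*-E_\rho^*)\phi(Z)\|_{\mathcal{H}_{\mathcal{X}}}=\|(E_\rho^*-E^*)\phi(Z)\|_{\mathcal{H}_{\mathcal{X}}}$. I do not expect a genuine obstacle here; the only points needing care are the measure-theoretic justifications --- pulling the conditional expectation inside the inner product and splitting the expectation of the squared sum --- both of which follow from the uniform boundedness of the feature maps. As a byproduct this also re-derives the familiar fact that $E_\rho$ minimizes $\mathcal{E}_1$, consistent with Hypothesis~4.
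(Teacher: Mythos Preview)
Your proposal is correct and follows exactly the paper's approach: add and subtract $E_\rho^*\phi(Z)$, expand the square, and kill the cross term via the law of iterated expectations together with Proposition~\ref{cef}. The paper's proof is a two-sentence sketch of the same argument; your version simply supplies the measure-theoretic justifications (integrability via Hypothesis~\ref{hyp:measurableBoundedFeatures}, Bochner integrability to move the conditional expectation inside the inner product) that the paper leaves implicit.
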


\begin{proof}
$$
\mathcal{E}_1(E)=\mathbb{E}\|\psi(X)-E^*\phi(Z)\|^2_{\mathcal{H}_{\mathcal{X}}}=\mathbb{E}\|\psi(X)-E_{\rho}^*\phi(Z)+E_{\rho}^*\phi(Z)-E^*\phi(Z)\|^2_{\mathcal{H}_{\mathcal{X}}}
$$
Expanding the square we see that the cross terms are $0$ by law of iterated expectation and Proposition~\ref{cef}.
\end{proof}

\begin{proposition}\label{lambda_min}
Under Hypotheses 3-4
$$
E_{\lambda}=\argmin_{E\in \mathcal{H}_{\Gamma}} \mathbb{E}\|(E^*-E_{\rho}^*)\phi(Z)\|^2_{\mathcal{H}_{\mathcal{X}}}+\lambda \|E\|^2_{\mathcal{H}_{\Gamma}}
$$
\end{proposition}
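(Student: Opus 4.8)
The plan is to reduce the regularized objective on the right-hand side to the already-defined functional $\mathcal{E}_\lambda$ whose minimizer is $E_\lambda$ by definition. The key observation is that the first term differs from $\mathcal{E}_1(E)$ only by an additive constant, and the penalty terms agree because the two norms in question coincide.

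Concretely, first I would apply Proposition~\ref{split}, valid under Hypothesis 3, to rewrite
$$
\mathbb{E}\|(E^*-E_{\rho}^*)\phi(Z)\|^2_{\mathcal{H}_{\mathcal{X}}}=\mathcal{E}_1(E)-\mathcal{E}_1(E_{\rho}).
$$
Under Hypothesis 4 we have $E_\rho\in\mathcal{H}_\Gamma$, so $\mathcal{E}_1(E_\rho)=\inf_{E\in\mathcal{H}_\Gamma}\mathcal{E}_1(E)$ is a finite constant independent of the optimization variable $E$. Next I would invoke Proposition~\ref{inner}, which gives $\mathcal{H}_{\Gamma}=\mathcal{L}_2(\mathcal{H}_{\mathcal{X}},\mathcal{H}_{\mathcal{Z}})$ with equal inner products, hence $\|E\|^2_{\mathcal{H}_{\Gamma}}=\|E\|^2_{\mathcal{L}_2(\mathcal{H}_{\mathcal{X}},\mathcal{H}_{\mathcal{Z}})}$ for every $E\in\mathcal{H}_\Gamma$.

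Combining these two facts, the functional being minimized on the right-hand side equals
$$
\mathcal{E}_1(E)+\lambda\|E\|^2_{\mathcal{L}_2(\mathcal{H}_{\mathcal{X}},\mathcal{H}_{\mathcal{Z}})}-\mathcal{E}_1(E_{\rho})=\mathcal{E}_{\lambda}(E)-\mathcal{E}_1(E_{\rho}).
$$
Since subtracting the constant $\mathcal{E}_1(E_\rho)$ does not change the set of minimizers, the argmin of the right-hand side over $E\in\mathcal{H}_\Gamma$ is exactly $\argmin_{E\in\mathcal{H}_\Gamma}\mathcal{E}_\lambda(E)=E_\lambda$, which is the claim. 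There is essentially no obstacle here: the proposition is a bookkeeping identity, and the only subtlety is making sure the norm appearing in the stated penalty ($\|\cdot\|_{\mathcal{H}_\Gamma}$) is literally the same as the one in the definition of $E_\lambda$ ($\|\cdot\|_{\mathcal{L}_2(\mathcal{H}_{\mathcal{X}},\mathcal{H}_{\mathcal{Z}})}$), which is precisely what Proposition~\ref{inner} supplies.
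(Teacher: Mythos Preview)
Your proposal is correct and matches the paper's approach: the paper's proof is the single line ``Corollary of Proposition~\ref{split},'' and you have spelled out exactly that corollary, additionally noting Proposition~\ref{inner} to reconcile the two penalty norms.
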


\begin{proof}
Corollary of Proposition~\ref{split}.
\end{proof}

\subsection{Stage 1: Theorems}\label{sec:stage1smaleconvergenceproofs}

\begin{proof}[Proof of Theorem~\ref{sol_1}]
\cite[Appendix D.1]{grunewalder2012modelling}, substituting the empirical covariance operators; or \cite[Lemma 17]{ciliberto2016consistent}.
\end{proof}

To quantify the convergence rate of $\|E^n_{\lambda}-E_{\rho}\|_{\mathcal{H}_{\Gamma}}$, we decompose it into two terms: the sampling error $\|E^n_{\lambda}-E_{\lambda}\|_{\mathcal{H}_{\Gamma}}$, and the approximation error $\|E_{\lambda}-E_{\rho}\|_{\mathcal{H}_{\Gamma}}$. To bound the sampling error, we generalize \cite[Theorem 1]{smale2007learning}.




\begin{theorem}\label{sampling}
Assume Hypotheses 2-4. $\forall \delta\in(0,1)$, the following holds w.p. $1-\delta$:
$$
\|E^n_{\lambda}-E_{\lambda}\|_{\mathcal{H}_{\Gamma}}\leq \dfrac{4\kappa(Q+\kappa \|E_{\rho}\|_{\mathcal{H}_{\Gamma}}) \ln(2/\delta)}{\sqrt{n}\lambda}
$$
\end{theorem}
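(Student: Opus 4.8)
The plan is to adapt the argument behind Theorem~1 of \cite{smale2007learning}, lifting it from scalar-valued least squares to the vector-valued RKHS $\mathcal{H}_{\Gamma}=\mathcal{L}_2(\mathcal{H}_{\mathcal{X}},\mathcal{H}_{\mathcal{Z}})$. The two ingredients are the closed forms $E^n_{\lambda}=(\mathbf{T}_1+\lambda)^{-1}\circ\mathbf{g}_1$ from Theorem~\ref{sol_1}, and its population analogue $E_{\lambda}=(T_1+\lambda)^{-1}\circ T_{ZX}$, which follows from the first-order condition for the regularized population objective in Proposition~\ref{lambda_min} together with $T_{ZX}=T_1\circ E_{\rho}$ (Proposition~\ref{op2}). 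From $(\mathbf{T}_1+\lambda)E^n_{\lambda}=\mathbf{g}_1$ and $\lambda E_{\lambda}=T_{ZX}-T_1\circ E_{\lambda}$ one obtains the single identity
\[
E^n_{\lambda}-E_{\lambda}=(\mathbf{T}_1+\lambda)^{-1}\circ\big[\,\mathbf{g}_1-T_{ZX}+(T_1-\mathbf{T}_1)\circ E_{\lambda}\,\big].
\]

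Next I would recognize the bracketed term as an empirical average minus its mean. Using the composition identity $(\phi(z)\otimes\phi(z))\circ E_{\lambda}=\phi(z)\otimes E_{\lambda}^*\phi(z)$, set $\zeta_i:=\phi(z_i)\otimes\big(\psi(x_i)-E_{\lambda}^*\phi(z_i)\big)$; then $\tfrac1n\sum_i\zeta_i=\mathbf{g}_1-\mathbf{T}_1\circ E_{\lambda}$ with $\mathbb{E}\zeta=T_{ZX}-T_1\circ E_{\lambda}$, so the bracket is exactly $\tfrac1n\sum_i\zeta_i-\mathbb{E}\zeta$. The element $\zeta$ takes values in the separable Hilbert space $\mathcal{L}_2(\mathcal{H}_{\mathcal{X}},\mathcal{H}_{\mathcal{Z}})$ (separability of $\mathcal{H}_{\mathcal{X}},\mathcal{H}_{\mathcal{Z}}$ from Proposition~\ref{separable}), and by Proposition~\ref{T1_lemma1}, the kernel bounds of Hypothesis~\ref{hyp:measurableBoundedFeatures}, and $\|E_{\lambda}^*\|_{\mathcal{L}}\le\|E_{\lambda}\|_{\mathcal{H}_{\Gamma}}\le\|E_{\rho}\|_{\mathcal{H}_{\Gamma}}$ (the last bound from $E_{\lambda}=(T_1+\lambda)^{-1}T_1\circ E_{\rho}$ and Proposition~\ref{opCS}, using $\mathcal{H}_{\Gamma}=\mathcal{L}_2$ with equal norms, Proposition~\ref{inner}), we get $\|\zeta\|_{\mathcal{L}_2}\le\tilde{M}:=\kappa(Q+\kappa\|E_{\rho}\|_{\mathcal{H}_{\Gamma}})$ almost surely and $\sigma^2(\zeta)\le\tilde{M}^2$. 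Applying the concentration bound of Proposition~\ref{prob} with $\eta=\delta$ controls the bracket in $\mathcal{L}_2$-norm by $\tfrac{2\tilde{M}\ln(2/\delta)}{n}+\sqrt{\tfrac{2\tilde{M}^2\ln(2/\delta)}{n}}$ with probability $1-\delta$.

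To finish I would assemble the pieces: $\mathbf{T}_1$ is positive semidefinite, so $\|(\mathbf{T}_1+\lambda)^{-1}\|_{\mathcal{L}}\le\lambda^{-1}$, and Proposition~\ref{opCS} gives $\|(\mathbf{T}_1+\lambda)^{-1}\circ A\|_{\mathcal{L}_2}\le\lambda^{-1}\|A\|_{\mathcal{L}_2}$. Combined with the concentration bound and the elementary estimate $\tfrac{2u}{n}+\sqrt{\tfrac{2u}{n}}\le\tfrac{4u}{\sqrt n}$ for $u=\ln(2/\delta)\ge\ln 2>\tfrac12$ and $n\ge1$ (each summand is $\le\tfrac{2u}{\sqrt n}$), this yields $\|E^n_{\lambda}-E_{\lambda}\|_{\mathcal{H}_{\Gamma}}\le 4\kappa(Q+\kappa\|E_{\rho}\|_{\mathcal{H}_{\Gamma}})\ln(2/\delta)/(\sqrt n\,\lambda)$, as claimed.

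I expect the main obstacle to be bookkeeping rather than depth: one must track which Hilbert space each operator maps between, justify the composition identities $(\phi(z)\otimes\phi(z))\circ E_{\lambda}=\phi(z)\otimes E_{\lambda}^*\phi(z)$ and $E_{\lambda}=(T_1+\lambda)^{-1}T_{ZX}$, interchange $\mathbb{E}$ with right-composition by the fixed operator $E_{\lambda}$, and invoke Proposition~\ref{opCS} at every composition so that all norms are genuinely the $\mathcal{L}_2$-norm on $\mathcal{H}_{\Gamma}$. The one genuinely infinite-dimensional point — that $\zeta$ lives in a separable Hilbert space with finite second moment, so Proposition~\ref{prob} applies — is precisely where Hypotheses~2--4 enter.
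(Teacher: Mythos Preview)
Your proposal is correct and follows the same overall architecture as the paper's proof: the same decomposition $E^n_{\lambda}-E_{\lambda}=(\mathbf{T}_1+\lambda)^{-1}\circ(\tfrac{1}{n}\sum_i\zeta_i-\mathbb{E}\zeta)$ with the same random element $\zeta_i=\phi(z_i)\otimes(\psi(x_i)-E_{\lambda}^*\phi(z_i))$, the same use of $\|(\mathbf{T}_1+\lambda)^{-1}\|_{\mathcal{L}}\le\lambda^{-1}$ together with Proposition~\ref{opCS}, and the same concentration lemma (Proposition~\ref{prob}).

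Where you genuinely diverge is in bounding $\|E_{\lambda}\|_{\mathcal{H}_{\Gamma}}$. The paper argues via Proposition~\ref{lambda_min} with $E=0$ to obtain $\|E_{\lambda}\|_{\mathcal{H}_{\Gamma}}\le\kappa\|E_{\rho}\|_{\mathcal{H}_{\Gamma}}/\sqrt{\lambda}$, which makes the a.s.\ bound on $\|\zeta_i\|$ depend on $\lambda$; this in turn forces a two-case argument (splitting on whether $\kappa/\sqrt{n\lambda}\le 1/(4\ln(2/\delta))$), with a separate coarse bound $\|E^n_{\lambda}\|_{\mathcal{H}_{\Gamma}}\le Q/\sqrt{\lambda}$ handling the bad case. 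You instead observe $E_{\lambda}=(T_1+\lambda)^{-1}T_1\circ E_{\rho}$ and use $\|(T_1+\lambda)^{-1}T_1\|_{\mathcal{L}}\le 1$ with Proposition~\ref{opCS} to get the $\lambda$-free bound $\|E_{\lambda}\|_{\mathcal{H}_{\Gamma}}\le\|E_{\rho}\|_{\mathcal{H}_{\Gamma}}$. This makes $\tilde{M}=\kappa(Q+\kappa\|E_{\rho}\|_{\mathcal{H}_{\Gamma}})$ independent of $\lambda$, so the trivial variance bound $\sigma^2\le\tilde{M}^2$ and the elementary estimate $\tfrac{2u}{n}+\sqrt{\tfrac{2u}{n}}\le\tfrac{4u}{\sqrt{n}}$ finish the proof directly, with no case split. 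Your route is strictly simpler and loses nothing in the final constant; the paper's route stays closer to the template of \cite{smale2007learning} but is unnecessarily complicated here.
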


\begin{proof}
Write
$$
E^n_{\lambda}-E_{\lambda}=\bigg(\mathbf{T}_1+\lambda I\bigg)^{-1}\circ\bigg(\mathbf{T}_{ZX}-\mathbf{T}_1\circ E_{\lambda}-\lambda E_{\lambda} \bigg)
$$
Observe that
\begin{align*}
    \mathbf{T}_{ZX}-\mathbf{T}_1\circ E_{\lambda}&=\dfrac{1}{n}\sum_{i=1}^n \phi(z_i)\otimes \psi (x_i) -\dfrac{1}{n}\sum_{i=1}^n [\phi(z_i)\otimes \phi(z_i)]\circ  E_{\lambda}\\
    \lambda E_{\lambda}&=T_{ZX}-T_1\circ E_{\lambda}=\int \phi(z)\otimes \psi (x)d\rho -\int \phi(z)\otimes \phi(z) d\rho \circ E_{\lambda}
\end{align*}
where the second line holds since $E_{\lambda}=(T_1+\lambda I)^{-1}\circ  T_{ZX}$ and by appealing to Proposition~\ref{op1}.

Write
$$
\xi_i=\phi(z_i)\otimes \psi (x_i)-[\phi(z_i)\otimes \phi(z_i)]\circ  E_{\lambda}=\phi(z_i)\otimes[\psi(x_i)-E_{\lambda}^* \phi(z_i)]
$$
where the second equality holds since
\begin{align*}
    \phi(z_i)\otimes \psi (x_i)-[\phi(z_i)\otimes \phi(z_i)]\circ  E_{\lambda}&=\phi(z_i)\langle \psi (x_i),\cdot\rangle_{\mathcal{H}_{\mathcal{X}}}-\phi(z_i)\langle \phi(z_i),E_{\lambda} \cdot \rangle_{\mathcal{H}_{\mathcal{Z}}} 
\end{align*}
and by the definition of the adjoint operator. 

Thus the error bound can be rewritten as
$$
E^n_{\lambda}-E_{\lambda}=\bigg(\mathbf{T}_1+\lambda I\bigg)^{-1}\circ \bigg(\dfrac{1}{n}\sum_{i=1}^n\xi_i-\mathbb{E}\xi\bigg)
$$
Observe that
\begin{align*}
    \bigg(\mathbf{T}_1+\lambda I\bigg)^{-1}&\in \mathcal{L}(\mathcal{H}_{\mathcal{Z}},\mathcal{H}_{\mathcal{Z}})\\
    \bigg(\dfrac{1}{n}\sum_{i=1}^n\xi_i-\mathbb{E}\xi\bigg) &\in \mathcal{L}_2(\mathcal{H}_{\mathcal{X}},\mathcal{H}_{\mathcal{Z}})
\end{align*}
where the latter is by Proposition~\ref{T1_lemma1}. Therefore by Propositions~\ref{opCS} and~\ref{inner},
\begin{align*}
    \|E^n_{\lambda}-E_{\lambda}\|_{\mathcal{H}_{\Gamma}}&\leq \dfrac{1}{\lambda}\Delta \\
    \Delta&=\bigg\|\dfrac{1}{n}\sum_{i=1}^n\xi_i-\mathbb{E}\xi\bigg\|_{\mathcal{H}_{\Gamma}}
\end{align*}

Note that
\begin{align*}
\|\xi_i\|_{\mathcal{H}_{\Gamma}}&\leq \kappa Q+\kappa^2\|E^*_{\lambda}\|_{\mathcal{L}_2(\mathcal{H}_{\mathcal{Z}},\mathcal{H}_{\mathcal{X}})} \\
    \sigma^2(\xi_i)&=\mathbb{E}\|\xi_i\|^2_{\mathcal{H}_{\Gamma}}\leq \kappa^2 \mathbb{E}\|\psi(X)-E_{\lambda}^*\phi(Z)\|^2_{\mathcal{H}_{\mathcal{X}}}=\kappa^2\mathcal{E}_1(E_{\lambda})
\end{align*}

By Proposition~\ref{lambda_min} with $E=0$
\begin{align*}
    \mathbb{E}\|(E_{\lambda}^*-E^*_{\rho})\phi(Z)\|^2_{\mathcal{H}_{\mathcal{X}}}+\lambda \|E_{\lambda}\|^2_{\mathcal{H}_{\Gamma}}
    &\leq \mathbb{E}\|E_\rho^*\phi(Z)\|^2_{\mathcal{H}_{\mathcal{X}}} \\
        &\leq \|E_{\rho}^* \|^2_{\mathcal{L}_2(\mathcal{H}_{\mathcal{Z}},\mathcal{H}_{\mathcal{X}})} \mathbb{E} \|\phi(Z)\|_{\mathcal{H}_{\mathcal{Z}}}^2 \\
    &\leq 
    \kappa^2\|E_{\rho}\|_{\mathcal{H}_{\Gamma}}^2 
\end{align*}
Hence
\begin{align*}
     \mathbb{E}\|(E_{\lambda}^*-E^*_{\rho})\phi(Z)\|^2_{\mathcal{H}_{\mathcal{X}}}&\leq \kappa^2\|E_{\rho}\|_{\mathcal{H}_{\Gamma}}^2  \\
     \|E^*_{\lambda}\|_{\mathcal{L}_2(\mathcal{H}_{\mathcal{Z}},\mathcal{H}_{\mathcal{X}})}=\|E_{\lambda}\|_{\mathcal{H}_{\Gamma}}&\leq \dfrac{\kappa\|E_{\rho}\|_{\mathcal{H}_{\Gamma}} }{\sqrt{\lambda}}
\end{align*}

Moreover by the definition of $E_{\rho}$ as the minimizer of $\mathcal{E}_1$,
$$
\mathcal{E}_1(E_{\rho})\leq \mathcal{E}_1(0)=\mathbb{E}\|\psi(X)\|^2_{\mathcal{H}_{\mathcal{X}}}\leq Q^2
$$
so by Proposition~\ref{split}
$$
\mathcal{E}_1(E_{\lambda})=\mathcal{E}_1(E_{\rho})+\mathbb{E}\|(E_{\lambda}^*-E^*_{\rho})\phi(Z)\|^2_{\mathcal{H}_{\mathcal{X}}}\leq Q^2+\kappa^2\|E_{\rho}\|_{\mathcal{H}_{\Gamma}}^2 
$$

In summary,
\begin{align*}
\|\xi_i\|_{\mathcal{H}_{\Gamma}}&\leq \kappa Q+\kappa^2\dfrac{ \kappa\|E_{\rho}\|_{\mathcal{H}_{\Gamma}}}{\sqrt{\lambda}}
=\kappa (Q+\kappa^2 \|E_{\rho}\|_{\mathcal{H}_{\Gamma}}/\sqrt{\lambda}) \\
    \sigma^2(\xi_i)&\leq \kappa^2 (Q^2+ \kappa^2\|E_{\rho}\|_{\mathcal{H}_{\Gamma}}^2)
\end{align*}
We then apply Proposition~\ref{prob}. With probability $1-\delta$,
$$
\Delta \leq  \kappa (Q+\kappa^2 \|E_{\rho}\|_{\mathcal{H}_{\Gamma}}/\sqrt{\lambda}) \dfrac{2\ln(2/\delta)}{n}+\sqrt{ \kappa^2 (Q^2+\kappa^2\|E_{\rho}\|_{\mathcal{H}_{\Gamma}}^2 ) \dfrac{2\ln(2/\delta)}{n}} 
$$

There are two cases.
\begin{enumerate}
    \item $\dfrac{\kappa}{\sqrt{n\lambda}}\leq \dfrac{1}{4\ln(2/\delta)}<1$.
    
    Because $a^2+b^2\leq (a+b)^2$ for $a,b\geq0$,
\begin{align*}
\Delta &< \dfrac{2\kappa Q \ln(2/\delta)}{n}+\dfrac{2\kappa^3\|E_{\rho}\|_{\mathcal{H}_{\Gamma}}\ln(2/\delta)}{n\sqrt{\lambda}}+\kappa(Q+\kappa \|E_{\rho}\|_{\mathcal{H}_{\Gamma}})\sqrt{\dfrac{2\ln(2/\delta)}{n}} \\
&=\dfrac{2\kappa Q \ln(2/\delta)}{n}
+\dfrac{2\kappa^2\|E_{\rho}\|_{\mathcal{H}_{\Gamma}}\ln(2/\delta)}{\sqrt{n}}\dfrac{\kappa}{\sqrt{n\lambda}}
+\dfrac{\kappa(Q+\kappa \|E_{\rho}\|_{\mathcal{H}_{\Gamma}}) \ln(2/\delta)}{\sqrt{n}}\sqrt{\dfrac{2}{\ln(2/\delta)}} \\
&\leq \dfrac{2\kappa Q \ln(2/\delta)}{\sqrt{n}}
+\dfrac{2\kappa^2\|E_{\rho}\|_{\mathcal{H}_{\Gamma}}\ln(2/\delta)}{\sqrt{n}}
+\dfrac{2\kappa(Q+\kappa \|E_{\rho}\|_{\mathcal{H}_{\Gamma}}) \ln(2/\delta)}{\sqrt{n}} \\
&=\dfrac{4\kappa(Q+\kappa \|E_{\rho}\|_{\mathcal{H}_{\Gamma}}) \ln(2/\delta)}{\sqrt{n}}
\end{align*}
Then recall
$$
 \|E^n_{\lambda}-E_{\lambda}\|_{\mathcal{H}_{\Gamma}}\leq \dfrac{1}{\lambda}\Delta
$$
    \item $\dfrac{\kappa}{\sqrt{n\lambda}}>\dfrac{1}{4\ln(2/\delta)}$.
    
    Observe that by the definition of $E_{\lambda}^n$
    \begin{align*}
        \dfrac{1}{n}\sum_{i=1}^n \|\psi(x_i)-(E_{\lambda}^n)^*\phi(z_i)\|^2_{\mathcal{H}_{\mathcal{X}}}+\lambda \|E_{\lambda}^n\|^2_{\mathcal{H}_{\Gamma}}
        &=\mathcal{E}_{\lambda}^n(E_\lambda^n) \\
        &\leq \mathcal{E}_{\lambda}^n(0) \\
        &=\dfrac{1}{n}\sum_{i=1}^n \|\psi(x_i)\|^2_{\mathcal{H}_{\mathcal{X}}} \\
        &\leq Q^2
    \end{align*}
    Hence
    \begin{align*}
        \|E_{\lambda}^n\|_{\mathcal{H}_{\Gamma}}\leq \dfrac{Q}{\sqrt{\lambda}}
    \end{align*}
    and
    $$
    \|E_{\lambda}^n-E_{\lambda}\|_{\mathcal{H}_{\Gamma}}\leq \dfrac{Q}{\sqrt{\lambda}}+\dfrac{\kappa\|E_{\rho}\|_{\mathcal{H}_{\Gamma}} }{\sqrt{\lambda}} 
    =\dfrac{Q+\kappa\|E_{\rho}\|_{\mathcal{H}_{\Gamma}}}{\sqrt{\lambda}}
    $$
    Finally observe that 
    $$
   \dfrac{1}{4\ln(2/\delta)} < \dfrac{\kappa}{\sqrt{n\lambda}}\iff \dfrac{Q+\kappa\|E_{\rho}\|_{\mathcal{H}_{\Gamma}}}{\sqrt{\lambda}} <\dfrac{4\kappa(Q+\kappa \|E_{\rho}\|_{\mathcal{H}_{\Gamma}}) \ln(2/\delta)}{\sqrt{n}\lambda}
    $$
\end{enumerate}
\end{proof}

To bound the approximation error, we generalize \cite[Theorem 4]{smale2005shannon}.
\begin{theorem}\label{approx}
Assume Hypotheses 2-5.
$$
\|E_{\lambda}-E_{\rho}\|_{\mathcal{H}_{\Gamma}}\leq \lambda^{\frac{c_1-1}{2}} \sqrt{\zeta_1}
$$
\end{theorem}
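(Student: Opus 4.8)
The plan is to reduce the whole statement to a clean algebraic identity for the bias operator $E_\lambda - E_\rho$, followed by one elementary spectral estimate on $T_1$.

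First I would characterize $E_\lambda$ through the normal equation associated with the minimization in Proposition~\ref{lambda_min}: the first-order condition for $\argmin_{E\in\mathcal{H}_\Gamma}\mathbb{E}\|(E^*-E_\rho^*)\phi(Z)\|^2_{\mathcal{H}_\mathcal{X}}+\lambda\|E\|^2_{\mathcal{H}_\Gamma}$ reads $(T_1+\lambda)\circ E_\lambda = T_{ZX}$, i.e. $E_\lambda=(T_1+\lambda)^{-1}\circ T_{ZX}$ — the population analogue of Theorem~\ref{sol_1}, and exactly the identity already invoked inside the proof of Theorem~\ref{sampling}. Combining this with $T_{ZX}=T_1\circ E_\rho$ (Proposition~\ref{op2}) and subtracting $E_\rho$,
$$
E_\lambda-E_\rho=(T_1+\lambda)^{-1}\circ T_1\circ E_\rho-E_\rho=(T_1+\lambda)^{-1}\circ\bigl(T_1-(T_1+\lambda)\bigr)\circ E_\rho=-\lambda\,(T_1+\lambda)^{-1}\circ E_\rho .
$$
Every factor above maps $\mathcal{H}_\mathcal{Z}\to\mathcal{H}_\mathcal{Z}$ except $E_\rho\in\mathcal{H}_\Gamma=\mathcal{L}_2(\mathcal{H}_\mathcal{X},\mathcal{H}_\mathcal{Z})$, so the composition again lies in $\mathcal{H}_\Gamma$ and we may take its $\mathcal{H}_\Gamma$-norm.

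Next I would insert the source condition of Hypothesis~\ref{hyp:smaleSmoothness}, writing $E_\rho=T_1^{(c_1-1)/2}\circ G_1$ with $\|G_1\|^2_{\mathcal{H}_\Gamma}\le\zeta_1$, so that $E_\lambda-E_\rho=-\bigl(\lambda(T_1+\lambda)^{-1}T_1^{(c_1-1)/2}\bigr)\circ G_1$. Applying Proposition~\ref{opCS} with $B=\lambda(T_1+\lambda)^{-1}T_1^{(c_1-1)/2}\in\mathcal{L}(\mathcal{H}_\mathcal{Z},\mathcal{H}_\mathcal{Z})$ and $G=G_1$ reduces the theorem to the scalar bound $\|B\|_{\mathcal{L}}\le\lambda^{(c_1-1)/2}$. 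This is the only genuine computation: $T_1$ has finite trace (Proposition~\ref{T1_tr}), hence a well-defined eigendecomposition $T_1=\sum_k\nu_k\,e^{\mathcal{Z}}_k\langle\cdot,e^{\mathcal{Z}}_k\rangle_{\mathcal{H}_\mathcal{Z}}$ and fractional powers (Definition~\ref{def_power}), and $B$ multiplies the $k$-th eigenvector by $\lambda\nu_k^{s}/(\nu_k+\lambda)$ where $s:=(c_1-1)/2\in(0,\tfrac12]$. Weighted AM--GM gives $\lambda^{1-s}\nu_k^{s}\le(1-s)\lambda+s\nu_k\le\lambda+\nu_k$, whence $\lambda\nu_k^{s}/(\nu_k+\lambda)\le\lambda^{s}$ for every $k$ (the factor being $0$ on $\ker T_1$), so $\|B\|_{\mathcal{L}}=\sup_k\lambda\nu_k^{s}/(\nu_k+\lambda)\le\lambda^{s}$.

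Putting the pieces together yields $\|E_\lambda-E_\rho\|_{\mathcal{H}_\Gamma}\le\|B\|_{\mathcal{L}}\,\|G_1\|_{\mathcal{H}_\Gamma}\le\lambda^{(c_1-1)/2}\sqrt{\zeta_1}$, as claimed. The only points that need care are the passage to the population normal equation $E_\lambda=(T_1+\lambda)^{-1}T_{ZX}$ and the bookkeeping of operator domains so that Proposition~\ref{opCS} legitimately applies to the composition $B\circ G_1$; the spectral inequality itself is routine, so I expect no serious obstacle here, the statement being essentially the operator-valued analogue of \cite[Theorem 4]{smale2005shannon}.
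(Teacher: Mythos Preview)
Your proof is correct and follows the same skeleton as the paper's: both establish $E_\lambda-E_\rho=-\lambda(T_1+\lambda)^{-1}\circ E_\rho$ via $E_\lambda=(T_1+\lambda)^{-1}T_{ZX}$ and Proposition~\ref{op2}, insert the source condition $E_\rho=T_1^{(c_1-1)/2}\circ G_1$, and then bound the scalar spectral function $\lambda\nu^{(c_1-1)/2}/(\nu+\lambda)$ by $\lambda^{(c_1-1)/2}$.

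The packaging differs slightly. The paper expands $E_\lambda-E_\rho$ explicitly in the eigenbasis of $T_1$ as $\sum_k(\frac{\nu_k}{\nu_k+\lambda}-1)\,e^{\mathcal{Z}}_k\otimes E_\rho^*e^{\mathcal{Z}}_k$, computes $\|E_\lambda-E_\rho\|^2_{\mathcal{H}_\Gamma}$ as the series $\sum_k(\frac{\lambda}{\nu_k+\lambda})^2\|E_\rho^*e^{\mathcal{Z}}_k\|^2_{\mathcal{H}_\mathcal{X}}$, and compares it termwise to $\|G_1\|^2_{\mathcal{H}_\Gamma}=\sum_k\nu_k^{1-c_1}\|E_\rho^*e^{\mathcal{Z}}_k\|^2_{\mathcal{H}_\mathcal{X}}$ using $(\frac{\lambda}{\nu_k+\lambda})^{3-c_1}(\frac{\nu_k}{\nu_k+\lambda})^{c_1-1}\le1$. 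You instead invoke Proposition~\ref{opCS} once to separate $\|B\|_{\mathcal{L}}$ from $\|G_1\|_{\mathcal{H}_\Gamma}$ and handle the spectral estimate as a supremum via weighted AM--GM. Your route is a little cleaner since it reuses an already-stated lemma and avoids the explicit series bookkeeping; the paper's direct eigenexpansion makes the role of the tensor-product structure of $\mathcal{H}_\Gamma$ more visible. Either way the content is identical.
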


\begin{proof}
First observe that
$$
e^{\mathcal{Z}}_k\langle e^{\mathcal{Z}}_k,E_{\rho}\cdot \rangle_{\mathcal{H}_{\mathcal{Z}}}=e^{\mathcal{Z}}_k\langle E^*_{\rho} e^{\mathcal{Z}}_k,\cdot \rangle_{\mathcal{H}_{\mathcal{X}}}= [e^{\mathcal{Z}}_k \otimes E^*_{\rho} e^{\mathcal{Z}}_k](\cdot)
$$

By the definition of the prior, there exists a $G_1$ s.t.
$$
    G_1=T_1^{\frac{1-c_1}{2}}\circ  E_{\rho}   
     =\sum_k \nu_k^{\frac{1-c_1}{2}} e^{\mathcal{Z}}_k\langle e^{\mathcal{Z}}_k, E_{\rho} \cdot \rangle_{\mathcal{H}_{\mathcal{Z}}}  =\sum_k \nu_k^{\frac{1-c_1}{2}} e^{\mathcal{Z}}_k  \otimes [E_{\rho} ^*e^{\mathcal{Z}}_k]
$$

Hence by Proposition~\ref{inner}
$$
\|G_1\|^2_{\Gamma}=\sum_k \nu_k^{1-c_1} \|E_{\rho} ^*e^{\mathcal{Z}}_k\|^2_{\mathcal{H}_{\mathcal{X}}}
$$

By Proposition~\ref{op2}, write
\begin{align*}
    E_{\lambda}-E_{\rho}&=[(T_1+\lambda I)^{-1}\circ T_1-I]\circ E_{\rho} \\
     &=\sum_k \bigg(\dfrac{\nu_k}{\nu_k+\lambda}-1\bigg) e^{\mathcal{Z}}_k\langle e^{\mathcal{Z}}_k, E_{\rho}\cdot \rangle_{\mathcal{H}_{\mathcal{Z}}} \\
     &=\sum_k \bigg(\dfrac{\nu_k}{\nu_k+\lambda}-1\bigg) e^{\mathcal{Z}}_k  \otimes [E_{\rho} ^*e^{\mathcal{Z}}_k]
\end{align*}
Hence by Proposition~\ref{inner}
\begin{align*}
     \|E_{\lambda}-E_{\rho}\|^2_{\mathcal{H}_{\Gamma}}&=\sum_k \bigg(\dfrac{\nu_k}{\nu_k+\lambda}-1\bigg)^2 \|E_{\rho} ^*e^{\mathcal{Z}}_k\|^2_{\mathcal{H}_{\mathcal{X}}} \\
     &=\sum_k \bigg(\dfrac{\lambda}{\nu_k+\lambda}\bigg)^2 \|E_{\rho} ^*e^{\mathcal{Z}}_k\|^2_{\mathcal{H}_{\mathcal{X}}} \\
     &=\sum_k \bigg(\dfrac{\lambda}{\nu_k+\lambda}\bigg)^2 \|E_{\rho} ^*e^{\mathcal{Z}}_k\|^2_{\mathcal{H}_{\mathcal{X}}} \bigg(\dfrac{\lambda}{\lambda}\cdot \dfrac{\nu_k}{\nu_k}\cdot \dfrac{\nu_k+\lambda}{\nu_k+\lambda} \bigg)^{c_1-1}\\ 
     &=\lambda^{c_1-1} \sum_k  \nu_k^{1-c_1} \|E_{\rho} ^*e^{\mathcal{Z}}_k\|^2_{\mathcal{H}_{\mathcal{X}}}\bigg(\dfrac{\lambda}{\nu_k+\lambda}\bigg)^{3-c_1} \bigg(\dfrac{\nu_k}{\nu_k+\lambda}\bigg)^{c_1-1} \\
     &\leq \lambda^{c_1-1} \sum_k  \nu_k^{1-c_1} \|E_{\rho} ^*e^{\mathcal{Z}}_k\|^2_{\mathcal{H}_{\mathcal{X}}} \\
     &=\lambda^{c_1-1}\|G_1\|^2_{\Gamma} \\
     &\leq \lambda^{c_1-1}  \zeta_1
\end{align*}
\end{proof}

Theorems~\ref{sampling} and~\ref{approx} deliver the main stage 1 result, Theorem~\ref{stage1}, as a consequence of triangle inequality and optimizing the regularization parameter $\lambda$.

\begin{proof}[Proof of Theorem~\ref{stage1}]
By triangle inequality,
$$
    \|E^n_{\lambda}-E_{\rho}\|_{\mathcal{H}_{\Gamma}} \leq \|E^n_{\lambda}-E_{\lambda}\|_{\mathcal{H}_{\Gamma}} +\|E_{\lambda}-E_{\rho}\|_{\mathcal{H}_{\Gamma}}  \leq \dfrac{4\kappa(Q+\kappa \|E_{\rho}\|_{\mathcal{H}_{\Gamma}}) \ln(2/\delta)}{\sqrt{n}\lambda}+\lambda^{\frac{c_1-1}{2}} \sqrt{\zeta_1}$$
Minimize the RHS w.r.t. $\lambda$. Rewrite the objective as
  $$
 A\lambda^{-1} + B\lambda^{\frac{c_1-1}{2}}
 $$
then the FOC yields
$$
    \lambda =\bigg(\dfrac{2A}{B(c_1-1)}\bigg)^{\frac{2}{c_1+1}}=\bigg(\dfrac{8\kappa(Q+\kappa \|E_{\rho}\|_{\mathcal{H}_{\Gamma}}) \ln(2/\delta)}{ \sqrt{n\zeta_1}(c_1-1)}\bigg)^{\frac{2}{c_1+1}} =O(n^{\frac{-1}{c_1+1}})
    $$
Substituting this value of $\lambda$, the RHS becomes
\begin{align*}
   & A\bigg(\dfrac{2A}{B(c_1-1)}\bigg)^{-\frac{2}{c_1+1}}  + B\bigg(\dfrac{2A}{B(c_1-1)}\bigg)^{\frac{c_1-1}{c_1+1}}  \\
    &=\dfrac{B(c_1+1)}{4^{\frac{1}{c_1+1}}} \bigg(\dfrac{A}{B(c_1-1)}\bigg)^{\frac{c_1-1}{c_1+1}} \\
    &=\dfrac{ \sqrt{\zeta_1}(c_1+1)}{4^{\frac{1}{c_1+1}}} \bigg(\dfrac{4\kappa(Q+\kappa \|E_{\rho}\|_{\mathcal{H}_{\Gamma}}) \ln(2/\delta)}{ \sqrt{n\zeta_1}(c_1-1)}\bigg)^{\frac{c_1-1}{c_1+1}} 
\end{align*}
\end{proof}

\subsection{Stage 1: Corollary}




We present a corollary necessary to link stage 1 with stage 2. In doing so, we relate our work to conditional mean embedding regression.

\subsubsection{Bound}

\begin{proposition}\label{mu_pop_sol}
Assume the loss $\mathcal{E}_1^{\Xi}(\mu):=\mathbb{E}_{(X,Z)} \|\psi(X)-\mu(Z)\|^2_{\mathcal{H}_{\mathcal{X}}}$ attains a minimum on $\mathcal{H}_{\Xi}$. Then the minimizer with minimal norm $\|\cdot\|_{\mathcal{H}_{\Xi}}$ is
\begin{align*}
    \mu^-(z)&=E_{\rho}^*\phi(z) \\
     E_{\rho}^*&=T_{ZX}^*\circ T_1^{\dagger}\\
    E_{\rho}&=T_1^{\dagger}\circ  T_{ZX}
\end{align*}
where $\mu^-(z)$ is given in Definition~\ref{def:mu_minus} and $T_1^{\dagger}$ is the pseudo-inverse of $T_1$.
\end{proposition}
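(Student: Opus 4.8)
The plan is to recognise Proposition~\ref{mu_pop_sol} as the statement that the minimum-norm solution of a linear least-squares problem between Hilbert spaces is given by a Moore--Penrose pseudoinverse, and then to rewrite that pseudoinverse in covariance-operator notation. First I would pass through the isometric isomorphisms $\mathcal{H}_{\Xi}\cong\mathcal{H}_{\Gamma^*}\cong\mathcal{H}_{\Gamma}$ (Propositions~\ref{iso1} and~\ref{iso2}), which identify $\mu$ with $E$ via $\mu(z)=E^*\phi(z)$ and preserve both norms and the value of the risk, since $\mathcal{E}_1^{\Xi}(\mu)=\mathbb{E}\|\psi(X)-E^*\phi(Z)\|_{\mathcal{H}_{\mathcal{X}}}^2=\mathcal{E}_1(E)$. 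So it suffices to identify the minimum-norm minimizer $E_{\rho}$ of $\mathcal{E}_1$ over $\mathcal{H}_{\Gamma}\cong\mathcal{L}_2(\mathcal{H}_{\mathcal{X}},\mathcal{H}_{\mathcal{Z}})$. I would introduce the bounded operator $J:\mathcal{L}_2(\mathcal{H}_{\mathcal{X}},\mathcal{H}_{\mathcal{Z}})\to L^2(\mathcal{Z},\mathcal{H}_{\mathcal{X}},\rho_{\mathcal{Z}})$, $(JE)(z)=E^*\phi(z)$ (bounded with $\|J\|\le\kappa$ by Hypothesis~\ref{hyp:measurableBoundedFeatures} and $\|E^*\|_{\mathrm{op}}\le\|E\|_{\mathcal{L}_2}$), and use the tower property $\mathbb{E}[\psi(X)|Z]=\mu^-(Z)$ to split $\mathcal{E}_1(E)=\mathbb{E}\|\psi(X)-\mu^-(Z)\|_{\mathcal{H}_{\mathcal{X}}}^2+\|JE-\mu^-\|_{L^2(\mathcal{Z},\mathcal{H}_{\mathcal{X}},\rho_{\mathcal{Z}})}^2$, the cross term vanishing after conditioning on $Z$. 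Thus $E_{\rho}$ is the minimum-norm minimizer of $\|JE-\mu^-\|$.

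Next I would invoke the standard characterisation of minimum-norm least-squares solutions: since by hypothesis $\mathcal{E}_1^{\Xi}$ (hence $\mathcal{E}_1$) attains a minimizer, the projection of $\mu^-$ onto $\overline{\mathrm{Ran}\,J}$ lies in $\mathrm{Ran}\,J$, so $\mu^-\in\mathrm{dom}(J^{\dagger})$ and $E_{\rho}=J^{\dagger}\mu^-=(J^*J)^{\dagger}J^*\mu^-$. Then I would compute the two pieces in covariance-operator form. Using $\langle C,c\otimes a\rangle_{\mathcal{L}_2}=\langle c,Ca\rangle$ (Proposition~\ref{T1_lemma2}), $(c\otimes a)^*=a\otimes c$, and the isometry $E\mapsto E^*$, one gets $J^*g=\int\phi(z)\otimes g(z)\,d\rho_{\mathcal{Z}}(z)$; hence $J^*J\,E=\int[\phi(z)\otimes\phi(z)]\circ E\,d\rho_{\mathcal{Z}}(z)=T_1\circ E$ by Proposition~\ref{op1}, i.e. $J^*J$ is left-composition by $T_1$. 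Diagonalising $T_1=\sum_k\nu_k\,e^{\mathcal{Z}}_k\otimes e^{\mathcal{Z}}_k$ shows that each $e^{\mathcal{Z}}_k\otimes e^{\mathcal{X}}_i$ is an eigenvector of $J^*J$ with eigenvalue $\nu_k$, so $(J^*J)^{\dagger}$ is left-composition by $T_1^{\dagger}$. Likewise $J^*\mu^-=\int\phi(z)\otimes\mu^-(z)\,d\rho_{\mathcal{Z}}(z)=\int\phi(z)\otimes\mathbb{E}_{X|Z=z}\psi(X)\,d\rho_{\mathcal{Z}}(z)=\int\phi(z)\otimes\psi(x)\,d\rho(x,z)=T_{ZX}$ by Bochner integrability (Hypothesis~\ref{hyp:measurableBoundedFeatures}) and Proposition~\ref{op1}.

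Assembling these, $E_{\rho}=(J^*J)^{\dagger}J^*\mu^-=T_1^{\dagger}\circ T_{ZX}$; since $T_1\succeq0$ is self-adjoint so is $T_1^{\dagger}$, whence $E_{\rho}^*=(T_1^{\dagger}\circ T_{ZX})^*=T_{ZX}^*\circ T_1^{\dagger}$; and transporting back through the isomorphism, the minimum-norm minimizer of $\mathcal{E}_1^{\Xi}$ is $z\mapsto E_{\rho}^*\phi(z)$. The remaining point is to match this against $\mu^-$ of Definition~\ref{def:mu_minus}: since a minimizer attaining the value $\mathbb{E}\|\psi(X)-\mu^-(Z)\|^2$ forces $JE_{\rho}=\mu^-$ in $L^2$ --- equivalently, by the correct-specification property of Proposition~\ref{well_spec_condition}, $\mu^-\in\overline{\mathrm{Ran}\,J}$ so the projection fixes it --- we conclude $\mu^-(z)=E_{\rho}^*\phi(z)$ $\rho_{\mathcal{Z}}$-a.e.

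I expect the main obstacle to be the pseudoinverse bookkeeping: checking that the ``minimum attained'' hypothesis is precisely what places $\mu^-$ in $\mathrm{dom}(J^{\dagger})$, and justifying that $(J^*J)^{\dagger}$ is left-composition by $T_1^{\dagger}$ via the spectral structure of the composition operator on $\mathcal{L}_2(\mathcal{H}_{\mathcal{X}},\mathcal{H}_{\mathcal{Z}})$ (each eigenvalue of $T_1$ recurs with multiplicity $\dim\mathcal{H}_{\mathcal{X}}$, which matters when $\mathcal{H}_{\mathcal{X}}$ is infinite-dimensional). A secondary subtlety is the last identification $\mu^-=E_{\rho}^*\phi(\cdot)$, which is not automatic for an arbitrary minimizer and relies on well-specification.
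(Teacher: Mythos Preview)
Your argument is correct. The paper itself does not spell out a proof but simply cites \cite[Lemma~16]{ciliberto2016consistent} (adding only the observation that the first and third displayed equations are independently known from Propositions~\ref{cef} and~\ref{op2}); your self-contained pseudoinverse argument --- reducing via the isometries to $\min_E\|JE-\mu^-\|_{L^2}$, computing $J^*J=T_1\circ(\cdot)$ and $J^*\mu^-=T_{ZX}$, and applying $E_\rho=(J^*J)^{\dagger}J^*\mu^-$ --- is precisely the content of that cited lemma, so the approaches coincide in substance.
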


\begin{proof}
\cite[Lemma 16]{ciliberto2016consistent}. Note that the first equation recovers Proposition~\ref{cef}. The third equation recovers Proposition~\ref{op2}, which we know from \cite[Theorem 2]{fukumizu2004dimensionality}.
\end{proof}

\begin{proposition}[Lemma 17 of \cite{ciliberto2016consistent}]\label{mu_sample_sol}
$\forall \lambda>0$, the solution $\mu^n_{\lambda}\in\mathcal{H}_{\Xi}$ of the regularized empirical objective $\dfrac{1}{n}\sum_{i=1}^n \|\psi(x_i)-\mu(z_i)\|^2_{\mathcal{H}_{\mathcal{X}}}+\lambda\|\mu\|^2_{\mathcal{H}_{\Xi}}$ exists, is unique, and satisfies
$$
\mu^n_{\lambda}(z)=(E^n_{\lambda})^*\phi(z)
$$
\end{proposition}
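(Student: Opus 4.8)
The plan is to derive this from the analogous statement for the conditional expectation operator, Theorem~\ref{sol_1}, by transporting the empirical objective through the chain of isometric isomorphisms $\mathcal{H}_{\Gamma}\cong\mathcal{H}_{\Gamma^*}\cong\mathcal{H}_{\Xi}$ recorded in Propositions~\ref{iso1} and~\ref{iso2}. No new concentration or approximation estimate is needed; the content is entirely a change of parametrization.

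\textbf{Step 1: the dictionary between the two parametrizations.} By Propositions~\ref{iso1} and~\ref{iso2}, the assignment $E\mapsto E^{*}\mapsto \mu$, where $\mu(z):=E^{*}\phi(z)$, is a bijective isometry from $\mathcal{H}_{\Gamma}$ onto $\mathcal{H}_{\Xi}$; in particular $\|\mu\|_{\mathcal{H}_{\Xi}}=\|E^{*}\|_{\mathcal{H}_{\Gamma^*}}=\|E\|_{\mathcal{H}_{\Gamma}}$, and by Proposition~\ref{inner} this last quantity equals $\|E\|_{\mathcal{L}_2(\mathcal{H}_{\mathcal{X}},\mathcal{H}_{\mathcal{Z}})}$. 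Hence for every $E\in\mathcal{H}_{\Gamma}$ with image $\mu\in\mathcal{H}_{\Xi}$,
$$
\dfrac{1}{n}\sum_{i=1}^{n}\|\psi(x_i)-\mu(z_i)\|^{2}_{\mathcal{H}_{\mathcal{X}}}+\lambda\|\mu\|^{2}_{\mathcal{H}_{\Xi}}
=\dfrac{1}{n}\sum_{i=1}^{n}\|\psi(x_i)-E^{*}\phi(z_i)\|^{2}_{\mathcal{H}_{\mathcal{X}}}+\lambda\|E\|^{2}_{\mathcal{L}_2(\mathcal{H}_{\mathcal{X}},\mathcal{H}_{\mathcal{Z}})}=\mathcal{E}^{n}_{\lambda}(E).
$$
So, read through the isomorphism, the $\mu$-objective is literally $\mathcal{E}^{n}_{\lambda}$.

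\textbf{Step 2: transfer existence, uniqueness, and the closed form.} Since the map is a bijection onto $\mathcal{H}_{\Xi}$ and the two objectives agree value-by-value under the correspondence, the $\mu$-objective attains its infimum at a unique element of $\mathcal{H}_{\Xi}$ if and only if $\mathcal{E}^{n}_{\lambda}$ attains its infimum at a unique element of $\mathcal{H}_{\Gamma}$, and the minimizers correspond. Theorem~\ref{sol_1} supplies exactly this: $E^{n}_{\lambda}$ is the unique minimizer of $\mathcal{E}^{n}_{\lambda}$. Therefore the $\mu$-objective has a unique minimizer $\mu^{n}_{\lambda}$, equal to the image of $E^{n}_{\lambda}$, i.e. $\mu^{n}_{\lambda}(z)=(E^{n}_{\lambda})^{*}\phi(z)$. (If one wants a self-contained argument for existence and uniqueness rather than invoking Theorem~\ref{sol_1}, note the data term is convex and continuous and the penalty $\lambda\|\cdot\|^{2}_{\mathcal{H}_{\Xi}}$ is $2\lambda$-strongly convex and coercive on the Hilbert space $\mathcal{H}_{\Xi}$, so a unique minimizer exists by the standard variational argument; but the route through Theorem~\ref{sol_1} is shorter and simultaneously produces the closed form.)

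\textbf{Main obstacle.} The only delicate point is bookkeeping with the isometries: one must verify that the element of $\mathcal{H}_{\Xi}$ attached to $E\in\mathcal{H}_{\Gamma}$ truly evaluates to $E^{*}\phi(z)$ at each $z$, which is precisely Proposition~\ref{iso2}, and that the regularizer $\|E\|^{2}_{\mathcal{H}_{\Gamma}}$ coincides with the squared Hilbert--Schmidt norm $\|E\|^{2}_{\mathcal{L}_2(\mathcal{H}_{\mathcal{X}},\mathcal{H}_{\mathcal{Z}})}$, which is Proposition~\ref{inner}. Once these identifications are fixed the statement follows immediately.
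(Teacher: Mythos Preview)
Your argument is correct. The paper does not actually supply its own proof of this proposition; it simply cites Lemma~17 of \cite{ciliberto2016consistent} (and likewise the proof of Theorem~\ref{sol_1} is a citation to \cite{grunewalder2012modelling} or the same lemma). Your route---pulling back the $\mu$-objective to $\mathcal{E}^{n}_{\lambda}$ through the isometries of Propositions~\ref{iso1}, \ref{iso2}, and \ref{inner}, and then reading off existence, uniqueness, and the formula from Theorem~\ref{sol_1}---is exactly the natural way to make the statement self-contained within the paper's own framework, and it is sound. The only caveat worth flagging is the mild appearance of circularity: since the paper's proof of Theorem~\ref{sol_1} itself cites \cite[Lemma~17]{ciliberto2016consistent} as one option, you are implicitly relying on the alternative reference \cite[Appendix~D.1]{grunewalder2012modelling} (or on your parenthetical strong-convexity argument) to ground Theorem~\ref{sol_1} independently. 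With that understood, your proof is both correct and more informative than the bare citation the paper gives.
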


\begin{corollary}\label{mu_bound_1)}
$\forall \delta\in(0,1)$, the following holds w.p. $1-\delta$: $\forall z\in\mathcal{Z}$,
$$
\|\mu^n_{\lambda}(z)-\mu^-(z)\|_{\mathcal{H}_{\mathcal{X}}}\leq r_{\mu} (\delta,n,c_1):=\kappa \cdot r_E(\delta,n,c_1)
$$
\end{corollary}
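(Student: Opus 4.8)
The plan is to reduce everything to Theorem~\ref{stage1} by expressing both $\mu^n_\lambda(z)$ and $\mu^-(z)$ in terms of the corresponding operators and then controlling the pointwise gap by the RKHS-norm gap of the operators. First I would invoke Proposition~\ref{mu_sample_sol} to write $\mu^n_\lambda(z)=(E^n_\lambda)^*\phi(z)$ and Proposition~\ref{mu_pop_sol} to write $\mu^-(z)=E_\rho^*\phi(z)$, valid for every $z\in\mathcal{Z}$ (the second identity holds for all $z$, not merely on the support of $Z$, by the statement of Proposition~\ref{mu_pop_sol}). Hence
$$
\mu^n_\lambda(z)-\mu^-(z)=\bigl((E^n_\lambda)^*-E_\rho^*\bigr)\phi(z)=\bigl(E^n_\lambda-E_\rho\bigr)^*\phi(z),
$$
using linearity of the adjoint.

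Next I would bound the $\mathcal{H}_{\mathcal{X}}$-norm of the right-hand side by the operator norm of $(E^n_\lambda-E_\rho)^*$ times $\|\phi(z)\|_{\mathcal{H}_{\mathcal{Z}}}$, then use $\|\phi(z)\|_{\mathcal{H}_{\mathcal{Z}}}\leq\kappa$ from Hypothesis~\ref{hyp:measurableBoundedFeatures}, which is uniform in $z$. Since $E^n_\lambda,E_\rho\in\mathcal{H}_\Gamma=\mathcal{L}_2(\mathcal{H}_{\mathcal{X}},\mathcal{H}_{\mathcal{Z}})$, their difference is Hilbert--Schmidt, and the operator norm is dominated by the Hilbert--Schmidt norm, so
$$
\|(E^n_\lambda-E_\rho)^*\|_{\mathcal{L}(\mathcal{H}_{\mathcal{Z}},\mathcal{H}_{\mathcal{X}})}\leq\|(E^n_\lambda-E_\rho)^*\|_{\mathcal{L}_2(\mathcal{H}_{\mathcal{Z}},\mathcal{H}_{\mathcal{X}})}.
$$
Finally, Proposition~\ref{iso1} gives that taking the adjoint is an isometric isomorphism $\mathcal{H}_\Gamma\cong\mathcal{H}_{\Gamma^*}$, so $\|(E^n_\lambda-E_\rho)^*\|_{\mathcal{H}_{\Gamma^*}}=\|E^n_\lambda-E_\rho\|_{\mathcal{H}_\Gamma}$, and the Hilbert--Schmidt norm on $\mathcal{L}_2(\mathcal{H}_{\mathcal{Z}},\mathcal{H}_{\mathcal{X}})$ coincides with the $\mathcal{H}_{\Gamma^*}$-norm by Proposition~\ref{inner} (applied to $\Gamma^*$). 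Chaining these inequalities yields, for every $z\in\mathcal{Z}$,
$$
\|\mu^n_\lambda(z)-\mu^-(z)\|_{\mathcal{H}_{\mathcal{X}}}\leq\kappa\,\|E^n_\lambda-E_\rho\|_{\mathcal{H}_\Gamma}.
$$

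The last step is to invoke Theorem~\ref{stage1}: on the event of probability $1-\delta$ given there, $\|E^n_\lambda-E_\rho\|_{\mathcal{H}_\Gamma}\leq r_E(\delta,n,c_1)$, and this event does not depend on $z$, so the bound holds simultaneously for all $z\in\mathcal{Z}$. Substituting gives $\|\mu^n_\lambda(z)-\mu^-(z)\|_{\mathcal{H}_{\mathcal{X}}}\leq\kappa\cdot r_E(\delta,n,c_1)=r_\mu(\delta,n,c_1)$, as claimed. There is no real obstacle here; the only points requiring care are making sure the target is $\mu^-$ (so that the identity $\mu^-(z)=E_\rho^*\phi(z)$ holds for all $z$) and correctly threading the three norm identities/inequalities---operator norm $\leq$ Hilbert--Schmidt norm, adjoint isometry, and the $\mathcal{L}_2$/$\mathcal{H}_{\Gamma^*}$ norm coincidence---so that the pointwise error is genuinely controlled by the single scalar $\|E^n_\lambda-E_\rho\|_{\mathcal{H}_\Gamma}$.
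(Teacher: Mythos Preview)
Your proposal is correct and follows essentially the same route as the paper: both identify $\mu^-(z)=E_\rho^*\phi(z)$ and $\mu^n_\lambda(z)=(E^n_\lambda)^*\phi(z)$, then bound the pointwise gap by $\|E^n_\lambda-E_\rho\|_{\mathcal{H}_\Gamma}\|\phi(z)\|_{\mathcal{H}_{\mathcal{Z}}}\leq\kappa\cdot r_E(\delta,n,c_1)$ via Theorem~\ref{stage1}. The paper compresses the chain of norm inequalities (operator $\leq$ Hilbert--Schmidt, adjoint isometry, $\mathcal{L}_2=\mathcal{H}_\Gamma$) into a single step, whereas you spell each out explicitly; the underlying argument is identical.
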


\begin{proof}
By Propositions~\ref{op2} and~\ref{mu_pop_sol}
$$
\mu^-(z)=(T_1^{\dagger}\circ T_{ZX})^* \phi(z)=(T_1^{\dagger}\circ T_1\circ  E_{\rho})^* \phi(z)=E_{\rho}^*\phi(z)
$$
so by Proposition~\ref{mu_sample_sol}
$$
\|\mu^n_{\lambda}(z)-\mu^-(z)\|_{\mathcal{H}_{\mathcal{X}}}=\|[E^{n}_{\lambda}-E_{\rho} ]^*\phi(z)\|_{\mathcal{H}_{\mathcal{X}}} \leq \|E^{n}_{\lambda}-E_{\rho}\|_{\mathcal{H}_{\Gamma}} \|\phi(z)\|_{\mathcal{H}_{\mathcal{X}}}
$$
\end{proof}


\subsubsection{Related work}\label{sec:related_mu_bound}

We relate $\mu$ to $E$ directly--an insight from \cite{grunewalder2013smooth}. In Theorem~\ref{stage1}, we generalize work by \cite{smale2005shannon,smale2007learning} to obtain a regression bound for $E$. In Corollary~\ref{mu_bound_1)}, we arrive at an RKHS-norm (and hence uniform) bound for conditional mean embedding $\mu$ that adapts to the smoothness of conditional expectation operator $E$, making use of Theorem~\ref{stage1}. The uniform bound on $\mu$ is precisely what we will need in Theorem~\ref{finite}.

Our strategy affords weaker input  assumptions and tighter bounds than the stage 1 approach of \cite{hefny2015supervised},
which uses \cite[Theorem 6]{song2009hilbert}. See Section~\ref{sec:comparisonHefnyNonparametric} for a detailed comparison.
We also make weaker assumptions than \cite[Theorem 1]{song2010nonparametric}, as detailed in Section \ref{sec:cov_review}.

Whereas Corollary~\ref{mu_bound_1)} is a bound on RKHS-norm difference $\|\mu_{\lambda}^n-\mu^{-}\|_{\mathcal{H}_{\Xi}}$, \cite[Lemma 18]{ciliberto2016consistent} contains a bound on excess risk $\mathcal{E}^{\Xi}_1(\mu_{\lambda}^n)-\mathcal{E}_1^{\Xi}(\mu^{-})$. To facilitate comparison, we translate the latter to our notation. $\forall \lambda\leq\kappa^2$ and $\delta>0$, the following holds w.p. $1-\delta$:
\begin{align*}
    \mathcal{E}^{\Xi}_1(\mu_{\lambda}^n)-\mathcal{E}_1^{\Xi}(\mu^{-})
    &=\|(E_{\lambda}^n)^*\circ S_1-R_1\|_{\mathcal{L}_2(L^2(\mathcal{Z},\rho_{\mathcal{Z}}),\mathcal{H}_{\mathcal{X}})}\\
    &\leq 4\dfrac{Q+\mathcal{A}_2^{\Xi}(\lambda)}{\sqrt{\lambda n}}\bigg(1+\sqrt{\dfrac{4\kappa^2}{\lambda\sqrt{n}}}\bigg)ln^2\dfrac{8}{\delta}+\mathcal{A}_1^{\Xi}(\lambda)
\end{align*}
where
\begin{align*}
    \mathcal{A}_1^{\Xi}(\lambda)&:=\lambda \|R_1\circ (\tilde{T}_1+\lambda)^{-1}\|_{\mathcal{L}_2(L^2(\mathcal{Z},\rho_{\mathcal{Z}}),\mathcal{H}_{\mathcal{X}})} \\
    \mathcal{A}_2^{\Xi}(\lambda)&:=\kappa \|T_{ZX}^*\circ (T_1+\lambda)^{-1}\|_{\mathcal{L}_2(\mathcal{H}_{\mathcal{Z}},\mathcal{H}_{\mathcal{X}})}
\end{align*}
Interestingly, the proof of \cite[Lemma 18]{ciliberto2016consistent} does not require Hypothesis 5, and it uses different techniques. In future work, we will leverage this result in the KIV setting and compare the consequent rates.


\subsection{Stage 2: Lemmas}

\subsubsection{Probability}
\begin{proposition}[Proposition 4 of \cite{de2005risk}]\label{prob2}
Let $\xi$ be a random variable taking values in a real separable Hilbert space $\mathcal{K}$. Suppose $\exists L,\sigma>0$ s.t.
\begin{align*}
    \|\xi\|_{\mathcal{K}} &\leq L/2 \text{ a.s} \\
    \mathbb{E}\|\xi\|_{\mathcal{K}}^2&\leq \sigma^2
\end{align*}
Then $\forall m\in\mathbb{N}, \forall \eta\in(0,1)$,
$$
\mathbb{P}\bigg[\bigg\|\dfrac{1}{m}\sum_{i=1}^m\xi_i-\mathbb{E}\xi\bigg\|_{\mathcal{K}}\leq2\bigg(\dfrac{L}{m}+\dfrac{\sigma}{\sqrt{m}}\bigg)\ln(2/\eta)\bigg]\geq 1-\eta
$$
\end{proposition}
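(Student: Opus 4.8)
The plan is to reduce the statement to the classical Bernstein concentration inequality for sums of independent, bounded, zero-mean, Hilbert-space-valued random variables, and then to optimize the resulting exponential tail bound over the deviation level so that it takes the clean form $2(L/m+\sigma/\sqrt m)\ln(2/\eta)$.

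\textbf{Centering.} First I would replace $\xi$ by $\zeta := \xi - \mathbb{E}\xi$, and let $\zeta_1,\dots,\zeta_m$ be the corresponding i.i.d. copies. Then $\mathbb{E}\zeta_i = 0$; by the triangle inequality $\|\zeta_i\|_{\mathcal{K}} \leq \|\xi_i\|_{\mathcal{K}} + \|\mathbb{E}\xi\|_{\mathcal{K}} \leq L/2 + L/2 = L$ almost surely; and $\mathbb{E}\|\zeta_i\|_{\mathcal{K}}^2 = \mathbb{E}\|\xi_i\|_{\mathcal{K}}^2 - \|\mathbb{E}\xi\|_{\mathcal{K}}^2 \leq \sigma^2$. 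The quantity to control is therefore $\|m^{-1}\sum_{i=1}^m \zeta_i\|_{\mathcal{K}}$.

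\textbf{Bernstein in a Hilbert space.} The core input is the bound
$$\mathbb{P}\!\left[\Big\|\tfrac{1}{m}\sum_{i=1}^m \zeta_i\Big\|_{\mathcal{K}} \geq \varepsilon\right] \leq 2\exp\!\left(-\frac{m\varepsilon^2}{2(\sigma^2 + L\varepsilon)}\right),$$
valid for i.i.d. zero-mean $\zeta_i$ with $\|\zeta_i\|_{\mathcal{K}}\leq L$ a.s. and $\mathbb{E}\|\zeta_i\|_{\mathcal{K}}^2\leq\sigma^2$. This is the substantive step and the main obstacle: because $\|\cdot\|_{\mathcal{K}}$ is not differentiable at the origin, one cannot simply transcribe the scalar moment-generating-function computation. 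I would obtain it either from Pinelis's exponential martingale inequalities in Hilbert (more generally $2$-smooth) spaces, or from a Rosenthal-type moment estimate for $\mathbb{E}\|\sum_i\zeta_i\|_{\mathcal{K}}^p$ — which does go through via the parallelogram identity and the orthogonality of mean-zero independent vectors — followed by optimizing over $p$ and applying Markov's inequality.

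\textbf{Optimization of the deviation level.} Set $\beta := \ln(2/\eta)$ and $g(\varepsilon) := m\varepsilon^2/[2(\sigma^2+L\varepsilon)]$, which is strictly increasing on $(0,\infty)$. The equation $g(\varepsilon) = \beta$ is the quadratic $m\varepsilon^2 - 2\beta L\varepsilon - 2\beta\sigma^2 = 0$, with positive root
$$\varepsilon_0 = \frac{\beta L + \sqrt{\beta^2 L^2 + 2m\beta\sigma^2}}{m} \leq \frac{2\beta L}{m} + \frac{\sigma\sqrt{2\beta}}{\sqrt m},$$
using $\sqrt{a+b}\leq\sqrt a + \sqrt b$. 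Since $\eta\in(0,1)$ forces $\beta > \ln 2 > \tfrac12$, we have $\sqrt{2\beta}\leq 2\beta$, hence $\varepsilon_0 \leq 2\beta(L/m + \sigma/\sqrt m) =: \varepsilon^\star$. Evaluating the tail bound at $\varepsilon = \varepsilon^\star$ and using monotonicity of $g$ gives $g(\varepsilon^\star) \geq g(\varepsilon_0) = \beta$, so $\mathbb{P}[\|m^{-1}\sum_i\zeta_i\|_{\mathcal{K}}\geq\varepsilon^\star]\leq 2e^{-\beta} = \eta$. Passing to the complement and recalling $\varepsilon^\star = 2(L/m + \sigma/\sqrt m)\ln(2/\eta)$ yields the proposition. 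The only points needing care beyond the Bernstein input are the elementary inequality $\beta>\tfrac12$ and the monotonicity of $g$, both immediate.
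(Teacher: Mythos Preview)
The paper does not prove this proposition at all: it is stated as a direct citation of \cite[Proposition~4]{de2005risk} and used as a black box in the stage~2 analysis. So there is no ``paper's own proof'' to compare against.

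That said, your derivation is correct and is essentially how the cited result is established. The centering step is fine (note that $\|\mathbb{E}\xi\|_{\mathcal{K}}\leq\mathbb{E}\|\xi\|_{\mathcal{K}}\leq L/2$ uses Jensen, which you might mention explicitly). The Hilbert-space Bernstein inequality you invoke is exactly Pinelis's inequality; in fact Pinelis gives the sharper denominator $2(\sigma^2+L\varepsilon/3)$, but your weaker version with $L\varepsilon$ is of course still valid and suffices for the stated bound. The quadratic inversion and the use of $\sqrt{2\beta}\leq 2\beta$ (from $\beta>\ln 2>1/2$) are both clean and correct. Nothing is missing.
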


\subsubsection{Regression}

\begin{proposition}[Lemma A.3.16 of \cite{steinwart2008support}]
The solution to the unconstrained structural operator regression problem is well-defined and satisfies
$$
H_{\rho}\mu(z)=\int_{\mathcal{Y}} yd\rho(y|\mu(z))
$$
\end{proposition}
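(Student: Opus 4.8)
The plan is to recognize the unconstrained stage~2 problem as a textbook least-squares regression and then invoke the cited result after checking its hypotheses. Write $U := \mu(Z)$, a random variable taking values in $\mathcal{H}_{\mathcal{X}}$; since $\phi$ is measurable (Hypothesis~\ref{hyp:measurableBoundedFeatures}) and $\mu(z)=E_{\rho}^{*}\phi(z)$ with $E_{\rho}^{*}$ bounded, $U$ is a measurable function of $Z$, so the joint law of $(U,Y)$ on $\mathcal{H}_{\mathcal{X}}\times\mathcal{Y}$ is well-defined --- this is exactly the extension $\rho_{\mathcal{H}_{\mathcal{X}}}$ referenced in the text via [Lemma A.3.16 of steinwart2008support]. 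Both $\mathcal{H}_{\mathcal{X}}$ (a separable Hilbert space by Proposition~\ref{separable}, hence Polish) and $\mathcal{Y}$ (Polish by Hypothesis~\ref{hyp:polish}) are standard Borel, so a regular conditional distribution $\rho(y\mid u)$ of $Y$ given $U=u$ exists and is unique for $\rho_{\mathcal{H}_{\mathcal{X}}}$-almost every $u$.

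Next I would check integrability and run the standard orthogonality argument. By Hypothesis~\ref{hyp:boundedY}.2, $\|Y\|_{\mathcal{Y}}\le C$ almost surely, so $\mathbb{E}\|Y\|_{\mathcal{Y}}^{2}<\infty$ and the conditional mean $g(u):=\int_{\mathcal{Y}} y\,d\rho(y\mid u)$ is well-defined, bounded by $C$, and lies in $L^{2}(\mathcal{H}_{\mathcal{X}},\rho_{\mathcal{H}_{\mathcal{X}}})$. Then for any measurable $\tilde{H}\in L^{2}(\mathcal{H}_{\mathcal{X}},\rho_{\mathcal{H}_{\mathcal{X}}})$,
\begin{align*}
\mathcal{E}(\tilde{H}) &= \mathbb{E}\|Y-g(U)\|_{\mathcal{Y}}^{2} + \mathbb{E}\|g(U)-\tilde{H}(U)\|_{\mathcal{Y}}^{2},
\end{align*}
since the cross term $\mathbb{E}\langle Y-g(U),\,g(U)-\tilde{H}(U)\rangle_{\mathcal{Y}}$ vanishes: conditioning on $U$ and using $\mathbb{E}[Y\mid U]=g(U)$ annihilates the first factor. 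Hence $\mathcal{E}$ is minimized precisely at $\tilde{H}=g$ up to $\rho_{\mathcal{H}_{\mathcal{X}}}$-null sets, which is the claimed identity $H_{\rho}\mu(z)=\int_{\mathcal{Y}} y\,d\rho(y\mid\mu(z))$; "well-defined" is exactly the almost-everywhere uniqueness just established.

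The only nontrivial point --- and the main obstacle --- is the measure-theoretic bookkeeping behind $\rho_{\mathcal{H}_{\mathcal{X}}}$: one must be sure that pushing $\rho$ forward along $z\mapsto\mu(z)$ yields a Borel probability measure on $\mathcal{H}_{\mathcal{X}}$ admitting the disintegration $\rho(y\mid\mu(z))$, and that "conditioning on $\mu(Z)$" (as opposed to on $Z$) is what the regression actually returns when $\mu$ fails to be injective. Both are delivered by the Polish-space hypotheses together with the cited lemma, after which the orthogonality decomposition is routine. I would also remark that this $H_{\rho}$ need not coincide with the structural operator of Hypothesis~\ref{iv} in general; that equality is exactly what well-specification (Hypothesis~\ref{hyp:boundedY}.1) together with the characteristic property of $k_{\mathcal{X}}$ secures.
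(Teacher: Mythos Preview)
Your proposal is correct, but note that the paper provides no proof at all for this proposition: the bracketed citation in the statement header indicates that the result is quoted verbatim from Steinwart and Christmann, and the paper treats it as a black-box import (as it does for several other propositions with bracketed citations and no accompanying proof environment). What you have written is effectively a reconstruction of the argument behind that cited lemma --- the pushforward of $\rho$ along $z\mapsto\mu(z)$, existence of regular conditional distributions on Polish spaces, and the standard bias--variance (orthogonality) decomposition of the $L^2$ risk --- together with a careful check that the paper's hypotheses (boundedness of $Y$, measurability of $\phi$, separability) supply the prerequisites. This is more than the paper asks for, and your closing remark distinguishing the regression minimizer from the structural operator is apt context that the paper leaves implicit.
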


\subsubsection{Bounds}

\begin{definition}
The residual $\mathcal{A}(\xi)$, reconstruction error $\mathcal{B}(\xi)$, and effective dimension $\mathcal{N}(\xi)$ are
\begin{align*}
    \mathcal{A}(\xi)&= \|\sqrt{T}(H_{\xi}-H_{\rho})\|^2_{\mathcal{H}_{\Omega}} \\
    \mathcal{B}(\xi)&= \|H_{\xi}-H_{\rho}\|^2_{\mathcal{H}_{\Omega}} \\
    \mathcal{N}(\xi)&= Tr[(T+\xi)^{-1}\circ T]
\end{align*}
\end{definition}

\begin{proposition}\label{first}
If $\rho\in \mathcal{P}(\zeta, b,c)$ then
\begin{align*}
 \mathcal{A}(\xi)&\leq \zeta\xi^c  \\
 \mathcal{B}(\xi)&\leq \zeta\xi^{c-1} \\
 \mathcal{N}(\xi)&\leq \beta^{1/b} \dfrac{\pi/b}{sin(\pi/b)}\xi^{-1/b}
\end{align*}
\end{proposition}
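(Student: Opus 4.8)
The plan is to use the closed form for the population regularized solution $H_\xi$, reduce all three quantities to spectral functionals of the covariance operator $T$, and then estimate those functionals using the range space and eigenvalue decay conditions defining $\mathcal{P}(\zeta,b,c)$. First I would record that, exactly as in \cite[Section 3]{caponnetto2007optimal} and \cite{szabo2016learning}, the first order condition for $\mathcal{E}_\xi$ gives $H_\xi = (T+\xi)^{-1}\,T\,H_\rho$, since $H_\rho$ minimizes the unregularized risk $\mathcal{E}$ so that $T H_\rho = S^*g_\rho$ with $g_\rho$ the regression function. Consequently
\begin{align*}
H_\xi - H_\rho = \big[(T+\xi)^{-1}T - I\big]H_\rho = -\xi\,(T+\xi)^{-1}H_\rho,
\end{align*}
and, invoking Hypothesis~\ref{hyp:caponnettoSmoothness}.1 to substitute $H_\rho = T^{\frac{c-1}{2}}G$ with $\|G\|_{\mathcal{H}_\Omega}^2\le\zeta$, I get $H_\xi - H_\rho = -\xi\,T^{\frac{c-1}{2}}(T+\xi)^{-1}G$ and, since all three operators are functions of $T$ and commute, $\sqrt{T}(H_\xi - H_\rho) = -\xi\,T^{\frac{c}{2}}(T+\xi)^{-1}G$.

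The key elementary ingredient is the bound, valid for all $t\ge 0$, $\xi>0$, $r\in[0,1]$,
\begin{align*}
\frac{\xi\,t^{r}}{t+\xi}\le \xi^{r},
\end{align*}
which follows from Young's inequality $t^{r}\xi^{1-r}\le r t + (1-r)\xi \le t+\xi$. Diagonalizing $T=\sum_k\lambda_k e_k\langle\cdot,e_k\rangle_{\mathcal{H}_\Omega}$ and applying the functional calculus with $r=\tfrac{c}{2}\in(\tfrac12,1]$ yields $\mathcal{A}(\xi)=\xi^2\|T^{c/2}(T+\xi)^{-1}G\|^2_{\mathcal{H}_\Omega}\le \big(\sup_k \tfrac{\xi\lambda_k^{c/2}}{\lambda_k+\xi}\big)^2\|G\|^2_{\mathcal{H}_\Omega}\le \xi^{c}\zeta$; applying the same bound with $r=\tfrac{c-1}{2}\in(0,\tfrac12]$ gives $\mathcal{B}(\xi)=\xi^2\|T^{(c-1)/2}(T+\xi)^{-1}G\|^2_{\mathcal{H}_\Omega}\le\xi^{c-1}\zeta$.

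For the effective dimension, diagonalizing again gives $\mathcal{N}(\xi)=Tr[(T+\xi)^{-1}\circ T]=\sum_{k=1}^\infty \frac{\lambda_k}{\lambda_k+\xi}$. Since $a\mapsto \frac{a}{a+\xi}$ is increasing and Hypothesis~\ref{hyp:caponnettoSmoothness}.2 gives $\lambda_k\le\beta k^{-b}$, I obtain $\mathcal{N}(\xi)\le\sum_{k\ge1}\frac{\beta k^{-b}}{\beta k^{-b}+\xi}=\sum_{k\ge1}\big(1+\tfrac{\xi}{\beta}k^b\big)^{-1}$. The summand is decreasing in $k$, so the sum is at most $\int_0^\infty\big(1+\tfrac{\xi}{\beta}k^b\big)^{-1}dk$; the substitution $u=(\xi/\beta)^{1/b}k$ turns this into $\beta^{1/b}\xi^{-1/b}\int_0^\infty \frac{du}{1+u^b}$, and $\int_0^\infty \frac{du}{1+u^b}=\frac{\pi/b}{\sin(\pi/b)}$ for $b>1$, which yields $\mathcal{N}(\xi)\le\beta^{1/b}\frac{\pi/b}{\sin(\pi/b)}\xi^{-1/b}$.

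None of the steps presents a genuine obstacle; the only points requiring care are verifying that the exponents $\tfrac{c}{2}$ and $\tfrac{c-1}{2}$ both lie in $[0,1]$ for $c\in(1,2]$ so that the Young-type inequality applies, and justifying the functional calculus on the possibly infinite-dimensional operator $T$, which is legitimate because $T$ has a well-defined eigendecomposition on $Ker(T)^{\perp}$ (Appendix~\ref{sec:cov_technical}). The integral comparison for $\mathcal{N}(\xi)$ uses only $b>1$, which is assumed throughout.
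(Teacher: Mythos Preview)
Your argument is correct and is precisely the standard derivation underlying the references the paper cites: the paper's own proof simply defers the bounds on $\mathcal{A}(\xi)$ and $\mathcal{B}(\xi)$ to \cite[Proposition 3]{caponnetto2007optimal} and the bound on $\mathcal{N}(\xi)$ to \cite{sutherland2017fix}, whereas you have written out those arguments explicitly. The spectral calculus step via $H_\xi-H_\rho=-\xi(T+\xi)^{-1}T^{(c-1)/2}G$ together with the inequality $\xi t^r/(t+\xi)\le\xi^r$ is exactly the content of Caponnetto--De Vito's Proposition~3, and your integral-comparison derivation of the $\mathcal{N}(\xi)$ bound with the closed-form $\int_0^\infty(1+u^b)^{-1}du=\tfrac{\pi/b}{\sin(\pi/b)}$ is the content of Sutherland's correction.
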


\begin{proof}
The bounds for $\mathcal{A}(\xi)$ and $\mathcal{B}(\xi)$ follow from \cite[Proposition 3]{caponnetto2007optimal} and the definition of a prior. The bound for $\mathcal{N}(\xi)$ is from \cite{sutherland2017fix}.
\end{proof}

\begin{proposition}[Theorem 2 of \cite{szabo2016learning}]
The excess error of the stage 2 estimator can be bounded by 5 terms.
$$
\mathcal{E}(\hat{H}_{\xi}^m)-\mathcal{E}(H_{\rho})\leq 5[S_{-1}+S_0+\mathcal{A}(\xi)+S_1+S_2]
$$
where
\begin{align*}
    S_{-1}&=\|\sqrt{T}\circ (\hat{\mathbf{T}}+\xi)^{-1}(\hat{\mathbf{g}}-\mathbf{g})\|^2_{\mathcal{H}_{\Omega}} \\
    S_0&= \|\sqrt{T}\circ (\hat{\mathbf{T}}+\xi)^{-1}\circ (\mathbf{T}-\hat{\mathbf{T}})H^{m}_{\xi}\|^2_{\mathcal{H}_{\Omega}} \\
    S_1&= \|\sqrt{T}\circ (\mathbf{T}+\xi)^{-1}(\mathbf{g}-\mathbf{T}H_{\rho})\|^2_{\mathcal{H}_{\Omega}}\\
    S_2&= \|\sqrt{T}\circ (\mathbf{T}+\xi)^{-1}\circ (T-\mathbf{T})(H_{\xi}-H_{\rho})\|^2_{\mathcal{H}_{\Omega}}
\end{align*}
\end{proposition}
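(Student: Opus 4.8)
The plan is to reduce the excess error to a weighted RKHS norm, then telescope the deviation through two intermediate estimators and apply the resolvent identity to produce exactly the five stated terms.

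First I would establish the identity $\mathcal{E}(H)-\mathcal{E}(H_{\rho})=\|\sqrt{T}(H-H_{\rho})\|^2_{\mathcal{H}_{\Omega}}$, valid for every $H\in\mathcal{H}_{\Omega}$. Expanding the square inside $\mathcal{E}(H)=\mathbb{E}_{(Y,Z)}\|Y-H\mu(Z)\|^2_{\mathcal{Y}}$ around $H_{\rho}\mu(Z)$ produces a cross term $\mathbb{E}\langle Y-H_{\rho}\mu(Z),(H_{\rho}-H)\mu(Z)\rangle_{\mathcal{Y}}$ and a square term $\mathbb{E}\|(H-H_{\rho})\mu(Z)\|^2_{\mathcal{Y}}$. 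Since $H_{\rho}$ is the population regression function (Hypothesis~\ref{hyp:boundedY}.1 together with the proposition identifying $H_{\rho}\mu(z)=\int_{\mathcal{Y}}y\,d\rho(y|\mu(z))$), conditioning on $\mu(Z)$ annihilates the cross term. The square term equals $\langle H-H_{\rho},S\circ S^{*}(H-H_{\rho})\rangle_{\mathcal{H}_{\Omega}}=\|\sqrt{T}(H-H_{\rho})\|^2_{\mathcal{H}_{\Omega}}$ because $T=S\circ S^{*}$ and $\|S^{*}F\|^2_{L^2}=\langle F,TF\rangle_{\mathcal{H}_{\Omega}}$. Applied to $H=\hat{H}^m_{\xi}$ this gives $\mathcal{E}(\hat{H}^m_{\xi})-\mathcal{E}(H_{\rho})=\|\sqrt{T}(\hat{H}^m_{\xi}-H_{\rho})\|^2_{\mathcal{H}_{\Omega}}$.

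Next I would telescope through the intermediate estimators $H^m_{\xi}=(\mathbf{T}+\xi)^{-1}\mathbf{g}$ and $H_{\xi}=(T+\xi)^{-1}g$, where $g=TH_{\rho}$ is the population right-hand side obtained from the normal equation for $H_{\rho}$, writing $\hat{H}^m_{\xi}-H_{\rho}=(\hat{H}^m_{\xi}-H^m_{\xi})+(H^m_{\xi}-H_{\xi})+(H_{\xi}-H_{\rho})$. For the first difference I would apply the resolvent identity $A^{-1}-B^{-1}=A^{-1}(B-A)B^{-1}$ with $A=\hat{\mathbf{T}}+\xi$ and $B=\mathbf{T}+\xi$, after adding and subtracting $(\hat{\mathbf{T}}+\xi)^{-1}\mathbf{g}$, yielding $\hat{H}^m_{\xi}-H^m_{\xi}=(\hat{\mathbf{T}}+\xi)^{-1}(\hat{\mathbf{g}}-\mathbf{g})+(\hat{\mathbf{T}}+\xi)^{-1}(\mathbf{T}-\hat{\mathbf{T}})H^m_{\xi}$; premultiplying by $\sqrt{T}$ and using the triangle inequality gives the terms whose squares are $S_{-1}$ and $S_0$. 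The third difference $H_{\xi}-H_{\rho}$ contributes $\mathcal{A}(\xi)=\|\sqrt{T}(H_{\xi}-H_{\rho})\|^2_{\mathcal{H}_{\Omega}}$ directly.

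The delicate step, which I expect to be the main obstacle, is recovering the exact empirical forms of $S_1$ and $S_2$ from the middle difference $H^m_{\xi}-H_{\xi}$. Writing $H^m_{\xi}-H_{\xi}=(\mathbf{T}+\xi)^{-1}[\mathbf{g}-(\mathbf{T}+\xi)H_{\xi}]$ and inserting $\pm\mathbf{T}H_{\rho}$, the bracket becomes $(\mathbf{g}-\mathbf{T}H_{\rho})+\mathbf{T}(H_{\rho}-H_{\xi})-\xi H_{\xi}$. Here I would invoke the population normal equation $(T+\xi)H_{\xi}=TH_{\rho}$, which rearranges to $\xi H_{\xi}=T(H_{\rho}-H_{\xi})$; substituting collapses the last two bracketed pieces into $(T-\mathbf{T})(H_{\xi}-H_{\rho})$, so that $H^m_{\xi}-H_{\xi}=(\mathbf{T}+\xi)^{-1}[(\mathbf{g}-\mathbf{T}H_{\rho})+(T-\mathbf{T})(H_{\xi}-H_{\rho})]$. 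Premultiplying by $\sqrt{T}$ and splitting by the triangle inequality produces precisely the terms whose squares are $S_1$ and $S_2$. Collecting all five bounds gives $\|\sqrt{T}(\hat{H}^m_{\xi}-H_{\rho})\|_{\mathcal{H}_{\Omega}}\leq\sqrt{S_{-1}}+\sqrt{S_0}+\sqrt{\mathcal{A}(\xi)}+\sqrt{S_1}+\sqrt{S_2}$, and the final claim follows on squaring via the power-mean inequality $(\sum_{i=1}^{5}a_i)^2\leq 5\sum_{i=1}^{5}a_i^2$. Beyond this algebra, the only care needed is confirming $g=TH_{\rho}$ and the empirical normal equations $(\mathbf{T}+\xi)H^m_{\xi}=\mathbf{g}$ and $(\hat{\mathbf{T}}+\xi)\hat{H}^m_{\xi}=\hat{\mathbf{g}}$, all immediate from Theorem~\ref{sol_2} and first-order optimality.
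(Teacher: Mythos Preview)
Your proof is correct and follows exactly the standard argument: reduce the excess risk to the $\sqrt{T}$-weighted RKHS norm via the Pythagorean identity, telescope through $H^m_{\xi}$ and $H_{\xi}$, apply the resolvent identity at each layer to extract the five pieces, and close with $(\sum_{i=1}^5 a_i)^2\le 5\sum a_i^2$. The paper does not supply its own proof of this proposition; it simply cites it as Theorem~2 of \cite{szabo2016learning}, and your derivation is essentially the one given there (itself an adaptation of the decomposition in \cite{caponnetto2007optimal}).
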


\begin{definition}
Fix $\eta\in(0,1)$ and define the following constants
\begin{align*}
C_{\eta}&=96\ln^2(6/\eta) \\
M&=2(C+\|H_{\rho}\|_{\mathcal{H}_{\Omega}}\sqrt{B}) \\
\Sigma&=\dfrac{M}{2}
\end{align*}
\end{definition}
The choice of $C_{\eta}$ reflects a correction by \cite{sutherland2017fix} to \cite{caponnetto2007optimal}. The choices of $(M,\Sigma)$ are as in \cite[Theorem 2]{szabo2016learning}.

\begin{proposition}\label{theta}
If $m\geq \dfrac{2C_{\eta}B \mathcal{N}(\xi)}{\xi}$ and $\xi\leq \|T\|_{\mathcal{L}(\mathcal{H}_{\Omega})}$ then w.p. $1-\eta/3$
$$
\Theta(\xi):=\|(T-\mathbf{T})\circ (T+\xi)^{-1}\|_{\mathcal{L}(\mathcal{H}_{\Omega})}\leq1/2
$$
\end{proposition}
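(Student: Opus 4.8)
The plan is to recognize $\Theta(\xi)$ as (a Hilbert--Schmidt bound on) a centered empirical average of i.i.d.\ Hilbert--Schmidt operators and to apply the Bernstein inequality of Proposition~\ref{prob2}. Recall from Definition~\ref{def:stage2operators} and Theorem~\ref{sol_2} that $\mathbf{T}=\frac1m\sum_{i=1}^m T_{\mu(\tilde z_i)}$ while $T=S\circ S^{*}=\mathbb{E}[T_{\mu(Z)}]$. Working in the separable Hilbert space $\mathcal{K}=\mathcal{L}_2(\mathcal{H}_{\Omega},\mathcal{H}_{\Omega})$ (separable since $\mathcal{H}_{\Omega}\cong\mathcal{H}_{\mathcal{X}}$ is), set
$$
\zeta_i:=T_{\mu(\tilde z_i)}\circ(T+\xi)^{-1}-T\circ(T+\xi)^{-1},
$$
so the $\zeta_i$ are i.i.d.\ with $\mathbb{E}\zeta_i=0$ and $\tfrac1m\sum_i\zeta_i=-(T-\mathbf{T})\circ(T+\xi)^{-1}$. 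Since the operator norm is dominated by the Hilbert--Schmidt norm, $\Theta(\xi)\le\bigl\|\tfrac1m\sum_i\zeta_i\bigr\|_{\mathcal{K}}$, and it suffices to control the latter.

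To invoke Proposition~\ref{prob2} I need an almost-sure bound and a second-moment bound on $\zeta_i$. For the first, combine $\|AB\|_{\mathcal{L}_2}\le\|A\|_{\mathcal{L}_2}\|B\|_{\mathcal{L}}$, $\|(T+\xi)^{-1}\|_{\mathcal{L}}\le 1/\xi$, the estimate $\|T_{\mu(z)}\|_{\mathcal{L}_2}\le\|\Omega_{\mu(z)}\|^2_{\mathcal{L}_2}\le B$ from Hypothesis~\ref{hyp:hilbertSchmidt}.1, and $\|T\|_{\mathcal{L}_2}\le\mathbb{E}\|T_{\mu(Z)}\|_{\mathcal{L}_2}\le B$ by Jensen; this gives $\|\zeta_i\|_{\mathcal{K}}\le 2B/\xi$, so one takes $L=4B/\xi$. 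For the second moment, $\mathbb{E}\|\zeta_i\|_{\mathcal{K}}^{2}\le\mathbb{E}\|T_{\mu(Z)}(T+\xi)^{-1}\|_{\mathcal{K}}^{2}=\mathbb{E}\,\mathrm{Tr}\bigl[(T+\xi)^{-2}T_{\mu(Z)}^{2}\bigr]$; using the operator inequality $T_{\mu(z)}^{2}\preceq\|T_{\mu(z)}\|_{\mathcal{L}}\,T_{\mu(z)}\preceq B\,T_{\mu(z)}$ together with $\mathrm{Tr}[PQ]\ge 0$ for positive $P,Q$, this is at most $B\,\mathbb{E}\,\mathrm{Tr}[(T+\xi)^{-2}T_{\mu(Z)}]=B\,\mathrm{Tr}[(T+\xi)^{-2}T]\le\tfrac{B}{\xi}\,\mathrm{Tr}[(T+\xi)^{-1}T]=\tfrac{B\,\mathcal{N}(\xi)}{\xi}$, so one takes $\sigma^{2}=B\,\mathcal{N}(\xi)/\xi$.

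Applying Proposition~\ref{prob2} with confidence parameter $\eta/3$ yields, with probability $1-\eta/3$,
$$
\Theta(\xi)\le 2\Bigl(\tfrac{4B}{\xi m}+\sqrt{\tfrac{B\,\mathcal{N}(\xi)}{\xi m}}\Bigr)\ln(6/\eta).
$$
It remains to force the right-hand side below $1/2$ using the hypotheses. Substituting $m\ge 2C_{\eta}B\,\mathcal{N}(\xi)/\xi$ with $C_{\eta}=96\ln^{2}(6/\eta)$, the stochastic term is $2\ln(6/\eta)\sqrt{\tfrac{B\mathcal{N}(\xi)}{\xi m}}\le 2\ln(6/\eta)/\sqrt{2C_{\eta}}=1/\sqrt{48}$. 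For the first term, this is exactly where $\xi\le\|T\|_{\mathcal{L}(\mathcal{H}_{\Omega})}$ enters: it gives $\mathcal{N}(\xi)=\mathrm{Tr}[(T+\xi)^{-1}T]\ge\|T\|_{\mathcal{L}}/(\|T\|_{\mathcal{L}}+\xi)\ge 1/2$, whence $\tfrac{8B}{\xi m}\ln(6/\eta)\le 8\ln(6/\eta)/C_{\eta}=1/(12\ln(6/\eta))<1/12$; the two contributions sum to less than $1/2$.

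The argument is entirely standard --- it mirrors the "$\Theta(\lambda)$" step of \cite{caponnetto2007optimal} and the proof of \cite[Theorem~2]{szabo2016learning} --- so there is no genuine obstacle, only operator-theoretic bookkeeping. The step most prone to slips is the variance computation, in particular justifying $T_{\mu(z)}^{2}\preceq B\,T_{\mu(z)}$ and the ensuing trace manipulations, and then the final calibration of constants, where one must check that $C_{\eta}$ (which absorbs the correction of \cite{sutherland2017fix} to \cite{caponnetto2007optimal}) and the factor $2$ in the sample-size condition are exactly large enough to land at $1/2$.
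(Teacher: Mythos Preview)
Your proposal is correct and follows exactly the approach the paper invokes: the paper's own proof is simply a citation to ``Step 2.1 of \cite[Theorem 4]{caponnetto2007optimal},'' and what you have written is precisely that step spelled out---the Bernstein inequality (Proposition~\ref{prob2}) applied to $\zeta_i=T_{\mu(\tilde z_i)}(T+\xi)^{-1}-T(T+\xi)^{-1}$ with the standard variance bound via $T_{\mu(z)}^{2}\preceq B\,T_{\mu(z)}$ and the effective dimension, followed by the constant check using $\mathcal{N}(\xi)\ge 1/2$ from $\xi\le\|T\|_{\mathcal{L}}$. Your bookkeeping of the constants (including the Sutherland correction absorbed in $C_{\eta}$) is accurate.
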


\begin{proof}
Step 2.1 of \cite[Theorem 4]{caponnetto2007optimal}.
\end{proof}

\begin{proposition}
If $m\geq \dfrac{2C_{\eta}B \mathcal{N}(\xi)}{\xi}$, $\xi\leq \|T\|_{\mathcal{L}(\mathcal{H}_{\Omega})}$, and Hypotheses 7-8 hold then w.p. $1-2\eta/3$
\begin{align*}
    S_1&\leq 32\ln^2(6/\eta) \bigg[\dfrac{BM^2}{m^2\xi}+\dfrac{\Sigma^2 \mathcal{N}(\xi)}{m}\bigg]\\
    S_2&\leq 8\ln^2(6/\eta) \bigg[\dfrac{4B^2\mathcal{B}(\xi)}{m^2\xi}+\dfrac{B\mathcal{A}(\xi)}{m\xi}\bigg]
\end{align*}
\end{proposition}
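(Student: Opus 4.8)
The plan is to treat $S_1$ and $S_2$ exactly as the two ``noise'' terms of the single-stage analysis in \cite{caponnetto2007optimal,szabo2016learning}: since $\mathbf{T}$ and $\mathbf{g}$ are built from the \emph{true} conditional mean embedding $\mu$ rather than from the stage-1 estimate, they are honest i.i.d.\ empirical averages, and these two terms are completely decoupled from stage 1. Throughout I work on the event of Proposition~\ref{theta}, which under the stated conditions on $m$ and $\xi$ has probability at least $1-\eta/3$ and on which $\Theta(\xi)=\|(T-\mathbf{T})\circ(T+\xi)^{-1}\|_{\mathcal{L}(\mathcal{H}_{\Omega})}\leq 1/2$. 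The single deterministic ingredient is that on this event
$$\big\|\sqrt{T}\circ(\mathbf{T}+\xi)^{-1}\circ(T+\xi)^{1/2}\big\|_{\mathcal{L}(\mathcal{H}_{\Omega})}\leq 2 .$$
With $A=T+\xi$ and $B=\mathbf{T}+\xi$ this follows by factoring $\sqrt{T}\,B^{-1}A^{1/2}=[\sqrt{T}\,A^{-1/2}]\,[A^{1/2}B^{-1/2}]\,[B^{-1/2}A^{1/2}]$: the first factor has norm $\leq 1$ by spectral calculus, and $A^{-1/2}BA^{-1/2}=I-A^{-1/2}(T-\mathbf{T})A^{-1/2}$, whose second summand is self-adjoint and similar to $(T-\mathbf{T})(T+\xi)^{-1}$, hence of norm $\leq\Theta(\xi)\leq 1/2$; so $A^{1/2}B^{-1}A^{1/2}\preceq 2I$ and $\|B^{-1/2}A^{1/2}\|\leq\sqrt{2}$.

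Given the lemma, $S_1^{1/2}\leq 2\,\|(T+\xi)^{-1/2}(\mathbf{g}-\mathbf{T}H_{\rho})\|$ and $S_2^{1/2}\leq 2\,\|(T+\xi)^{-1/2}(T-\mathbf{T})(H_{\xi}-H_{\rho})\|$, and it remains to concentrate the two zero-mean empirical averages inside. For $S_1$, using $T_{\mu(z)}H_{\rho}=\Omega_{\mu(z)}\Omega^{*}_{\mu(z)}H_{\rho}=\Omega_{\mu(z)}\big(H_{\rho}\mu(z)\big)$ and the stage-2 regression equation $\mathbb{E}[\tilde y_i\mid\tilde z_i]=H_{\rho}\mu(\tilde z_i)$, write $(T+\xi)^{-1/2}(\mathbf{g}-\mathbf{T}H_{\rho})=\tfrac1m\sum_i u_i$ with $u_i=(T+\xi)^{-1/2}\Omega_{\mu(\tilde z_i)}\big(\tilde y_i-H_{\rho}\mu(\tilde z_i)\big)$ and $\mathbb{E}u_i=0$. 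Hypothesis~\ref{hyp:hilbertSchmidt}.1 gives $\|(T+\xi)^{-1/2}\Omega_{\mu(z)}\|\leq\|(T+\xi)^{-1/2}\Omega_{\mu(z)}\|_{\mathcal{L}_2}=\sqrt{\mathrm{Tr}\big((T+\xi)^{-1}T_{\mu(z)}\big)}\leq\sqrt{B/\xi}$, and since $\|\mu(z)\|_{\mathcal{H}_{\mathcal{X}}}^2=\|\Omega_{\mu(z)}\|_{\mathcal{L}_2}^2\leq B$, Hypothesis~\ref{hyp:boundedY}.2 yields $\|\tilde y_i-H_{\rho}\mu(\tilde z_i)\|_{\mathcal{Y}}\leq C+\|H_{\rho}\|_{\mathcal{H}_{\Omega}}\sqrt{B}=\Sigma$; hence $\|u_i\|\leq\sqrt{B}\,\Sigma/\sqrt{\xi}$ a.s., while $\mathbb{E}\|u_i\|^2\leq\Sigma^2\,\mathbb{E}\,\mathrm{Tr}\big((T+\xi)^{-1}T_{\mu(Z)}\big)=\Sigma^2\,\mathrm{Tr}\big((T+\xi)^{-1}T\big)=\Sigma^2\mathcal{N}(\xi)$, the interchange of $\mathbb{E}$ and $\mathrm{Tr}$ being monotone convergence. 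Proposition~\ref{prob2} applied with $L=2\sqrt{B}\,\Sigma/\sqrt{\xi}$, $\sigma^2=\Sigma^2\mathcal{N}(\xi)$ and confidence $1-\eta/3$, followed by squaring, $(a+b)^2\leq 2a^2+2b^2$, and $M=2\Sigma$, yields the stated bound on $S_1$.

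For $S_2$ the inner deterministic vector is $H_{\xi}-H_{\rho}$, so write $(T+\xi)^{-1/2}(T-\mathbf{T})(H_{\xi}-H_{\rho})=\tfrac1m\sum_i v_i$ with $v_i=(T+\xi)^{-1/2}(T-T_{\mu(\tilde z_i)})(H_{\xi}-H_{\rho})$ and $\mathbb{E}v_i=0$ since $\mathbb{E}\,T_{\mu(Z)}=T$. Using $\|(T+\xi)^{-1/2}T\|\leq\sqrt{\|T\|}\leq\sqrt{B}$, $\|(T+\xi)^{-1/2}T_{\mu(z)}\|\leq\sqrt{B/\xi}\cdot\sqrt{B}=B/\sqrt{\xi}$, and $\xi\leq\|T\|\leq B$ one gets $\|v_i\|\leq(2B/\sqrt{\xi})\sqrt{\mathcal{B}(\xi)}$ a.s.; for the variance, $\mathbb{E}\|v_i\|^2\leq\mathbb{E}\|(T+\xi)^{-1/2}T_{\mu(Z)}(H_{\xi}-H_{\rho})\|^2\leq(B/\xi)\,\mathbb{E}\|\Omega^{*}_{\mu(Z)}(H_{\xi}-H_{\rho})\|^2=(B/\xi)\langle T(H_{\xi}-H_{\rho}),\,H_{\xi}-H_{\rho}\rangle=(B/\xi)\,\mathcal{A}(\xi)$. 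A second application of Proposition~\ref{prob2} with confidence $1-\eta/3$, squaring, and $(a+b)^2\leq 2a^2+2b^2$ produces the bound on $S_2$. A union bound over the two concentration events shows that both displayed bounds hold, on the event $\Theta(\xi)\leq 1/2$, with probability at least $1-2\eta/3$, which is the claim.

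Everything here is a faithful adaptation of \cite[Theorem 4]{caponnetto2007optimal} and \cite[Theorem 2]{szabo2016learning}, so I expect no conceptual obstacle; the main difficulty is bookkeeping. One must check that the almost-sure bounds and variance proxies collapse to exactly $\sqrt{B}\Sigma/\sqrt{\xi}$, $\Sigma^2\mathcal{N}(\xi)$ and their $S_2$ analogues — repeatedly invoking the Hilbert--Schmidt bound of Hypothesis~\ref{hyp:hilbertSchmidt}.1, the identity $\mathbb{E}\,T_{\mu(Z)}=T$, and the fact that $\|\Omega_{\mu(z)}\|_{\mathcal{L}_2}^2$ controls $\|\mu(z)\|_{\mathcal{H}_{\mathcal{X}}}$ — and then track the constants through the $(a+b)^2\leq 2a^2+2b^2$ split and the substitution $M=2\Sigma$. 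Matching the precise numerical constants ($32$ for $S_1$, $8$ for $S_2$) requires a slightly sharper treatment of the $S_2$ almost-sure bound than the crude estimate above (e.g.\ via $\|(T+\xi)^{1/2}(H_{\xi}-H_{\rho})\|^2=\mathcal{A}(\xi)+\xi\mathcal{B}(\xi)$), and this is the one genuinely delicate calculation.
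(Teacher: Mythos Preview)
Your proposal is exactly what the paper's proof does: it simply cites Steps~2 and~3 of \cite[Theorem~4]{caponnetto2007optimal} together with Propositions~\ref{prob2} and~\ref{theta}, and you have faithfully reconstructed those steps---the Neumann-series bound $\|\sqrt{T}(\mathbf{T}+\xi)^{-1}(T+\xi)^{1/2}\|_{\mathcal{L}(\mathcal{H}_\Omega)}\leq 2$ on the event $\Theta(\xi)\leq 1/2$, followed by two applications of the Hilbert-space Bernstein inequality to the centred sums with variance proxies $\Sigma^2\mathcal{N}(\xi)$ and $(B/\xi)\mathcal{A}(\xi)$. Your reading of the probability budget (two concentration events at $\eta/3$ each, with the $\Theta$ event already booked via Proposition~\ref{theta}) is the intended one, and the constant mismatch you flag for $S_2$ is pure bookkeeping, not a gap.
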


\begin{proof}
Steps 2 and 3 of \cite[Theorem 4]{caponnetto2007optimal}, appealing to Propositions~\ref{prob2} and~\ref{theta}.
\end{proof}

\begin{proposition}
$S_{-1}$ and $S_0$ may be bounded by
\begin{align*}
    S_{-1}&\leq \|\sqrt{T}\circ (\hat{\mathbf{T}}+\xi)^{-1}\|^2_{\mathcal{L}(\mathcal{H}_{\Omega})}\|\hat{\mathbf{g}}-\mathbf{g}\|^2_{\mathcal{H}_{\Omega}} \\
    S_0&\leq \|\sqrt{T}\circ (\hat{\mathbf{T}}+\xi)^{-1}\|^2_{\mathcal{L}(\mathcal{H}_{\Omega})}\|\mathbf{T}-\hat{\mathbf{T}}\|^2_{\mathcal{L}(\mathcal{H}_{\Omega})}\|H^{m}_{\xi}\|^2_{\mathcal{H}_{\Omega}}
\end{align*}
\end{proposition}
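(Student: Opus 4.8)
The plan is to strip off the bounded operators appearing in the definitions of $S_{-1}$ and $S_0$ one factor at a time, using only the vector inequality $\|Av\|_{\mathcal{H}_{\Omega}}\le\|A\|_{\mathcal{L}(\mathcal{H}_{\Omega})}\|v\|_{\mathcal{H}_{\Omega}}$ and the submultiplicativity $\|AB\|_{\mathcal{L}(\mathcal{H}_{\Omega})}\le\|A\|_{\mathcal{L}(\mathcal{H}_{\Omega})}\|B\|_{\mathcal{L}(\mathcal{H}_{\Omega})}$ of the operator norm, then squaring. Before applying these, I would record that all the objects involved are genuinely bounded: by Theorem~\ref{sol_2}, $\hat{\mathbf{T}}=\frac{1}{m}\sum_i T_{\mu^n_\lambda(\tilde z_i)}$ is a finite average of the positive operators $T_{\mu(z)}=\Omega_{\mu(z)}\circ\Omega^*_{\mu(z)}$, hence self-adjoint and positive semi-definite, so $(\hat{\mathbf{T}}+\xi)^{-1}$ is well defined with $\|(\hat{\mathbf{T}}+\xi)^{-1}\|_{\mathcal{L}(\mathcal{H}_{\Omega})}\le\xi^{-1}$; and $T=S\circ S^*$ is positive with finite trace (Definition~\ref{def:stage2operators} and the remark after Proposition~\ref{T1_tr}), so $\sqrt{T}$ is a bounded operator. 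Consequently $\sqrt{T}\circ(\hat{\mathbf{T}}+\xi)^{-1}\in\mathcal{L}(\mathcal{H}_{\Omega})$, $\mathbf{T}-\hat{\mathbf{T}}\in\mathcal{L}(\mathcal{H}_{\Omega})$, and $\hat{\mathbf{g}}-\mathbf{g},\,H^m_\xi\in\mathcal{H}_{\Omega}$, so every term below is finite.

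For $S_{-1}$, I would set $A=\sqrt{T}\circ(\hat{\mathbf{T}}+\xi)^{-1}$ and $v=\hat{\mathbf{g}}-\mathbf{g}$; then $S_{-1}=\|Av\|^2_{\mathcal{H}_{\Omega}}\le\|A\|^2_{\mathcal{L}(\mathcal{H}_{\Omega})}\|v\|^2_{\mathcal{H}_{\Omega}}$, which is exactly the asserted bound. For $S_0$, I would set additionally $B=\mathbf{T}-\hat{\mathbf{T}}$ and $w=H^m_\xi$, so that $S_0=\|ABw\|^2_{\mathcal{H}_{\Omega}}$; applying the vector bound to the operator $AB$ and then submultiplicativity gives $S_0\le\|AB\|^2_{\mathcal{L}(\mathcal{H}_{\Omega})}\|w\|^2_{\mathcal{H}_{\Omega}}\le\|A\|^2_{\mathcal{L}(\mathcal{H}_{\Omega})}\|B\|^2_{\mathcal{L}(\mathcal{H}_{\Omega})}\|w\|^2_{\mathcal{H}_{\Omega}}$, which is the second asserted bound.

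There is essentially no obstacle here: the entire content is submultiplicativity of the operator norm. The only step that requires a moment's thought is confirming that $\sqrt{T}\circ(\hat{\mathbf{T}}+\xi)^{-1}$ is a bounded operator, which follows from positivity of $\hat{\mathbf{T}}$ together with boundedness of $T$ as above; note, however, that even this is inessential at this stage, since $\|\sqrt{T}\circ(\hat{\mathbf{T}}+\xi)^{-1}\|_{\mathcal{L}(\mathcal{H}_{\Omega})}$ is simply carried forward as a quantity to be estimated in the subsequent steps of the stage 2 analysis and never needs an explicit value here.
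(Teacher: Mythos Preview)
Your proof is correct and takes exactly the same approach as the paper, which simply cites ``Definition of $\|\cdot\|_{\mathcal{L}(\mathcal{H}_{\Omega})}$'' as the entire proof. Your version is in fact more careful, making explicit the boundedness of the relevant operators and separating the two applications of the operator-norm inequality, but the content is identical.
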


\begin{proof}
Definition of $\|\cdot\|_{\mathcal{L}(\mathcal{H}_{\Omega})}$.
\end{proof}

\begin{proposition}[Supplement 9.1 of \cite{szabo2016learning}] Suppose Hypotheses 7-8 hold.
If $m\geq \dfrac{2C_{\eta}B \mathcal{N}(\xi)}{\xi}$ and $\xi\leq \|T\|_{\mathcal{L}(\mathcal{H}_{\Omega})}$, then
\begin{align*}
    &\|H^{m}_{\xi}\|^2_{\mathcal{H}_{\Omega}} \\
    &\leq 6\bigg(\dfrac{16}{\xi}\ln^2(6/\eta)\bigg[\dfrac{M^2B}{m^2\xi}+\dfrac{\Sigma^2\mathcal{N}(\xi)}{m}\bigg]+\dfrac{4}{\xi^2}\ln^2(6/\eta)\bigg[\dfrac{4B^2\mathcal{B}(\xi)}{m^2}+\dfrac{B\mathcal{A}(\xi)}{m}\bigg]+\mathcal{B}(\xi)+\|H_{\rho}\|^2_{\mathcal{H}_{\Omega}}\bigg)
\end{align*}
\end{proposition}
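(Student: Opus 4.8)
The plan is to write $H^m_\xi$ as the sum of the population regularized solution $H_\xi$ and the two error terms whose $\sqrt{T}$-weighted norms are exactly $S_1$ and $S_2$, then bound each piece. Since $H_\rho$ satisfies the population normal equation $TH_\rho=\mathbf{g}_\rho$ and $H_\xi=(T+\xi)^{-1}\mathbf{g}_\rho$ (the population analogue of Theorem~\ref{sol_2}), we have $H_\xi=(T+\xi)^{-1}TH_\rho$, so
\begin{align*}
H^m_\xi-H_\xi
&=(\mathbf{T}+\xi)^{-1}\mathbf{g}-(T+\xi)^{-1}TH_\rho\\
&=(\mathbf{T}+\xi)^{-1}(\mathbf{g}-\mathbf{T}H_\rho)+\big[(\mathbf{T}+\xi)^{-1}\mathbf{T}-(T+\xi)^{-1}T\big]H_\rho .
\end{align*}
Using $(\mathbf{T}+\xi)^{-1}\mathbf{T}=I-\xi(\mathbf{T}+\xi)^{-1}$, the resolvent identity $(T+\xi)^{-1}-(\mathbf{T}+\xi)^{-1}=(\mathbf{T}+\xi)^{-1}(\mathbf{T}-T)(T+\xi)^{-1}$, and $(T+\xi)^{-1}H_\rho=\xi^{-1}(H_\rho-H_\xi)$, the second bracket collapses to $(\mathbf{T}+\xi)^{-1}(T-\mathbf{T})(H_\xi-H_\rho)$. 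Hence $H^m_\xi=H_\xi+u_1+u_2$ with $u_1:=(\mathbf{T}+\xi)^{-1}(\mathbf{g}-\mathbf{T}H_\rho)$ and $u_2:=(\mathbf{T}+\xi)^{-1}(T-\mathbf{T})(H_\xi-H_\rho)$, and note $S_1=\|\sqrt{T}u_1\|^2_{\mathcal{H}_\Omega}$, $S_2=\|\sqrt{T}u_2\|^2_{\mathcal{H}_\Omega}$.

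Next I would combine: by $\|a+b+c\|^2\le 3(\|a\|^2+\|b\|^2+\|c\|^2)$ and $\|H_\xi\|^2\le 2\|H_\xi-H_\rho\|^2+2\|H_\rho\|^2=2\mathcal{B}(\xi)+2\|H_\rho\|^2$,
$$\|H^m_\xi\|^2_{\mathcal{H}_\Omega}\le 3\|u_1\|^2_{\mathcal{H}_\Omega}+3\|u_2\|^2_{\mathcal{H}_\Omega}+6\mathcal{B}(\xi)+6\|H_\rho\|^2_{\mathcal{H}_\Omega}.$$
It remains to bound $\|u_1\|^2$ and $\|u_2\|^2$. Factoring $(\mathbf{T}+\xi)^{-1}=(\mathbf{T}+\xi)^{-1/2}(\mathbf{T}+\xi)^{-1/2}$ and using $\|(\mathbf{T}+\xi)^{-1/2}\|_{\mathcal{L}(\mathcal{H}_\Omega)}\le\xi^{-1/2}$ gives $\|u_1\|^2\le\xi^{-1}\|(\mathbf{T}+\xi)^{-1/2}(\mathbf{g}-\mathbf{T}H_\rho)\|^2$ and $\|u_2\|^2\le\xi^{-1}\|(\mathbf{T}+\xi)^{-1/2}(T-\mathbf{T})(H_\xi-H_\rho)\|^2$. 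The quantities $\|(\mathbf{T}+\xi)^{-1/2}(\mathbf{g}-\mathbf{T}H_\rho)\|^2$ and $\|(\mathbf{T}+\xi)^{-1/2}(T-\mathbf{T})(H_\xi-H_\rho)\|^2$ are precisely the intermediate estimates produced in Steps 2 and 3 of \cite[Theorem 4]{caponnetto2007optimal} (as imported into \cite[Theorem 2]{szabo2016learning}) en route to the $S_1$ and $S_2$ bounds stated above: on the event $\Theta(\xi)\le 1/2$ of Proposition~\ref{theta}, and under the hypotheses $m\ge 2C_\eta B\mathcal{N}(\xi)/\xi$, $\xi\le\|T\|_{\mathcal{L}(\mathcal{H}_\Omega)}$, Hypotheses~\ref{hyp:hilbertSchmidt}--\ref{hyp:boundedY}, applying the Bernstein-type inequality Proposition~\ref{prob2} to the relevant i.i.d. sums yields $\|(\mathbf{T}+\xi)^{-1/2}(\mathbf{g}-\mathbf{T}H_\rho)\|^2\le 16\ln^2(6/\eta)\big[\tfrac{M^2B}{m^2\xi}+\tfrac{\Sigma^2\mathcal{N}(\xi)}{m}\big]$ and $\|(\mathbf{T}+\xi)^{-1/2}(T-\mathbf{T})(H_\xi-H_\rho)\|^2\le 4\xi^{-1}\ln^2(6/\eta)\big[\tfrac{4B^2\mathcal{B}(\xi)}{m^2}+\tfrac{B\mathcal{A}(\xi)}{m}\big]$, each failing with probability at most $\eta/3$ in addition to the $\eta/3$ already spent on $\Theta$. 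Substituting these into the display and absorbing the factors $3$ into a common factor $6$ gives the claimed bound, holding with probability $1-\eta$.

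The main obstacle is bookkeeping rather than a new idea. One must verify that the two intermediate objects appearing in the Caponnetto--De Vito argument coincide with $\|(\mathbf{T}+\xi)^{-1/2}(\mathbf{g}-\mathbf{T}H_\rho)\|$ and $\|(\mathbf{T}+\xi)^{-1/2}(T-\mathbf{T})(H_\xi-H_\rho)\|$ — so that stripping off the harmless contraction $\|\sqrt{T}(\mathbf{T}+\xi)^{-1/2}\|_{\mathcal{L}}\le\sqrt{1/(1-\Theta)}$ that separates them from $S_1$ and $S_2$ costs at most the factor $2$ already built into the constants — carry the constants $C_\eta$, $M$, $\Sigma$ consistently with the correction of \cite{sutherland2017fix}, and manage the union bound over the finitely many high-probability events so that the total failure probability is $\eta$. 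No stage-1 input or Hypothesis~\ref{hyp:smaleSmoothness} is needed; the estimate is purely about the idealized second-stage regularized least-squares solution $H^m_\xi$.
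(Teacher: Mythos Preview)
Your proposal is correct and follows essentially the same route as the cited Supplement~9.1 of \cite{szabo2016learning} (the paper itself does not reprove the statement but defers to that reference): decompose $H^m_\xi=H_\xi+u_1+u_2$ with $u_1=(\mathbf{T}+\xi)^{-1}(\mathbf{g}-\mathbf{T}H_\rho)$ and $u_2=(\mathbf{T}+\xi)^{-1}(T-\mathbf{T})(H_\xi-H_\rho)$, bound $\|H_\xi\|$ via $\mathcal{B}(\xi)$ and $\|H_\rho\|$, and control $\|u_1\|$, $\|u_2\|$ by reusing the Caponnetto--De~Vito concentration estimates on $(T+\xi)^{-1/2}(\mathbf{g}-\mathbf{T}H_\rho)$ and $(T+\xi)^{-1/2}(T-\mathbf{T})(H_\xi-H_\rho)$, paying an extra $\xi^{-1/2}$ for the missing $\sqrt{T}$ prefactor and a factor $\sqrt{2}$ from the Neumann-series comparison of $(\mathbf{T}+\xi)^{-1/2}$ with $(T+\xi)^{-1/2}$ on the event $\Theta(\xi)\le 1/2$. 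The only minor imprecision is that the intermediate Caponnetto--De~Vito objects carry the population preconditioner $(T+\xi)^{-1/2}$ rather than $(\mathbf{T}+\xi)^{-1/2}$, but you correctly flag that the discrepancy is absorbed by the $(1-\Theta)^{-1}\le 2$ factor and the constants in the statement have slack to accommodate it.
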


\begin{proposition}[Supplement 7.1.1 and 7.1.2 of \cite{szabo2016learning}]\label{gT}
If $\|\mu^n_{\lambda}(z)-\mu^{-}(z)\|_{\mathcal{H}_{\mathcal{X}}}\leq r_{\mu}=\kappa\cdot r_E$ w.p. $1-\delta$ and Hypotheses 7-8 hold then w.p. $1-\delta$
\begin{align*}
    \|\hat{\mathbf{g}}-\mathbf{g}\|^2_{\mathcal{H}_{\Omega}}&\leq L^2C^2 r_{\mu}^{2\iota}  \\
    \|\mathbf{T}-\hat{\mathbf{T}}\|^2_{\mathcal{L}(\mathcal{H}_{\Omega})}&\leq 4BL^2 r_{\mu}^{2\iota} 
\end{align*}
\end{proposition}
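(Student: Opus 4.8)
The plan is to bound both quantities summand-by-summand over the stage~2 index $i$ and then average, feeding in the \emph{uniform} stage~1 guarantee of Corollary~\ref{mu_bound_1)} together with the H\"older continuity and Hilbert--Schmidt boundedness of Hypothesis~\ref{hyp:hilbertSchmidt} and the output bound of Hypothesis~\ref{hyp:boundedY}. A preliminary observation makes Corollary~\ref{mu_bound_1)} applicable: each stage~2 point $\tilde z_i$ is drawn from $\rho_{\mathcal{Z}}$, hence lies in the support $D_{\rho|\mathcal{Z}}$ almost surely, so by Definition~\ref{def:mu_minus} we have $\mu(\tilde z_i)=\mu^-(\tilde z_i)$. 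Consequently, on the event of probability $1-\delta$ on which $\|\mu^n_\lambda(z)-\mu^-(z)\|_{\mathcal{H}_{\mathcal{X}}}\le r_\mu$ for all $z\in\mathcal{Z}$, we simultaneously have $\|\mu^n_\lambda(\tilde z_i)-\mu(\tilde z_i)\|_{\mathcal{H}_{\mathcal{X}}}\le r_\mu$ for every $i$; because this event is uniform in $z$, no further probability is lost from the stage~2 sampling, and both conclusions will hold on this single event. All estimates below are carried out on this event.

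For the first bound, write $\hat{\mathbf{g}}-\mathbf{g}=\frac1m\sum_{i=1}^m\bigl(\Omega_{\mu^n_\lambda(\tilde z_i)}-\Omega_{\mu(\tilde z_i)}\bigr)\tilde y_i$. By the triangle inequality and submultiplicativity, $\|\hat{\mathbf{g}}-\mathbf{g}\|_{\mathcal{H}_{\Omega}}\le\frac1m\sum_{i=1}^m\|\Omega_{\mu^n_\lambda(\tilde z_i)}-\Omega_{\mu(\tilde z_i)}\|_{\mathcal{L}(\mathcal{Y},\mathcal{H}_{\Omega})}\,\|\tilde y_i\|_{\mathcal{Y}}$. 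Hypothesis~\ref{hyp:hilbertSchmidt}.2 controls each operator norm by $L\|\mu^n_\lambda(\tilde z_i)-\mu(\tilde z_i)\|^{\iota}_{\mathcal{H}_{\mathcal{X}}}\le L r_\mu^{\iota}$, and Hypothesis~\ref{hyp:boundedY}.2 gives $\|\tilde y_i\|_{\mathcal{Y}}\le C$. Averaging yields $\|\hat{\mathbf{g}}-\mathbf{g}\|_{\mathcal{H}_{\Omega}}\le LC\,r_\mu^{\iota}$, and squaring gives the claimed $L^2C^2 r_\mu^{2\iota}$.

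For the second bound, decompose the difference of rank-controlled operators by adding and subtracting $\Omega_{\mu(\tilde z_i)}\circ\Omega^*_{\mu^n_\lambda(\tilde z_i)}$:
\begin{align*}
T_{\mu(\tilde z_i)}-T_{\mu^n_\lambda(\tilde z_i)}
&=\Omega_{\mu(\tilde z_i)}\circ\bigl(\Omega^*_{\mu(\tilde z_i)}-\Omega^*_{\mu^n_\lambda(\tilde z_i)}\bigr)
+\bigl(\Omega_{\mu(\tilde z_i)}-\Omega_{\mu^n_\lambda(\tilde z_i)}\bigr)\circ\Omega^*_{\mu^n_\lambda(\tilde z_i)}.
\end{align*}
Using submultiplicativity, the inequality $\|\cdot\|_{\mathcal{L}}\le\|\cdot\|_{\mathcal{L}_2}$ so that $\|\Omega_{\mu(z)}\|_{\mathcal{L}}\le\sqrt B$ by Hypothesis~\ref{hyp:hilbertSchmidt}.1 (and the same for $\Omega^*_{\mu(z)}$, since the operator norm is invariant under taking adjoints), together with Hypothesis~\ref{hyp:hilbertSchmidt}.2 applied through $\|\Omega^*_{\mu(z)}-\Omega^*_{\mu^n_\lambda(z)}\|_{\mathcal{L}}=\|\Omega_{\mu(z)}-\Omega_{\mu^n_\lambda(z)}\|_{\mathcal{L}}$, each summand is bounded by $2\sqrt B\,L\,\|\mu^n_\lambda(\tilde z_i)-\mu(\tilde z_i)\|^{\iota}_{\mathcal{H}_{\mathcal{X}}}\le 2\sqrt B\,L\,r_\mu^{\iota}$. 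Averaging gives $\|\mathbf{T}-\hat{\mathbf{T}}\|_{\mathcal{L}(\mathcal{H}_{\Omega})}\le 2\sqrt B\,L\,r_\mu^{\iota}$, and squaring gives $4BL^2 r_\mu^{2\iota}$.

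No step is genuinely hard; everything reduces to the triangle inequality, submultiplicativity, and the stated hypotheses. The points that require care are (i) replacing $\mu^-$ by $\mu$ at the sampled points $\tilde z_i$ so that Corollary~\ref{mu_bound_1)} applies verbatim, (ii) the adjoint splitting of the difference $T_{\mu(z)}-T_{\mu^n_\lambda(z)}$ of the two covariance-type operators, and (iii) observing that passing to adjoints preserves both the operator and Hilbert--Schmidt norms, so Hypothesis~\ref{hyp:hilbertSchmidt} transfers to the $\Omega^*_{\mu(z)}$ factors. Since the stage~1 bound is uniform over $\mathcal{Z}$, the stage~2 draws introduce no extra failure probability, so the final high-probability statement is at the same level $1-\delta$.
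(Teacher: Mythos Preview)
Your argument is correct and matches the approach the paper defers to in \cite[Supplement 7.1.1 and 7.1.2]{szabo2016learning}: summand-wise triangle inequality, the H\"older bound from Hypothesis~\ref{hyp:hilbertSchmidt}.2, the output bound from Hypothesis~\ref{hyp:boundedY}.2 for $\hat{\mathbf{g}}-\mathbf{g}$, and the add--subtract telescoping of $T_{\mu(\tilde z_i)}-T_{\mu^n_\lambda(\tilde z_i)}$ combined with the Hilbert--Schmidt bound of Hypothesis~\ref{hyp:hilbertSchmidt}.1 for $\mathbf{T}-\hat{\mathbf{T}}$. The one place to be explicit is that Hypothesis~\ref{hyp:hilbertSchmidt}.1 is being read (as in \cite{szabo2016learning}) as a bound over all embeddings in $\mathcal{H}_{\mathcal{X}}$, so that $\|\Omega_{\mu^n_\lambda(\tilde z_i)}\|_{\mathcal{L}}\le\sqrt B$ is licensed for the \emph{estimated} embedding appearing in the second term of your split; you implicitly use this, and it is indeed the intended reading.
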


\begin{proposition}\label{amgm}
$$
\|\sqrt{T}\circ (T+\xi)^{-1}\|_{\mathcal{L}(\mathcal{H}_{\Omega})}\leq\frac{1}{2\sqrt{\xi}}
$$
\end{proposition}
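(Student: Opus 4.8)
\textbf{Proof proposal for Proposition~\ref{amgm}.} The plan is to reduce the operator-norm bound to an elementary scalar optimization via the spectral calculus for the positive self-adjoint operator $T=S\circ S^*$. First I would invoke the eigendecomposition $T=\sum_{k=1}^{\infty}\lambda_k e_k\langle\cdot,e_k\rangle_{\mathcal{H}_{\Omega}}$, which is well-defined because $T$ has finite trace and hence a discrete spectrum (see the discussion preceding Definition~\ref{def_power}), with $\lambda_k>0$ on $Ker(T)^{\perp}$ and $T$ acting as $0$ on $Ker(T)$. Since $\sqrt{T}$ and $(T+\xi)^{-1}$ are defined through the same eigenbasis, their composition is the operator
$$
\sqrt{T}\circ(T+\xi)^{-1}=\sum_{k=1}^{\infty}\frac{\sqrt{\lambda_k}}{\lambda_k+\xi}\,e_k\langle\cdot,e_k\rangle_{\mathcal{H}_{\Omega}},
$$
and on $Ker(T)$ it acts by the scalar $\sqrt{0}/(0+\xi)=0$. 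Being diagonal in an orthonormal basis, its operator norm equals the supremum of the moduli of its "eigenvalues", namely $\|\sqrt{T}\circ(T+\xi)^{-1}\|_{\mathcal{L}(\mathcal{H}_{\Omega})}=\sup_{k}\dfrac{\sqrt{\lambda_k}}{\lambda_k+\xi}\le\sup_{t\ge 0}\dfrac{\sqrt{t}}{t+\xi}$.

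Next I would bound the scalar function $t\mapsto \sqrt{t}/(t+\xi)$ on $[0,\infty)$. Writing $t=s^2$ with $s\ge 0$, the AM–GM inequality gives $t+\xi=s^2+\xi\ge 2s\sqrt{\xi}$, so that for $s>0$,
$$
\frac{\sqrt{t}}{t+\xi}=\frac{s}{s^2+\xi}\le\frac{s}{2s\sqrt{\xi}}=\frac{1}{2\sqrt{\xi}},
$$
while the value at $t=0$ is $0\le \tfrac{1}{2\sqrt{\xi}}$. Taking the supremum over $t\ge 0$ (equivalently, noting the bound is attained at $t=\xi$) yields $\sup_{t\ge 0}\sqrt{t}/(t+\xi)=\tfrac{1}{2\sqrt{\xi}}$, and combining with the previous display completes the proof.

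I do not anticipate a genuine obstacle here: the only point requiring a sentence of care is the justification that the functional calculus applies, i.e. that $f(t)=\sqrt{t}/(t+\xi)$ (extended by $f(0)=0$) is a bounded Borel function on the spectrum $\sigma(T)\subset[0,\|T\|_{\mathcal{L}(\mathcal{H}_{\Omega})}]$, so that $f(T)=\sqrt{T}\circ(T+\xi)^{-1}$ is well-defined with norm $\|f\|_{\infty,\sigma(T)}$; this is immediate from the discreteness of the spectrum established earlier. Everything else is the one-line AM–GM estimate that gives the proposition its label.
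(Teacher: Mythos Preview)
Your proposal is correct and follows essentially the same approach as the paper: reduce the operator norm to the supremum of $\sqrt{\lambda_k}/(\lambda_k+\xi)$ over the eigenvalues via spectral calculus, then bound the scalar expression using the AM--GM inequality $\sqrt{\lambda'\xi}\le(\lambda'+\xi)/2$. Your version is slightly more detailed in justifying the functional calculus and handling $Ker(T)$, but the argument is the same.
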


\begin{proof}
\cite[Step 2.1]{caponnetto2007optimal} and \cite[Supplement A.1.11]{szabo2015two} use this spectral result, which we provide for completeness. Observe that
$$
\|\sqrt{T}\circ (T+\xi)^{-1}\|_{\mathcal{L}(\mathcal{H}_{\Omega})}=\sup_{\lambda'\in \{\lambda_k\}}\frac{\sqrt{\lambda'}}{\lambda'+\xi}
$$
where $\{\lambda\}_k$ are the eigenvalues of $T$. By arithmetic-geometric mean inequality,
$$
\sqrt{\lambda'\xi}\leq \frac{\lambda'+\xi}{2}\iff \frac{\sqrt{\lambda'}}{\lambda'+\xi} \leq \frac{1}{2\sqrt{\xi}}
$$
\end{proof}

\begin{proposition}\label{last}
If $\|\mu^n_{\lambda}(z)-\mu^{-}(z)\|_{\mathcal{H}_{\mathcal{X}}}\leq r_{\mu}$ w.p. $1-\delta$, $m\geq \max\bigg\{\dfrac{2C_{\eta}B \mathcal{N}(\xi)}{\xi},\bar{m}(\delta,c_1)\bigg\}$, $\xi\leq \|T\|_{\mathcal{L}(\mathcal{H}_{\Omega})}$, and Hypotheses 7-8 hold then w.p. $1-\eta/3-\delta$
$$
\|\sqrt{T}\circ (\hat{\mathbf{T}}+\xi)^{-1}\|_{\mathcal{L}(\mathcal{H}_{\Omega})}\leq \dfrac{2}{\sqrt{\xi}}
$$
\end{proposition}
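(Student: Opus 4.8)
The plan is to interpolate between the three operators $T$, $\mathbf{T}$, and $\hat{\mathbf{T}}$ by two successive Neumann-series arguments, in the spirit of Step 2.1 of \cite[Theorem 4]{caponnetto2007optimal}. Write
\[
\sqrt{T}\circ(\hat{\mathbf{T}}+\xi)^{-1}=\big[\sqrt{T}\circ(\mathbf{T}+\xi)^{-1}\big]\circ\big[(\mathbf{T}+\xi)\circ(\hat{\mathbf{T}}+\xi)^{-1}\big],
\]
so that by submultiplicativity of the operator norm it suffices to bound the first bracketed factor by $1/\sqrt{\xi}$ and the second by $2$.

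For the first factor, use the factorization $\mathbf{T}+\xi=\big(I-(T-\mathbf{T})\circ(T+\xi)^{-1}\big)\circ(T+\xi)$ to obtain
\[
\sqrt{T}\circ(\mathbf{T}+\xi)^{-1}=\big[\sqrt{T}\circ(T+\xi)^{-1}\big]\circ\big(I-(T-\mathbf{T})\circ(T+\xi)^{-1}\big)^{-1}.
\]
Proposition~\ref{amgm} bounds $\|\sqrt{T}\circ(T+\xi)^{-1}\|_{\mathcal{L}(\mathcal{H}_{\Omega})}\le\tfrac{1}{2\sqrt{\xi}}$. Under the assumed conditions $m\ge 2C_{\eta}B\mathcal{N}(\xi)/\xi$ and $\xi\le\|T\|_{\mathcal{L}(\mathcal{H}_{\Omega})}$, Proposition~\ref{theta} gives $\Theta(\xi)=\|(T-\mathbf{T})\circ(T+\xi)^{-1}\|_{\mathcal{L}(\mathcal{H}_{\Omega})}\le1/2$ with probability $1-\eta/3$, so on that event the geometric series yields $\|(I-(T-\mathbf{T})\circ(T+\xi)^{-1})^{-1}\|_{\mathcal{L}(\mathcal{H}_{\Omega})}\le(1-\tfrac12)^{-1}=2$, and hence $\|\sqrt{T}\circ(\mathbf{T}+\xi)^{-1}\|_{\mathcal{L}(\mathcal{H}_{\Omega})}\le 1/\sqrt{\xi}$.

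For the second factor, the analogous identity $(\mathbf{T}+\xi)\circ(\hat{\mathbf{T}}+\xi)^{-1}=\big(I-(\mathbf{T}-\hat{\mathbf{T}})\circ(\mathbf{T}+\xi)^{-1}\big)^{-1}$ holds. Since $\mathbf{T}\succeq0$ we have $\|(\mathbf{T}+\xi)^{-1}\|_{\mathcal{L}(\mathcal{H}_{\Omega})}\le1/\xi$, so $\|(\mathbf{T}-\hat{\mathbf{T}})\circ(\mathbf{T}+\xi)^{-1}\|_{\mathcal{L}(\mathcal{H}_{\Omega})}\le\|\mathbf{T}-\hat{\mathbf{T}}\|_{\mathcal{L}(\mathcal{H}_{\Omega})}/\xi$, and on the stage-1 event $\|\mu^n_{\lambda}(z)-\mu^-(z)\|_{\mathcal{H}_{\mathcal{X}}}\le r_{\mu}$ (probability $1-\delta$) Proposition~\ref{gT} bounds $\|\mathbf{T}-\hat{\mathbf{T}}\|_{\mathcal{L}(\mathcal{H}_{\Omega})}\le 2\sqrt{B}L\,r_{\mu}^{\iota}$. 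It remains to choose $\bar{m}(\delta,c_1)$ so that $m\ge\bar{m}(\delta,c_1)$ forces $2\sqrt{B}L\,r_{\mu}^{\iota}\le\xi/2$; this is possible because $r_{\mu}=\kappa\,r_E(\delta,n,c_1)\to0$ as $n$, hence $m$, grows. With this choice the geometric series again gives $\|(\mathbf{T}+\xi)\circ(\hat{\mathbf{T}}+\xi)^{-1}\|_{\mathcal{L}(\mathcal{H}_{\Omega})}\le2$; multiplying the two bounds and taking the intersection of the two events, which by a union bound has probability at least $1-\eta/3-\delta$, gives $\|\sqrt{T}\circ(\hat{\mathbf{T}}+\xi)^{-1}\|_{\mathcal{L}(\mathcal{H}_{\Omega})}\le 2/\sqrt{\xi}$.

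The submultiplicativity estimates, the bound $\|(\mathbf{T}+\xi)^{-1}\|\le1/\xi$, and the geometric-series inequality are all routine; one must only note that the factorizations are legitimate, i.e. that $I-(T-\mathbf{T})\circ(T+\xi)^{-1}$ and $I-(\mathbf{T}-\hat{\mathbf{T}})\circ(\mathbf{T}+\xi)^{-1}$ are boundedly invertible on $\mathcal{H}_{\Omega}$, which holds on precisely the good events where the respective perturbation norms are at most $1/2$. The one genuine subtlety is calibrating $\bar{m}(\delta,c_1)$: since $\xi$ itself decays with $m$ in the eventual application, the requirement $2\sqrt{B}L\,r_{\mu}^{\iota}\le\xi/2$ is really a constraint tying the stage-1 sample size $n$ to $m$---this is the origin of the exponent $a(c_1+1)/(\iota(c_1-1))$ appearing in Theorem~\ref{rate}.
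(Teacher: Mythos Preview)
Your proof is correct and uses essentially the same ingredients as the paper (Propositions~\ref{amgm}, \ref{theta}, \ref{gT}, a Neumann-series inversion, and the union bound), but the decomposition is organized slightly differently. You split the passage $T\to\hat{\mathbf{T}}$ into two steps $T\to\mathbf{T}\to\hat{\mathbf{T}}$ and run two separate Neumann series, obtaining $\tfrac{1}{\sqrt{\xi}}\cdot 2$. The paper instead runs a single Neumann series around $T$, writing
\[
\|\sqrt{T}\circ(\hat{\mathbf{T}}+\xi)^{-1}\|\le\|\sqrt{T}\circ(T+\xi)^{-1}\|\sum_{k\ge0}\|(T-\hat{\mathbf{T}})\circ(T+\xi)^{-1}\|^k,
\]
then bounding the perturbation by the triangle inequality $\|(T-\hat{\mathbf{T}})\circ(T+\xi)^{-1}\|\le\Theta(\xi)+\|(\mathbf{T}-\hat{\mathbf{T}})\circ(T+\xi)^{-1}\|\le\tfrac12+\tfrac14=\tfrac34$, which yields $\tfrac{1}{2\sqrt{\xi}}\cdot 4$. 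The paper's calibration $2\sqrt{B}L\,r_{\mu}^{\iota}/\xi\le\tfrac14$ is marginally tighter than your $\le\tfrac12$, giving an explicit formula for $\bar m(\delta,c_1)$, but this does not affect the rate. Your two-step version is a touch more modular; the paper's one-step version is a touch shorter. Either is fine.
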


\begin{proof}
\cite[Supplement A.1.11]{szabo2015two} provides the following bound.
$$
 \|\sqrt{T}\circ (\hat{\mathbf{T}}+\xi)^{-1}\|_{\mathcal{L}(\mathcal{H}_{\Omega})}\leq \|\sqrt{T}\circ (T+\xi)^{-1}\|_{\mathcal{L}(\mathcal{H}_{\Omega})} 
 \sum_{k=0}^{\infty} \|(T-\hat{\mathbf{T}})\circ (T+\xi)^{-1}\|^k_{\mathcal{L}(\mathcal{H}_{\Omega})}
$$
Examine the RHS. By Proposition~\ref{amgm}
$$
\|\sqrt{T}\circ (T+\xi)^{-1}\|_{\mathcal{L}(\mathcal{H}_{\Omega})} \leq \frac{1}{2\sqrt{\xi}}
$$
By a telescoping argument in \cite[Supplement A.1.11]{szabo2015two}
$$
\|(T-\hat{\mathbf{T}})\circ (T+\xi)^{-1}\|_{\mathcal{L}(\mathcal{H}_{\Omega})}\leq \Theta(\xi)+\|(\mathbf{T}-\hat{\mathbf{T}})\circ (T+\xi)^{-1}\|_{\mathcal{L}(\mathcal{H}_{\Omega})}
$$
Proposition~\ref{theta} bounds the first term w.p. $1-\eta/3$. Examine the second term.
\begin{align*}
\|(\mathbf{T}-\hat{\mathbf{T}})\circ (T+\xi)^{-1}\|_{\mathcal{L}(\mathcal{H}_{\Omega})} 
&\leq \|\mathbf{T}-\hat{\mathbf{T}}\|_{\mathcal{L}(\mathcal{H}_{\Omega})}\|(T+\xi)^{-1}\|_{\mathcal{L}(\mathcal{H}_{\Omega})} \\
&\leq 2\sqrt{B}Lr^{\iota}_{\mu}  \cdot \dfrac{1}{\xi} \\
&\leq 1/4
\end{align*}
where the second inequality is by Proposition~\ref{gT} and the third inequality reflects a choice of $m$ sufficiently large. In particular, by Corollary~\ref{mu_bound_1)} it is sufficient that
$$
m\geq \bar{m}(\delta,c_1):=\bigg[\dfrac{\sqrt{\zeta_1}(c_1+1)}{4^{\frac{1}{c_1+1}}}\kappa \bigg(\dfrac{8\sqrt{B}L}{\xi}\bigg)^{1/\iota}\bigg]^{2\frac{c_1+1}{c_1-1}}\bigg[\dfrac{4\kappa(Q+\kappa \|E_{\rho}\|_{\mathcal{H}_{\Gamma}}) \ln(2/\delta)}{ \sqrt{\zeta_1}(c_1-1)}\bigg]^2
$$

Then
$$
\|(T-\hat{\mathbf{T}})\circ (T+\xi)^{-1}\|_{\mathcal{L}(\mathcal{H}_{\Omega})}\leq 1/2+1/4=3/4
$$
and hence
$$
\|\sqrt{T}\circ (\hat{\mathbf{T}}+\xi)^{-1}\|_{\mathcal{L}(\mathcal{H}_{\Omega})}\leq \dfrac{1}{2\sqrt{\xi}}\cdot \dfrac{1}{1-3/4}=\dfrac{2}{\sqrt{\xi}}
$$
\end{proof}

\subsection{Stage 2: Theorems}\label{sec:finalTheorems}

\begin{proof}[Proof of Theorem~\ref{sol_2}]
\cite[eq. 13, 14]{szabo2016learning} provide the closed form solution. Existence and uniqueness follow from \cite[Proposition 8]{cucker2002mathematical}.
\end{proof}

 To quantity the convergence rate of $\mathcal{E}(\hat{H}_{\xi}^{m})-\mathcal{E}(H_{\rho})$, we modify the central results of \cite{szabo2016learning}, replacing their first stage convergence argument with our own derived above.

\begin{theorem}\label{finite}
Assume Hypotheses 1-9. If $m$ is large enough and $\xi\leq \|T\|_{\mathcal{L}(\mathcal{H}_{\Omega})}$ then $\forall \delta\in(0,1)$ and $\forall \eta\in(0,1)$, the following holds w.p. $1-\eta-\delta$:
\begin{align*}
&\mathcal{E}(\hat{H}_{\xi}^{m})-\mathcal{E}(H_{\rho})
\leq r_H(\delta,n,c_1;\eta,m,b,c)
:= 5\bigg\{\dfrac{4}{\xi}\cdot L^2C^2(\kappa\cdot r_{E})^{2\iota} + \dfrac{4}{\xi}\cdot  4 BL^2 (\kappa\cdot r_{E})^{2\iota}\\
  &\quad \cdot 6\bigg(\dfrac{16}{\xi}\ln^2(6/\eta)\bigg[\dfrac{M^2B}{m^2\xi}+\dfrac{\Sigma^2 }{m}\beta^{1/b} \dfrac{\pi/b}{sin(\pi/b)}\xi^{-1/b}\bigg]\\
  &\quad +\dfrac{4}{\xi^2}\ln^2(6/\eta)\bigg[\dfrac{4B^2 \zeta\xi^{c-1}}{m^2}+\dfrac{B\zeta \xi^c }{m}\bigg]+ \zeta\xi^{c-1}+\|H_{\rho}\|^2_{\mathcal{H}_{\Omega}}\bigg) \\
    &\quad + \zeta \xi^c
   +32\ln^2(6/\eta) \bigg[\dfrac{BM^2}{m^2\xi}+\dfrac{\Sigma^2}{m}\beta^{1/b} \dfrac{\pi/b}{sin(\pi/b)}\xi^{-1/b}\bigg]+8\ln^2(6/\eta) \bigg[\dfrac{4B^2\zeta\xi^{c-1}}{m^2\xi}+\dfrac{B\zeta\xi^c}{m\xi}\bigg]
   \bigg\} 
\end{align*}
\end{theorem}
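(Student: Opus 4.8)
The plan is to start from the five-term decomposition of the stage 2 excess error proved in \cite[Theorem 2]{szabo2016learning} (restated among the lemmas above),
\[
\mathcal{E}(\hat{H}_{\xi}^m)-\mathcal{E}(H_{\rho})\leq 5\big[S_{-1}+S_0+\mathcal{A}(\xi)+S_1+S_2\big],
\]
and to bound each of the five summands, substituting the stage 1 rate $r_E=r_E(\delta,n,c_1)$ from Theorem~\ref{stage1} wherever the estimated embedding $\mu^n_\lambda$ (equivalently $\hat{\mathbf{T}},\hat{\mathbf{g}}$) appears. The final bound $r_H$ is then simply the sum of the resulting terms, and the ``$m$ large enough'' hypothesis collects the sample-size side conditions the lemmas require.

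The three quantities that do not involve stage 1 estimation error are handled exactly as in the single-stage theory. First, $\mathcal{A}(\xi)\leq\zeta\xi^c$ by Proposition~\ref{first}. Second, $S_1$ and $S_2$ are bounded by the proposition adapting Steps 2 and 3 of \cite[Theorem 4]{caponnetto2007optimal}, valid with probability $1-2\eta/3$ once $m\geq 2C_\eta B\mathcal{N}(\xi)/\xi$ and $\xi\leq\|T\|_{\mathcal{L}(\mathcal{H}_\Omega)}$; plugging in the prior bounds $\mathcal{A}(\xi)\leq\zeta\xi^c$, $\mathcal{B}(\xi)\leq\zeta\xi^{c-1}$ and $\mathcal{N}(\xi)\leq\beta^{1/b}\frac{\pi/b}{\sin(\pi/b)}\xi^{-1/b}$ from Proposition~\ref{first} produces the explicit $S_1,S_2$ terms in the statement.

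The terms $S_{-1}$ and $S_0$ are where stage 1 enters. The plan is to use the operator-norm factorizations $S_{-1}\leq\|\sqrt{T}\circ(\hat{\mathbf{T}}+\xi)^{-1}\|_{\mathcal{L}}^2\,\|\hat{\mathbf{g}}-\mathbf{g}\|_{\mathcal{H}_\Omega}^2$ and $S_0\leq\|\sqrt{T}\circ(\hat{\mathbf{T}}+\xi)^{-1}\|_{\mathcal{L}}^2\,\|\mathbf{T}-\hat{\mathbf{T}}\|_{\mathcal{L}}^2\,\|H^m_\xi\|_{\mathcal{H}_\Omega}^2$, and then to control the three factors separately: (i) $\|\sqrt{T}\circ(\hat{\mathbf{T}}+\xi)^{-1}\|_{\mathcal{L}}\leq 2/\sqrt{\xi}$ by Proposition~\ref{last}, which rests on the spectral inequality of Proposition~\ref{amgm}, the concentration bound of Proposition~\ref{theta}, and a Neumann-series (telescoping) argument, and which forces $m\geq\max\{2C_\eta B\mathcal{N}(\xi)/\xi,\ \bar m(\delta,c_1)\}$; (ii) $\|\hat{\mathbf{g}}-\mathbf{g}\|_{\mathcal{H}_\Omega}^2\leq L^2C^2 r_\mu^{2\iota}$ and $\|\mathbf{T}-\hat{\mathbf{T}}\|_{\mathcal{L}}^2\leq 4BL^2 r_\mu^{2\iota}$ by the proposition adapted from Supplement 7.1 of \cite{szabo2016learning}, with $r_\mu=\kappa\cdot r_E$ the uniform conditional mean embedding rate of Corollary~\ref{mu_bound_1)}; and (iii) $\|H^m_\xi\|_{\mathcal{H}_\Omega}^2$ bounded by the proposition adapted from Supplement 9.1 of \cite{szabo2016learning}, again after substituting the prior bounds from Proposition~\ref{first}. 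Chaining (i)--(iii) and collecting constants gives the $\frac{4}{\xi}L^2C^2(\kappa r_E)^{2\iota}$ contribution for $S_{-1}$ and the $\frac{4}{\xi}\cdot 4BL^2(\kappa r_E)^{2\iota}\cdot(\cdots)$ contribution for $S_0$ that appear in $r_H$.

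Finally I would assemble the pieces with a single union bound: probability $1-\delta$ for the stage 1 event underlying Corollary~\ref{mu_bound_1)} (hence the factor (ii) bounds), and probability $1-\eta$ for the stage 2 events from Propositions~\ref{theta} and~\ref{last} and the $S_1,S_2$ bound, giving $1-\eta-\delta$ overall. The main obstacle is bookkeeping rather than any single estimate: one must verify that a single choice of ``$m$ large enough'' simultaneously satisfies $m\geq 2C_\eta B\mathcal{N}(\xi)/\xi$ (needed by every Caponnetto--De Vito concentration step) and $m\geq\bar m(\delta,c_1)$ (needed so the propagated stage 1 error is small enough to give $\|\sqrt T\circ(\hat{\mathbf{T}}+\xi)^{-1}\|_{\mathcal{L}}\leq 2/\sqrt\xi$), and that the stage 1 error $r_\mu=\kappa r_E$ enters each of $S_{-1},S_0$ at exactly the power $2\iota$ dictated by the H\"older continuity of $\{\Omega_{\mu(z)}\}$ in Hypothesis~\ref{hyp:hilbertSchmidt}. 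The later specialization $\lambda=n^{-1/(c_1+1)}$, $n=m^{a(c_1+1)/(\iota(c_1-1))}$ and optimization over $\xi$ that converts this finite-sample bound into the rates of Theorem~\ref{rate} is a separate calculation.
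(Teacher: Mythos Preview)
Your proposal is correct and follows essentially the same approach as the paper's own proof: start from the five-term decomposition of \cite[Theorem 2]{szabo2016learning}, bound $\mathcal{A}(\xi),S_1,S_2$ via Proposition~\ref{first} and the Caponnetto--De Vito concentration lemmas, bound $S_{-1},S_0$ by factorizing through $\|\sqrt{T}\circ(\hat{\mathbf{T}}+\xi)^{-1}\|_{\mathcal{L}}\leq 2/\sqrt{\xi}$ (Proposition~\ref{last}), the H\"older-continuity bounds of Proposition~\ref{gT}, and the $\|H^m_\xi\|^2_{\mathcal{H}_\Omega}$ bound, then substitute $r_\mu=\kappa\cdot r_E$ from Corollary~\ref{mu_bound_1)}. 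Your treatment of the union bound and the ``$m$ large enough'' side conditions is in fact more explicit than the paper's terse proof, but the logical skeleton is identical.
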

Note that the convergence rate is calibrated by $c_1$, the smoothness of the conditional expectation operator $E_{\rho}$; $c$, the smoothness of the structural operator $H_{\rho}$; and $b$, the effective input dimension.

\begin{proof}
By Propositions~\ref{first} to~\ref{last},
\begin{align*}
    \mathcal{E}(\hat{H}_{\xi}^m)-\mathcal{E}(H_{\rho})&\leq 5[S_{-1}+S_0+\mathcal{A}(\xi)+S_1+S_2] \\
    S_{-1}
    &\leq \dfrac{4}{\xi}\cdot L^2C^2r^{2\iota}_{\mu}  \\
    S_{0}
    &\leq  \dfrac{4}{\xi}\cdot  4 BL^2r^{2\iota}_{\mu} \cdot \|H_{\xi}^m\|^2_{\mathcal{H}_{\Omega}}
  \\
    \|H^{m}_{\xi}\|^2_{\mathcal{H}_{\Omega}}
    &\leq 6\bigg(\dfrac{16}{\xi}\ln^2(6/\eta)\bigg[\dfrac{M^2B}{m^2\xi}+\dfrac{\Sigma^2 }{m}\beta^{1/b} \dfrac{\pi/b}{sin(\pi/b)}\xi^{-1/b}\bigg] \\
    &\quad +\dfrac{4}{\xi^2}\ln^2(6/\eta)\bigg[\dfrac{4B^2 \zeta\xi^{c-1}}{m^2}+\dfrac{B\zeta \xi^c }{m}\bigg]+ \zeta\xi^{c-1}+\|H_{\rho}\|^2_{\mathcal{H}_{\Omega}}\bigg) \\
    \mathcal{A}(\xi)
    &\leq \zeta \xi^c \\
    S_1
    &\leq  32\ln^2(6/\eta) \bigg[\dfrac{BM^2}{m^2\xi}+\dfrac{\Sigma^2}{m}\beta^{1/b} \dfrac{\pi/b}{sin(\pi/b)}\xi^{-1/b}\bigg] 
    \\
    S_2
    &\leq  8\ln^2(6/\eta) \bigg[\dfrac{4B^2\zeta\xi^{c-1}}{m^2\xi}+\dfrac{B\zeta\xi^c}{m\xi}\bigg]
\end{align*}
Finally use Corollary~\ref{mu_bound_1)} to write $r_{\mu}=\kappa\cdot r_{E}$.
\end{proof}

\begin{proof}[Proof of Theorem~\ref{rate}]
Ignoring constants in Theorem~\ref{finite} yields
\begin{align*}
    S_{-1}
    &=O\bigg(\dfrac{r^{2\iota}_{\mu}}{\xi}\bigg)  \\
    S_{0}
    &=O\bigg(\dfrac{r^{2\iota}_{\mu}}{\xi}\cdot \|H_{\xi}^m\|^2_{\mathcal{H}_{\Omega}}\bigg)  \\
    \|H^{m}_{\xi}\|^2_{\mathcal{H}_{\Omega}}
    &=O\bigg(\dfrac{1}{m^2\xi^2}+\dfrac{1}{m\xi^{1+1/b}}+\dfrac{1}{m^2\xi^{3-c}}+\dfrac{1}{m\xi^{2-c}}+\xi^{c-1}+1\bigg) \\
    \mathcal{A}(\xi)&=O(\xi^c)\\
    S_1
    &=O\bigg(\dfrac{1}{m^2\xi}+\dfrac{1}{m\xi^{1/b}}\bigg)\\
    S_2
    &=O\bigg(\dfrac{1}{m^2\xi^{2-c}}+\dfrac{\xi^{c-1}}{m}\bigg)
\end{align*}
The last term in the bound on $\|H^{m}_{\xi}\|^2_{\mathcal{H}_{\Omega}}$ implies that the bounding terms of $S_0$ dominate those of $S_{-1}$. Within the terms bounding $\|H^{m}_{\xi}\|^2_{\mathcal{H}_{\Omega}}$, observe that $\frac{1}{m^2\xi^2}$ dominates $\frac{1}{m^2\xi^{3-c}}$; $\frac{1}{m\xi^{1+1/b}}$ dominates $\frac{1}{m\xi^{2-c}}$; and $1$ dominates $\xi^{c-1}$. These statements follow from the restrictions $b>1$ and $c\in(1,2]$ in the definition of a prior as well as $\xi\rightarrow0$. Likewise, the terms bounding $S_1$ dominate the terms bounding $S_2$. In summary, we arrive at a statement analogous to \cite[eq. 19]{szabo2016learning}.
\begin{align*}
    &\mathcal{E}(\hat{H}_{\xi}^m)-\mathcal{E}(H_{\rho})=O\bigg(\dfrac{r^{2\iota}_{\mu}}{\xi}
 \cdot \bigg[\dfrac{1}{m^2\xi^2}+\dfrac{1}{m\xi^{1+1/b}}+1\bigg]
 +\xi^c+\dfrac{1}{m^2\xi}+\dfrac{1}{m\xi^{1/b}}\bigg) \\
 &\quad \text{s.t. } m\xi^{1+1/b}\geq1, r^{2\iota}_{\mu}\leq \xi^2
\end{align*}
By Corollary~\ref{mu_bound_1)}, Theorem~\ref{stage1}, and the choices of $\lambda$ and $n$ in the statement of Theorem~\ref{rate}
$$
r^{2\iota}_{\mu}=O\bigg( [(n^{-\frac{1}{2}})^{\frac{c_1-1}{c_1+1}}]^{2\iota}\bigg)=O(m^{-a})
$$
With this substitution, we arrive at a statement analogous to \cite[eq. 20]{szabo2016learning}.
\begin{align*}
    &\mathcal{E}(\hat{H}_{\xi}^{m})-\mathcal{E}(H_{\rho})
 =O\bigg(\dfrac{1}{m^{2+a}\xi^3}+\dfrac{1}{m^{1+a}\xi^{2+1/b}}+\dfrac{1}{m^a\xi}
 +\xi^c+\dfrac{1}{m^2\xi}+\dfrac{1}{m\xi^{1/b}}\bigg) \\
 &\quad \text{s.t. } m\xi^{1+1/b}\geq1,  m^a\xi^2\geq1
\end{align*}
The final result is \cite[Theorem 5]{szabo2016learning}.
\end{proof}

\subsection{Experiments}\label{sec:sim}

\subsubsection{Designs}

\begin{figure}[H]
  \begin{subfigure}[b]{0.5\textwidth}
    \includegraphics[width=0.9\textwidth]{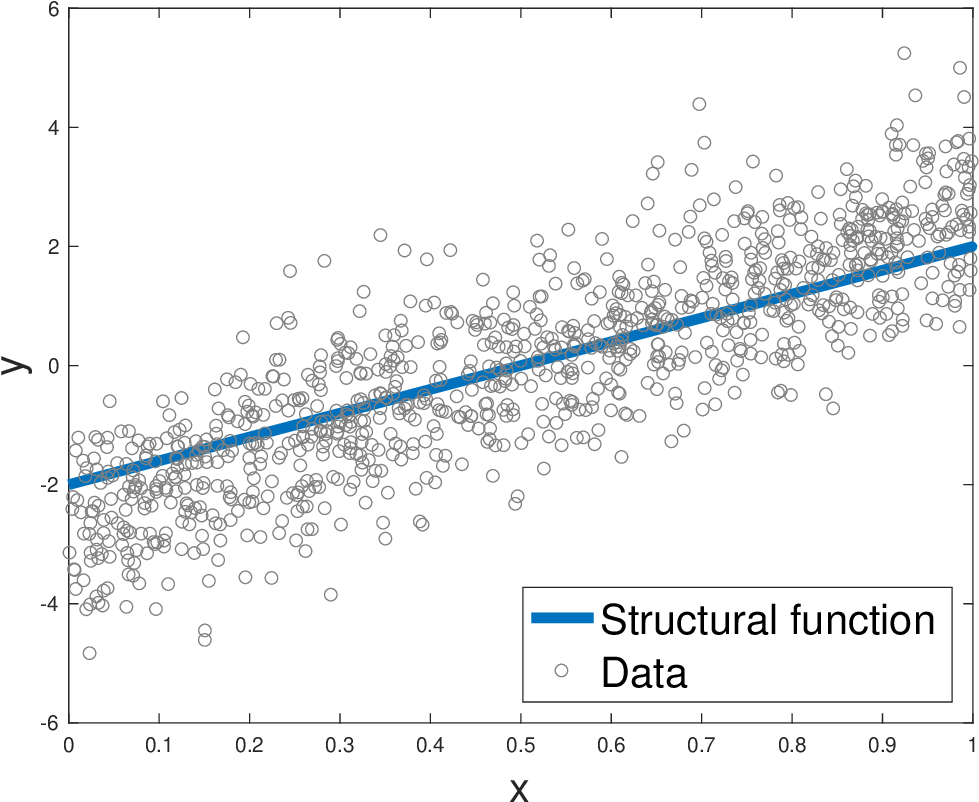}
    \cprotect\caption{Linear}
    \label{linear_dgp}
  \end{subfigure}
  \begin{subfigure}[b]{0.5\textwidth}
    \includegraphics[width=0.9\textwidth]{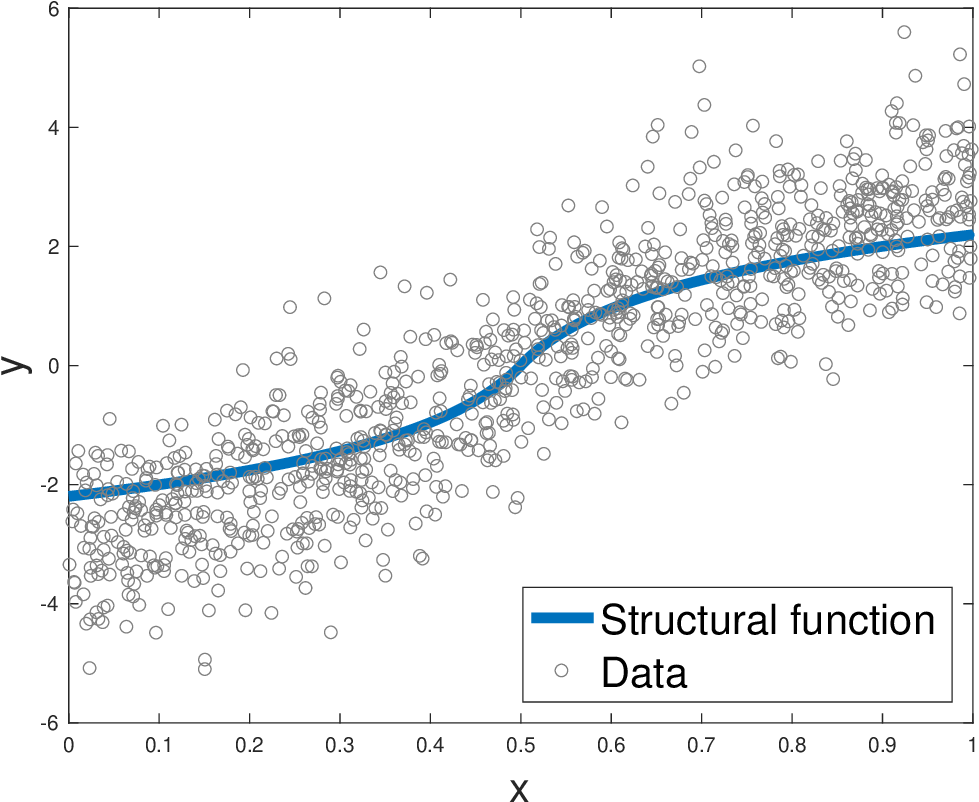}
    \cprotect\caption{Sigmoid}
    \label{sigmoid_dgp}
  \end{subfigure}
  \cprotect\caption{Linear and sigmoid data generating processes. Training sample size is $n+m=1000$}
  \label{uni_dgp}
  \vspace{-0pt}
\end{figure}

The linear and sigmoid simulation designs are from \cite{chen2018optimal}, adapted from \cite{newey2003instrumental}. One simulation consists of a sample of $n+m\in\{1000,5000,10000\}$ observations. A given observation is generated from the IV model
\begin{align*}
    Y&=h(X)+e,\quad \mathbb{E}[e|Z]=0
\end{align*}
where $Y$ is the output, $X$ is the input, $Z$ is the instrument, and $e$ is confounding noise. In particular, for the linear design
\begin{align*}
    h(x)&=4x-2
\end{align*}
while for the sigmoid design
\begin{align*}
    h(x)&=\ln (|16x-8|+1)\cdot sgn(x-0.5)
\end{align*}
Data are sampled as
\begin{align*}
\begin{pmatrix}
e \\ V \\ W
\end{pmatrix}&
\overset{i.i.d.}{\sim}N\left(\begin{pmatrix}
0 \\ 0 \\ 0
\end{pmatrix},\begin{pmatrix}
1 & \frac{1}{2} & 0 \\
\frac{1}{2} & 1 & 0 \\
0 & 0 & 1
\end{pmatrix}\right)\\
X&=\Phi\left(\frac{W+V}{\sqrt{2}}\right)\\
Z&=\Phi(W)
\end{align*}

We visualize 1 simulation, consisting of $n+m=1000$ observations, in Figure~\ref{uni_dgp}. The blue curve illustrates the structural function $h$. Grey dots depict noisy observations. The noise $e$ has positively sloped bias relative to the structural function $h$. From observations of $(Y,X,Z)$, we estimate $\hat{h}$ by several methods. For each estimated $\hat{h}$, we measure out-of-sample error as the mean square error of $\hat{h}$ versus true $h$ applied to 1000 evenly spaced values $x\in[0,1]$. We report $log_{10}(MSE)$.

The demand simulation design is from \cite{hartford2017deep}. One simulation consists of a sample of $n+m\in\{1000,5000,10000\}$ observations. A given observation is generated from the IV model
\begin{align*}
    Y&=h(X)+e,\quad \mathbb{E}[e|Z]=0
\end{align*}
where $Y$ is the output, $X=(P,T,S)$ are inputs, and $Z=(C,T,S)$ are instruments. Recall that $Y$ is sales, $P$ is the endogenous input instrumented by supply cost-shifter $C$, and $(T,S)$ are exogenous inputs interpretable as time of year and customer sentiment. While $(P,T,C)$ are continuous random variables, $S$ is discrete--a novel feature of this design. $e$ is confounding noise. 

\begin{align*}
    &h(p,t,s)=100+(10+p)s\psi(t)-2p \\
    &\psi(t)=2\left[
    \frac{(t-5)^4}{600}
    +\exp\left(-4(t-5)^2\right)+\frac{t}{10}-2
    \right]
\end{align*}

\begin{wrapfigure}{R}{\textwidth/4}
\vspace{-15pt}
  \begin{center}
    \includegraphics[width=\textwidth/4]{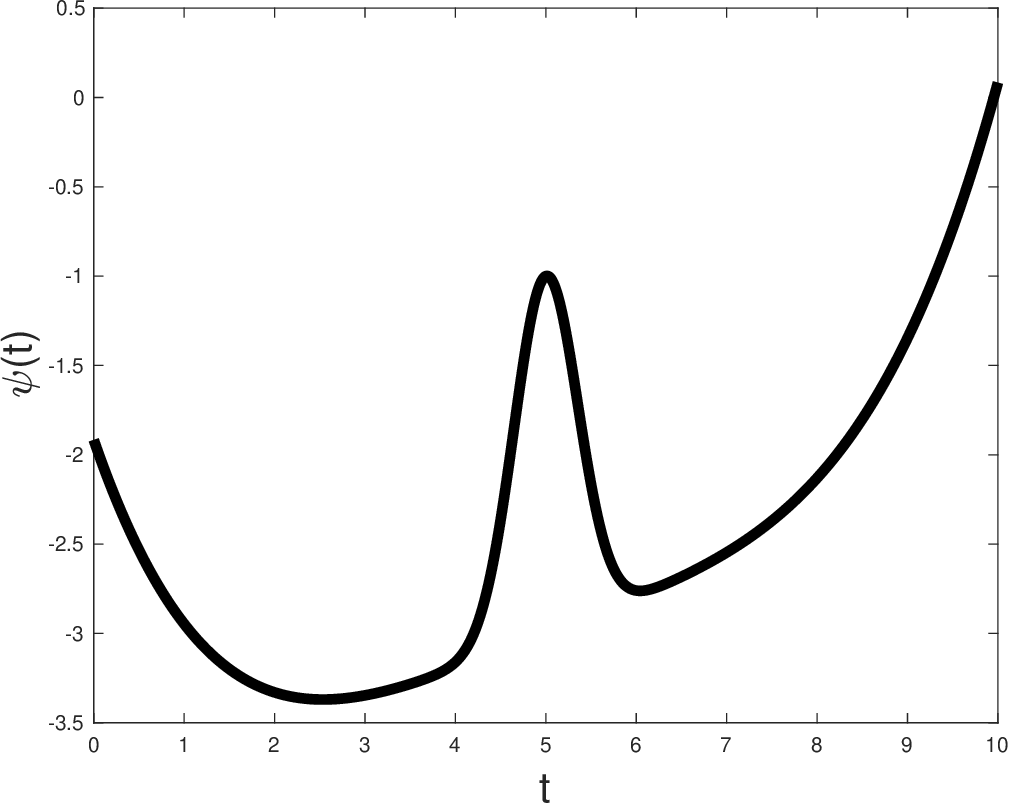}
     \vspace{-10pt}
    \caption{Demand nonlinearity $\psi(t)$}
    \label{demand_dgp}
    \end{center}
        \vspace{-90pt}
\end{wrapfigure}

Data are sampled as
\begin{align*}
S&\overset{i.i.d.}{\sim} Unif\{1,...,7\}\\
T&\overset{i.i.d.}{\sim}Unif[0,10]\\
\begin{pmatrix}
C \\ V 
\end{pmatrix}&
\overset{i.i.d.}{\sim}N\left(\begin{pmatrix}
0 \\ 0 
\end{pmatrix},\begin{pmatrix}
1 & 0 \\
0 & 1
\end{pmatrix}\right)\\
e&\overset{i.i.d.}{\sim}N(\rho V,1-\rho^2)  \\
P&=25+(C+3)\psi(T)+V
\end{align*}

From observations of $(Y,P,T,S,C)$, we estimate $\hat{h}$ by several methods. For each estimated $\hat{h}$, we measure out-of-sample error as the mean square error of $\hat{h}$ versus true $h$ applied to 2800 values of $(p,t,s)$. Specifically, we consider 20 evenly spaced values of $p\in[10,25]$, 20 evenly spaced values of $t\in[0,10]$, and all 7 values $s\in\{1,...,7\}$ (In an earlier version, we considered $p\in[2.5,14.5]$, which is less representative.) We report $log_{10}(MSE)$.

\newpage

\subsubsection{Algorithms}

\begin{wrapfigure}{R}{\textwidth/2}
\vspace{-10pt}
  \begin{center}
    \includegraphics[width=\textwidth/2]{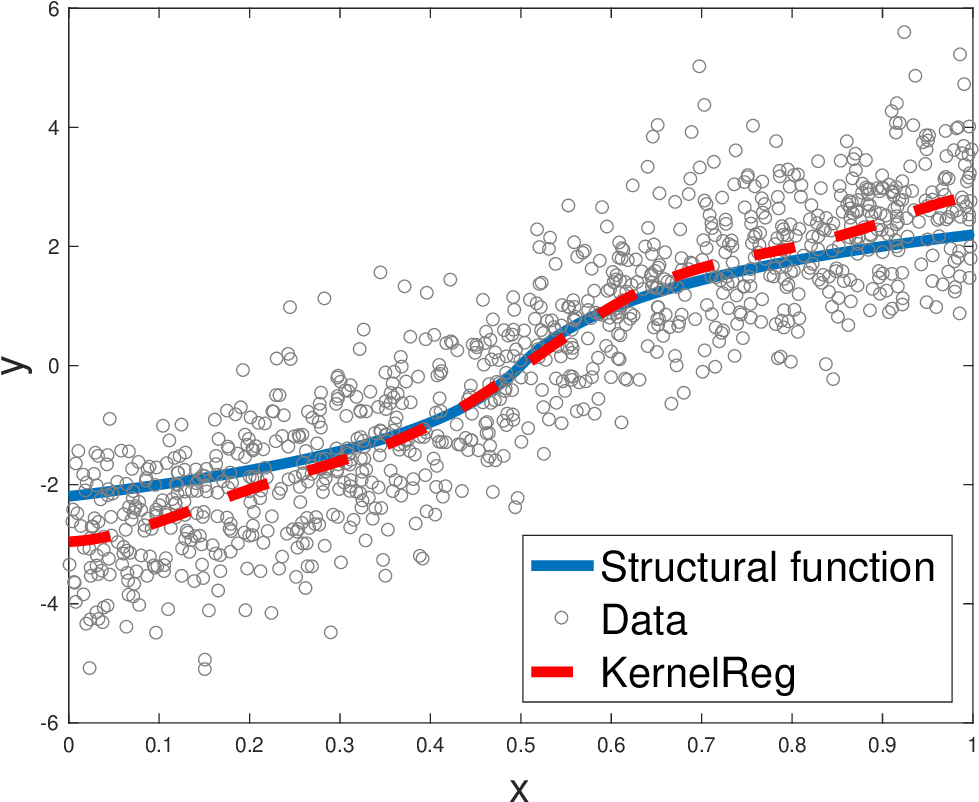}
    \vspace{-10pt}
    \cprotect\caption{\verb|KernelReg| on the sigmoid design}
    \vspace{-10pt}
    \label{KernelReg}
    \end{center}
\end{wrapfigure}

\verb|KernelReg|. We implement kernel ridge regression using Gaussian kernel $k_{\mathcal{X}}$. We set the kernel hyperparameter--the lengthscale--equal to the median interpoint distance of inputs, a standard practice. When inputs are multidimensional as in the demand design, we use the kernel obtained as the product of scalar kernels for each input dimension. Each lengthscale is set according to the median interpoint distance for that input dimension. We tune the Tikhonov regularization parameter by cross-validation with two folds. Figure~\ref{KernelReg} visualizes the performance of \verb|KernelReg| on the sigmoid design with $n+m=1000$. Kernel ridge regression ignores the instrument $Z$, and it is biased away from the structural function due to confounding noise. The remaining algorithms make use of instrument $Z$ to overcome this issue.

\verb|SieveIV|. We implement sieve IV with sample splitting using $B$-spline basis. We set the basis hyperparameters according to the preferred specification of \cite{chen2018optimal}: $4^{th}$ order polynomial with 1 interior knot. We implement sieve IV without Tikhonov regularization (as originally formulated), and with Tikhonov regularization. We tune Tikhonov regularization parameters $(\lambda,\xi)$ according to Algorithm~\ref{val}. Figure~\ref{SieveIV} visualizes the performance of \verb|SieveIV| on the sigmoid design with $n+m=1000$. Tikhonov regularization dramatically improves performance in both the sigmoid and demand designs. There is still room for improvement, however, since \verb|SieveIV| is constrained to finite dictionaries of basis functions.

\verb|SmoothIV|. We implement Nadaraya-Watson IV using the \verb|R| command \verb|npregiv|. We set the regularization option to Tikhonov, in order to implement the estimator of \cite{darolles2011nonparametric}. Otherwise we maintain default options. As in \cite{hartford2017deep}, we only apply this estimator to training samples of size $n+m=1000$ due to its lengthy running time. Figure~\ref{SmoothIV} visualizes the performance of \verb|SmoothIV| on the sigmoid design with $n+m=1000$. \verb|SmoothIV| is clearly an improvement on its predecessor, the original \verb|SieveIV|. By imposing Tikhonov regularization in stage 2, the algorithm greatly reduces variance. The Nadaraya-Watson style stage 1 estimator appears to be the reason why \verb|SmoothIV| fails to learn the structural function's sigmoid shape. Overfitting in stage 1 could explain why the final estimate has more inflection points than the true structural function. 

\begin{figure}[h]
 \vspace{-0pt}
  \begin{subfigure}[b]{0.5\textwidth}
    \includegraphics[width=0.9\textwidth]{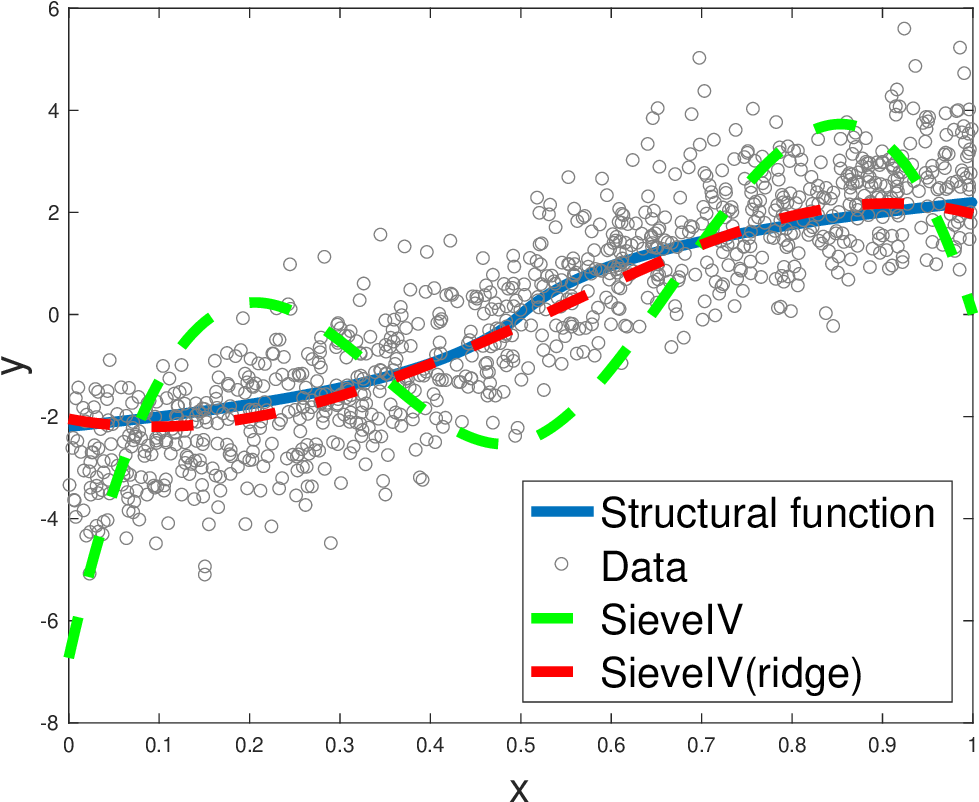}
    \cprotect\caption{\verb|SieveIV|}
    \label{SieveIV}
  \end{subfigure}
  \begin{subfigure}[b]{0.5\textwidth}
    \includegraphics[width=0.9\textwidth]{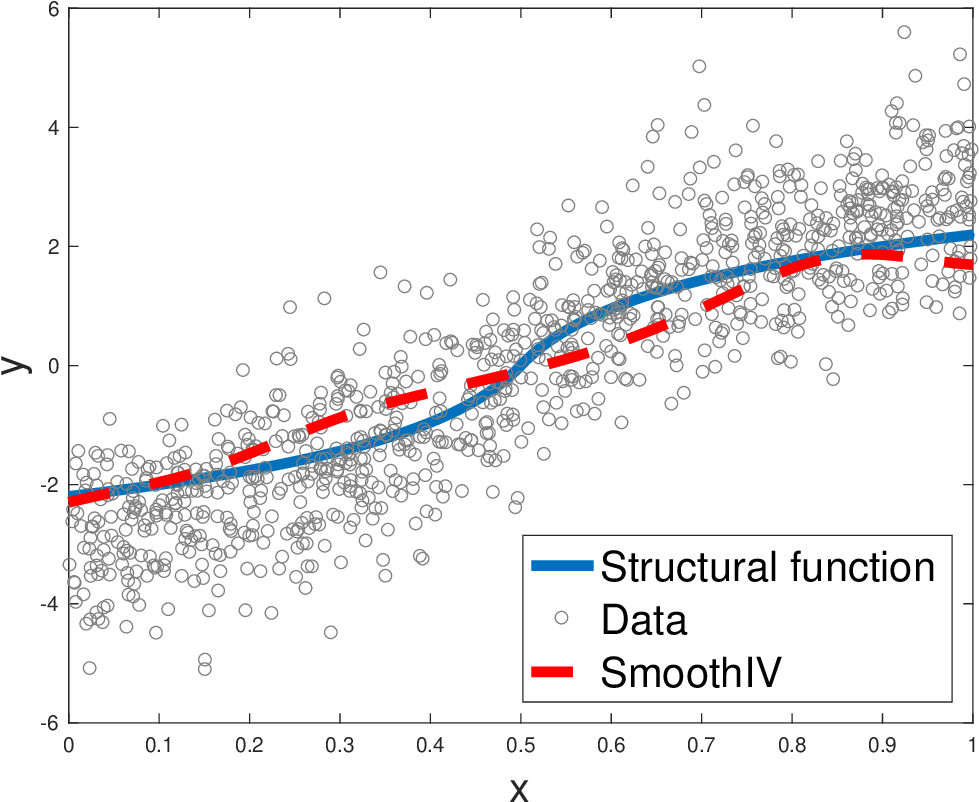}
    \cprotect\caption{\verb|SmoothIV|}
    \label{SmoothIV}
  \end{subfigure}
    \cprotect\caption{\verb|SieveIV| and \verb|SmoothIV| on the sigmoid design}
    \vspace{-0pt}
\end{figure}

\verb|DeepIV|. We implement deep IV with sample splitting using the \verb|python| software accompanying the paper by \cite{hartford2017deep}. We implement deep IV with and without biased gradients in the training optimization. Figure~\ref{DeepIV} visualizes the performance of \verb|DeepIV| on the sigmoid design with $n+m=1000$. In both the sigmoid and demand designs, unbiased gradients lead to better performance. Biased gradients improve performance in a high-dimensional MNIST design that we do not implement here. Like other neural network models, \verb|DeepIV| requires a relatively large training sample size to achieve reliable performance on simple tasks like learning a smooth curve.

\verb|KernelIV|. We implement KIV with sample splitting using Gaussian kernels $k_{\mathcal{X}}$ and $k_{\mathcal{Z}}$. We set lengthscales according to median interpoint distance as described for \verb|KernelReg|. When inputs are multimensional, we use the product of scalar kernels as described for \verb|KernelReg|. We tune Tikhonov regularization parameters $(\lambda,\xi)$ according to Algorithm~\ref{val}. Figure~\ref{KernelIV} visualizes the performance of \verb|KernelIV| on the sigmoid design with $n+m=1000$.

\begin{figure}[H]
  \begin{subfigure}[b]{0.5\textwidth}
    \includegraphics[width=0.9\textwidth]{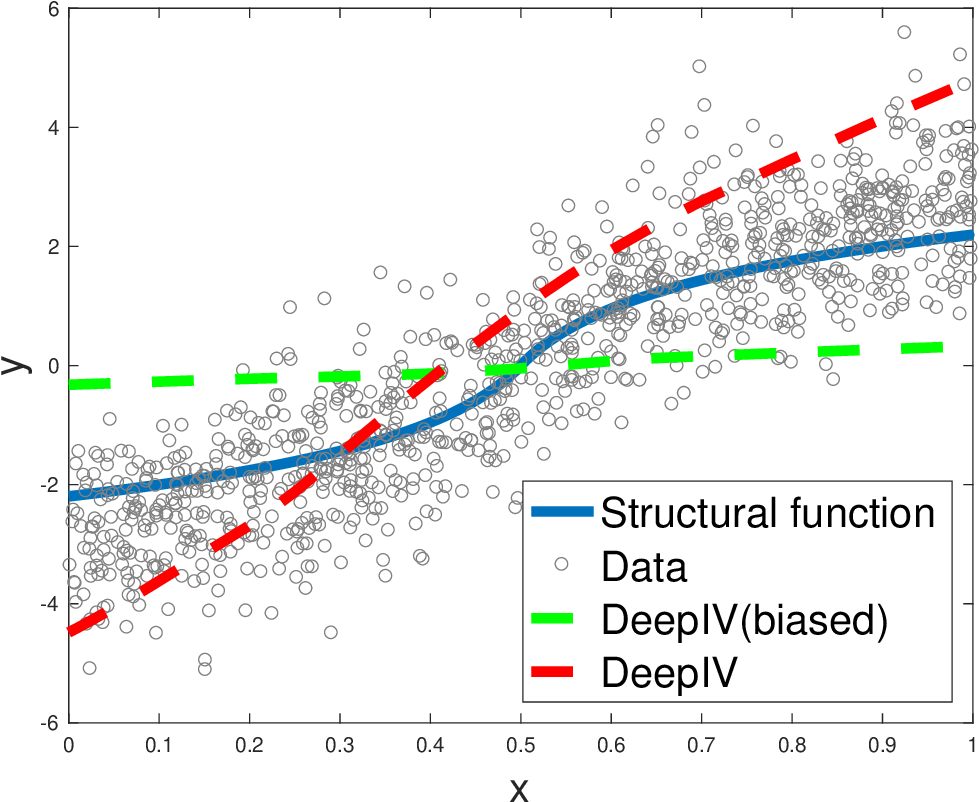}
    \cprotect\caption{\verb|DeepIV|}
    \label{DeepIV}
  \end{subfigure}
  \begin{subfigure}[b]{0.5\textwidth}
    \includegraphics[width=0.9\textwidth]{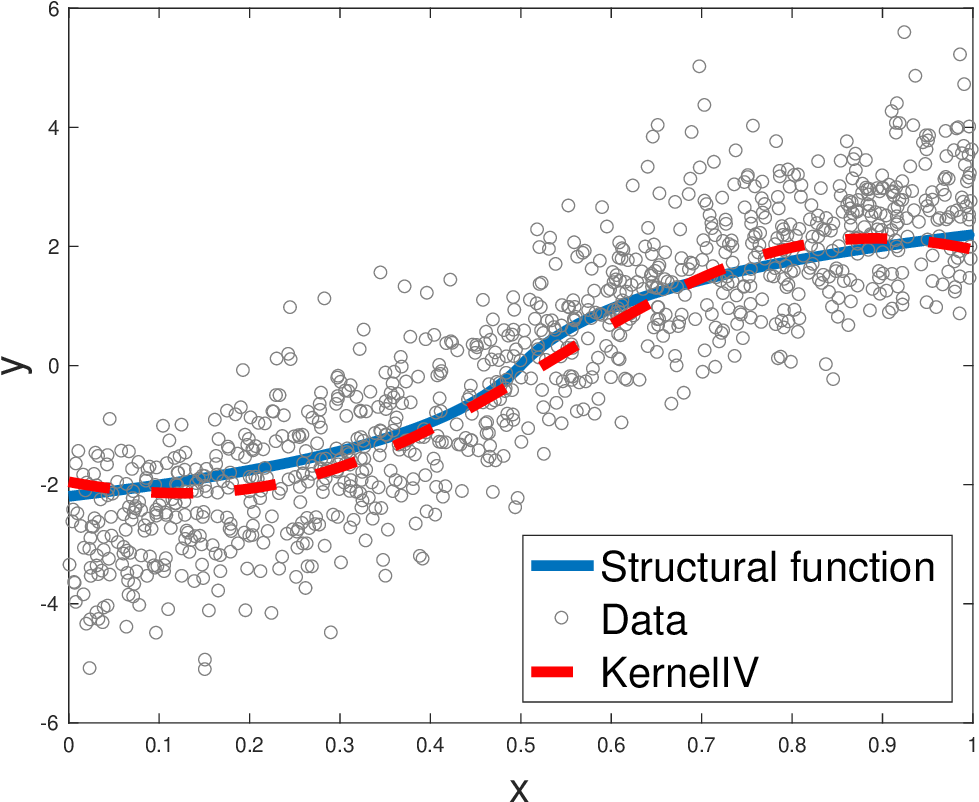}
    \cprotect\caption{\verb|KernelIV|}
    \label{KernelIV}
  \end{subfigure}
  \cprotect\caption{\verb|DeepIV| and \verb|KernelIV| on the sigmoid design}
  \vspace{-10pt}
\end{figure}
\subsubsection{Results}~\label{further_sim}

\begin{figure}[H]
 \vspace{-15pt}
  \begin{subfigure}[b]{0.5\textwidth}
    \includegraphics[width=\textwidth]{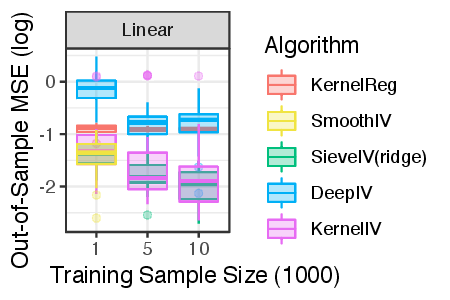}
      \vspace{-15pt}
    \cprotect\caption{Linear}
    \label{linear}
  \end{subfigure}
  \begin{subfigure}[b]{0.5\textwidth}
    \includegraphics[width=\textwidth]{sigmoid.eps}
      \vspace{-15pt}
    \cprotect\caption{Sigmoid}
    \label{sigmoid}
  \end{subfigure}
  \cprotect\caption{Linear and sigmoid designs}
  \vspace{-10pt}
  \label{uni}
\end{figure}

\begin{wrapfigure}{R}{\textwidth/2}
\vspace{-0pt}
  \begin{center}
    \includegraphics[width=\textwidth/2]{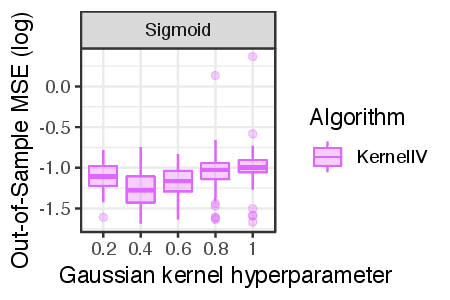}
     \vspace{-15pt}
    \caption{Robustness study}
     \vspace{-10pt}
    \label{robust}
    \end{center}
\end{wrapfigure}

For each algorithm, design, and sample size, we implement 40 simulations and calculate MSE with respect to the true structural function $h$. Figures~\ref{demand} and~\ref{uni} visualize results. In the linear design, \verb|KernelIV| performs about as well as \verb|SieveIV| improved with Tikhonov regularization. Intuitively, in the linear design the true structural function $h$ is finite-dimensional, and the method that uses a finite dictionary of basis functions (\verb|SieveIV|) displays less variability across simulations when training sample sizes are small. Insofar as \verb|SieveIV| is a special case of \verb|KernelIV|, one could interpret this outcome as reflecting a more appropriate choice of kernel. In the sigmoid design, \verb|KernelIV| performs best across sample sizes. In the demand design, \verb|SmoothIV| performs best for sample size $n+m=1000$. Among estimators that we are able to implement, \verb|KernelIV| performs best for sample sizes $n+m=5000$ and $n+m=10000$.

\newpage

Finally, we conduct a robustness study to evaluate the sensitivity of \verb|KernelIV| to hyperparameter tuning. We apply \verb|KernelIV| to the sigmoid design with $n+m=1000$, varying the lengthscale for Guassian kernel $k_{\mathcal{X}}$. For each lengthscale value in $\{0.2, 0.4, 0.6, 0.8, 1.0\}$, we implement 40 simulations and calculate MSE with respect to the true structural function $h$. For comparison, the median interpoint distance rule sets lengthscale to $0.3$. Figure~\ref{robust} visualizes results: alternative lengthscale values depreciate performance of \verb|KernelIV|, but \verb|KernelIV| still outperforms its competitors in Figure~\ref{sigmoid}. We recommend that practitioners use the median interpoint distance rule.

\end{document}